\newcommand{\mA}{{\bf A}}
\newcommand{\mD}{{\bf D}}
\newcommand{\mH}{{\bf H}}
\newcommand{\mI}{{\bf I}}
\newcommand{\mM}{{\bf M}}
\newcommand{\cO}{{\cal O}}
\newcommand{\eqdef}{\coloneqq}
\newcommand{\cD}{{\cal D}}
\newcommand{\argmin}{\mbox{argmin}}
\newcommand{\dotprod}[1]{\left< #1\right>} 
\newcommand{\norm}[1]{ \left\| #1 \right\|}      
\newcommand{\R}{\mathbb{R}} 
\newcommand{\N}{\mathbb{N}} 
\newcommand{\diag}[1]{\mathbf{diag}\left( #1\right)}
\newcommand{\E}[1]{\mathbb{E}\left[#1\right] }
\newcommand{\EE}[2]{\mathbb{E}_{#1}\left[#2\right] }
\newcommand{\ones}{{\bf 1}}
\newcommand{\ie}{\textit{i.e. }} 
\newcommand*{\vertbar}{\rule[-1ex]{0.5pt}{2.5ex}}
\newcommand*{\horzbar}{\rule[.5ex]{2.5ex}{0.5pt}}
\newcommand\figsizefour{3.3cm}
\definecolor{shadecolor}{gray}{0.90}
\declaretheoremstyle[
headfont=\normalfont\bfseries,
notefont=\mdseries, notebraces={(}{)},
bodyfont=\normalfont,
postheadspace=0.5em,
spaceabove=1pt,
mdframed={
  roundcorner=30pt,
  skipabove=8pt,
  skipbelow=8pt,
  hidealllines=true,
  backgroundcolor={shadecolor},
  innerleftmargin=4pt,
  innerrightmargin=4pt}
]{shaded}
\declaretheorem[style=shaded,within=section]{definition}
\declaretheorem[style=shaded,sibling=definition]{theorem}
\declaretheorem[style=shaded,sibling=definition]{lemma}
\declaretheorem[style=shaded,sibling=definition]{remark}
\title{Cutting Some Slack for SGD with Adaptive Polyak Stepsizes}
\author{Robert M. Gower \\ CCM, Flatiron Institute\footnote{Work down while R. M. Gower was a Visiting Scientist at Google Research, Brain Team.} 
\and Mathieu Blondel \\ Google Research, Brain Team 
\and Nidham Gazagnadou\\ T\'el\'ecom Paris, IPP, Paris, France
\and Fabian Pedregosa \\ Google Research, Brain Team}
\begin{document}

\maketitle

\begin{abstract}
  Tuning the step size of stochastic gradient descent is tedious and error prone.
This has motivated the development of methods that automatically adapt the step size using readily available information. In this paper, we consider the family of SPS (Stochastic gradient with a Polyak Stepsize) adaptive methods.  These are methods that make use of gradient and loss value at the sampled points to adaptively adjust the step size.  We first show that SPS and its recent variants can all be seen as extensions of the Passive-Aggressive methods applied to nonlinear problems.  We use this insight to develop new variants of the SPS method that are better suited to nonlinear models.
Our new variants are based on introducing a slack variable into the interpolation equations. This single slack variable tracks the loss function across iterations and is used in setting a stable step size.  We provide extensive numerical results supporting our new methods and a convergence theory.
\end{abstract}

\section{Introduction}

\texttt{SGD} (stochastic gradient descent) can be efficient at training large machine learning models if one  carefully tunes a step size scheduler. 
However, tuning step sizes is tedious and costly, especially with large-scale deep learning models. For this reason, a fruitful line of research has focused on
developing \emph{adaptive} methods, that require little to no tuning of the step sizes. A popular family of methods are those that use coordinate-wise (i.e., feature-wise) scaling, such as \texttt{Adagrad}~\cite{ADAGRAD}, \texttt{ADAM}~\cite{ADAM} and \texttt{AMSgrad}~\cite{AMSgrad}. More recently, a new family of methods called \texttt{SPS} (SGD with Polyak Stepsizes) has been proposed \cite{ALI-G,SPS}.
These methods make use of both the loss values and gradient norms at the sampled points, to automatically adjust the step size. 
This allows to use a step size that changes from iteration to another depending on the current progress, and also adapts to the particular scaling of
the loss function at hand.

Variants of \texttt{SPS} have been shown to work very well on models that interpolate the data, and even have competitive convergence guarantees in convex~\cite{SPS} and some non-convex settings~\cite{SGDstruct}. The  effectiveness of \texttt{SPS} in the interpolated setting is not surprising, given that
\texttt{SPS} directly  solves the interpolation equations~\cite{ALI-G,TASPS}. However, when the model is far from interpolation, \texttt{SPS} and its variants can fail to converge, unless additional safe guards are put in place. As a consequence \texttt{SPS} methods tend to be less efficient when using regularization (weight decay), data augmentation (such as image rotations), and generally only work well on models with many more parameters than data. 


In this paper, we develop new \emph{slack} variants of \texttt{SPS} that are less sensitive to interpolation.
Specifically,
we make the following contributions:

\paragraph{Extending (PA) Passive-Aggressive Methods.} We first show that \texttt{SPS} can be seen as a natural extension of the  \emph{Passive-Aggressive} (PA) methods~\cite{Crammer06} to nonlinear models.

\paragraph{Variational Viewpoint of Stochastic Polyak Methods.} Building upon this viewpoint, we then show that  \texttt{SPS}$_{\max}$~\cite{SPS} and \texttt{ALI-G}~\cite{ALI-G}, two recent variants of the \texttt{SPS} method, can be interpreted as \emph{slack} variants of the PA methods. That is, they allow for a margin of \emph{slack} in the interpolation equations. This in turn  
provides a 
\emph{variational formulation} of \texttt{SPS}$_{\max}$ and \texttt{ALI-G}. This viewpoint also gives  meaning to the tunable parameter $\lambda$ of \texttt{SPS}$_{\max}$ and \texttt{ALI-G}, namely, it can be interpreted as controlling the strength of a {slack variable}. The higher $\lambda$, the less slack we allow in the interpolation equations.

\paragraph{Max loss Viewpoint.} We also show that all slack variants of the PA methods, including  \texttt{SPS}$_{\max}$ and \texttt{ALI-G}, are in fact solving an auxiliary problem, whereby the objective is to have the \textbf{smallest maximum loss} over the data. 

\paragraph{New Slack Polyak Methods.} The variational formulation then leads to the insight of further \emph{regularizing} the slack variable. This enables the slack variable to track the loss function across iterations and set a stable step size. We call our proposed variants \emph{\texttt{SPSL1}} and \emph{\texttt{SPSL2}}.

\paragraph{Convergence Theory.}  We study the \emph{convergence} of our methods, and prove convergence at $\cO(1/t +\mbox{\small constant})$ rate for both \texttt{SPSL1} and \texttt{SPSL2}, where constant is zero under interpolation.

\paragraph{Experimental Validation.}  We conclude with extensive experimental results, where we test the sensitivity of each of the slack methods  \texttt{SPS}$_{\max}$, \texttt{ALI-G}, \texttt{SPSL1} and \texttt{SPSL2} to how far the underlying problem is to interpolation. We also show that the new methods  \texttt{SPSL1} and \texttt{SPSL2} are \emph{competitive} on the benchmark vision classification problems, despite \emph{not using any additional safe guards} as required to make \texttt{SPS}$_{\max}$ and \texttt{ALI-G} stable and competitive.






\section{Related work}
\label{sec:related_work}

\paragraph{ Setup and Notation.}
Let $(x_i,y_i)$ denote the  $i$\textsuperscript{th} data point, where $x_i$ is an input and $y_i$ is an output.
We use $\ell_i(w) \eqdef \ell(w ; (x_i,y_i))$ to denote the loss evaluated at parameters $w \in \R^d$ over the $i$\textsuperscript{th} data point.
Our objective is to minimize the \emph{average loss} $\ell(w) \eqdef \frac{1}{n} \sum_{i=1}^n \ell_i(w)$ over a given data set with $n$ data examples. We note that our results also hold for a true expectation where $\ell(w) = \EE{(x,y) \sim \cD}{\ell(w ; (x,y)}$, also known as the test error. Furthermore, our results hold verbatim for an \emph{online regret} given by $\frac{1}{T}\sum_{t=1}^T \ell_{t}(w),$ where $\ell_t$ is an online loss that could be chosen adversarially, which is the original setting of \texttt{PA} methods. For  simplicity, we use the average loss.  

\paragraph{ Passive-Aggressive for linear models.}
The Passive-Aggressive (\texttt{PA}) methods \cite{Crammer06} are online learning algorithms that update a model only when the loss on a new example is greater than zero. They do this by introducing the minimal perturbation into the parameters so that this new example is correctly labelled. This is the \emph{aggressive} step. On the other hand, if the example is correctly labelled already, then the methods are \emph{passive} and do nothing.
A similar online algorithm, but using Kullback-Leibler projections instead of Euclidean ones, was proposed in \cite{Herbster01}.
\texttt{PA} methods have been especially successful in natural language processing \cite{mcdonald_2005,blitzer_2006,chechik_2010}.

To formally introduce the \texttt{PA} methods, suppose  that the loss is non-negative and that $\ell_i(w) =0$ if our model correctly labels the $i$\textsuperscript{th} example. Let $w^t \in \R^d$ be the current parameters of our model at some time $t$. 
At iteration $t+1$, \texttt{PA} methods update the parameters to $w^{t+1}$ such that the $i$\textsuperscript{th} example, usually randomly sampled, is correctly classified, that is $\ell_i(w^{t+1})=0$.
They do this by making the smallest possible perturbation to the current parameter vector $w^t$ by solving the projection  
\begin{equation}
    w^{t+1} = \argmin_{w\in\R^d} \norm{w-w^t}^2 
    \mbox{subject to} ~ \ell_i(w) = 0 \enspace. \label{eq:PAproj}
\end{equation}
For simple loss functions, the update~\eqref{eq:PAproj} often enjoys a closed-form solution. 
For instance, let us consider binary classification problems where data points $(x_i,y_i)$ are such that $y_i\in \{-1,  1\}$. If one minimizes the hinge-loss $\ell_i(w) = (1- y_i\dotprod{w,x_i})_+$ over a linear model, where $a_+ \eqdef \max\{0, a\}$ denotes the non-negative part, then the closed-form solution to~\eqref{eq:PAproj} is given by  
\begin{equation} \label{eq:PA}
    w^{t+1} = w^t +\frac{\ell_i(w^t)}{\norm{x_i}^2} y_i \, x_i \enspace.
\end{equation}
See Appendix A in~\cite{Crammer06} for the proof.
The \texttt{PA} method~\eqref{eq:PAproj} is appealing because it does not require learning rates and the update~\eqref{eq:PA} adapts to the \emph{scale} of $x_i$. Unfortunately for most nonlinear models and loss functions there is no closed form solution to the projection in~\eqref{eq:PAproj}, greatly restricting the applications of \texttt{PA} methods.


\paragraph{ Passive-Aggressive extensions.}
The \texttt{PA} methods \cite{Crammer06} were initially applied to regression, classification and structured prediction problems, but always with an underlying linear model. \texttt{PA} methods have subsequently been adapted to solving the non-negative matrix factorization (NMF) problem ~\cite{BlondelKU14}. NMF can be cast as solving a nonlinear matrix equation, which is bilinear in the two unknown matrices. In \cite{BlondelKU14} the authors proposed to solve this bilinear problem by alternating between freezing one of the two unknown matrices and applying a \texttt{PA} step. This approach cannot be extended to general nonlinear models and problems since it relies on the bilinear nature of NMF.

The \texttt{PA} framework has also been very recently used in conjunction with a class of quadratic models given by a difference-of-squares (DoS)~\cite{Saul2021}. Although there is no closed-form update for~\eqref{eq:PAproj} for these DoS models, the authors propose a simple bounded one-dimensional search based on the S-procedure to find an approximate solution.

\paragraph{ \texttt{SGD} with Polyak stepsize (\texttt{SPS}).} \texttt{SPS} methods use both the gradient norm and the loss value of the sampled points. The vanilla \texttt{SPS} update is 
\begin{align}
    w^{t+1} = w^t -\frac{\ell_i(w^t)}{\norm{\nabla \ell_i(w^t)}^2} \nabla \ell_i(w^t) \enspace. \label{eq:SPS}
\end{align}
\texttt{SPS} methods have drawn much attention recently as an effective method for training deep neural networks, even though to work effectively in practice, the update above needs some additional alterations and safe-guards; otherwise the method can diverge when $\norm{\nabla \ell_i(w^t)}$ in the denominator is near zero.

A practical update to avoid this issue is that of \texttt{ALI-G} \cite{ALI-G}:  
\begin{equation}
     w^{t+1} = w^t - \min\left\{\frac{\ell_i(w^t)}{\norm{\nabla \ell_i(w^t)}^2+\varepsilon}, \; \eta \right\} \nabla \ell_i(w^t) \enspace,
\label{eq:exact_ALIG}
\end{equation}
where $\varepsilon \geq 0$ and $\eta > 0$ are tunable parameters.
If $\varepsilon$ is set to $0$ and $\ell_i (w^t) - \ell_i^*$, with nonzero $\ell_i^*$, is used in the numerator of the Polyak step size of~\eqref{eq:exact_ALIG}, then one recovers \texttt{SPS}$_{\max}$ later introduced in~\cite{SPS}.

In~\cite{TASPS}, the authors showed that \texttt{SPS} is formally a subsampled Newton-Raphson method~\cite{SNR}  for solving the nonlinear equations 
\begin{equation}\label{eq:interpolate}
    \ell_i(w) = 0 \; , \quad \mbox{for }i=1,\ldots, n \enspace. 
\end{equation} 
There only exists a solution to~\eqref{eq:interpolate} if we are in the \emph{interpolation regime}, that is to say, that our model has sufficient degrees of freedom that it can \emph{perfectly fit} all of the given examples. 
This assumption is typically met with over-parametrized deep neural networks~\cite{ZhangBHRV17,MaBB18} or in non-parametric regression models~\cite{belkin2019does,liang2020just}.
In~\cite{TASPS} the authors develop a weaker type of interpolation assumption that rests on knowing the total optimal loss $\ell(w^\star)= \frac{1}{n}\sum_{i=1}^n\ell_i(w^\star)$ and using this as a \emph{target} value. 



\paragraph{ Adaptive methods.}
The ultimate goal of adaptive methods is to have a method that works well with little to no parameter tuning.
One such adaptive method is \texttt{ADAM}~\cite{ADAM}, which works remarkably well on a wide range of DNNs with default parameter settings. 
The practical success of \texttt{ADAM} has inspired a large number of papers that attempts to improve upon \texttt{ADAM}~\cite{AMSgrad}, understand \texttt{ADAM}~\cite{balles2018dissecting,AMSgrad}, or extend it~\cite{Zaheer2018}.
Due to the vast number of variants of \texttt{ADAM} that now exist in the literature, we cannot do the literature justice, since it would require a dedicated survey paper.
Here, we restrict ourselves to the admittedly limited scope of developing a foundation for adaptive methods based on Polyak step sizes. 
\texttt{SPS} has an intuitive and mathematically grounded motivation, one that we further develop here. 
Though our methods share the same overarching objective as adaptive methods such as \texttt{ADAM} and its variants, our methods represent a distinct approach, and enjoy a formal derivation.




Model-based methods \cite{asi2019importance, Chada-accel-model-2021}  also make use of linearizations in combination with proximal operators to derive new stochastic gradient methods. In fact, the \texttt{SPS} and \texttt{ALI-G} methods can be understood as a model based method~\cite{TASPS}. 
For an excellent introduction to  online learning, with connections to model based and  PA methods we recommend the monograph~\cite{Orabona:2019} and its related blog.
Finally, another approach that is being actively researched recently is the use of \texttt{SGD} in combination with a line search~\cite{vaswani2019painless}.




\section{Variational perspective of SPS}
\label{sec:slack}
We now present our novel insights into viewing SPS and its variants as 
an extension of the PA (Passive-Aggressive) methods. In doing so, we also give new variational perspectives on SPS methods, show that the new variants of the SPS methods can be interpreted as methods that solve a weaker form of interpolation, and as consequence, also as methods that solve a certain \emph{max} loss problem.

\subsection{Passive-Aggressive for general nonlinear models}

Although Passive-Aggressive methods have been developed for some nonlinear models
(cf.\ Section \ref{sec:related_work}), we are not aware of an efficient approach
that works for general nonlinear models.
Indeed, for most nonlinear models, such as Deep Neural Networks (DNNs), there is no closed-form solution to the projection~\eqref{eq:PAproj}. To retain much of the appeal of \texttt{PA} methods and make use of nonlinear models, we propose to linearize the loss around $w^t$ and project 
\begin{align}
    w^{t+1} =  & \argmin_{w\in\R^d} \norm{w-w^t}^2 \label{eq:PAprojlin} \;\;  \mbox{ s.t. }\ell_i(w^t)+ \dotprod{\nabla \ell_i(w^t), w-w^t}=0 \enspace. 
\end{align}
The linearization in~\eqref{eq:PAprojlin} is a reasonable approximation to $\ell_i(w)$ when $w$ is close to $w^t$, and because of the projection, we choose $w$ to be as close as possible to $w^t$. It turns out that the closed-form solution of \eqref{eq:PAprojlin} is exactly given by the SPS update in \eqref{eq:SPS}.
 
Using this connection between the \texttt{SPS} and \texttt{PA} methods, we first formalize typical
safe-guarded variants of \texttt{SPS} as \texttt{PA} methods that make use of a \emph{slack} variable. We
 use these  insights to develop new and efficient slack \texttt{SPS} variants.

\subsection{Adding slack}

In most settings, the \texttt{PA} methods are in fact too aggressive. In particular,
outside of the interpolation regime, where there exists no solution to~\eqref{eq:interpolate}, setting all losses to zero is not possible, and thus eventually forcing a particular $\ell_i(w) =0$ will deviate other losses from zero. To better control the trade-off between losses over different examples, in \cite{Crammer06} a slack variable $s \in \R$ was introduced and finding $w \in \R^d$ such that
\begin{equation}\label{eq:slackconstraint}
    \ell_i(w) \leq s, \quad \mbox{for }i=1,\ldots, n \enspace,
\end{equation}
while simultaneously forcing $s$ to be as small as possible. To make $s$ small, we can either 
minimize what we call the \emph{L1 slack formulation} given by
\begin{align}
    &\min_{w\in\R^d, \, s \geq 0} s  \label{eq:slackobjmax} \quad\mbox{s.t.} \quad \ell_i(w)  \leq s, \quad \mbox{for }i=1,\ldots, n \enspace,
\end{align}
or the \emph{L2 slack formulation} given by 
\begin{align}
    \label{eq:slackobjL2}
    &\min_{w \in \R^d, \, s \in \R} s^2 \quad \mbox{s.t.} \quad \ell_i(w)  \leq s, \quad \mbox{for }i=1,\ldots, n \enspace. 
\end{align}
The slack variable allows us to control the global trade-off between different losses.

When $\ell_i(w)$ is a relatively simple function, such as the hinge loss over a linear model, then after sampling the $i$\textsuperscript{th} constraint in either~\eqref{eq:slackobjmax} or~\eqref{eq:slackobjL2}, there is a closed-form solution to the resulting variational problem as shown in~\cite{Crammer06}. 
For most nonlinear losses and models, however, there is no solution, and thus we resort to linearization once again. 
But first, we present an alternative interpretation of~\eqref{eq:slackobjmax} and~\eqref{eq:slackobjL2}.

\subsection{Max loss Viewpoint}
\label{sec:maxloss}
The slack formulations can also be seen through the lens of robust learning.
Indeed, we can substitute out the slack variable in~\eqref{eq:slackobjmax} by
observing that the smallest possible $s$ that is greater than every $\ell_i(w)$
is necessarily $\displaystyle s = \max_{i=1,\ldots, n} \ell_i(w).$
Thus~\eqref{eq:slackobjmax}  and~\eqref{eq:slackobjL2} are equivalent to  
\begin{equation}
    \min_{w \in \R^d} \max_{i=1,\ldots, n} \ell_i(w) \enspace. \label{eq:minmax}
\end{equation}
We refer to~\eqref{eq:minmax} 
as the max loss.
 The max loss~\eqref{eq:minmax} 
can be seen as robust learning tasks, where the aim is to find $w$ such that no single example is badly classified. Contrast this with the Empirical Risk Minimization (ERM) task where the goal is to minimize the average of the losses. In particular~\eqref{eq:minmax} is a conservative upper bound on the ERM since 
\begin{equation} \label{eq:maxbound}
    \ell(w) = \frac{1}{n} \sum_{i=1}^n \ell_i(w) \leq \max_{i=1,\ldots, n} \ell_i(w) \enspace.
\end{equation}
If interpolation holds, then the above is a tight over-approximation, in that the ERM and the robust objective have the same solution set. Next, we propose methods for approximately solving~\eqref{eq:slackobjmax} and~\eqref{eq:slackobjL2}.

\subsection{A Variational form of \texttt{SPS}$_{\max}$}

Let  $\lambda>0$ be a  \emph{slack} parameter. 
Our first method for solving~\eqref{eq:slackobjmax} proceeds by  
by sampling the $i$\textsuperscript{th} constraint, linearizing and projecting as follows
\begin{align}
    w^{t+1}, s^{t+1} =& \argmin_{w\in\R^d, s \geq 0} \tfrac{1}{2}\norm{w - w^t}^2 +\lambda s, \nonumber \\
    &\; \; \mbox{s.t. } \ell_i(w^t) + \dotprod{\nabla \ell_i(w^t), w - w^t} \leq s \enspace. \label{eq:SPSmaxproj}
\end{align}
Fortunately this method has a simply closed form solution.
\begin{lemma}\label{lem:SPSmax}
The closed-form solution to~\eqref{eq:SPSmaxproj} is
\begin{align}
    w^{t+1} &= w^t - \min\left\{\frac{\ell_i(w^t)}{\norm{\nabla \ell_i(w^t)}^2}, \; \lambda \right\} \nabla \ell_i(w^t) \enspace, \quad  \label{eq:SPSmax}\\
s^{t+1} &= \max\{\ell_i(w^t)- \lambda \norm{\nabla \ell_i(w^t)}^2 , \; 0 \} \enspace.
\end{align}
\end{lemma}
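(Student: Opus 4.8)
The plan is to solve the constrained optimization problem \eqref{eq:SPSmaxproj} directly via Lagrangian duality, since it is a convex quadratic program (quadratic objective in $w$, linear objective in $s$, single affine inequality constraint plus the bound $s\ge 0$). First I would introduce a multiplier $\alpha \ge 0$ for the inequality constraint $\ell_i(w^t) + \dotprod{\nabla \ell_i(w^t), w - w^t} \le s$ and a multiplier $\mu \ge 0$ for $s \ge 0$, write the Lagrangian
\begin{equation*}
  \mathcal{L}(w,s,\alpha,\mu) = \tfrac12 \norm{w-w^t}^2 + \lambda s + \alpha\bigl(\ell_i(w^t) + \dotprod{\nabla \ell_i(w^t), w-w^t} - s\bigr) - \mu s \enspace,
\end{equation*}
and take stationarity conditions. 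Setting $\nabla_w \mathcal{L} = 0$ gives $w = w^t - \alpha \nabla \ell_i(w^t)$, and setting $\partial_s \mathcal{L} = 0$ gives $\lambda - \alpha - \mu = 0$, hence $\alpha = \lambda - \mu \le \lambda$.

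Next I would plug $w = w^t - \alpha\nabla \ell_i(w^t)$ back in to obtain the dual function as a function of $\alpha$ alone (after eliminating $\mu = \lambda - \alpha$), which becomes a concave quadratic $-\tfrac12 \alpha^2 \norm{\nabla \ell_i(w^t)}^2 + \alpha\,\ell_i(w^t)$ to be maximized over $0 \le \alpha \le \lambda$. The unconstrained maximizer is $\alpha = \ell_i(w^t)/\norm{\nabla \ell_i(w^t)}^2$, so the constrained maximizer is $\alpha^\star = \min\bigl\{\ell_i(w^t)/\norm{\nabla \ell_i(w^t)}^2,\ \lambda\bigr\}$ (using $\ell_i(w^t) \ge 0$ so the lower bound $\alpha \ge 0$ is never active). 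Substituting $\alpha^\star$ into $w = w^t - \alpha \nabla \ell_i(w^t)$ yields the claimed formula \eqref{eq:SPSmax} for $w^{t+1}$. For $s^{t+1}$, I would recover it from the constraint: by complementary slackness, whenever $\alpha^\star > 0$ the constraint is tight, so $s^{t+1} = \ell_i(w^t) + \dotprod{\nabla \ell_i(w^t), w^{t+1}-w^t} = \ell_i(w^t) - \alpha^\star \norm{\nabla \ell_i(w^t)}^2$; combined with the case $\alpha^\star = \lambda$ (where $s^{t+1} = \ell_i(w^t) - \lambda\norm{\nabla\ell_i(w^t)}^2$ could be negative, so the constraint $s \ge 0$ forces $s^{t+1} = 0$ and $\mu > 0$) this gives exactly $s^{t+1} = \max\{\ell_i(w^t) - \lambda\norm{\nabla \ell_i(w^t)}^2,\ 0\}$.

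A cleaner alternative I would mention is to first eliminate $w$ from \eqref{eq:SPSmaxproj}: for a fixed value of the right-hand side $s$, the inner minimization over $w$ of $\tfrac12\norm{w-w^t}^2$ subject to an affine inequality $\dotprod{\nabla \ell_i(w^t), w-w^t} \le s - \ell_i(w^t)$ is a projection onto a halfspace, with known value $\tfrac{1}{2}\bigl((\ell_i(w^t) - s)_+\bigr)^2 / \norm{\nabla \ell_i(w^t)}^2$. Then \eqref{eq:SPSmaxproj} reduces to the scalar problem $\min_{s\ge 0} \tfrac12 (\ell_i(w^t)-s)_+^2/\norm{\nabla \ell_i(w^t)}^2 + \lambda s$, which is piecewise quadratic and easily minimized in closed form, reproducing both formulas. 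The main obstacle — really the only delicate point — is the careful case analysis around the two active sets (whether $s \ge 0$ binds and whether the linearized loss constraint binds) and checking that the KKT conditions are both necessary and sufficient here; this is routine because the problem is convex with affine constraints (Slater's condition holds trivially, e.g.\ take $s$ large), so I would keep that verification brief. One should also note the degenerate case $\nabla \ell_i(w^t) = 0$: then the constraint forces $s \ge \ell_i(w^t)$ and the formulas should be read with the convention that the step is $0$, which matches \eqref{eq:SPSmax} only if $\ell_i(w^t)=0$ as well; I would flag that this is the very degeneracy the slack/safeguard is designed to handle and that under interpolation-type assumptions $\ell_i(w^t)=0$ whenever $\nabla\ell_i(w^t)=0$ at a minimizer.
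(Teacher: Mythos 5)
Your proposal is correct, but it takes a genuinely different route from the paper. The paper's proof of Lemma~\ref{lem:SPSmax} is a one-line reduction: it applies the general projection result of Lemma~\ref{lem:slackL1ineqconst} (L1 inequality constraint with positivity), which is itself proved in the appendix by enumerating which of the two constraints is active, invoking the equality-constraint Lemma~\ref{lem:slackL1eqconstpos} and the halfspace-projection Lemma~\ref{lem:least-norm-sol-ineq}, and comparing the resulting objective values. You instead attack~\eqref{eq:SPSmaxproj} head-on with Lagrangian/KKT duality: the dual collapses to maximizing the concave quadratic $-\tfrac12\alpha^2\norm{\nabla\ell_i(w^t)}^2+\alpha\,\ell_i(w^t)$ over $\alpha\in[0,\lambda]$, which immediately produces the clipped step size in~\eqref{eq:SPSmax}, with $s^{t+1}$ recovered from complementary slackness; your second route (eliminate $w$ via the halfspace-projection value and minimize the resulting piecewise quadratic in $s\geq 0$) is equally valid and arguably the cleanest of the three. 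What your approach buys is self-containedness and a transparent interpretation of the Polyak step as the clipped dual multiplier; what the paper's approach buys is reuse, since the same projection lemmas are also used for \texttt{SPSL1}, \texttt{SPSL2} and \texttt{SPS}$_{dam}$. Your flagging of the degenerate case $\nabla\ell_i(w^t)=0$ is a point the paper's lemma (which assumes $c>0$ and divides by $\norm{a}^2$) silently ignores.

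One small bookkeeping slip in your recovery of $s^{t+1}$: the case in which the nonnegativity constraint clamps $s$ to zero and $\mu>0$ is the case $\alpha^\star=\ell_i(w^t)/\norm{\nabla\ell_i(w^t)}^2<\lambda$, not the case $\alpha^\star=\lambda$ (if $\alpha^\star=\lambda$ then $\mu=\lambda-\alpha^\star=0$ and $s^{t+1}=\ell_i(w^t)-\lambda\norm{\nabla\ell_i(w^t)}^2\geq 0$ automatically, since $\alpha^\star=\lambda$ only occurs when $\lambda\leq\ell_i(w^t)/\norm{\nabla\ell_i(w^t)}^2$). The final formula $s^{t+1}=\max\{\ell_i(w^t)-\lambda\norm{\nabla\ell_i(w^t)}^2,\,0\}$ is nonetheless exactly what your dual solution yields once the two cases are labelled consistently, so this is a presentational fix rather than a gap.
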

\begin{proof}
    The proof follows by applying Lemma~\ref{lem:slackL1ineqconst}
\end{proof}

Update~\eqref{eq:SPSmax} was recently introduced in an adhoc manner in~\cite{SPS} as a practical variant of the SPS method that caps the step size and named the \texttt{SPS}$_{\max}$ method. 
Lemma~\ref{lem:SPSmax} thus gives a novel formalization of \texttt{SPS}$_{\max}$ as a projection method, and as an extension of the \texttt{PA-I} method
proposed in~\cite{Crammer06} to nonlinear models. 
Analogously, the ALI-G method~\cite{ALI-G} can also be seen as an extension of the \texttt{PA-II} method proposed in~\cite{Crammer06}, as we show next.



\section{A Variational form of \emph{Almost} \texttt{ALI-G}}
\label{sec:ALI-G-almost}
Analogously to the previous section, we derive an iterative method for solving~\eqref{eq:slackobjL2} by sampling a single constraint, linearizing and projecting. 
\begin{align}
    w^{t+1}, s^{t+1} =& \argmin_{w\in\R^d} \tfrac{1}{2}\norm{w - w^t}^2 +\lambda s^2 \nonumber  \\
    &\; \; \mbox{s.t. } \ell_i(w^t) + \dotprod{\nabla \ell_i(w^t), w - w^t} \leq s \enspace.   
    \label{eq:ALIGproj} 
\end{align}

The key difference with \eqref{eq:SPSmaxproj} is that $s$ is squared. The closed-form solution is given as follows.
\begin{lemma} 
Let $\lambda  \geq 0$.  
The solution to~\eqref{eq:ALIGproj} is 
\begin{align} \nonumber  
    w^{t+1} & =  w^t -  \frac{\ell_i(w^t)}{ \lambda^{-1} + \norm{\nabla \ell_i(w^t)}^2} \nabla \ell_i(w^t) \enspace, \\
    s^{t+1} & =   \frac{ \ell_i(w^t)}{ 1 +\lambda \norm{\nabla \ell_i(w^t)}^2} \enspace. \label{eq:SPSdam}
\end{align}
We refer to this method as \texttt{SPS}$_{dam}$.
\end{lemma}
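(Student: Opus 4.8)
The plan is to treat \eqref{eq:ALIGproj} as a convex quadratic program in the joint variable $(w,s)$ with a single affine inequality constraint, and to solve its KKT system in closed form. First I would substitute $d \eqdef w - w^t$, and abbreviate $g \eqdef \nabla \ell_i(w^t)$ and $f \eqdef \ell_i(w^t)$, so that the problem reads $\min_{d \in \R^d,\, s \in \R} \tfrac12\norm{d}^2 + \lambda s^2$ subject to $f + \dotprod{g,d} - s \leq 0$. The objective is convex (strictly convex in $d$, and strictly convex in $s$ when $\lambda>0$) and the constraint is affine, so the feasible set is nonempty (take $d=0$ and $s$ large), strong duality holds, and the KKT conditions are necessary and sufficient for optimality; in particular the minimiser is unique for $\lambda>0$.

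Next I would introduce a multiplier $\mu \geq 0$ and form the Lagrangian $L(d,s,\mu) = \tfrac12\norm{d}^2 + \lambda s^2 + \mu\big(f + \dotprod{g,d} - s\big)$. Stationarity in $d$ gives $d = -\mu g$, and stationarity in $s$ gives $s$ proportional to $\mu$. Complementary slackness splits into two cases. If $\mu = 0$ then $d = 0$ and $s = 0$, which is optimal exactly when $f \leq 0$; under the standing assumption that the losses are non-negative this forces $\ell_i(w^t)=0$, and then both the KKT point and the claimed formula \eqref{eq:SPSdam} reduce to $w^{t+1}=w^t,\ s^{t+1}=0$. If $\mu > 0$ the constraint is active, so substituting $d=-\mu g$ and the expression for $s$ into $f + \dotprod{g,d} = s$ yields a single scalar equation in $\mu$, whose solution is $\mu = \ell_i(w^t)/\big(\lambda^{-1} + \norm{\nabla\ell_i(w^t)}^2\big)$; this is $\geq 0$ precisely when $\ell_i(w^t)\geq 0$, consistent with the multiplier sign. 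Plugging this value back into $d=-\mu g$ and the formula for $s$ reproduces \eqref{eq:SPSdam} verbatim.

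To keep the exposition uniform with Lemma~\ref{lem:SPSmax}, I would factor out a small auxiliary lemma — the $L2$ analogue of Lemma~\ref{lem:slackL1ineqconst} — that solves $\min_{d,s}\tfrac12\norm{d}^2 + \lambda s^2$ subject to $c + \dotprod{a,d} \leq s$ for generic $c \in \R$, $a \in \R^d$, and then simply instantiate it with $c = \ell_i(w^t)$, $a = \nabla\ell_i(w^t)$. The only genuinely delicate points are the degenerate regimes: the case $\lambda = 0$, where $\lambda^{-1}$ must be read as $+\infty$ so that the update collapses to $w^{t+1}=w^t$; the case $\nabla\ell_i(w^t)=0$, where one checks the formula still gives the correct answer $w^{t+1}=w^t$, $s^{t+1}=\ell_i(w^t)$; and tracking the sign condition so that $\mu \geq 0$ is respected, which is where non-negativity of the loss enters. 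I expect the main obstacle to be nothing deeper than bookkeeping these edge cases consistently; the remaining work is routine one-variable algebra.
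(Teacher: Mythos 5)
Your approach is sound and genuinely different in route from the paper's. The paper proves this lemma in one line by invoking its pre-established projection result, Lemma~\ref{lem:slackL2ineqconst}, which solves the generic halfspace projection $\min_{w,s}\norm{w-w^0}^2+\delta(s-s^0)^2$ subject to $a^\top(w-w^0)+c\le s$ by splitting into the feasible case and the active case, the latter reduced to an equality-constrained least-norm problem solved via a pseudo-inverse (Lemma~\ref{lem:slackL2eqconst}). Your route is instead a direct KKT analysis of the sampled problem, with the two complementary-slackness cases playing the role of that ``already feasible vs.\ project onto the boundary'' dichotomy; the paper uses exactly this KKT style in its alternative proof of the sibling Lemma~\ref{lem:slackL2ineqconstpos}, so your argument fits the appendix's spirit. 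What the paper's route buys is reusability (the same auxiliary lemma also serves \texttt{SPSL1} and \texttt{SPSL2}); what yours buys is a shorter, self-contained derivation for this one statement. Your treatment of the degenerate regimes ($\mu=0$, the reading $\lambda^{-1}=+\infty$ when $\lambda=0$, and $\nabla\ell_i(w^t)=0$) and of the multiplier sign via non-negativity of the loss is correct.

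One bookkeeping step does not go through as written. With your Lagrangian $\tfrac12\norm{d}^2+\lambda s^2+\mu\left(f+\dotprod{g,d}-s\right)$ (in your notation $f=\ell_i(w^t)$, $g=\nabla\ell_i(w^t)$), stationarity in $s$ gives $s=\mu/(2\lambda)$, and the active constraint then yields $\mu=f/\left((2\lambda)^{-1}+\norm{g}^2\right)$, not $\mu=f/\left(\lambda^{-1}+\norm{g}^2\right)$; carried through honestly, your derivation produces \eqref{eq:SPSdam} with $\lambda$ replaced by $2\lambda$. The value of $\mu$ you assert would follow only if the slack term were $\tfrac{\lambda}{2}s^2$, equivalently if the objective were $\norm{w-w^t}^2+\lambda s^2$ --- which is precisely the normalization under which the paper's Lemma~\ref{lem:slackL2ineqconst} with $\delta=\lambda$ and $s^0=0$ reproduces \eqref{eq:SPSdam} exactly. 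So the mismatch is a factor-of-two normalization slip between \eqref{eq:ALIGproj} and \eqref{eq:SPSdam} that the paper itself glosses over; your proof should either carry the $\tfrac12$ consistently and state the resulting $2\lambda$ version, or fix the normalization of the objective at the outset, rather than claiming the stated $\mu$ solves the scalar equation you actually set up.
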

\begin{proof}
The solution follows by Lemma~\ref{lem:slackL2ineqconst}.
\end{proof}
We call~\eqref{eq:SPSdam} the \texttt{SPS}$_{dam}$ method because it is equivalent to \texttt{SPS} but with an additional dampening term $\lambda^{-1}$.
\texttt{SPS}$_{dam}$  is an extension of the \texttt{PA-II} method given in~\cite{Crammer06} to nonlinear models. It is also very similar to the \texttt{ALI-G} update in \eqref{eq:exact_ALIG} with $\varepsilon = \lambda^{-1}$.
The only difference being that \texttt{ALI-G}  caps the step size at a prescribed maximum $\eta$.

From the perspective of \texttt{SPS} methods, both slack variants help stabilize the \texttt{SPS} method~\eqref{eq:SPS}, and avoid a large step size by capping it as in~\eqref{eq:SPSmax}, or by dampening it in~\eqref{eq:SPSdam}. 
But both methods still need to carefully choose
$\lambda$ to work well in practice. 
Indeed, $\lambda$ here plays much the same role as a step size. If $\lambda$ is small, or converges to zero, both \texttt{SPS}$_{\max}$~\eqref{eq:SPSmax} and \texttt{SPS}$_{dam}$~\eqref{eq:SPSdam} halt. 
On the other extreme, if $\lambda$ is large, both methods can diverge.
In \cite{SPS} the authors propose a rule of thumb that chooses a different $\lambda$ at each iteration for~\eqref{eq:SPSmax}. As for~\eqref{eq:SPSdam}, in \cite{ALI-G} the authors cap the step size, which requires another tunable parameter.

In the next section, we propose new \texttt{SPS} variants that iteratively update the slack variable and then use it to adjust the step size.

\section{Regularizing the Slack}

Both variational forms~\eqref{eq:SPSmaxproj} and~\eqref{eq:ALIGproj} rely on linearizing around $w^t$ and $s^t$ the constraints~\eqref{eq:slackconstraint} that depend on both $w$ and $s$. These linearizations are only accurate when both $w$ and $s$ are close to $w^t$ and $s^t$. Yet, the objective in both~\eqref{eq:SPSmaxproj} and~\eqref{eq:ALIGproj}
only incentivises $w$ to be close to $w^t$, but allows $s$ to be far from $s^t.$ We find this to be a cause of instability in these methods.
Furthermore, because of this,  both in \eqref{eq:SPSmax} and \eqref{eq:SPSdam}, the slack variable $s^{t+1}$ plays no direct role in updating $w^{t+1}$. This is a missed opportunity, because the slack variable can be used to track the progress towards the solution and ultimately help halt the method.

\subsection{L1-Regularized Slack}

Suppose we have a current estimate of the slack variable $s^t$.
We can use our current estimate of the slack to add some regularization to the variational form of \texttt{SPS}$_{\max}$ in~\eqref{eq:SPSmaxproj} as follows
\begin{align}
    w^{t+1},  s^{t+1}& = \underset{w\in\R^d, s\geq 0}{\argmin} \tfrac{1}{2}\norm{w - w^t}^2  + \tfrac{1}{2} (s-s^t)^2+ \lambda s \nonumber \\ \,  &\quad  \mbox{s.t. } \;\ell_i(w^t) + \dotprod{\nabla \ell_i(w^t), w - w^t} \leq s \enspace.     \label{eq:SPSL1proj}
\end{align}
With this small change, the resulting method now makes use of the slack variable to set the step size, as can be seen in the following closed-form solution to~\eqref{eq:SPSL1proj}.
\begin{lemma} \label{lem:ProjL1pludL2}
The closed-form solution to~\eqref{eq:SPSL1proj} is given by~\eqref{eq:SPSL1} . We call this the \texttt{SPSL1} method.
\begin{align} 
    \gamma_t^{L1} &= \frac{(\ell_i(w^t)-s^t+\lambda)_+}{1+ \norm{\nabla \ell_i(w^t)}^2} \nonumber  \\
    \gamma_t &= \min\left\{ \gamma_t^{L1}, \; \frac{ \ell_i(w^t)}{\norm{\nabla \ell_i(w^t)}^2} \right\}\nonumber \\
    w^{t+1} &= w^t - \gamma_t \nabla \ell_i(w^t) \\
    s^{t+1} &= \left(s^t - \lambda + \gamma_t^{L1}\right)_+ +\label{eq:SPSL1}  
\end{align}

\end{lemma}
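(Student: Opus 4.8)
The problem is a two-variable convex minimization with a single inequality constraint, so the natural approach is Lagrangian duality / KKT. First I would write the Lagrangian
\[
L(w,s,\mu) = \tfrac12\norm{w-w^t}^2 + \tfrac12(s-s^t)^2 + \lambda s + \mu\bigl(\ell_i(w^t) + \dotprod{\nabla\ell_i(w^t), w-w^t} - s\bigr),
\]
with $\mu \geq 0$ the multiplier for the linearized constraint and the sign constraint $s \geq 0$ handled separately (either by a second multiplier or by projecting at the end). Stationarity in $w$ gives immediately $w = w^t - \mu\,\nabla\ell_i(w^t)$, so the update direction is fixed and only the scalar $\mu$ (equivalently the stepsize $\gamma_t = \mu$) needs to be determined. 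Stationarity in $s$ (ignoring the $s\geq 0$ bound for the moment) gives $s - s^t + \lambda - \mu = 0$, i.e. $s = s^t - \lambda + \mu$.

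**Reducing to a scalar problem.** Substituting $w = w^t - \mu \nabla\ell_i(w^t)$ into the constraint, the active-constraint (complementary slackness with $\mu>0$) case reads
\[
\ell_i(w^t) - \mu\norm{\nabla\ell_i(w^t)}^2 = s = s^t - \lambda + \mu,
\]
which solves to $\mu = \dfrac{\ell_i(w^t) - s^t + \lambda}{1 + \norm{\nabla\ell_i(w^t)}^2}$. This is exactly $\gamma_t^{L1}$ before taking the positive part. The positive part $(\cdot)_+$ enters because if $\ell_i(w^t) - s^t + \lambda \leq 0$ the constraint is already slack at $\mu = 0$ (the passive case), so $\mu = 0$ and then $s = \max\{s^t - \lambda, 0\}$ after imposing $s\geq 0$. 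Combining, $\mu = \gamma_t^{L1}$ and $s^{t+1} = (s^t - \lambda + \gamma_t^{L1})_+$, matching the claimed formula for $s^{t+1}$; note $\gamma_t^{L1} \geq 0$ automatically from its definition.

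**The extra clipping.** The remaining subtlety is the outer $\min$ defining $\gamma_t$. This comes from the bound constraint $s \geq 0$ interacting with the active constraint: the derivation above assumed the $s\geq 0$ multiplier is zero, but if the solution of the active-constraint system would give $s < 0$, then we must instead set $s = 0$, and the constraint $\ell_i(w^t) - \mu\norm{\nabla\ell_i(w^t)}^2 \leq 0$ forces $\mu \geq \ell_i(w^t)/\norm{\nabla\ell_i(w^t)}^2$; minimizing the objective then pins $\mu = \ell_i(w^t)/\norm{\nabla\ell_i(w^t)}^2$. So $\gamma_t$ is the smaller of $\gamma_t^{L1}$ and the vanilla SPS stepsize, exactly as stated. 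I expect the bookkeeping of these two or three KKT cases — deciding in each which of the two inequality multipliers is active — to be the only real work; everything else is substitution. Cleanest is probably to observe that the problem, after eliminating $w$, is minimization of a one-dimensional piecewise-quadratic in $(s,\mu)$, invoke the general L1-slack projection lemma (Lemma~\ref{lem:slackL1ineqconst}) the paper already refers to for Lemma~\ref{lem:SPSmax}, and just track how the regularizer $\tfrac12(s-s^t)^2$ shifts the relevant thresholds. The main obstacle is simply making the case split airtight and verifying the boundary cases ($\gamma_t^{L1} = 0$, $s^t - \lambda + \gamma_t^{L1} = 0$) are handled consistently by the stated closed form.
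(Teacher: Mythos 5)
Your KKT derivation is correct: stationarity in $w$ fixes the direction, stationarity in $s$ gives $s = s^t-\lambda+\mu$, the active/passive split produces $\gamma_t^{L1}$, and the case where the tentative $s$ is negative forces $s=0$ and pins $\mu = \ell_i(w^t)/\norm{\nabla\ell_i(w^t)}^2$; one can check that the condition $s^t-\lambda+\gamma_t^{L1}<0$ is exactly equivalent to $\gamma_t^{L1} > \ell_i(w^t)/\norm{\nabla\ell_i(w^t)}^2$, so the stated $\min$ and the positive part in $s^{t+1}$ are mutually consistent. The route differs from the paper's in organization rather than substance: the paper completes the square, $\lambda s + \tfrac12(s-s^t)^2 = \tfrac12(s-s^t+\lambda)^2 + \text{const}$, and then invokes the pre-proven projection Lemma~\ref{lem:slackL2ineqconstpos} with $s_0 \leftarrow s^t-\lambda$, $\delta\leftarrow 1$, $c\leftarrow \ell_i(w^t)$, $a\leftarrow\nabla\ell_i(w^t)$; that auxiliary lemma is itself proved in the appendix both by a geometric case analysis and by exactly the KKT bookkeeping you sketch, so your proof amounts to doing that lemma's argument inline (your $s=s^t-\lambda+\mu$ is the completed square in disguise). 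What the paper's reduction buys is that the four-case analysis is done once and reused (the same lemma underlies \texttt{SPSL1}); what your direct approach buys is self-containment. Two small cautions: your closing suggestion to invoke Lemma~\ref{lem:slackL1ineqconst} is the wrong auxiliary result --- the quadratic proximity term $\tfrac12(s-s^t)^2$ makes the slack penalty quadratic after the shift, so the relevant lemma is the L2-with-positivity one, Lemma~\ref{lem:slackL2ineqconstpos}, not the L1 lemma used for \texttt{SPS}$_{\max}$; and your passive-case clamp ``$s=\max\{s^t-\lambda,0\}$'' is only automatically feasible because $\ell_i(w^t)\geq 0$ (as in the paper, where the auxiliary lemma assumes $c\geq 0$), since $\ell_i(w^t)-s^t+\lambda\leq 0$ then forces $s^t-\lambda\geq 0$; for a negative right-hand side one could not simply clamp without rechecking the linearized constraint.
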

Missing proofs can  be found in the appendix.


What sets the \texttt{SPSL1} update~\eqref{eq:SPSL1} apart from the previous two slack methods in Section~\ref{sec:slack} is that the slack variable is now being updated iteratively and is used to update the step size adaptively. 
For instance,  starting from $s^0 = 0$ and if $\lambda$ is small enough\footnote{Specifically if $\lambda \leq \tfrac{\ell_i(w^t)}{\norm{\nabla \ell_i(w^t)}^2}$, which in the convex and $L_{\max}$--smooth setting occurs if $\lambda \leq \frac{1}{2L_{\max}}.$ This is exactly the restriction we apply to $\lambda$ in the forthcoming Theorem~\ref{theo:SPSL1smooth}. }, the  \texttt{SPSL1} takes the more aggressive \texttt{SPS} updates in $w^t$~\eqref{eq:SPS} since then $\gamma_t = \ell_i(w^t)/\norm{\nabla \ell_i(w^t)}^2$.
As $s^t$ starts to increase, the more conservative smaller step sizes $\gamma_t = \gamma_t^{L1}$ will be used. Finally if $s^t$ is too large, then $\gamma_t = \gamma_t^{L1} =0$ and the method will not update $w^t$, but will instead decrease $s^t$ since in this case $s^{t+1} = (s^t -\lambda)_+$ 
We can observe this behavior experimentally  in Figure~{\bf a)}.

\begin{minipage}[l]{0.5\textwidth}
Regularizing the slack variable can also smooth the trajectory of $s^t$, as we see 
in Figure~{\bf a)} where we show how $s^t$ for both \texttt{SPSL1} and \texttt{SPS}$_{\max}$ converge to the same value, but the slack variable of \texttt{SPSL1} takes a smoother route.
Finally, we can also see that  \texttt{SPS}$_{\max}$ is more sensitive to the tuning of $\lambda$ as compared to \texttt{SPSL1} by comparing the two update formulas~\eqref{eq:SPSmax} and~\eqref{eq:SPSL1}.
When $\lambda$ is small for \texttt{SPS}$_{\max}$ the method necessarily slows down, and in the extreme when $\lambda \rightarrow 0$ the \texttt{SPS}$_{\max}$ stops. 
In contrast, for $\lambda$ small, we have that $\gamma_t = \gamma_t^{L1}$ and \texttt{SPSL1} switches to a more conservative update but does not stop.
\end{minipage}
\begin{minipage}[r]{0.5\textwidth}
    \centering
     \includegraphics[width=3.0cm]{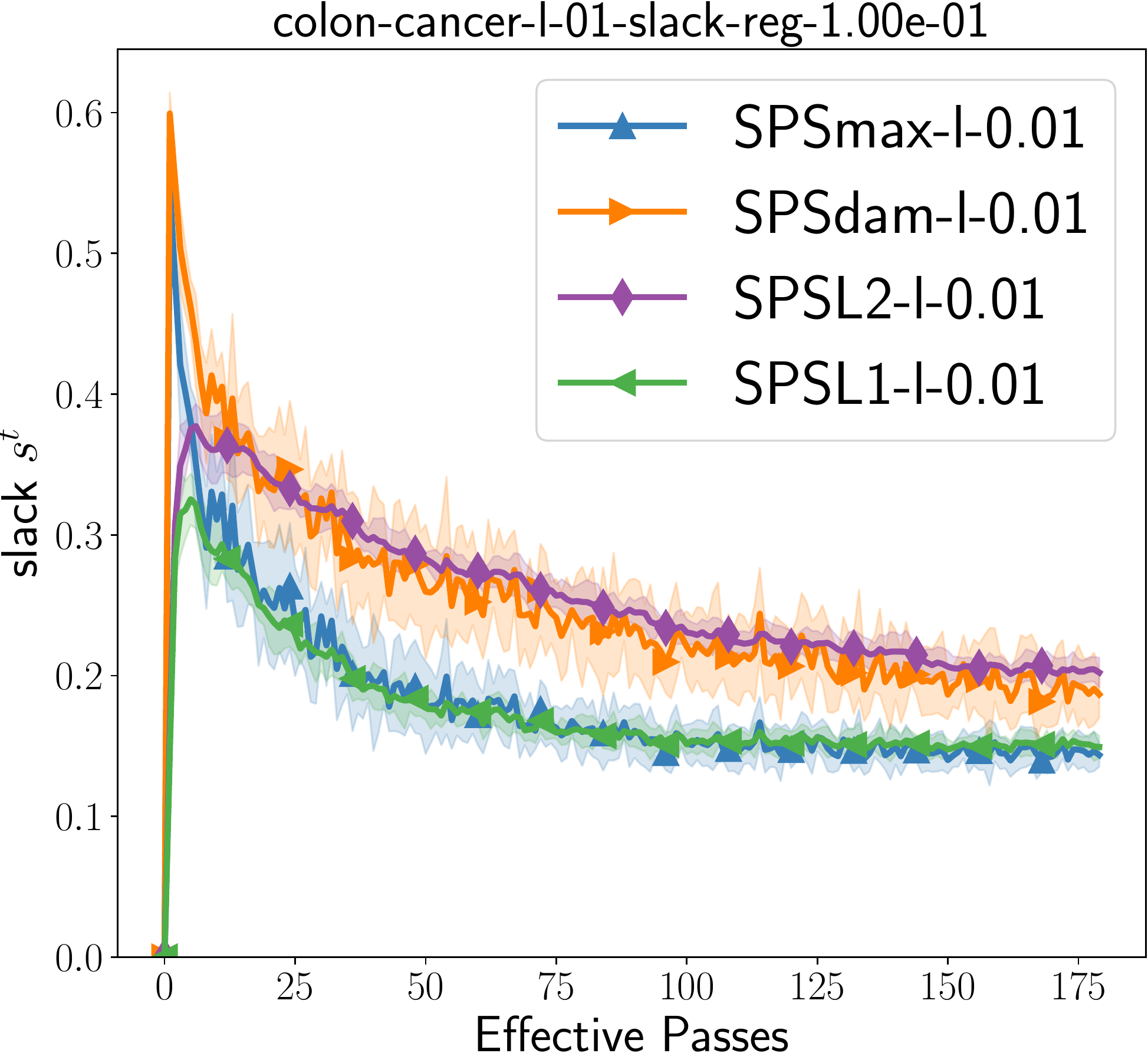} \hspace{0.2cm}
     \includegraphics[width=3.0cm]{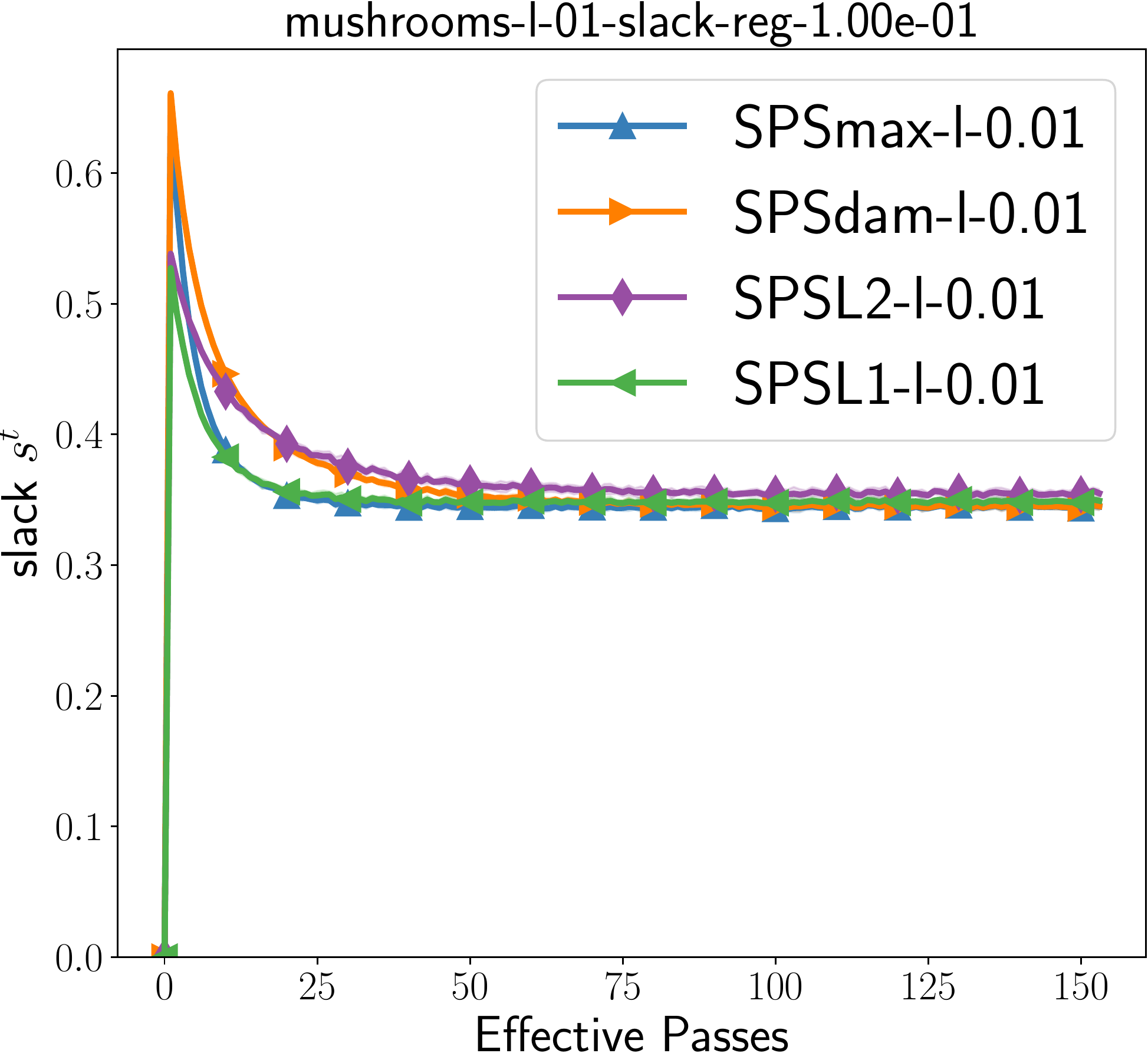}\\ 
    { {\bf Figure a).} Slack variable $s^t$ for \texttt{SPS}$_{\max}$, \texttt{SPS}$_{dam}$, \texttt{SPSL2} and \texttt{SPSL1} all with $\lambda =0.01$ on logistic regression. Left: \texttt{colon-cancer} dataset. Right:  \texttt{mushrooms} dataset.}
\end{minipage}

\subsection{L2-Regularized Slack}

We can also regularize the slack variable in the variational form of \texttt{SPS}$_{dam}$ in~\eqref{eq:ALIGproj} as follows 
\begin{align}
    \underset{w \in \R^d, s \in \R}{\min}&  \;\norm{w - w^t}^2 +(s-s^t)^2 + \lambda s^2,\, \mbox{  s.t.  } \ell_i(w^t) + \dotprod{\nabla \ell_i(w^t), w - w^t} \leq s \enspace,    \label{eq:SPSL2proj}
\end{align}
where again $\lambda >0$ is a regularization parameter.
\begin{lemma}\label{lem:SPSdam}
The closed-form update of~\eqref{eq:SPSL2proj} is given by
\begin{align}
    w^{t+1} &= w^t - \frac{(\ell_i(w^t) - \hat{\lambda} s^t)_+}{\norm{\nabla \ell_i(w^t)}^2 +\hat{\lambda}} \nabla \ell_i(w^t)\\
    s^{t+1} &= \hat{\lambda}\left(s^t + \frac{(\ell_i(w^t) - \hat{\lambda}s^t)_+}{\norm{\nabla \ell_i(w^t)}^2 +\hat{\lambda}}\right). \label{eq:SPSL2}
\end{align}
where  $\hat{\lambda} \eqdef 1/(1 + \lambda)$. 
We refer to~\eqref{eq:SPSL2}  as \texttt{SPSL2}.

\end{lemma}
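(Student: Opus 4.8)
The plan is to handle \eqref{eq:SPSL2proj} directly through its KKT conditions. Since the objective $\norm{w-w^t}^2 + (s-s^t)^2 + \lambda s^2$ is strongly convex in the pair $(w,s)$ and the single constraint $\ell_i(w^t) + \dotprod{\nabla \ell_i(w^t), w-w^t} \leq s$ is affine, the problem has a unique minimizer completely characterized by stationarity, primal feasibility, dual feasibility and complementary slackness. (Equivalently one could invoke the general ``L2 slack with an inequality constraint'' result, Lemma~\ref{lem:slackL2ineqconst}, already used for \texttt{SPS}$_{dam}$ in Section~\ref{sec:ALI-G-almost}; below I give the self-contained version.) Write $g \eqdef \nabla \ell_i(w^t)$ and $f \eqdef \ell_i(w^t)$, introduce a multiplier $\theta \geq 0$ for the constraint, and form the Lagrangian $L(w,s,\theta) = \norm{w-w^t}^2 + (s-s^t)^2 + \lambda s^2 + \theta\big(f + \dotprod{g, w-w^t} - s\big)$.

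First I would compute stationarity. The condition $\nabla_w L = 0$ gives $w = w^t - \tfrac{\theta}{2} g$, and $\partial_s L = 0$ gives $2(1+\lambda)s = 2 s^t + \theta$, i.e. $s = \hat\lambda\big(s^t + \tfrac{\theta}{2}\big)$ with $\hat\lambda \eqdef 1/(1+\lambda)$. Setting $\mu \eqdef \theta/2 \geq 0$, the candidate is $w = w^t - \mu g$ and $s = \hat\lambda(s^t + \mu)$, so it only remains to determine $\mu$.

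Next I would split on whether the constraint is active. If $\mu > 0$, complementary slackness forces equality; substituting the candidate gives $f - \mu \norm{g}^2 = \hat\lambda(s^t + \mu)$, which I solve to get $\mu = \big(f - \hat\lambda s^t\big)/\big(\norm{g}^2 + \hat\lambda\big)$, consistent with $\mu>0$ exactly when $f > \hat\lambda s^t$. If instead $f \leq \hat\lambda s^t$, then the unconstrained minimizer $w = w^t$, $s = \hat\lambda s^t$ is already feasible (the constraint reads $f \leq \hat\lambda s^t$), so $\mu = 0$ is dual-feasible and optimal. The two branches merge into $\mu = (f - \hat\lambda s^t)_+ / (\norm{g}^2 + \hat\lambda)$, and substituting this into $w = w^t - \mu g$ and $s = \hat\lambda(s^t + \mu)$ yields exactly \eqref{eq:SPSL2}. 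I would finish by checking that all KKT conditions hold in both branches, so by convexity the candidate is the global minimizer.

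I do not anticipate a real obstacle: the computation is routine once the Lagrangian is in place. The two points to be careful about are that $s$ ranges over all of $\R$ (there is no $s\geq 0$ constraint here, unlike \texttt{SPSL1}), so no additional multiplier enters the $s$-stationarity equation; and that the single threshold $f \leq \hat\lambda s^t$ correctly governs the active/inactive dichotomy, so that the $(\cdot)_+$ in the final formula glues the two cases together with matching values of $w^{t+1}$ and $s^{t+1}$.
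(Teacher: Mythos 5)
Your KKT derivation is correct: the stationarity conditions give $w = w^t - \mu \nabla\ell_i(w^t)$ and $s = \hat\lambda(s^t+\mu)$ with $\mu = \theta/2 \geq 0$, the active case yields $\mu = (\ell_i(w^t)-\hat\lambda s^t)/(\norm{\nabla\ell_i(w^t)}^2+\hat\lambda)$, the inactive case is exactly the regime $\ell_i(w^t) \leq \hat\lambda s^t$ where the unconstrained minimizer $(w^t,\hat\lambda s^t)$ is feasible, and the $(\cdot)_+$ glues the two branches into~\eqref{eq:SPSL2}. Your route is, however, genuinely different from the paper's. The paper first completes the square in the slack terms, writing $(s-s^t)^2+\lambda s^2 = \tfrac{1}{\hat\lambda}(s-\hat\lambda s^t)^2 + \mbox{const}$, which reduces~\eqref{eq:SPSL2proj} to the generic template of Lemma~\ref{lem:slackL2ineqconst} (L2 projection with a unidimensional inequality constraint) under the substitutions $a \leftarrow \nabla\ell_i(w^t)$, $c \leftarrow \ell_i(w^t)$, $s^0 \leftarrow \hat\lambda s^t$, $\delta \leftarrow 1/\hat\lambda$; that lemma is itself proved by a feasibility check plus projection onto the boundary hyperplane via the pseudoinverse. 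Your Lagrangian argument is self-contained and arguably more transparent about where the threshold $\ell_i(w^t) \leq \hat\lambda s^t$ comes from (dual feasibility versus complementary slackness), while the paper's approach buys reuse: the same projection lemma serves \texttt{SPS}$_{dam}$, \texttt{SPSL1} and \texttt{SPSL2}, so only the change of variables has to be redone each time. You correctly flagged the one point that actually matters here, namely that $s$ is unconstrained in sign in~\eqref{eq:SPSL2proj} (unlike the \texttt{SPSL1} problem), so no second multiplier appears; with that, your proof is complete and matches the claimed formulas.
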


This new \texttt{SPSL2} method now uses the slack variables $s^t$ to adjust the step size, which helps integrate information across iterations. 
As we start from $s^0 = 0$, the \texttt{SPSL2} update in $w$ is similar to the \texttt{ALI-G} method~\eqref{eq:exact_ALIG} (with $\eta$ very large).
As the slack variable increases,  the resulting step size  decreases, which in turn makes $s^t$ decrease, thus making an effective feedback between slack and step size. See Figure~{\bf a)} for a sample trajectory of the slack, and this ``increase then decrease'' phenomenon.

Furthermore, as $\lambda$ increases,  \texttt{SPSL2}  is closer  to the \texttt{SPS} update~\eqref{eq:SPS}. 
In the extreme where $\lambda \rightarrow \infty$ (thus $\hat{\lambda} \rightarrow 0$) we have that  \texttt{SPSL2} is equivalent to \texttt{SPS}.



\section{Convergence of \texttt{SPSL1}}
\label{sec:sonv_spspl1_main}

To prove convergence of \texttt{SPSL1}, we consider a small simplifying change to the method: we remove the explicit non-negativity constraint on the slack variable and add relaxation. That is, by dropping $s \geq 0$ in~\eqref{eq:SPSL1proj} we have 
\begin{align}
    w^{t+\tfrac{1}{2}}, s^{t+\tfrac{1}{2}} &= \! \underset{w\in\R^d, s \in \R}{ \argmin}  \tfrac{1}{2}\norm{w - w^t}^2 + \tfrac{1}{2}(s-s^t)^2 +\lambda s, \; 
    \mbox{s.t. }  \ell_i(w^t) + \dotprod{\nabla \ell_i(w^t), w - w^t} \leq s \enspace.  \nonumber \\
    [ w^{t+1},s^{t+1} ] &= (1-\gamma)[ w^{t}, s^t] + \gamma [w^{t+\tfrac{1}{2}}, s^{t+\tfrac{1}{2}}] 
    \label{eq:SPSL1noposproj-main}
\end{align}
where the second step is the relaxation and $\gamma \in (0,  1)$ is the relaxation parameter.
The closed-form solution of~\eqref{eq:SPSL1noposproj-main} can be found in detail in Section~\ref{sec:SPSL1nopos}.
This version of \texttt{SPSL1} in~\eqref{eq:SPSL1noposproj-main} is equivalent to~\eqref{eq:SPSL1proj} for non-negative losses that are convex and smooth, since  these assumptions guarantee the non-negativity of the slack variables $s^t$.
We use this property to establish the next theorem.
\begin{theorem} \label{theo:SPSL1smooth}
Fix $(w^\star,s^{\star}) \in \R^{d+1}$ and consider the $(w^t,s^t)$ iterates given by~\eqref{eq:SPSL1noposproj-main}.
Let $\bar{w}^T = \frac{1}{T} \sum_{t=0}^{T-1} w^t$, let $\ell_i$ be non-negative and let $s^0 \geq 0.$
If $\ell_i$ is convex, $L_{\max}$--smooth,  and $\lambda \leq 1/2L_{\max}$, then $s^t \geq 0 $  and
\begin{align}
  \E{ \ell (\bar{w}^T)} 
   & \leq \frac{1}{2 \gamma \lambda T}\frac{\norm{w^{0} -w^\star}^2  + (s^{0} -s^{\star})^2}{(1-\gamma )(1- \lambda L_{\max})}  +  \frac{1}{n} \sum_{i=1}^n\frac{(\ell_i(w^\star)+\lambda)^2 - \lambda^2  }{2\gamma \lambda (1-\gamma) (1- \lambda L_{\max}) } \enspace. \label{eq:SPSL1convsmooth}
\end{align}
Furthermore, if there exists a solution $w^\star$ to the interpolation equations~\eqref{eq:interpolate} and $\lambda = 1/2L_{\max}$ we have that
\begin{align} \label{eq:SPSL1convsmoothLmax}
 \E{ \ell (\bar{w}^T)} 
   & \leq \frac{2L_{\max}}{T}\frac{\norm{w^{0} -w^\star}^2  + (s^{0} -s^{\star})^2}{\gamma(1-\gamma )} \enspace. \quad
\end{align}
\end{theorem}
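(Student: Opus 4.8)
The plan is to read \eqref{eq:SPSL1noposproj-main} as a relaxed Euclidean projection in $\R^{d+1}$ and run a Fej\'er-type telescoping on the potential $\norm{w^t-w^\star}^2+(s^t-s^\star)^2$; the two features that make this non-routine are a \emph{shift} of the projected point that absorbs the linear penalty $\lambda s$, and the need to certify $s^t\ge 0$ so that \eqref{eq:SPSL1noposproj-main} genuinely is the \texttt{SPSL1} iteration. First I would establish $s^t\ge 0$ for all $t$ by induction (base case $s^0\ge 0$). For the inductive step I use that a convex, $L_{\max}$-smooth, non-negative $\ell_i$ obeys $\norm{\nabla \ell_i(w^t)}^2\le 2L_{\max}\ell_i(w^t)$, so $\lambda\le 1/(2L_{\max})$ gives $\lambda\norm{\nabla \ell_i(w^t)}^2\le \ell_i(w^t)$; plugging this into the $\gamma=1$ instance of the closed form \eqref{eq:SPSL1} (derived exactly as in Lemma~\ref{lem:ProjL1pludL2}) shows $s^{t+1/2}\ge 0$ whether or not $\gamma_t^{L1}=0$, hence $s^{t+1}=(1-\gamma)s^t+\gamma s^{t+1/2}\ge 0$. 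The same inequality makes the cap $\ell_i(w^t)/\norm{\nabla \ell_i(w^t)}^2$ in \eqref{eq:SPSL1} inactive, so along the iterates \eqref{eq:SPSL1noposproj-main} coincides with the relaxed \texttt{SPSL1} update and the loss being tracked is $\ell$.

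Next, put $z^t=(w^t,s^t)$ and $\tilde z^t=(w^t,s^t-\lambda)$; the half-step objective equals $\tfrac12\norm{z-\tilde z^t}^2$ up to an additive constant, so $z^{t+1/2}$ is the Euclidean projection of $\tilde z^t$ onto the half-space $H_i^t=\{(w,s):\ell_i(w^t)+\dotprod{\nabla \ell_i(w^t),w-w^t}\le s\}$, with the explicit formula $z^{t+1/2}=\tilde z^t-\gamma_t^{L1}(\nabla \ell_i(w^t),-1)$. I would then combine: (i) the projection identity $\dotprod{\tilde z^t-z^{t+1/2},\,z^\star-z^{t+1/2}}=\gamma_t^{L1}\big(\ell_i(w^t)+\dotprod{\nabla \ell_i(w^t),w^\star-w^t}-s^\star\big)$, whose right-hand side is at most $\gamma_t^{L1}(\ell_i(w^\star)-s^\star)$ by convexity of $\ell_i$; (ii) the exact relaxation identity $\norm{z^{t+1}-z^\star}^2=(1-\gamma)\norm{z^t-z^\star}^2+\gamma\norm{z^{t+1/2}-z^\star}^2-\gamma(1-\gamma)\norm{z^{t+1/2}-z^t}^2$; and (iii) $\norm{\tilde z^t-z^\star}^2=\norm{z^t-z^\star}^2-2\lambda(s^t-s^\star)+\lambda^2$. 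Substituting the projection formula, lower-bounding the ``progress'' contribution with $\norm{\nabla \ell_i(w^t)}^2\le 2L_{\max}\ell_i(w^t)$, $\lambda\le 1/(2L_{\max})$ and $s^t\ge 0$, and completing the square in $\ell_i(w^\star)$, the target is the one-step inequality
\begin{equation*}
  \EE{i}{\norm{z^{t+1}-z^\star}^2}\le \norm{z^t-z^\star}^2-2\gamma\lambda(1-\gamma)(1-\lambda L_{\max})\,\ell(w^t)+\frac1n\sum_{j=1}^n\big((\ell_j(w^\star)+\lambda)^2-\lambda^2\big),
\end{equation*}
with $\EE{i}{\cdot}$ the conditional expectation over the uniformly sampled index.

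From here the argument is standard: take total expectations, sum over $t=0,\dots,T-1$, discard the non-negative $\E{\norm{z^T-z^\star}^2}$, divide by $2\gamma\lambda(1-\gamma)(1-\lambda L_{\max})\,T$, and use Jensen on the convex average loss $\ell$ (so $\ell(\bar w^T)\le\frac1T\sum_{t=0}^{T-1}\ell(w^t)$); this produces \eqref{eq:SPSL1convsmooth} verbatim, using $\norm{w^0-w^\star}^2+(s^0-s^\star)^2=\norm{z^0-z^\star}^2$. For \eqref{eq:SPSL1convsmoothLmax}, take $\lambda=1/(2L_{\max})$, so that $1-\lambda L_{\max}=\tfrac12$ and $2\gamma\lambda=\gamma/L_{\max}$, and invoke interpolation $\ell_j(w^\star)=0$, which annihilates the constant term.

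The main obstacle is the one-step inequality. The raw progress produced by the projection is $\gamma_t^{L1}(\ell_i(w^t)-s^t+\lambda)_+=(\ell_i(w^t)-s^t+\lambda)_+^2/(1+\norm{\nabla \ell_i(w^t)}^2)$, which couples the loss, the slack $s^t$, and the gradient norm and does not telescope on its own; converting it into a clean multiple of $\ell_i(w^t)$ while the residual $s^t$- and $s^\star$-dependent terms are absorbed by the $(s^t-s^\star)^2$ part of the potential is where the real work lies, and it requires treating the ``no-$w$-step'' regime $\gamma_t^{L1}=0$ (in which $w^t$ is frozen but $s^t$ still contracts toward $s^\star$) as a separate case. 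Pinning down the precise constants $2\gamma\lambda(1-\gamma)(1-\lambda L_{\max})$ and $(\ell_j(w^\star)+\lambda)^2-\lambda^2$, rather than a lossy version, is the delicate part of the computation.
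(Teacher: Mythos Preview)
Your projection/Fej\'er framing is sound—the identification of $z^{t+1/2}$ as the Euclidean projection of the shifted point $\tilde z^t=(w^t,s^t-\lambda)$ onto $H_i^t$, the relaxation identity, and the $s^t\ge 0$ induction are all correct, the last essentially matching the paper's Lemma~\ref{lem:SPSL1snon-negativesmooth}. However, your route diverges from the paper at the core step, and you yourself flag the gap: the one-step inequality with the precise coefficient $2\gamma\lambda(1-\gamma)(1-\lambda L_{\max})$ is not derived, and the coupling between $\gamma_t^{L1}(\ell_i(w^t)-s^t+\lambda)_+$, the slack $s^t$, and the cross term $-2\lambda(s^t-s^\star)$ that your expansion of $\|\tilde z^t-z^\star\|^2$ produces is genuinely awkward to untangle directly.

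The paper sidesteps exactly this difficulty by recasting the iteration as SGD with step size $\gamma$ on the auxiliary objective
\[
G_{t,i}(w,s)=\tfrac12\frac{(\ell_i(w)-s+\lambda)_+^2}{1+\norm{\nabla\ell_i(w^t)}^2}+\lambda s,
\]
so that $z^{t+1}=z^t-\gamma\nabla G_{t,i}(z^t)$. Three lemmas then do all the work: an exact \emph{growth identity} $\norm{\nabla G_{t,i}(z^t)}^2=2G_{t,i}(z^t)-2\lambda\hat s^{t+1}-\lambda^2$ (Lemma~\ref{lem:growthSPSL1}), convexity of $G_t$ inherited from convexity of $\ell_i$ (Lemma~\ref{lem:convexSPSL1}), and a lower bound $G_t(w,s)\ge\lambda(1-\lambda L_{\max})\,\ell(w)+\tfrac{\lambda^2}{2}$ obtained by \emph{minimizing $G_{t,i}$ in $s$} and then invoking $\norm{\nabla\ell_i(w^t)}^2\le 2L_{\max}\ell_i(w^t)$ (Lemma~\ref{lem:SPSL1lowerbound}). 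The last device is precisely what you are missing: by first running a standard SGD telescoping for $G_t$ (Lemma~\ref{lem:convSPSL1}) and only \emph{afterwards} eliminating the slack through the pointwise bound $\min_s G_{t,i}(w,s)=\lambda\ell_i(w)+\tfrac{\lambda^2}{2}(1-\norm{\nabla\ell_i(w^t)}^2)$, the $s^t$-dependence never has to be balanced against $(s^t-s^\star)^2$ by hand. The upper bound $G_t(w^\star,0)\le\tfrac{1}{2n}\sum_i(\ell_i(w^\star)+\lambda)^2$ then delivers the constant term, and $\hat s^{t+1}\ge 0$ lets one drop the residual slack contribution. Your projection ingredients are equivalent to pieces of this computation, but the paper's packaging into $G_t$ is what makes the constants fall out without the case analysis and square-completions you anticipate needing.
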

This convergence in~\eqref{eq:SPSL1convsmooth} gives a rate of $\cO(\frac{1}{T} + \mbox{constant})$, where this constant depends on $w^\star$ and the parameters. Under these same assumptions this is also the convergence rate of  \texttt{SPS}$_{\max}$ (Theorem 3.4~\cite{SPS}) and \texttt{SGD} (Theorem 4.1~\cite{SPS} and Corollary 4.4~\cite{KSLGR2020unifiedsgm}).
 What is different is this constant  term on the right hand side of~\eqref{eq:SPSL1convsmooth}. Though this term is different, under interpolation we have that this constant term is zero giving~\eqref{eq:SPSL1convsmoothLmax}, which excluding the factor of $\frac{1}{\gamma(1-\gamma)}$,  is identical to the convergence of both \texttt{SPS}$_{\max}$ (Theorem 3.4~\cite{SPS}) and \texttt{SGD} (Theorem 4.1~\cite{SPS} and Corollary 4.4~\cite{KSLGR2020unifiedsgm}).


Next we prove  convergence for nonsmooth functions.
\begin{theorem}\label{theo:SPSL1lip}
Let $w^* \in \argmin_{w \in \R^d} \max_{i=1,\ldots, n} \ell_i(w)$ and $s^* = \ell_i(w^*) +  \lambda$.
Consider the $(w^t,s^t)$ iterates given by~\eqref{eq:SPSL1noposproj-main}. 
Let $\bar{w}^T = \frac{1}{T} \sum_{t=0}^{T-1} w^t$ and let $\ell_i$ be non-negative. 
    
      If  $\ell_i$ is $\sigma$--Lipschitz, convex  and $s^0 \geq -\lambda \sigma^2$, then 
\begin{align}
    &\E{\frac{1 }{n} \sum_{i=1}^n \ell_i(\bar{w}^T) } 
    \leq \frac{\norm{w^{0} -w^*}^2 +(s^0-s^*)^2 }{2\gamma (1-\gamma) \lambda T} + \frac{\lambda}{2} \frac{1  + \sigma^2 (1 + \gamma)}{ 1-\gamma }   
    + \frac{\max_{i=1,\ldots, n} \ell_i(w^\star) }{1-\gamma} \; . \label{eq:SPSL1convLip}
\end{align} 
\end{theorem}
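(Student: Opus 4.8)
The plan is to run a Lyapunov argument on the lifted potential $V^t \eqdef \norm{w^t-w^*}^2 + (s^t-s^*)^2$, viewing \eqref{eq:SPSL1noposproj-main} as a stochastic relaxed projection step. First I would record the closed form of the half-step: since $\tfrac12(s-s^t)^2+\lambda s = \tfrac12\bigl(s-(s^t-\lambda)\bigr)^2 + \mathrm{const}$, the pair $(w^{t+\frac12},s^{t+\frac12})$ is the Euclidean projection of the shifted point $(w^t,\,s^t-\lambda)$ onto the half-space $H_i^t \eqdef \{(w,s): \ell_i(w^t)+\dotprod{\nabla\ell_i(w^t),\,w-w^t}\le s\}$ (here $i$ is the index sampled at step $t$). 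Writing $\mu_t \eqdef \frac{(\ell_i(w^t)-s^t+\lambda)_+}{1+\norm{\nabla\ell_i(w^t)}^2}\ge 0$ for the associated KKT multiplier, this yields $w^{t+\frac12}=w^t-\mu_t\nabla\ell_i(w^t)$, $s^{t+\frac12}=s^t-\lambda+\mu_t$, and hence $w^{t+1}=w^t-\gamma\mu_t\nabla\ell_i(w^t)$, $s^{t+1}=s^t-\gamma\lambda+\gamma\mu_t$ --- this is the $s\in\R$ specialization of Lemma~\ref{lem:ProjL1pludL2} worked out in Section~\ref{sec:SPSL1nopos}.

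The core estimate is the three-point inequality obtained from $1$-strong convexity of the Lagrangian of the half-step problem, evaluated at the reference point $(w^*,s^*)$. That point must be feasible for the linearized constraint, which holds by convexity of $\ell_i$ since $\ell_i(w^t)+\dotprod{\nabla\ell_i(w^t),\,w^*-w^t}\le \ell_i(w^*)\le s^*$; this is exactly where the choice of $s^*$ in the statement enters, as feasibility requires $s^*\ge \ell_j(w^*)$ for \emph{all} $j$ (so $s^*\ge\max_j\ell_j(w^*)$), which is why $\max_i\ell_i(w^*)$ rather than $\ell(w^*)$ ultimately appears in the bound. Complementary slackness annihilates the multiplier term at the half-step, and the Lagrangian inequality collapses to
\[
V^t - V^{t+\frac12} \;\ge\; \norm{w^{t+\frac12}-w^t}^2 + \bigl(s^{t+\frac12}-s^t\bigr)^2 + 2\lambda\bigl(s^{t+\frac12}-s^*\bigr),
\]
with $V^{t+\frac12}\eqdef\norm{w^{t+\frac12}-w^*}^2+(s^{t+\frac12}-s^*)^2$.

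Next I would extract the loss: feasibility of the half-step (with equality when the constraint is active) gives $\ell_i(w^t)\le s^{t+\frac12}-\dotprod{\nabla\ell_i(w^t),\,w^{t+\frac12}-w^t}\le s^{t+\frac12}+\sigma\norm{w^{t+\frac12}-w^t}$ using $\norm{\nabla\ell_i(w^t)}\le\sigma$. Multiplying by $2\lambda$, substituting the bound on $2\lambda s^{t+\frac12}$ from the displayed inequality, and absorbing the cross term via $-\norm{w^{t+\frac12}-w^t}^2+2\lambda\sigma\norm{w^{t+\frac12}-w^t}\le\lambda^2\sigma^2$ yields a per-step estimate of the form $2\lambda\,\ell_i(w^t)+V^{t+\frac12}\le V^t+2\lambda s^*+\lambda^2\sigma^2$. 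I would then fold in the relaxation via the identity $V^{t+1}=(1-\gamma)V^t+\gamma V^{t+\frac12}-\gamma(1-\gamma)\norm{(w^{t+\frac12},s^{t+\frac12})-(w^t,s^t)}^2$; a careful accounting of the nonnegative terms dropped at this stage and the previous one is what recovers the precise $\gamma$-dependence (the $\gamma(1-\gamma)$ and $\sigma^2(1+\gamma)$ factors) in the statement. Taking the conditional expectation over $i_t$ so that $\EE{i_t}{\ell_{i_t}(w^t)}=\ell(w^t)$, telescoping over $t=0,\dots,T-1$, dividing by $2\gamma(1-\gamma)\lambda T$, and applying Jensen's inequality to pass from $\tfrac1T\sum_t\ell(w^t)$ to $\ell(\bar w^T)=\tfrac1n\sum_i\ell_i(\bar w^T)$ gives the claimed $\cO(1/T)+\mathrm{const}$ rate.

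The step I expect to be the main obstacle is this loss extraction. A direct subgradient expansion $\norm{w^{t+1}-w^*}^2\le\norm{w^t-w^*}^2-2\gamma\mu_t\bigl(\ell_i(w^t)-\ell_i(w^*)\bigr)+\dots$ produces the loss carrying the unwieldy step-size factor $\mu_t$, which cannot simply be lower-bounded; routing everything through the strongly-convex Lagrangian (equivalently, the projection inequality together with the convex-combination identity) is what makes $\ell_i(w^t)$ appear with a clean, $\mu_t$-free coefficient, via the active-constraint identity $\ell_i(w^t)=s^{t+\frac12}+\mu_t\norm{\nabla\ell_i(w^t)}^2$. Two subsidiary points also need care: verifying the sign $\mu_t\bigl(\ell_i(w^t)+\dotprod{\nabla\ell_i(w^t),\,w^*-w^t}-s^*\bigr)\le 0$ of the constraint-gap term (which is what forces the min--max optimum to be the right competitor), and checking by induction, using $\ell_i\ge 0$, Lipschitzness and the explicit $s^{t+1}$ update, that $s^t\ge -\lambda\sigma^2$ propagates from $s^0$, so that the non-active case $\mu_t=0$ (where $w^{t+\frac12}=w^t$) is covered by the same estimate.
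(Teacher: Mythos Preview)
Your approach is correct and takes a genuinely different route from the paper. The paper casts \eqref{eq:SPSL1noposproj-main} as \texttt{SGD} on the online surrogate $G_{t,i}(w,s)=\tfrac12\frac{(\ell_i(w)-s+\lambda)_+^2}{1+\norm{\nabla\ell_i(w^t)}^2}+\lambda s$, and then stacks four dedicated lemmas: a growth identity $\norm{\nabla G_{t,i}(z^t)}^2 = 2G_{t,i}(z^t)-2\lambda\hat s^{t+1}-\lambda^2$ (Lemma~\ref{lem:growthSPSL1}), inherited convexity of $G_t$ (Lemma~\ref{lem:convexSPSL1}), a lower bound $G_t(w,s)\ge \lambda\ell(w)+\tfrac{\lambda^2}{2}(1-\sigma^2)$ obtained by minimizing in $s$ (Lemma~\ref{lem:SPSL1lowerbound}), and the induction $\hat s^{t}\ge -\lambda\sigma^2$ (Lemma~\ref{lem:SPSL1slower}) to control the stray $-2\gamma\lambda\hat s^{t+1}$ term left over from the growth identity. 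You instead work directly with the projection structure: the three-point inequality for the strongly convex Lagrangian, the active-constraint identity $\ell_i(w^t)=s^{t+\frac12}+\mu_t\norm{\nabla\ell_i(w^t)}^2$, and the convex-combination identity for the relaxation. This is more elementary---no surrogate, no Hessian computation for convexity, no separate lower-bound lemma---and, if you track the constants, actually yields a slightly sharper bound (the $1/(1-\gamma)$ factors on the first two right-hand terms do not appear, since your argument never passes through the $-2\gamma(1-\gamma)G_t$ coefficient that the paper's growth-plus-convexity combination produces). Two remarks: first, in your direct route the slack lower bound $s^t\ge -\lambda\sigma^2$ is in fact not needed---the non-active case $\mu_t=0$ is already covered by the projection inequality, whereas in the paper's route that lower bound is essential because $\hat s^{t+1}$ appears explicitly in the growth identity; second, precisely because the constants you obtain are tighter, you should not expect to reproduce the \emph{exact} $\gamma(1-\gamma)$ and $\sigma^2(1+\gamma)$ factors of the stated inequality, only a bound of the same form.
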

In Theorem~\ref{theo:SPSL1lip} we can see that $\lambda$ trades off speed of convergence and the \emph{radius of convergence}.
In some ways, it plays the same role as constant step size for \texttt{SGD}.
On the one hand, when $\lambda$ goes to infinity \texttt{SPSL1} escapes faster from the initial conditions as the convergence rate is ruled by $\cO (1/\lambda T)$.
On the other hand, when $\lambda$ goes to zero, the first term of~\eqref{eq:SPSL1convLip} goes to infinity, whereas the second term, the constant error one, goes to zero. 
Furthermore, the radius of convergence also depends on the remaining term $\max_{i=1,\ldots, n} \ell_i(w^\star)$, which vanishes in the interpolated setting as it is close to zero or can be controlled for $\gamma$ small.

\section{Convergence of \texttt{SPSL2}}
\label{sec:sonv_spspl2_main}



To prove convergence of \texttt{SPSL2} we also introduce a relaxation parameter $\gamma \in [0, 1]$ and a relaxation step analogously to the relaxation we introduced in~\eqref{eq:SPSL1noposproj-main}. The details of this change can be found in Section~\ref{sec:spsl2_convergence_theory} and specifically formula~\eqref{eq:SPSL2SGD}.
With this one additional change, we can  prove the convergence of \texttt{SPSL2} in terms of the average of the square of the losses, as we show in the next theorem.

\begin{theorem} 
\label{theo:SPSDAMfmax_main}
Fix $(w^\star,s^{\star}) \in \R^{d+1}$ and consider the $(w^t,s^t)$ iterates given by~\eqref{eq:SPSL2} with a relaxation parameter $\gamma \in (0,  1)$.
Let $\bar{w}^T = \tfrac{1}{T} \sum_{t=0}^{T-1} w^t$ and let $\ell_i$ be non-negative.

If $\ell_i$ is  $\sigma$--Lipschitz for every $i\in \{1,\ldots, n\}$ then
\begin{align*} 
    \tfrac{1}{n} \sum_{i=1}^n\E{\ell_i(\bar{w}^T)^2}  
    & \leq \frac{1}{C (\hat{\lambda})}\frac{\norm{w^{0} -w^\star}^2+(s^0 -s^{\star})^2}{\gamma(1-\gamma)  T} +  \frac{1}{C (\hat{\lambda})} \frac{1-\hat{\lambda}}{\hat{\lambda}^2 (1-\gamma)} \max_{i=1,\ldots, n} \ell_i(w^\star)^2 \enspace. 
\end{align*}
\begin{equation}
    \label{eq:Capp_main}
    \mbox{where     }  \bar{w}^T = \sum_{t=0}^{T-1} w^k/T, \; \hat{\lambda} = 1/(1+\lambda)\mbox{   and   } 
    C (\hat{\lambda}) \eqdef \frac{1 - \hat{\lambda}}{\hat{\lambda}^2 + (1 - \hat{\lambda}) \left(\hat{\lambda} + L_{\max}^2 \right)} \enspace.
\end{equation}
\end{theorem}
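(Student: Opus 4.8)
The plan is to recognize one step of \texttt{SPSL2} as a relaxed projection in a rescaled metric, and then run a standard Fej\'er/Lyapunov telescoping argument. By completing the square in $s$ one sees that $(s-s^t)^2+\lambda s^2 = (1+\lambda)(s-\hat{\lambda}s^t)^2 + \mathrm{const}$, so~\eqref{eq:SPSL2proj} (which underlies Lemma~\ref{lem:SPSdam}) is exactly the projection of the point $(w^t,\hat{\lambda}s^t)$ onto the half-space $H_{i_t,t} \eqdef \{(w,s) : \ell_{i_t}(w^t)+\dotprod{\nabla\ell_{i_t}(w^t),\,w-w^t}\leq s\}$ in the metric $\mM \eqdef \diag{\mI_d,\,1+\lambda}$, and the relaxed version used in the theorem (formula~\eqref{eq:SPSL2SGD}) interpolates between $z^t\eqdef(w^t,s^t)$ and this projection with parameter $\gamma$. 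Accordingly I would track the Lyapunov function $V_t \eqdef \norm{w^t-w^\star}^2 + (1+\lambda)(s^t-s^\star)^2$, the squared $\mM$-distance of $z^t$ to $z^\star\eqdef(w^\star,s^\star)$.

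The heart of the argument is a one-step inequality. Using the convex-combination identity for the relaxed update together with (i) convexity of $\ell_{i_t}$, so that the linearized constraint at $w^\star$ is violated by at most $\ell_{i_t}(w^\star)-s^\star$, and (ii) the fact that $s^t\mapsto\hat{\lambda}s^t$ is a contraction on the $s$-coordinate (so replacing $s^t$ by $\hat{\lambda}s^t$ inside $V_t$ costs only a term controlled by $s^\star$ and $\max_i\ell_i(w^\star)$), one obtains, conditioning on the past $\mathcal{F}_t$,
\begin{equation*}
  \E{V_{t+1}\mid\mathcal{F}_t} \;\leq\; V_t \;-\; \gamma(1-\gamma)\,\E{\frac{(\ell_{i_t}(w^t)-\hat{\lambda}s^t)_+^2}{\norm{\nabla\ell_{i_t}(w^t)}^2+\hat{\lambda}}\;\Big|\;\mathcal{F}_t} \;+\; \gamma\,\Big(\text{error controlled by }\textstyle\max_i\ell_i(w^\star)\Big),
\end{equation*}
where the negative term is precisely the squared $\mM$-length of the unrelaxed projection displacement.

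Next I would lower bound the progress term. Since each $\ell_i$ is $\sigma$-Lipschitz we have $\norm{\nabla\ell_i(w^t)}^2\leq L_{\max}^2$, so the denominator is at most $L_{\max}^2+\hat{\lambda}$; it then remains to convert $(\ell_{i_t}(w^t)-\hat{\lambda}s^t)_+^2$ into a multiple of $\ell_{i_t}(w^t)^2$, and this is exactly where the constant $C(\hat{\lambda})$ of~\eqref{eq:Capp_main} is produced --- as a sanity check, $1/C(\hat{\lambda})$ collapses to $\tfrac{1}{\lambda}+L_{\max}^2$. The residual cross terms in $s^t$ get absorbed into the Lyapunov recursion and into the $\tfrac{1-\hat{\lambda}}{\hat{\lambda}^2}\max_i\ell_i(w^\star)^2$ error. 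Telescoping over $t=0,\dots,T-1$, taking total expectation, dividing by $T$, and finally applying Jensen's inequality twice --- once by convexity of $\ell_i$ and once by convexity of $x\mapsto x^2$ on $[0,\infty)$, which is legitimate because $\ell_i\geq 0$, so that $\ell_i(\bar{w}^T)^2\leq\tfrac{1}{T}\sum_{t=0}^{T-1}\ell_i(w^t)^2$ --- yields the stated bound. Under interpolation~\eqref{eq:interpolate} one has $\max_i\ell_i(w^\star)=0$ and the second term vanishes.

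The main obstacle is the coupling between the slack variable $s^t$ and the iterate $w^t$: one must show that $\hat{\lambda}s^t$ stays small relative to $\ell_{i_t}(w^t)$ up to a $\max_i\ell_i(w^\star)$-sized slack, while simultaneously carrying the contraction factor $\hat{\lambda}$ through the recursion without bleeding constants, and it is this bookkeeping that dictates the unusual shape of $C(\hat{\lambda})$ and of the error coefficient $\tfrac{1-\hat{\lambda}}{\hat{\lambda}^2(1-\gamma)}$. A secondary subtlety, since the theorem fixes an \emph{arbitrary} $(w^\star,s^\star)$, is that this reference need not satisfy even the linearized constraint, so the infeasibility must be paid for explicitly --- this is the origin of the $\max_i\ell_i(w^\star)^2$ term (rather than some cleaner quantity) in the final estimate.
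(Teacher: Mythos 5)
Your overall plan (a Lyapunov/telescoping argument on the iterates, with the per-step progress being the squared displacement $\tfrac{(\ell_i(w^t)-\hat{\lambda}s^t)_+^2}{\hat{\lambda}+\norm{\nabla \ell_i(w^t)}^2}$, and the error paid at a suitably chosen reference slack) is close in spirit to the paper, and your identification of the completed square $(s-s^t)^2+\lambda s^2=(1+\lambda)(s-\hat{\lambda}s^t)^2+\mathrm{const}$ and the sanity check $1/C(\hat{\lambda})=\tfrac{1}{\lambda}+L_{\max}^2$ are both correct. But the step you defer --- ``convert $(\ell_{i_t}(w^t)-\hat{\lambda}s^t)_+^2$ into a multiple of $\ell_{i_t}(w^t)^2$'' by showing ``$\hat{\lambda}s^t$ stays small relative to $\ell_{i_t}(w^t)$ up to a $\max_i\ell_i(w^\star)$-sized slack'' --- is a genuine gap, not bookkeeping. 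No such pointwise control of $s^t$ by the current loss holds in general (the sampled loss can be tiny while $s^t$ is not, in which case your progress term is zero even though $\ell_i(w^t)^2$ need not be small), and nothing in your sketch supplies the missing argument; yet this is exactly where $C(\hat{\lambda})$ has to be produced.

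The paper closes this in a different and cleaner way, which you should compare against. It rewrites the relaxed update as SGD with step $\gamma$ on the auxiliary function $F_{t,i}(w,s)=\tfrac{1}{2}\tfrac{(\ell_i(w)-\hat{\lambda}s)_+^2}{\hat{\lambda}+\norm{\nabla \ell_i(w^t)}^2}+(1-\hat{\lambda})\tfrac{s^2}{2}$, proves the growth bound $\norm{\nabla F_{t,i}(z^t)}^2\le 2F_{t,i}(z^t)$ and that $F_t$ inherits convexity, and then runs the standard telescoping on the plain Euclidean distance $\norm{z^t-z^*}^2$ (Lemma~\ref{lem:convergence}). The crucial point is that the per-step progress is $F_t(z^t)$ itself, which contains the extra $(1-\hat{\lambda})(s^t)^2/2$ term your projection-displacement quantity omits; because of it, one can minimize $F_t(w,\cdot)$ over $s$ explicitly (Lemma~\ref{lem:objSPSdambound}) and obtain the $s$-uniform lower bound $F_t(w,s)\ge \tfrac{C(\hat{\lambda})}{2n}\sum_i\ell_i(w)^2$ --- this is where $C(\hat{\lambda})$ actually comes from, with no claim whatsoever about the size of $s^t$ along the trajectory. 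The error term is then just $F_t$ evaluated at $z^*=(w^\star,\tfrac{1}{\hat{\lambda}}\max_i\ell_i(w^\star))$, which equals $\tfrac{1-\hat{\lambda}}{2\hat{\lambda}^2}\max_i\ell_i(w^\star)^2$ (so the theorem really concerns this particular $s^\star$, not an arbitrary one). Two smaller points: your Lyapunov weight $(1+\lambda)$ on the slack coordinate does not match the Euclidean quantity $\norm{w^0-w^\star}^2+(s^0-s^\star)^2$ in the statement (the relaxed update is a Euclidean SGD step, so the unweighted distance is the natural one), and convexity of the $\ell_i$ is indeed needed, as you use, even though the main-text statement omits it.
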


This last result in Theorem~\eqref{theo:SPSDAMfmax_main} shows that \texttt{SPSL2} converges to a minimizer of $\tfrac{1}{n} \sum_{i=0}^n \ell_i(w)^2$ up to an additive constant error proportional to $\max_{i=1,\ldots, n} \ell_i(w^\star)^2.$
This is perhaps not so surprising given that \texttt{SPSL2} is indeed solving the max squared loss~\eqref{eq:slackobjL2}, which in turn is an over-approximation of the average squared loss.
As in Theorems~\ref{theo:SPSL1lip} and~\ref{theo:SPSL1smooth}, the slack parameter $\lambda$ (through $\hat{\lambda}$) trades off the speed of convergence and the constant error term. Furthermore, under interpolation we have again that the constant error term is zero.



In the appendix in Theorem~\ref{theo:SPSDAMregret} we also provide an extension of the regret analysis for \texttt{PA} methods, given in Theorem 5 in~\cite{Crammer06}, from hinge loss over linear models to a more general class of nonlinear convex models.

\section{Numerical Experiments}

We perform several numerical experiments for validating the usefulness of the proposed SPS variants on logistic regression and DNNs. Specifically, we compare: \texttt{ADAM} \cite{ADAM};
 \texttt{ALI-G}, i.e., update \eqref{eq:exact_ALIG};
\texttt{SPS}$_{\max}$, i.e., update \eqref{eq:SPSmaxproj};
\texttt{SPS}$_{dam}$ (proposed), i.e., update \eqref{eq:SPSdam};
\texttt{SPSL1} (proposed), i.e., update \eqref{eq:SPSL1};
\texttt{SPSL2} (proposed), i.e., update \eqref{eq:SPSL2}.
Apart from \texttt{ADAM}, all methods are SPS variants.



\begin{figure*}
\centering\includegraphics[width=\linewidth]{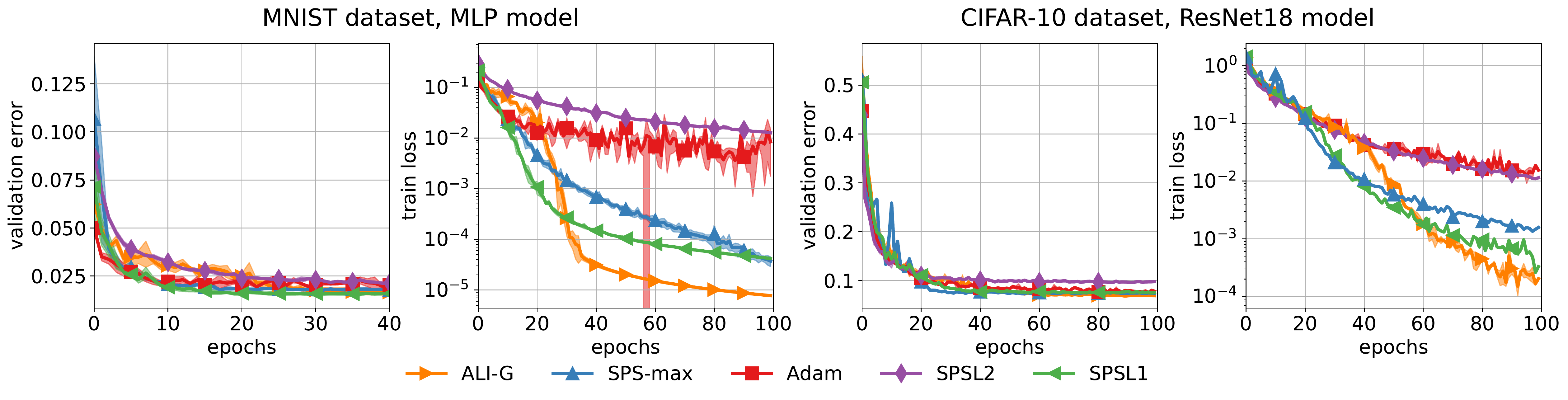}
\centering\includegraphics[width=0.5\linewidth]{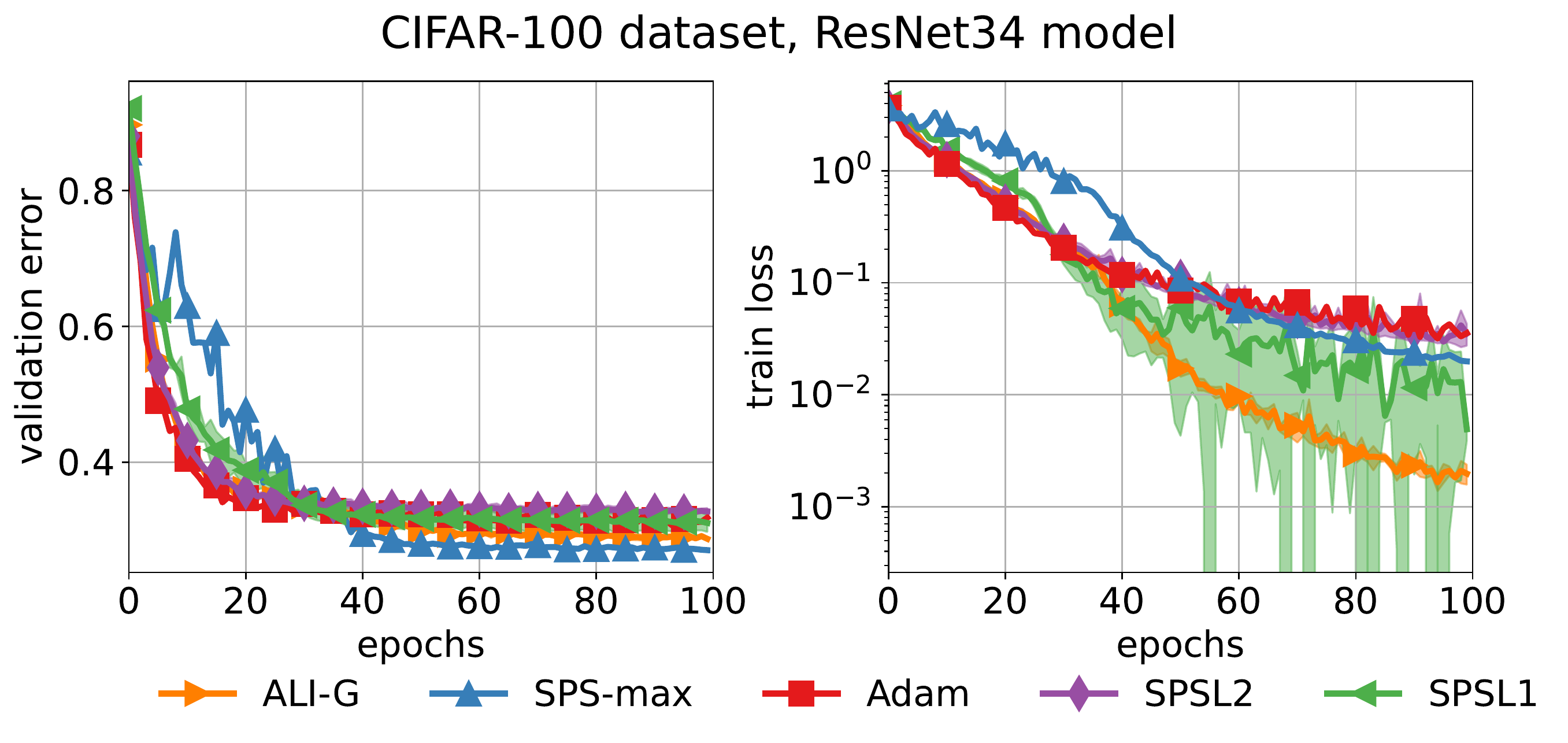}
\includegraphics[width=0.23\linewidth]{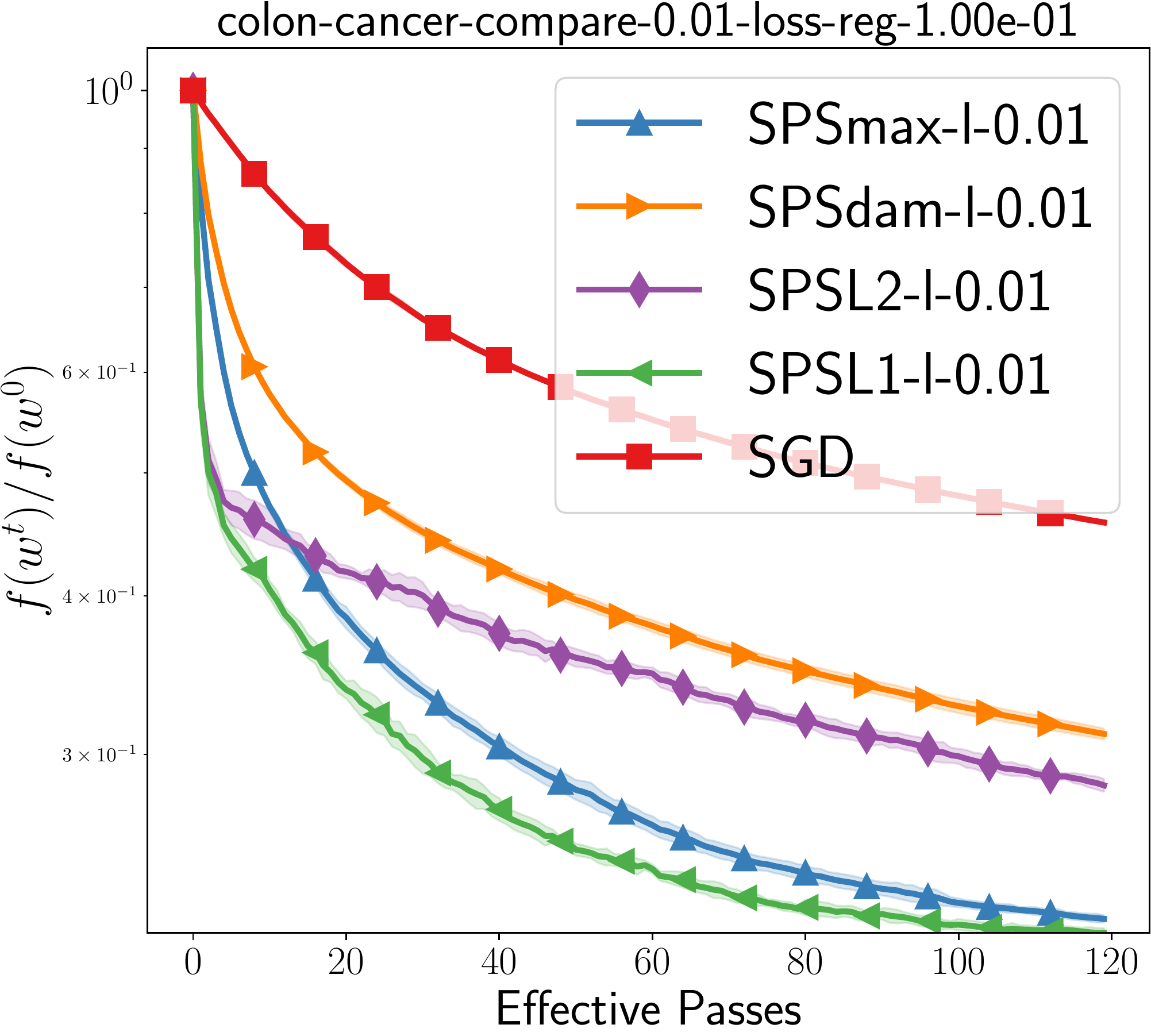} 
 \includegraphics[width=0.23\linewidth]{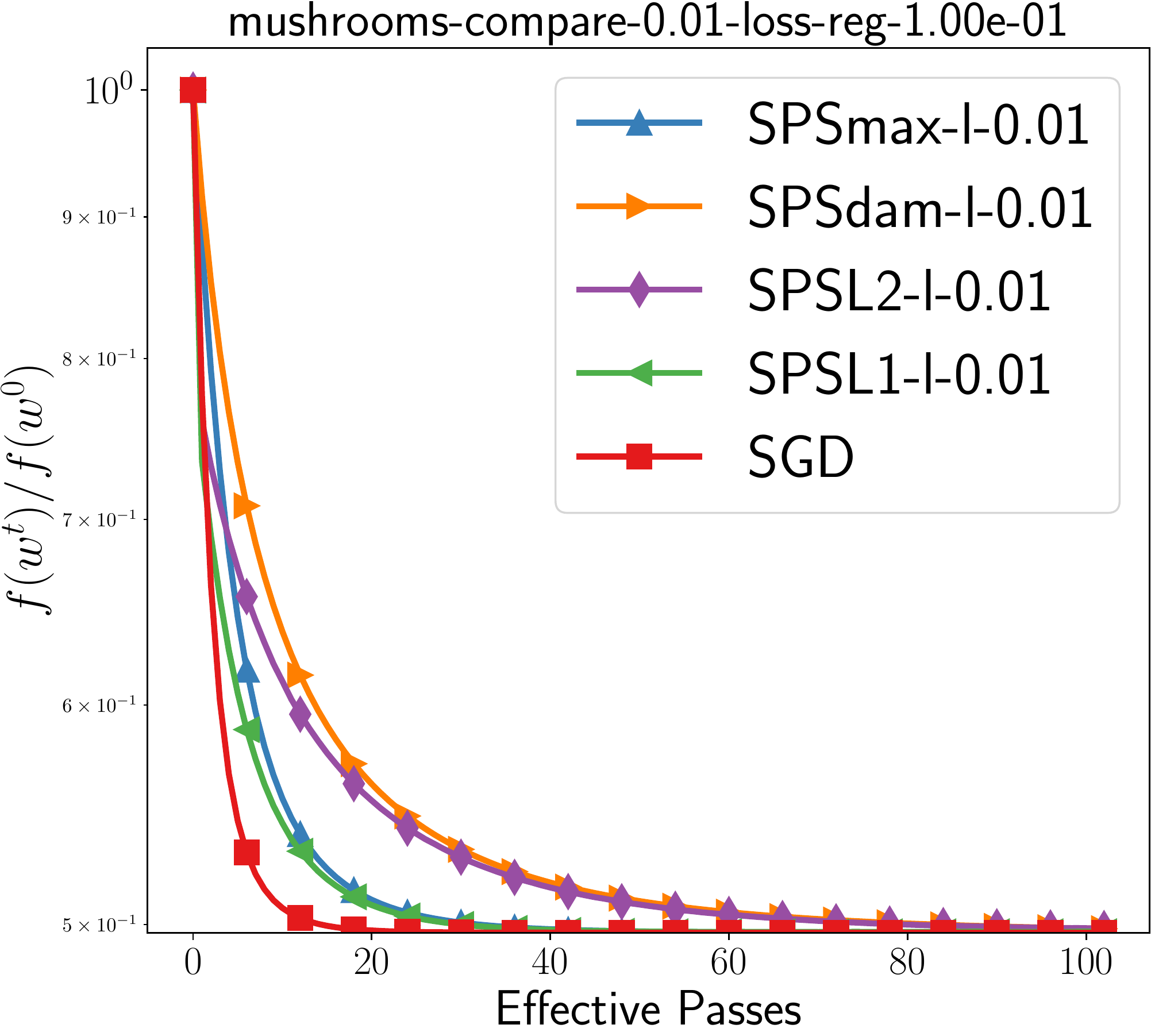}
\caption{Comparison of the proposed and related methods on different computer vision problems in terms of both error in a validation set and loss in the train set. With the same set of parameters, the new slack variants are competitive throughout different datasets and models. In terms of validation error, \texttt{SPSL1} achieves the best result in the MNIST and arrives at a close second on the CIFAR-10.}
\label{figure:deep_results}
\end{figure*}

We compared the aforementioned methods on 4 different deep learning vision problems and one convex logistic regression problem.
The datasets for the vision problem we used are the MNIST dataset (60,000  $28\times28$ images, 10 classes) \cite{lecun2010mnist}, the CIFAR10, and \mbox{CIFAR-100} datasets (60,000 32$\times$32 color images in 10 and 100 classes respectively) \cite{krizhevsky2009learning}  We use 3 different models that are known to work well on these datasets: a multi-layer perceptron (MLP) with one hidden layer on the MNIST dataset, a ResNet18 \cite{he2016deep} on CIFAR-10, and a ResNet34 on the CIFAR-100 dataset.
For our logistic regression problem we used the  colon-cancer (    $d =2001$ and  $n =62$)~\cite{uci} data set.

Because of the importance of momentum in speeding up \texttt{ALI-G} and \texttt{SPS}$_{\max}$ for DNNs~\cite{ALI-G-vs-SPS},  we also use momentum in our implementations of \texttt{SPSL1} and \texttt{SPSL2}.
Formally, we replace $w_{t+1} = w_t - u_t$, where $u_t$ corresponds to the method-specific update, by 
$v_{t+1} = \beta v_t - u_t$ and 
$w_{t+1} = w_t + v_{t+1}$.
We fix the momentum parameter to $\beta =0.5$ in the following experiments.

All methods were implemented in PyTorch. For the \texttt{SPS}$_{\max}$ method, we used the authors' implementation,\footnote{https://github.com/IssamLaradji/sps} which derives from the analyzed method in that it uses a ``step size smoothing'' technique that prevents the step size from increasing too rapidly. The Adam method uses the default PyTorch parameters (lr=0.001, betas=(0.9, 0.999)).
All methods use batch-size of 128 and relaxation parameter $\gamma = 2$. We report the validation error and train loss averaged over 3 runs.


Our results, shown in Figure \ref{figure:deep_results}, confirm that both $\texttt{SPSL1}$ and $\texttt{SPSL2}$ are highly competitive. In terms of validation error, \texttt{SPSL1} achieves the best result in the MNIST dataset (0.0159 vs 0.0165 for the second-best, \texttt{ALI-G}), arrives second on the CIFAR-10 dataset (0.075 vs 0.070 for \texttt{ALI-G}), and third on the CIFAR-100 (0.31 vs 0.27 for \texttt{SPS}$_{\max}$).

\section{Conclusion}

We studied in this paper SPS (SGD with Polyak step size), a family of stochastic algorithms with step size adaptive to the gradient norm and loss value at the sampled points.
Our first contribution was a novel connection with Passive-Aggressive methods, leading to a variational perspective of not only vanilla \texttt{SPS} but also recently-proposed variants \texttt{SPS}$_{\max}$ and \texttt{ALI-G}. Based on this viewpoint, our second contribution was two new SPS variants, \texttt{SPSL1} and \texttt{SPSL2}, that regularize the slack variable, for which we provide a convergence-rate analysis. 
Experiments confirmed that \texttt{SPSL1} and \texttt{SPSL2}, are competitive, both on logistic regression and DNN tasks. 

\bibliographystyle{abbrv}
\bibliography{references}

\tableofcontents
\appendix

\section{Auxiliary Lemmas}

\begin{lemma}\label{lem:pos_sum_boud}
For every $a,b\in \R$ and $\beta >0$ we have that
\begin{equation}\label{eq:pos_sum_boud}
    (a+b)_+^2 \leq (1+\beta)a_+^2 + \left( 1+ \frac{1}{\beta}\right)b_+^2 \enspace.
\end{equation} 
\end{lemma}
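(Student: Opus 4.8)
The plan is to reduce this to two elementary facts: the subadditivity of the positive part, and a weighted Young (AM--GM) inequality. First I would establish that $(a+b)_+ \leq a_+ + b_+$. This is immediate by cases: if $a + b \leq 0$ the left-hand side is $0$ and the right-hand side is nonnegative; if $a + b > 0$ then $(a+b)_+ = a + b \leq a_+ + b_+$ since $a \leq a_+$ and $b \leq b_+$. Since both sides of this inequality are nonnegative, I may square it to get
\begin{equation*}
    (a+b)_+^2 \leq (a_+ + b_+)^2 = a_+^2 + 2 a_+ b_+ + b_+^2 \enspace.
\end{equation*}

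The second step is to bound the cross term. For any $\beta > 0$ and any reals $x, y$ we have $0 \leq (\sqrt{\beta}\, x - y/\sqrt{\beta})^2 = \beta x^2 - 2xy + y^2/\beta$, hence $2xy \leq \beta x^2 + y^2/\beta$. Applying this with $x = a_+ \geq 0$ and $y = b_+ \geq 0$ gives $2 a_+ b_+ \leq \beta a_+^2 + b_+^2/\beta$. Substituting into the displayed inequality above yields
\begin{equation*}
    (a+b)_+^2 \leq a_+^2 + \beta a_+^2 + \tfrac{1}{\beta} b_+^2 + b_+^2 = (1+\beta) a_+^2 + \left(1 + \tfrac{1}{\beta}\right) b_+^2 \enspace,
\end{equation*}
which is exactly~\eqref{eq:pos_sum_boud}.

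I do not anticipate any real obstacle here; the only point requiring a moment's care is the first step, where one must justify squaring the inequality $(a+b)_+ \leq a_+ + b_+$ — this is legitimate precisely because both sides are nonnegative, and it is worth stating explicitly so the argument is airtight. Everything else is a one-line application of Young's inequality.
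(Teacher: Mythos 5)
Your proof is correct and follows essentially the same route as the paper's: bound $(a+b)_+$ by $a_+ + b_+$, square, and control the cross term $2a_+b_+$ with the weighted Young inequality obtained from $\bigl(\sqrt{\beta}\,a_+ - b_+/\sqrt{\beta}\bigr)^2 \geq 0$. The only difference is that you spell out the subadditivity of $(\cdot)_+$ and the legitimacy of squaring, which the paper takes for granted.
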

\begin{proof}
Using that $(a+b)_+ \leq a_+ + b_+$ and expanding the squares we have that
\begin{equation} 
    \label{eq:tempposumbasod}
    (a+b)_+^2 \leq  (a_+ + b_+)^2 \; = \; a_+^2 +2a_+b_++b_+^2 \enspace.
\end{equation}
Now using that
\[\left(\sqrt{\beta}a_+ - \frac{1}{\sqrt{\beta}}b_+\right)^2 \geq 0 \; \implies \; 2a_+b_+ \leq \beta a_+^2 + \frac{1}{\beta} b_+^2 \enspace,\]
in~\eqref{eq:tempposumbasod} gives the result.
\end{proof}

\begin{lemma}[Convex and smooth inequality for non-negative function] \label{lem:convsmoothinter}
Let $\ell_i$ be $L_{\max}$--smooth, convex and suppose that $\ell_i$ is non-negative.
It follows that
\begin{equation}
    \label{eq:after_non-negativity_and_interpolation_of_li}
    \ell_i(w^t) \geq \frac{1}{2L_{\max}} \norm{\nabla \ell_i (w^t)}^2 \enspace.
\end{equation}
\end{lemma}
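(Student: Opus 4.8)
The plan is to use the standard consequence of $L_{\max}$--smoothness that gives a quadratic upper bound on $\ell_i$, and then exploit non-negativity of $\ell_i$ to turn this upper bound into the claimed lower bound on $\ell_i(w^t)$ in terms of $\norm{\nabla \ell_i(w^t)}^2$. Concretely, since $\ell_i$ is $L_{\max}$--smooth we have for all $w,v \in \R^d$
\begin{equation*}
    \ell_i(v) \leq \ell_i(w) + \dotprod{\nabla \ell_i(w), v - w} + \frac{L_{\max}}{2}\norm{v-w}^2 \enspace.
\end{equation*}
The idea is to apply this at $w = w^t$ and choose $v$ to make the right-hand side as small as possible, namely the gradient-descent point $v = w^t - \frac{1}{L_{\max}}\nabla \ell_i(w^t)$.

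With that choice the inner product term becomes $-\frac{1}{L_{\max}}\norm{\nabla \ell_i(w^t)}^2$ and the quadratic term becomes $\frac{1}{2L_{\max}}\norm{\nabla \ell_i(w^t)}^2$, so the bound reads
\begin{equation*}
    \ell_i\!\left(w^t - \tfrac{1}{L_{\max}}\nabla \ell_i(w^t)\right) \leq \ell_i(w^t) - \frac{1}{2L_{\max}}\norm{\nabla \ell_i(w^t)}^2 \enspace.
\end{equation*}
Now I would invoke non-negativity of $\ell_i$: the left-hand side is at least $0$, hence $0 \leq \ell_i(w^t) - \frac{1}{2L_{\max}}\norm{\nabla \ell_i(w^t)}^2$, which rearranges exactly to~\eqref{eq:after_non-negativity_and_interpolation_of_li}. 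Note that convexity is not actually needed for this argument; smoothness and non-negativity suffice.

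There is essentially no obstacle here: the only mild subtlety is recalling the descent-lemma form of smoothness and picking the minimizing step $1/L_{\max}$ for the right-hand side quadratic in $v$, after which the conclusion is immediate from $\ell_i \geq 0$. If one prefers to avoid quoting the descent lemma, one can instead minimize the quadratic $q(v) = \ell_i(w^t) + \dotprod{\nabla \ell_i(w^t), v-w^t} + \frac{L_{\max}}{2}\norm{v-w^t}^2$ over $v$ directly, whose minimum value is $\ell_i(w^t) - \frac{1}{2L_{\max}}\norm{\nabla \ell_i(w^t)}^2$, and then use $\ell_i \geq 0 \leq q$ at the minimizer.
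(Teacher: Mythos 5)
Your proof is correct, and it takes a genuinely different route from the paper. The paper invokes the convexity-plus-smoothness inequality (2.1.7) of Theorem 2.1.5 in Nesterov, $\ell_i(y) \geq \ell_i(x) + \dotprod{\nabla \ell_i(x), y-x} + \frac{1}{2L_{\max}}\norm{\nabla \ell_i(x)-\nabla \ell_i(y)}^2$, applied with $x = w_i^\star \in \argmin_w \ell_i(w)$ and $y = w^t$, so that the gradient at $x$ vanishes and $\ell_i(w^t) - \ell_i(w_i^\star) \geq \frac{1}{2L_{\max}}\norm{\nabla \ell_i(w^t)}^2$; non-negativity then discards $\ell_i(w_i^\star)$. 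You instead use only the descent lemma at the point $w^t - \frac{1}{L_{\max}}\nabla \ell_i(w^t)$ and the non-negativity of $\ell_i$ there, which is the standard self-bounding property of smooth non-negative functions. Your route buys two things: it does not use convexity at all (as you correctly note, so the hypothesis is superfluous for this bound), and it does not need a minimizer of $\ell_i$ to be attained, whereas the paper's argument implicitly assumes $\argmin_w \ell_i(w)$ is nonempty, which can fail for, e.g., logistic losses on separable data. The paper's route, on the other hand, stays within the toolkit of convex-smooth inequalities it cites elsewhere and yields the slightly stronger intermediate statement $\ell_i(w^t) - \inf_w \ell_i(w) \geq \frac{1}{2L_{\max}}\norm{\nabla \ell_i(w^t)}^2$ when the infimum is attained. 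Either argument establishes~\eqref{eq:after_non-negativity_and_interpolation_of_li} as needed in the rest of the paper.
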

\begin{proof}
We start from inequality (2.1.7) of Theorem 2.1.5 in~\cite{nesterov2013introductory} for $\ell_i$ which is convex and $L_{\max}$--smooth which states that
\begin{equation}
    \label{eq:nesterov_ineq_217}
    \ell_i(y) \geq \ell_i(x) + \dotprod{\nabla \ell_i (x), y-x} + \frac{1}{2L_{\max}} \norm{\nabla \ell_i (x) - \nabla \ell_i (y)}^2 \enspace.
\end{equation}

Let $w^\star_i \in \argmin_w \ell_i(w)$. Setting  $x \leftarrow w_i^*$ and $y \leftarrow w^t$ gives
\begin{equation}
    \label{eq:temspjso9hs8sh}
    \ell_i(w^t) - \ell_i(w^\star_i)  \geq  \frac{1}{2L_{\max}} \norm{ \nabla \ell_i (w^t)}^2 \enspace.
\end{equation}
Finally the result follows since $\ell_i(w^\star) \geq 0 $ and thus  $\ell_i(w^t) \geq \ell_i(w^t) - \ell_i(w^\star_i) $.
\end{proof}

\section{L2 Projection Lemmas }

Here we establish several projection lemmas used throughout the paper. Recall that
we denote positive part function, otherwise known as the RELU function by 
\[(x)_+ =\begin{cases}  x & \mbox{ if } x \geq 0 \\ 0 & \mbox{otherwise} \end{cases}.\] 

\begin{lemma}\label{lem:least-norm-sol}
Let $w^0\in \R^d$ and let $\mA \in \R^{ d \times b}$ and $c \in \R^b.$
The solution to
\begin{align}
    \label{eq:newtraph}
    w' =& \argmin_{w\in\R^d} \norm{w - w^0}^2 \nonumber \\
    &\, \mbox{s.t. } \mA^\top (w - w^0) + c =0 \enspace.
\end{align}

is given by
\begin{equation}
    w' = w^0 - \mA (\mA^\top  \mA)^\dagger c = w^0 - (\mA^\top)^\dagger c \enspace.
\end{equation}
\end{lemma}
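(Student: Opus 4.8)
The plan is to reduce this constrained least-norm problem to a standard linear-algebra calculation using the method of Lagrange multipliers, and then simplify the resulting expression using properties of the Moore--Penrose pseudoinverse. First I would introduce the shift $u \eqdef w - w^0$ so that the problem becomes $\min_u \norm{u}^2$ subject to $\mA^\top u = -c$; this is the classical minimum-norm solution of an underdetermined (or possibly inconsistent) linear system, and its well-known answer is $u = (\mA^\top)^\dagger(-c)$. To prove this from scratch, I would write the Lagrangian $\mathcal{L}(u,\mu) = \norm{u}^2 + 2\mu^\top(\mA^\top u + c)$, take the gradient in $u$ to get the stationarity condition $u = -\mA\mu$, and substitute back into the constraint to obtain $\mA^\top \mA\, \mu = -c$.

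The next step is to solve this normal-equation-type system for $\mu$. Since $\mA^\top\mA$ need not be invertible, I would take $\mu = -(\mA^\top\mA)^\dagger c$, which is a valid choice provided $c \in \range{\mA^\top\mA} = \range{\mA^\top}$; if $c$ lies outside this range the equality constraint is infeasible and the ``$=0$'' in the statement should be read in the least-squares sense, so I would note that the formula still gives the correct projection. Substituting $\mu$ back gives $u = -\mA\mu = \mA(\mA^\top\mA)^\dagger c$, hence $w' = w^0 - \mA(\mA^\top\mA)^\dagger c$, which is the first claimed equality.

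Finally I would establish the identity $\mA(\mA^\top\mA)^\dagger = (\mA^\top)^\dagger$. This is a standard pseudoinverse fact: using the singular value decomposition $\mA = U\Sigma V^\top$, one has $\mA^\top\mA = V\Sigma^\top\Sigma V^\top$, so $(\mA^\top\mA)^\dagger = V(\Sigma^\top\Sigma)^\dagger V^\top$, and therefore $\mA(\mA^\top\mA)^\dagger = U\Sigma(\Sigma^\top\Sigma)^\dagger V^\top = U(\Sigma^\top)^\dagger V^\top = (\mA^\top)^\dagger$, where the middle equality uses that $\Sigma(\Sigma^\top\Sigma)^\dagger = (\Sigma^\top)^\dagger$ for rectangular diagonal $\Sigma$ (checked entrywise on the nonzero singular values). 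Combining the two equalities yields $w' = w^0 - (\mA^\top)^\dagger c$.

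The main obstacle, such as it is, is the bookkeeping around the pseudoinverse when $\mA$ does not have full column rank: one must be careful that $\mu$ is only determined up to the kernel of $\mA^\top\mA$, but the resulting $u = -\mA\mu$ is nevertheless unique because $\mA$ annihilates that kernel, and one should check that the chosen $\mu$ actually satisfies $\mA^\top\mA\mu = -c$ whenever the constraint set is nonempty. Everything else is routine; I would keep the exposition short and lean on the SVD for the pseudoinverse identity rather than manipulating the four Penrose axioms directly.
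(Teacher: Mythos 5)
Your proposal is correct and follows essentially the same route as the paper: shift variables via $u = w - w^0$ to reduce to the least-norm problem $\min \norm{u}^2$ s.t. $\mA^\top u = -c$ and solve it with the pseudoinverse, the only difference being that you prove the standard facts (Lagrange/KKT stationarity and the SVD identity $\mA(\mA^\top\mA)^\dagger = (\mA^\top)^\dagger$) that the paper simply cites. One nit: substituting $u = -\mA\mu$ into the constraint $\mA^\top u = -c$ gives $\mA^\top\mA\,\mu = c$, not $-c$; your two sign slips cancel, so the final formula $w' = w^0 - \mA(\mA^\top\mA)^\dagger c$ is nonetheless the right one.
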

\begin{proof}
Substituting $\hat{w} = w-w^0$ gives 
\[\argmin_{\hat{w} \in \R^d} \norm{\hat{w}}^2 \quad \mbox{s.t. } \mA^\top \hat{w} =-c \enspace. \]
This is a least norm problem, for which the solution is given by the pseudo inverse
\[\hat{w} = - (\mA^\top)^\dagger c = - \mA (\mA^\top  \mA)^\dagger c \enspace. \]
Substituting back $w'-w^0 = \hat{w} $ gives the solution.
\end{proof}

\begin{lemma}[Least norm Unidimensional Inequality solution]\label{lem:least-norm-sol-ineq}
Let $w^0 \in \R^d$, $a \in \R^d \setminus \{0\}$ and $c \in \R.$
The solution to
\begin{align}
    w' =& \argmin_{w\in\R^d} \norm{w -v}^2 \nonumber \\
    &\, \mbox{s.t. } a^\top (w - w^0) + c \geq 0 \enspace.
\label{eq:newtraphineq}
\end{align}
is given by
\begin{equation}\label{eq:least-norm-sol-ineq}
    w' =  v + \frac{\left( a^\top (w^0 -v)-c\right)_+}{ \norm{a}^2} a \enspace.
\end{equation}
\end{lemma}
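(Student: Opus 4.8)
The plan is to recognize~\eqref{eq:newtraphineq} as the Euclidean projection of $v$ onto a half-space and to certify the claimed formula through the first-order optimality condition for convex programs. First I would observe that $f(w) \eqdef \norm{w-v}^2$ is strictly convex and that the feasible set $C \eqdef \{w\in\R^d : a^\top(w-w^0)+c\geq 0\}$ is a nonempty closed half-space (nonempty and proper since $a\neq 0$), so~\eqref{eq:newtraphineq} has a unique minimizer $w'$; moreover $w'$ is that minimizer if and only if $w'\in C$ and $\dotprod{\nabla f(w'),\, w-w'}\geq 0$ for every $w\in C$, i.e., $\dotprod{2(w'-v),\, w-w'}\geq 0$ for all $w\in C$. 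It then suffices to exhibit the point in~\eqref{eq:least-norm-sol-ineq} and check these two conditions.

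Next I would split on whether $v$ is already feasible. If $a^\top(v-w^0)+c\geq 0$, then $v\in C$, $f$ attains its global minimum at $w'=v$, the optimality inequality is immediate since $\nabla f(v)=0$, and because $a^\top(w^0-v)-c\leq 0$ in this case the positive part in~\eqref{eq:least-norm-sol-ineq} vanishes, so the formula returns $v$. In the complementary case $a^\top(v-w^0)+c<0$, I set $\alpha \eqdef \bigl(a^\top(w^0-v)-c\bigr)/\norm{a}^2$, which is strictly positive, so the positive part is active and the candidate reads $w'=v+\alpha a$. A one-line substitution gives $a^\top(w'-w^0)+c = a^\top v+\alpha\norm{a}^2-a^\top w^0+c = 0$, so $w'\in C$ and in fact lies on the bounding hyperplane. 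For the variational inequality, every $w\in C$ satisfies $a^\top w\geq a^\top w^0-c = a^\top w'$, hence $a^\top(w-w')\geq 0$, and therefore $\dotprod{2(w'-v),\, w-w'} = 2\alpha\, a^\top(w-w')\geq 0$ because $\alpha>0$. This identifies $w'$ as the unique minimizer in both cases, which is exactly~\eqref{eq:least-norm-sol-ineq}.

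An alternative route for the active case, if one prefers to reuse earlier material, is to first argue that the minimizer must lie on the hyperplane $a^\top(w-w^0)+c=0$ — otherwise the segment joining the strictly infeasible $v$ to an interior feasible point would cross the boundary at a point strictly closer to $v$, contradicting optimality — and then invoke Lemma~\ref{lem:least-norm-sol} with $\mA\leftarrow a$, reference point $v$ in place of $w^0$, and constant $a^\top(v-w^0)+c$, which returns $w' = v - \tfrac{a^\top(v-w^0)+c}{\norm{a}^2}a = v+\alpha a$. There is no genuine obstacle here; the only thing to be careful about is packaging the feasible and infeasible cases uniformly through $(\cdot)_+$ and tracking the sign of $\alpha$ so that the positive part behaves as claimed.
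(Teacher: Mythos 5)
Your proposal is correct, and its main argument is a genuinely different route from the paper's. The paper derives the formula constructively: it splits on whether $v$ is feasible, asserts that in the infeasible case the minimizer lies on the bounding hyperplane, substitutes $\hat{w}=w-v$, and reduces to the equality-constrained least-norm problem solved via the pseudoinverse in Lemma~\ref{lem:least-norm-sol}. You instead certify the candidate in~\eqref{eq:least-norm-sol-ineq} directly through the projection characterization: strict convexity of $\norm{w-v}^2$ plus closed convexity of the half-space give existence and uniqueness, and the variational inequality $\dotprod{2(w'-v),\,w-w'}\geq 0$ for all feasible $w$ is verified in one line using $w'-v=\alpha a$ with $\alpha>0$ and $a^\top(w-w')\geq 0$. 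What your verification buys is self-containedness and a rigorous treatment of the point the paper glosses over, namely \emph{why} the constraint must be active when $v$ is infeasible (in your argument this is never needed as a separate claim, since optimality is certified directly); what the paper's derivation buys is reuse of the general least-norm machinery of Lemma~\ref{lem:least-norm-sol}, which it applies again in the multi-constraint and slack-variable lemmas. Your sketched alternative route (argue activity of the constraint, then invoke Lemma~\ref{lem:least-norm-sol} with $\mA\leftarrow a$ and shifted constant) coincides with the paper's proof, and your segment-crossing argument for activity actually supplies a justification the paper leaves implicit. The case bookkeeping with $(\cdot)_+$ matches the paper's in both routes.
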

\begin{proof}
The problem is an L2 projection onto a halfspace. 
The solution depends if the projected vector $v$ is in this halfspace. 
That is, if $v$ satisfies the linear inequality constraint, which holds if 
\[a^\top (v - w^0) + c \geq 0 \enspace,\] 
then the solution is simply $w' = v$.

Alternatively, if the above does not hold, that is
\begin{equation}
    a^\top (v - w^0) + c < 0 \enspace,
\end{equation}
then we need to project $v$ onto the boundary of the halfspace, that is, onto 
\[ \{ w \in \R^d \, | \, a^\top (w - w^0) + c = 0 \} \enspace. \]
In which case the solution, substituting $\hat{w} = w-v$ into~\eqref{eq:newtraphineq} and imposing the equality constraint gives
\begin{align}
    \label{eq:tmeorps8hs8}
    w' =& \argmin_{w\in\R^d} \norm{\hat{w}}^2 \nonumber \\
    &\, \mbox{s.t. } a^\top \hat{w} + a^\top (v - w^0) + c = 0 \enspace.
\end{align}
This is now a least norm problem, to which the solution 
is (following Lemma~\ref{lem:least-norm-sol} with $w^0 \leftarrow 0$, $\mA \leftarrow a$ and $c \leftarrow a^\top (v - w^0) + c$) given by
\begin{align}
\hat{w} &= - (a^\top)^\dagger \left(a^\top (v - w^0) + c \right) \nonumber \\
&= \frac{\left(a^\top (w^0 -v) -c\right)}{\norm{a}^2} a \enspace.
\end{align}
Substituting back $\hat{w} = w-v$ and rearranging gives
\begin{align}
w &= v + a  \frac{\left(a^\top (w^0 -v) -c\right)}{\norm{a}^2} \enspace.
\end{align}

Putting these two cases together we have~\eqref{eq:least-norm-sol-ineq}.
\end{proof}

\section{L2 Projection Lemmas with Slack Variable}

\begin{lemma}[L2 Equality Constraints] \label{lem:slackL2eqconst}
Let $\delta >0$, $w^0\in \R^d$ and let $\mA \in \R^{ d \times b}$ and $s^0, c \in \R^b.$
The closed-form solution to 
\begin{align}
    \label{eq:slackL2eqconstproj}
    w', s' =& \argmin_{w\in\R^d, \, s \in \R^b } \norm{w - w^0}^2 + \delta \norm{s-s^0}^2 \nonumber \\
    &\, \mbox{s.t. } \mA^\top (w-w^0) +c = s \enspace,
\end{align}
is given by 
\begin{align} \label{eq:slackL2eqconstprojsolw}
w' & =  w^0 - \delta \mA (\delta \mA^\top \mA + \mI)^{-1}  (c-s^0) \enspace, \\
s' & = s^0 + (\delta \mA^\top \mA + \mI)^{-1}   (c-s^0) \enspace. \label{eq:slackL2eqconstprojsolb}
\end{align}
\end{lemma}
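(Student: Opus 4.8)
The plan is to reduce the problem to Lemma~\ref{lem:least-norm-sol}. Note first that the objective is a strictly convex quadratic in $(w,s)$ (since $\delta>0$) and the constraint is a single affine equality, so~\eqref{eq:slackL2eqconstproj} has a unique solution; it therefore suffices to compute it.

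First substitute $\hat w \eqdef w - w^0$, reducing the problem to minimizing $\norm{\hat w}^2 + \delta\norm{s-s^0}^2$ subject to $\mA^\top\hat w + c = s$. Then absorb the weight by setting $v \eqdef \sqrt{\delta}\,(s-s^0)$, so that the objective is $\norm{\hat w}^2 + \norm{v}^2 = \norm{[\hat w,\, v]}^2$ and the constraint becomes $\mA^\top\hat w - \tfrac{1}{\sqrt\delta} v + (c-s^0) = 0$, i.e.\ $\tilde\mA^\top[\hat w,\, v] + (c-s^0)=0$ with $\tilde\mA \eqdef \begin{bmatrix}\mA\\ -\tfrac{1}{\sqrt\delta}\mI\end{bmatrix} \in \R^{(d+b)\times b}$. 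This is precisely the situation of Lemma~\ref{lem:least-norm-sol} applied with the substitutions $w^0 \leftarrow 0$, $\mA\leftarrow\tilde\mA$, $c\leftarrow c-s^0$, which gives $[\hat w,\, v] = -\tilde\mA(\tilde\mA^\top\tilde\mA)^\dagger(c-s^0)$.

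It then remains to simplify this expression. Since $\tilde\mA^\top\tilde\mA = \mA^\top\mA + \tfrac{1}{\delta}\mI = \tfrac{1}{\delta}(\delta\mA^\top\mA + \mI)$ is positive definite, its pseudo-inverse is the ordinary inverse $\delta(\delta\mA^\top\mA+\mI)^{-1}$. Reading off the top ($\R^d$) and bottom ($\R^b$) blocks of $-\tilde\mA\,\delta(\delta\mA^\top\mA+\mI)^{-1}(c-s^0)$ gives $\hat w = -\delta\mA(\delta\mA^\top\mA+\mI)^{-1}(c-s^0)$ and $v = \sqrt\delta\,(\delta\mA^\top\mA+\mI)^{-1}(c-s^0)$; undoing the substitutions via $w' = w^0 + \hat w$ and $s' = s^0 + v/\sqrt\delta$ yields exactly~\eqref{eq:slackL2eqconstprojsolw} and~\eqref{eq:slackL2eqconstprojsolb}.

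The computation is essentially routine; the only steps worth care are the bookkeeping through the two substitutions and the observation that the Moore--Penrose inverse of $\tilde\mA^\top\tilde\mA$ is a genuine inverse because $\delta>0$. One can also proceed without Lemma~\ref{lem:least-norm-sol}: eliminate $s$ from the constraint, minimize the unconstrained quadratic $\norm{\hat w}^2 + \delta\norm{\mA^\top\hat w + c - s^0}^2$ by setting its gradient to zero to obtain $(\mI+\delta\mA\mA^\top)\hat w = -\delta\mA(c-s^0)$, and then apply the push-through identity $(\mI+\delta\mA\mA^\top)^{-1}\mA = \mA(\mI+\delta\mA^\top\mA)^{-1}$ to turn the $d\times d$ solve into the stated $b\times b$ one, the formula for $s'$ following from $s' = \mA^\top\hat w + c$ together with $c - (\mM+\mI)(\mM+\mI)^{-1}(c-s^0) = s^0 + (\mM+\mI)^{-1}(c-s^0)$ where $\mM \eqdef \delta\mA^\top\mA$. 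I expect this push-through (equivalently Woodbury-type) identity to be the only place a reader might pause.
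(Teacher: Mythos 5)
Your proof is correct and follows essentially the same route as the paper: the same substitutions $\hat w = w-w^0$, $\sqrt{\delta}(s-s^0)$, reduction to a least-norm problem with the stacked matrix $\bigl[\mA;\,-\tfrac{1}{\sqrt{\delta}}\mI\bigr]$, and resolution via the pseudo-inverse (the paper uses $\mM^\dagger=\mM^\top(\mM\mM^\top)^\dagger$ directly where you invoke Lemma~\ref{lem:least-norm-sol}, which amounts to the same computation). Your noted simplification of $(\tilde\mA^\top\tilde\mA)^\dagger$ to a genuine inverse and the block bookkeeping match the paper's argument, so no further comment is needed.
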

\begin{proof}
Let $\hat{s} \eqdef \sqrt{\delta} (s-s^0) $ and $\hat{w} \eqdef w-w^0.$ Substituting these new variables  and re-arranging we have that
\begin{align}
    \label{eq:slackL2eqconstprojhats}
&\argmin_{w\in\R^d} \norm{\hat{w }}^2 +  \hat{s}^2 \nonumber \\
    &\, \mbox{s.t. }  \mA^\top \hat{w} -\tfrac{1}{\sqrt{\delta}}\hat{s}= s^0 - c  \enspace.
\end{align}
Thus we need the least norm solution in $\hat{w}$ and $\hat{s}$ which is given by the pseudo-inverse of the system matrix applied to the right-hand side. That is
\begin{align*}
\begin{bmatrix}
\hat{w} \\ \hat{s}
\end{bmatrix}
& = 
\begin{bmatrix}
\mA^\top & -\tfrac{1}{\sqrt{\delta}} \mI_b
\end{bmatrix}^\dagger  (s^0 - c) \\
& =  \begin{bmatrix}
 \mA \\
 -\tfrac{1}{\sqrt{\delta}}\mI_b
\end{bmatrix}
(\mA^\top \mA +\tfrac{1}{\delta}\mI_b)^{-1}   (s^0 - c) \enspace,
\end{align*}
where we used that $\mM^\dagger = \mM ^\top (\mM \mM^\top)^\dagger$ for every matrix $\mM.$ Consequently, substituting back we have that~\eqref{eq:slackL2eqconstprojsolw} and~\eqref{eq:slackL2eqconstprojsolb} is the solution. 
\end{proof}

\begin{lemma}[L2 Unidimensional Inequality Constraint] \label{lem:slackL2ineqconst}
Let $\delta >0, c\in \R$ and $w,w^0,a\in \R^d$ .
The closed-form solution to 
\begin{align}
    \label{eq:slackL2ineqconstproj}
    w',s' =& \argmin_{w\in\R^d, s \in \R^b } \norm{w - w^0}^2 + \delta (s-s^0)^2 \nonumber \\
    &\, \mbox{s.t. } a^\top (w-w^0) +c \leq s \enspace,
\end{align}
is given by 
\begin{align} \label{eq:slackL2ineqconstprojsolw}
w' & =  w^0 - \delta \frac{(c-s^0)_+}{ 1 +\delta \norm{a}^2} a \enspace, \\
s' & = s^0+   \frac{(c-s^0)_+}{ 1 +\delta \norm{a}^2} \enspace. \label{eq:slackL2ineqconstprojsolb}
\end{align}
\end{lemma}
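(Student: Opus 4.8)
The plan is to reduce this weighted projection onto a halfspace to the \emph{unweighted} halfspace projection already established in Lemma~\ref{lem:least-norm-sol-ineq}, by a change of variables that turns the weighted objective $\norm{w-w^0}^2 + \delta(s-s^0)^2$ into a plain Euclidean norm on $\R^{d+1}$.

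First I would substitute $\hat w \eqdef w - w^0$ and $\hat s \eqdef \sqrt{\delta}\,(s-s^0)$, so that the objective becomes $\norm{\hat w}^2 + \hat s^2 = \norm{(\hat w,\hat s)}^2$, while the constraint $a^\top(w-w^0)+c \leq s$ becomes the single linear inequality $a^\top \hat w - \tfrac{1}{\sqrt\delta}\hat s \leq s^0 - c$, that is, $\alpha^\top(\hat w,\hat s) + (s^0-c) \geq 0$ with $\alpha \eqdef (-a,\ \tfrac{1}{\sqrt\delta}) \in \R^{d+1}$ and $\norm{\alpha}^2 = \norm{a}^2 + \tfrac1\delta$. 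This is now exactly the situation of Lemma~\ref{lem:least-norm-sol-ineq}: projecting the origin onto the halfspace $\{z : \alpha^\top z + (s^0-c) \geq 0\}$, which yields $(\hat w,\hat s) = \frac{(c-s^0)_+}{\norm{\alpha}^2}\,\alpha$.

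Then I would simply unwind the substitution. Reading off the $w$-block gives $\hat w = -\frac{(c-s^0)_+}{\norm{a}^2 + 1/\delta}\,a = -\frac{\delta(c-s^0)_+}{1+\delta\norm{a}^2}\,a$, hence $w' = w^0 - \frac{\delta(c-s^0)_+}{1+\delta\norm{a}^2}\,a$; the last coordinate gives $\hat s = \frac{(c-s^0)_+}{\sqrt\delta\,(\norm{a}^2 + 1/\delta)}$, so $s' = s^0 + \hat s/\sqrt\delta = s^0 + \frac{(c-s^0)_+}{1+\delta\norm{a}^2}$, which are the claimed formulas~\eqref{eq:slackL2ineqconstprojsolw}--\eqref{eq:slackL2ineqconstprojsolb}. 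An alternative route avoiding the augmented-space trick is a direct case split: if $c \leq s^0$ then $(w^0,s^0)$ is already feasible and optimal (both penalty terms vanish), which matches the formulas since $(c-s^0)_+ = 0$; otherwise the constraint must be active at the optimum, and one solves the resulting equality-constrained problem via Lemma~\ref{lem:slackL2eqconst} with $b = 1$ and $\mA \leftarrow a$.

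I do not expect a genuine obstacle here; the only care needed is in carrying the $\sqrt\delta$ rescaling and the sign of $\alpha$ consistently, so that the positive-part lands on $c-s^0$ rather than $s^0-c$, and in checking that the active-constraint and slack cases agree at the boundary $c = s^0$.
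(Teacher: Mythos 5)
Your proposal is correct. It differs mildly but genuinely from the paper's proof: the paper first splits into cases — if $c \leq s^0$ the point $(w^0,s^0)$ is already feasible and is returned unchanged, otherwise the constraint is active and the problem reduces to the equality-constrained projection of Lemma~\ref{lem:slackL2eqconst} (which internally performs the same $\hat w = w-w^0$, $\hat s = \sqrt{\delta}(s-s^0)$ rescaling and solves a least-norm problem via the pseudo-inverse). You instead perform the rescaling up front on the inequality-constrained problem and apply the halfspace-projection Lemma~\ref{lem:least-norm-sol-ineq} once in the augmented space $\R^{d+1}$ with $\alpha = (-a,\,1/\sqrt{\delta})$; the positive part $(c-s^0)_+$ in that lemma absorbs the feasible/active case split automatically, so your route is slightly more compact and even sidesteps the nondegeneracy issue, since $\norm{\alpha}^2 = \norm{a}^2 + 1/\delta > 0$ regardless of whether $a = 0$. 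Your unwinding of the substitution reproduces exactly~\eqref{eq:slackL2ineqconstprojsolw}--\eqref{eq:slackL2ineqconstprojsolb}, and the alternative route you sketch (explicit case split plus Lemma~\ref{lem:slackL2eqconst}) is precisely the paper's argument, so nothing is missing.
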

\begin{proof}
The problem~\eqref{eq:slackL2ineqconstproj} is an L2 projection onto a halfspace. 
The solution depends if the projected vector $(w,s) =(w^0,s^0)$ is in the halfspace.

If $w = w^0$ and $s=s^0$ satisfies in the linear inequality constraint, that is if $c \leq s^0$, in which case the solution is simply $w' = w^0$ and $s' = s^0.$

Else, $(w^0,s^0)$ is out of the feasible set, that is $c > s^0$, then we need to project $(w^0,s^0)$ onto the boundary of the halfspace, which means project onto 
\[ \{ (w, s) \in \R^d \times \R^d \, | \, a^\top (w - w^0) + c = s \}  \enspace. \]
In which case the solution is given in Lemma~\ref{lem:slackL2eqconst} (with $w^0 \leftarrow w^0$, $s^0 \leftarrow s^0$, $\delta \leftarrow \delta$, $\mA \leftarrow a$ and $c \leftarrow c$) in~\eqref{eq:slackL2eqconstprojsolw} and~\eqref{eq:slackL2eqconstprojsolb}.
\end{proof}

\begin{lemma}[L2 Unidimensional Equality Constraint with Positivity] \label{lem:slackL2eqconstpos}
Let $\delta >0$, $w^0, a \in \R^d$ and let $s^0, c \in \R.$
Consider the projection problem
\begin{align}
    w', s' =& \argmin_{w\in\R^d, s \in \R^b } \norm{w - w^0}^2 + \delta (s-s^0)^2 \nonumber \\
    &\, \mbox{s.t. }a^\top (w-w^0) +c = s \enspace, \nonumber \\
    &\phantom{\, \mbox{s.t. }} s \geq 0 \enspace. \label{eq:slackL2eqconstposproj}
\end{align}
The solution to~\eqref{eq:slackL2eqconstposproj} 
\begin{align}
w' &= w^0 -c\frac{a}{\norm{a}^2}+ \frac{\left( \delta \norm{a}^2s^0+c \right)_+  }{1+\delta \norm{a}^2} \frac{a}{\norm{a}^2} \enspace, \label{eq:slackL2eqconstposprojsolwpos} \\
s' &= \frac{\left( \delta \norm{a}^2s^0+c \right)_+  }{1+\delta \norm{a}^2} \enspace. \label{eq:slackL2eqconstprojsolbpos}
\end{align}
\end{lemma}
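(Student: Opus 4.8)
The plan is to reduce the $(d+1)$-dimensional projection~\eqref{eq:slackL2eqconstposproj} to a one-dimensional minimization over the slack variable $s$ by partial minimization over $w$, and then project onto the half-line $s\ge 0$. Throughout I assume $a \neq 0$, exactly as in the companion Lemmas~\ref{lem:least-norm-sol} and~\ref{lem:slackL2ineqconst}, so that $\norm{a}^2>0$.

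First I would substitute $\hat w \eqdef w - w^0$ and freeze $s$. For each fixed $s$, the inner problem is $\min_{\hat w \in \R^d}\{\norm{\hat w}^2 : a^\top \hat w = s - c\}$, a least-norm problem whose solution is given by Lemma~\ref{lem:least-norm-sol} (with $w^0\leftarrow 0$, $\mA\leftarrow a$, $c\leftarrow c-s$): the minimizer is $\hat w = \frac{s-c}{\norm{a}^2}\,a$ with optimal value $\frac{(s-c)^2}{\norm{a}^2}$. Using the elementary identity $\min_{w,s}=\min_s\min_w$, problem~\eqref{eq:slackL2eqconstposproj} is then equivalent to
\[
\min_{s\geq 0}\; g(s), \qquad g(s) \eqdef \frac{(s-c)^2}{\norm{a}^2} + \delta\,(s - s^0)^2 .
\]

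Next I would minimize the scalar convex quadratic $g$ over $s\ge 0$. Setting $g'(s)=0$ gives the unconstrained minimizer $\tilde s = \dfrac{c + \delta \norm{a}^2 s^0}{1 + \delta \norm{a}^2}$; since $g$ is convex and $1+\delta\norm{a}^2>0$, its minimizer over $[0,\infty)$ is the clip $s' = (\tilde s)_+ = \dfrac{(\delta \norm{a}^2 s^0 + c)_+}{1 + \delta \norm{a}^2}$, which is exactly~\eqref{eq:slackL2eqconstprojsolbpos}. Back-substituting $s=s'$ into $\hat w = \frac{s-c}{\norm{a}^2}a$ and unwinding $\hat w = w'-w^0$ yields $w' = w^0 - \frac{c}{\norm{a}^2}a + \frac{s'}{\norm{a}^2}a$, which is~\eqref{eq:slackL2eqconstposprojsolwpos}.

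The one point I would spell out carefully — the only non-routine step — is the claim that the constrained minimizer of the convex quadratic $g$ over $s\ge 0$ is just the unconstrained minimizer clipped at zero; equivalently, that if the unconstrained (no-positivity) projection of Lemma~\ref{lem:slackL2eqconst} already has nonnegative slack coordinate then it solves~\eqref{eq:slackL2eqconstposproj}, whereas otherwise the positivity constraint is active and $s'=0$. This is a standard consequence of convexity, handled in the same spirit as the two-case split in Lemma~\ref{lem:slackL2ineqconst}; everything else is the least-norm formula of Lemma~\ref{lem:least-norm-sol} plus a scalar quadratic optimization, both of which are mechanical.
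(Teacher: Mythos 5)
Your proof is correct, but it takes a genuinely different route from the paper's. You eliminate $w$: for each fixed $s$ the inner problem is a least-norm problem solved by Lemma~\ref{lem:least-norm-sol}, which collapses~\eqref{eq:slackL2eqconstposproj} to minimizing the scalar strictly convex quadratic $g(s) = \tfrac{(s-c)^2}{\norm{a}^2} + \delta (s-s^0)^2$ over $s \geq 0$, whose solution is the unconstrained minimizer $\tilde s = \tfrac{c + \delta \norm{a}^2 s^0}{1+\delta\norm{a}^2}$ clipped at zero --- a one-line convexity fact, which you correctly flag as the only step needing justification. The paper instead eliminates $s$ via the equality constraint, leaving a problem in $w$ with the constraint $a^\top(w-w^0)+c \geq 0$; it then completes the square in the weighted norm $\norm{\cdot}_{\mI + \delta a a^\top}$, changes variables by the matrix square root $(\mI + \delta a a^\top)^{1/2}$, applies the halfspace-projection Lemma~\ref{lem:least-norm-sol-ineq}, and unwinds everything with the Sherman--Morrison formula. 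Your decomposition buys a substantially shorter and more elementary argument (no matrix square roots, no Sherman--Morrison, no weighted-norm bookkeeping), at the cost of not exercising the paper's halfspace-projection machinery; the paper's route keeps all projections in $w$-space and reuses its earlier lemmas, which is why it is more computational. Both arrive at exactly~\eqref{eq:slackL2eqconstposprojsolwpos} and~\eqref{eq:slackL2eqconstprojsolbpos}, and your standing assumption $a \neq 0$ is the same one implicitly made by the paper, since its formulas also divide by $\norm{a}^2$.
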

Note that the case given in~\eqref{eq:slackL2eqconstprojsolw}
and~\eqref{eq:slackL2eqconstprojsolbpos} is a direct application of Lemma~\ref{lem:slackL2eqconst}.
\begin{proof}
Substituting out the slack variable $s$ gives
\begin{align}
    & \min_{w\in\R^d, s \in \R^b } \norm{w - w^0}^2 + \delta \norm{a^\top (w-w^0) +c-s^0}^2 \nonumber \\
    &\, \mbox{s.t. } a^\top (w-w^0) +c \geq 0.\label{eq:tempprojsubsirtosin} \\
    &\phantom{\, \mbox{s.t. } } s = a^\top (w - w^0) + c \enspace. \nonumber
\end{align}
Re-arranging the objective function of the above gives
\begin{align*}
    &\norm{w - w^0}^2 + \delta \norm{a^\top (w-w^0) +c-s^0}^2 \\ 
    &= \norm{w-w^0}_{\mI + \delta a a^\top }^2 + 2\delta a^\top (w-w^0)(c-s^0) + \delta\norm{c-s^0}^2\\
    &= \norm{w-w^0 + \delta (c-s^0)(\mI + \delta a a^\top )^{-1}a}_{\mI + \delta a a^\top }^2  \\
    & \quad + (\mbox{constants w.r.t. } w,s) \enspace,
\end{align*}
since $(\mI + \delta a a^\top )^{-1}$ always exists. 

Consequently the solution to~\eqref{eq:tempprojsubsirtosin} is also the solution to
\begin{align}
    & \min_{w\in\R^d, s \in \R^b}\norm{w-w^0 + \delta (c-s^0)(\mI + \delta a a^\top )^{-1}a}_{\mI + \delta a a^\top }^2 \nonumber \\
    &\, \mbox{s.t. } a^\top (w-w^0) +c \geq 0 \label{eq:tempprojsubsirtosin2} \\
    &\phantom{\, \mbox{s.t. } } s = a^\top (w - w^0) + c \enspace. \nonumber
\end{align}
With the variable substitution $\hat{w} = (\mI +\delta a a^\top)^{1/2}w$ and $\hat{w}^0 = (\mI +\delta a a^\top)^{1/2} w^0$ the above is equivalent to solving
\begin{align}
    & \min_{\hat{w} \in \R^d, s \in \R^b} \norm{\hat{w}-\hat{w}^0 + \delta (c-s^0) (\mI +\delta a a^\top)^{-1/2} a}^2\nonumber \\
    &\, \mbox{s.t. } a^\top (\mI +\delta a a^\top)^{-1/2} (\hat{w}-\hat{w}^0) +c \geq 0.\label{eq:tempprojsubsirtosin3} \\
    &\phantom{\, \mbox{s.t. } } s = a^\top (\mI +\delta a a^\top)^{-1/2} (\hat{w} - \hat{w}^0) + c \enspace. \nonumber
\end{align}
Let $\hat{a} \eqdef (\mI +\delta a a^\top)^{-1/2} a$, the problem is equivalent to
\begin{align}
    & \min_{\hat{w} \in \R^d, s \in \R^b} \norm{\hat{w} - \left(\hat{w}^0 - \delta (c-s^0) \hat{a}
    \right)}^2\nonumber \\
    &\, \mbox{s.t. } \hat{a}^\top (\hat{w} - \hat{w}^0) + c \geq 0.\nonumber \\
    &\phantom{\, \mbox{s.t. } } s = \hat{a}^\top (\hat{w} - \hat{w}^0) + c \enspace. \nonumber
\end{align}
This is now a projection onto a half plain for which the solution is given in Lemma~\ref{lem:least-norm-sol-ineq} jointly with the equality $s = \hat{a}^\top (\hat{w} - \hat{w}^0) + c$.
Indeed, applying Lemma~\ref{lem:least-norm-sol-ineq} with $v \leftarrow \hat{w}^0 - \delta  (c-s^0)\hat{a}$, $a \leftarrow \hat{a}$, $w^0 \leftarrow \hat{w}^0$ and $c \leftarrow c$  and gives the solution
\begin{align}
    \hat{w}' 
    & = \hat{w}^0 - \delta  (c-s^0)\hat{a} + \frac{\left( \delta \norm{\hat{a}}^2  (c-s^0)-c\right)_+}{ \norm{\hat{a}}^2}  \hat{a} \nonumber \\
    &=  \hat{w}^0 - \delta  (c-s^0)\hat{a} + \left( \delta (c-s^0)-\frac{c}{\norm{\hat{a}}^2}\right)_+  \hat{a}\label{eq:tempnsoih8zhh4} \\
    &= \hat{w}^0 -\frac{c}{\norm{\hat{a}}^2}\hat{a}  + \left( \delta (s^0-c)+\frac{c}{\norm{\hat{a}}^2} \right)_+ \hat{a} \enspace.
\end{align}
where we used that for $c_1 = \delta (c-s^0) $ and $c_2= c/\norm{\hat{a}}^2\in \R$ we have that
\[-c_1 + (c_1 -c_2)_+ = -c_2 + (c_2 -c_1)_+ \enspace.\]


Before substituting back the $w$ variable, we can simplify by using the Sherman–Morrison formula we have that
\begin{align}
    (\mI + \delta a a^\top )^{-1/2}\hat{a} &= (\mI + \delta a a^\top )^{-1}a \nonumber \\
    & = \left(\mI - \frac{\delta}{1 + \delta \norm{a}^2} a a^\top\right) a  \nonumber \\
    & = \frac{1}{1+\delta \norm{a}^2} a \enspace. \label{eq:tempsop84hozhhz}
\end{align} 
This implies that
\begin{align}
    \norm{\hat{a}}^2 &=  a^\top (\mI + \delta a a^\top )^{-1}a \nonumber \\
    & = \frac{\norm{a}^2}{1+\delta \norm{a}^2} \enspace. \label{eq:tempsop84hozhhzmore}
\end{align}
Thus, to substitute back $\hat{w} = (\mI +\delta a a^\top)^{1/2}w$ we multiply~\eqref{eq:tempnsoih8zhh4} by $(\mI +\delta a a^\top)^{-1/2}$ and using the above gives
\begin{align}
    w' & = w^0 -\frac{c}{\norm{\hat{a}}^2}  (\mI +\delta a a^\top)^{-1/2} \hat{a} + \left( \delta (s^0-c)+\frac{c}{\norm{\hat{a}}^2} \right)_+  (\mI +\delta a a^\top)^{-1/2} \hat{a} \nonumber   \\
     &\overset{\eqref{eq:tempsop84hozhhz}}{=} w^0 -\frac{c}{\norm{\hat{a}}^2}\frac{1}{1+\delta \norm{a}^2} a + \frac{1}{1+\delta \norm{a}^2}\left( \delta (s^0-c)+\frac{c}{\norm{\hat{a}}^2} \right)_+   a \nonumber \\
    &\overset{\eqref{eq:tempsop84hozhhzmore}}{=} w^0 -c\frac{a}{\norm{a}^2}+ \frac{\left( \delta \norm{a}^2s^0+c \right)_+  }{1+\delta \norm{a}^2} \frac{a}{\norm{a}^2} \enspace. \nonumber \\
\end{align}
Using the equality constraint we have that
\begin{align}
    s' &=  a^\top(w'-w^0) +c \nonumber \\
    &\overset{\eqref{eq:slackL2eqconstposprojsolwpos}}{=} 
    -c\frac{\norm{a}^2}{\norm{a}^2}+ \frac{\left( \delta \norm{a}^2s^0+c \right)_+  }{1+\delta \norm{a}^2}+c  \nonumber \\
    & =  \frac{\left( \delta \norm{a}^2s^0+c \right)_+  }{1+\delta \norm{a}^2} \enspace.
\end{align}
\end{proof}

\begin{lemma}[L2 Unidimensional Inequality Constraints and Positivity] \label{lem:slackL2ineqconstpos}
Let $\delta > 0$ and $w^0,a\in \R^d.$ 
Furthermore, let $c \geq 0$ and $s^0 \in \R.$
The closed-form solution to 
\begin{align}
    w',s' =& \argmin_{w\in\R^d, s \in \R } \norm{w - w^0}^2 + \delta (s-s^0)^2 \nonumber \\
    &\, \mbox{s.t. } a^\top (w-w^0) +c \leq s \enspace, \nonumber \\
    & \phantom{\, \mbox{s.t. }} s \geq 0, \label{eq:slackL2ineqconstposproj}
\end{align}
is given by 
    \begin{align} \label{eq:ineqposslackw3}
    w' &= w^0 -\min\left\{\delta \frac{(c-s^0)_+}{1+ \delta\norm{a}^2} , \; \frac{c}{\norm{a}^2} \right\} a \enspace, \\
    s' &= \left( s_0 +\frac{(c-s^0)_+}{1+ \delta\norm{a}^2} \right)_+ \enspace. \label{eq:ineqposslacks3}
\end{align}
\end{lemma}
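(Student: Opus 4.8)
The plan is to treat the problem~\eqref{eq:slackL2ineqconstposproj} as an Euclidean projection of $(w^0,s^0)$ onto the set $C \eqdef \{(w,s)\in\R^{d+1} : a^\top(w-w^0)+c\leq s,\ s\geq 0\}$, which is an intersection of two halfspaces and hence closed and convex, so the projection is unique. I would first dispose of the trivial case: because $c\geq 0$, the point $(w^0,s^0)$ lies in $C$ if and only if $c\leq s^0$ (this inequality already implies $s^0\geq 0$), in which case $(w',s')=(w^0,s^0)$. One checks this agrees with~\eqref{eq:ineqposslackw3}--\eqref{eq:ineqposslacks3}: then $(c-s^0)_+=0$, so the $\min$ in~\eqref{eq:ineqposslackw3} equals $\min\{0,c/\norm{a}^2\}=0$ since $c\geq 0$, giving $w'=w^0$, and $s'=(s^0)_+=s^0$.

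The substantive case is $(w^0,s^0)\notin C$. I would first observe that infeasibility together with $c\geq 0$ forces $c>s^0$ (if $s^0<0$ then $c\geq 0>s^0$; otherwise infeasibility is precisely $c>s^0$). The key step is to show that the linear inequality $a^\top(w-w^0)+c\leq s$ is \emph{active} at the projection $(w',s')$. If it were slack, then only the constraint $s\geq 0$ could be binding; since that constraint does not involve $w$, stationarity in $w$ would force $w'=w^0$ and stationarity in $s$ would force $s'\geq s^0$; if moreover $s'>0$ then $s\geq 0$ is slack too, forcing $s'=s^0$ and hence $c=a^\top(w'-w^0)+c<s'=s^0$, contradicting $c>s^0$; and if $s'=0$ then $c=a^\top(w'-w^0)+c<0$, contradicting $c\geq 0$. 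Hence $a^\top(w'-w^0)+c=s'$. Since $\{(w,s):a^\top(w-w^0)+c=s,\ s\geq 0\}\subseteq C$, the projection onto $C$ coincides with the projection onto this smaller set, which is exactly the content of Lemma~\ref{lem:slackL2eqconstpos}; applying it with the same $\delta$, $a$, $c$, $s^0$ gives $s'=(\delta\norm{a}^2 s^0+c)_+/(1+\delta\norm{a}^2)$ and $w'=w^0-(c/\norm{a}^2)\,a+\big[(\delta\norm{a}^2 s^0+c)_+/\big((1+\delta\norm{a}^2)\norm{a}^2\big)\big]\,a$.

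What remains is to rewrite these two formulas in the claimed $\min$-form, using $(c-s^0)_+=c-s^0$ (valid here since $c>s^0$). For the slack, $(\delta\norm{a}^2 s^0+c)/(1+\delta\norm{a}^2)=s^0+(c-s^0)/(1+\delta\norm{a}^2)$, so $s'=\big(s^0+(c-s^0)_+/(1+\delta\norm{a}^2)\big)_+$, which is~\eqref{eq:ineqposslacks3}. For $w'$, I would split on the sign of $\delta\norm{a}^2 s^0+c$: when it is $\geq 0$ the coefficient multiplying $-a$ collapses to $\delta(c-s^0)/(1+\delta\norm{a}^2)$, and the condition $\delta\norm{a}^2 s^0+c\geq 0$ is algebraically equivalent to $\delta(c-s^0)/(1+\delta\norm{a}^2)\leq c/\norm{a}^2$, so this is the minimum of the two candidates; when it is $<0$ the positive part vanishes and the coefficient is $c/\norm{a}^2$, while $\delta\norm{a}^2 s^0+c<0$ is equivalent to $c/\norm{a}^2<\delta(c-s^0)/(1+\delta\norm{a}^2)$, so again this is the minimum. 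Either way the coefficient equals $\min\{\delta(c-s^0)_+/(1+\delta\norm{a}^2),\ c/\norm{a}^2\}$, giving~\eqref{eq:ineqposslackw3}. I expect the main obstacle to be the ``active constraint'' argument — in particular handling the corner $s'=0$ cleanly so that one genuinely reduces to Lemma~\ref{lem:slackL2eqconstpos}; the rest is routine algebra matching the two branches of each positive part to the two branches of the $\min$.
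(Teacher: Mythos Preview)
Your proposal is correct and follows essentially the same route as the paper's first proof: dispose of the feasible case $s^0\geq c$, then in the infeasible case reduce to Lemma~\ref{lem:slackL2eqconstpos} by arguing the linear constraint is active, and finally rewrite the resulting formulas in the $\min$-form via the equivalence $\delta\norm{a}^2 s^0+c\geq 0 \iff \delta(c-s^0)/(1+\delta\norm{a}^2)\leq c/\norm{a}^2$. Your activity argument (contradiction via stationarity in $w$ forcing $w'=w^0$, then ruling out both $s'>0$ and $s'=0$) is slightly more streamlined than the paper's explicit enumeration of active-set cases, but the substance is the same.
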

\begin{proof}

The problem~\eqref{eq:slackL2ineqconstposproj} is an L2 projection onto the intersection of two halfspaces. Consequently the solution may be inside the intersection (case $(III)$), on the boundary  with $a^\top (w-w^0) +c =s$ (case $(I)$ and $(II)$) or on the boundary with $s=0$ (case $(IV)$). We will examine each of these possibilities and we will soon show that they depend on where  $s^0$ is on the following interval
\begin{figure}[!h]
    \centering
 \begin{tikzpicture}[x=100]
        \draw (-0.5,0) --  node[above] {$(I)$} (0,0);
       \draw (0.0,0) -- node[above] {$(II)$} (1.0,0);
       \draw (1.0,0) -- node[above] {$(III)$}  (1.5,0);
       \draw (0, 0) node[below=7pt] {$-\frac{c}{\delta \norm{a}^2}$};
       \draw[] (0,-0.1) -- (0,0.1);
       \draw (1, 0) node[below=7pt] {$c$};       
       \draw[] (1,-0.1) --  (1,0.1);
   \end{tikzpicture}
   \caption{The real line of possible values for $s_0$ divided into three segments.}
   \label{fig:s0intervals}
\end{figure}
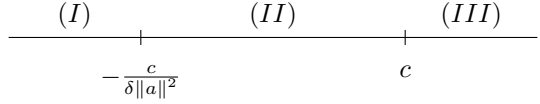

[{\bf Case $(III)$:}] First we check if the solution is inside the constraint set. We can verify this by testing if  $w =w^0$ and $s=s^0$ satisfies the constraints
\begin{equation}
    s^0 \geq c \quad \mbox{and} \quad s^0\geq 0 \quad \Leftrightarrow \quad s^0 \geq c \enspace. \label{eq:tempconditionasdasd}
\end{equation}
If $s^0 \geq c$ holds then the solution is simply $(w',s') = (w^0,s^0).$ This case corresponds to $(III)$ in Figure~\ref{fig:s0intervals}.
Alternatively if~\eqref{eq:tempconditionasdasd} does not hold then necessarily
\begin{equation}
    s^0 \leq c
\end{equation}
and at least one of the inequality constraints must be active at the optimal point as the problem~\eqref{eq:slackL2ineqconstposproj} is an L2 projection onto the intersection of two halfspaces.
Let us consider what happens when one or the other constraint is active.

[{\bf Case $(II)$:}] 
If the constraint $a^\top (w-w^0) +c =s$ is active, then our problem reduces to that of Lemma~\ref{lem:slackL2eqconstpos} where we have that the solution is given by~\eqref{eq:slackL2eqconstposprojsolwpos} and~\eqref{eq:slackL2eqconstprojsolbpos}. According to Lemma~\ref{lem:slackL2eqconstpos} the solution depends on the condition 
\begin{equation}\label{eq:tempcond03a9hn9psh4}
     \delta \norm{a}^2 s^0 + c \geq 0.
\end{equation}

If~\eqref{eq:tempcond03a9hn9psh4} holds, then $-\frac{c}{\delta \norm{a}^2} \leq s^0$ and since $s^0 \leq c$ we are in line segment $(II)$ of Figure~\ref{fig:s0intervals}. 
In this case Lemma~\ref{lem:slackL2eqconstpos} implies that
    \begin{align}
        w' &\overset{\eqref{eq:slackL2eqconstposprojsolwpos}}{=} w^0 - \frac{\delta(c - s^0)}{1 + \delta \norm{a}^2}  a  \enspace, \nonumber\\
        s' &\overset{\eqref{eq:slackL2eqconstprojsolbpos}}{=}  s^0 + \frac{c - s^0}{1 + \delta \norm{a}^2} \enspace. \label{eq:ssoltemp1}
    \end{align}

[{\bf Case $(I)$:}] 
Alternatively, if condition~\eqref{eq:tempcond03a9hn9psh4} does not hold, then $s^0 < -\frac{c}{\delta \norm{a}^2}$, and so we are in line segment $(I)$ of Figure~\ref{fig:s0intervals}. In this case, still by Lemma~\ref{lem:slackL2eqconstpos} we have that
\begin{align*}
    w' &\overset{\eqref{eq:slackL2eqconstposprojsolwpos}}{=}  w^0 - c\frac{a}{\norm{a}^2} \enspace, \\
    s' &\overset{\eqref{eq:slackL2eqconstprojsolbpos}}{=} 0 \enspace.
\end{align*}

[{\bf Case $(IV)$:}] 
If the constraint $s' =0$ is active then our problem reduces to 
\begin{align}
  & \min_{w\in\R^d} \norm{w - w^0}^2 \nonumber \\
    &\, \mbox{s.t. } -a^\top (w-w^0) -c \geq 0 \enspace.  \label{eq:tempo9zho9eap9hez4}
\end{align}
This is a projection onto a halfspace, for which the solution in Lemma~\ref{lem:least-norm-sol-ineq} with $a \leftarrow -a$, $v \leftarrow w^0$ and $c \leftarrow -c$ gives
\begin{equation}\label{eq:templzie8hze}
    w' =  w^0 - c\frac{a}{\norm{a}^2} \enspace.
\end{equation}
Furthermore, plugging in this solution gives that $a^\top(w-w^0) +c = 0 =s$, thus this is simply a special case of {\bf Case II \& III}, and we need not consider it separately.

Now that we have enumerated our possible cases,
the above observations lead to the following case analysis
 \begin{align} \label{eq:KKTcasesolution}
   \mbox{Case III: If $s^0 \geq c$, then} \qquad   & \begin{cases}
           w'  = w^0  \\
            s'  = s^0  & 
       \end{cases}  \\
    \mbox{Case II:  If   $s^0 \leq c \quad \mbox{and} \quad c + \delta \norm{a}^2 s^0 \geq 0$ then} \qquad &
     \begin{cases}
           w'  =  \displaystyle w^0  -\delta \frac{(c-s^0)_+}{1+ \delta\norm{a}^2} a  \\
            s'  =  \displaystyle s^0  + \frac{(c-s^0)_+}{1+ \delta\norm{a}^2}  & 
       \end{cases} 
       \\
     \mbox{Case I, If $c + \delta \norm{a}^2 s^0 \leq 0 $ } \qquad & \begin{cases}
          \displaystyle  w'  =  w^0  -\frac{c}{\norm{a}^2} a  \\
            s'  = 0 & 
       \end{cases} 
\end{align}
The above cases can be written condensely as~\eqref{eq:ineqposslackw3} and~\eqref{eq:ineqposslacks3}.

\end{proof} 

\paragraph{Proof of Lemma~\ref{lem:slackL2ineqconstpos} with KKT}
Because our first proof for Lemma is a bit unconventional, here we provide a different based on verifying the KKT equations. The two proofs (fortunately) arrive at the same conclusion.
\begin{proof}
    Let $\delta > 0$, $c \geq 0$ and $w^0,a\in \R^d.$ and $s^0 \in \R$.
    Let  $w \in \R^d$ and $s \in \R$  and let $z  \eqdef 
    \begin{pmatrix}
        w \\
        s
    \end{pmatrix}$. Let the objective function in~\eqref{eq:slackL2ineqconstposproj} be rewritten as 
    \begin{equation}
        \label{eq:g_obj_PI}
        g : 
        \begin{cases} 
            \R^{d+1} \rightarrow \R \\
          z =   (w,s) \mapsto \norm{w - w^0}^2 + \delta (s-s^0)^2 \enspace,
        \end{cases}
    \end{equation}
     and let the inequality constraints auxiliary function be reformulated as
    \begin{equation}
        \label{eq:C_I_PI}
        C_I : 
        \begin{cases} 
            \R^{d+1} \rightarrow \R^2 \\
            z \mapsto 
            \begin{pmatrix}
                C_1 (z) \\
                C_2 (z)
            \end{pmatrix} \eqdef
            \begin{pmatrix}
                a^\top (w - w^0) + c - s \\
                - s
            \end{pmatrix} 
            \enspace,
        \end{cases}
    \end{equation}
    where $C_1 (z), C_2 (z) \in \R$.
    These constraints are qualified as they are linear in $z$.
  With this notation we can re-write Then the optimization problem given in~\eqref{eq:slackL2ineqconstposproj}, that is, 
    \begin{align*}
        w',s' =& \argmin_{w\in\R^d, s \in \R } \norm{w - w^0}^2 + \delta (s-s^0)^2 \\
        &\, \mbox{s.t. } a^\top (w-w^0) +c \leq s \enspace, \\
        & \phantom{\, \mbox{s.t. }} s \geq 0,
    \end{align*}
    is equivalent to
    \begin{align}
        \label{eq:PI}
        z' =& \argmin_{z \in \R^{d+1}} g(z) \\
        &\, \mbox{s.t. } C_I (z) \leq_{\R^2} 0_2 \enspace.
    \end{align}
    In the above constraints, $0_2$ denotes the zero vector of size $2$ and $\leq_{\R^2}$ represent the inequality sign for each row of vectors in $\R^2$.  
    We called the above problem (PI) for ``Problem with Inequalities''.
    
    The problem~\eqref{eq:PI} is convex because the objective $g$ and constraints $C_I$ are all convex and differentiable. Thus according to Theorem 4.33 of~\cite{gilbert2021fragments}, if there exist a Lagrangian multiplier $\lambda^* \in \R^2$ satisfying the Karush-Kuhn-Tucker (KKT) conditions then $z^*$ is a global minimum of (PI) in~\eqref{eq:PI}. 
    We now look at the KKT conditions to see when they are satisfied :
    
    Let $z^*, \lambda^* \in \R^{d+1} \times \R^2$ such that
    \begin{align}
        & \nabla g (z^*) + C_I' (z^*)^\top \lambda^* = 0_{d+1} \label{eq:KKT_a} \\
        & 0_2 \leq_{\R^2} \lambda^* \; \bot \; C_I (z^*) \leq_{\R^2} 0_2 \enspace, \label{eq:KKT_b} 
    \end{align}
    where the second condition means that the inequalities must be satisfied, the Lagrangian multipliers must be non-negative and their scalar product must be null.

    The gradient of the objective function~\eqref{eq:g_obj_PI} is given by
    \begin{equation}
        \nabla g(z) = 2 
        \begin{pmatrix}
            w - w^0 \\
            \delta (s - s^0)
        \end{pmatrix} \in \R^{d+1} \enspace,
    \end{equation}
    and the derivative of the constraints equals
    \begin{equation}
        C_I' (z) = 
        \begin{pmatrix}
            \horzbar \; a^\top \; \horzbar & -1  \\
            \horzbar \; 0_d^\top \; \horzbar & -1
        \end{pmatrix} \in \R^{2 \times d+1} \enspace,
    \end{equation}
    since $(C_I' (z))_{ij} = \frac{\partial c_i}{\partial z_j} (z)$ for all $i \in \{1, 2\}$ and $ j \in \{1, \ldots, d+1\}$.
    The first equality of the KKT system~\eqref{eq:KKT_a} thus becomes
    \begin{align}
        \nabla g (z^*) + C_I' (z^*)^\top \lambda^2 = 0_{d+1} 
        &\iff 2 
        \begin{pmatrix}
            w^* - w^0 \\
            \delta (s^* - s^0)
        \end{pmatrix} +
        \begin{pmatrix}
            \vertbar & \vertbar \\[.5em]
            a & 0_d \\[.5em]
            \vertbar & \vertbar \\
            -1 & -1
        \end{pmatrix}
        \begin{pmatrix}
            \lambda_1^* \\
            \lambda_2^* \\
        \end{pmatrix}
        = 
        \begin{pmatrix}
            0_d \\
            0 \\
        \end{pmatrix} \nonumber \\
        \iff&  
        \begin{pmatrix}
            2 (w^* - w^0) + \lambda_1^* a \\
            2 \delta (s^* - s^0) - \lambda_1^* - \lambda_2^*
        \end{pmatrix}
        =
        \begin{pmatrix}
            0_d \\
            0 \\
        \end{pmatrix} \label{eq:KKT_a_developped}
    \end{align}
    And the second part of the KKT conditions~\eqref{eq:KKT_b}, enforcing the complementarity of the inequality constraints, becomes
    \begin{align}
        & 0_2 \leq_{\R^2} \lambda^* \; \bot \; C_I (z^*) \leq_{\R^2} 0_2
        \iff 
        \begin{cases}
            & \lambda_1^* \geq 0 \hfill (i)\\
            & \lambda_2^* \geq 0 \hfill (ii)\\
            & a^\top (w^* - w^0) + c - s^* \leq 0 \hfill (iii)\\
            & - s^* \leq 0 \hfill (iv) \\
            & \lambda_1^* (a^\top (w^* - w^0) + c - s^*) - \lambda_2^* s^* = 0 \quad \hfill (v)
        \end{cases} \label{eq:KKT_b_developped}
    \end{align}

    We will now explore four cases depending on which the inequality constraints are activated (\ie equals $0$) and check which are feasible and under which conditions.
    
    \paragraph{Case I) $\lambda_1^* = \lambda_2^* = 0$.}\mbox{}\\
    In this case, no constraint is activated, and~\eqref{eq:KKT_a_developped} implies that 
    \begin{equation}
        \label{eq:case_I_solution}
        z^* = 
        \begin{pmatrix}
            w^* \\
            s^*
        \end{pmatrix} =
        \begin{pmatrix}
            w^0 \\
            s^0
        \end{pmatrix} \enspace.
    \end{equation}
    Since $\lambda_1^* = \lambda_2^* = 0$, $(i), (ii)$ and $(v)$ of~\eqref{eq:KKT_b_developped} are verified.
    According to $(iii)$ and $(iv)$ of~\eqref{eq:KKT_b_developped}, this solution is feasible if $s^0 \geq 0$ and $s^0 \geq c$.
    Since we assume here that $c \geq 0$\footnote{As in our applications it corresponds to non-negative losses $f_i (w^t)$.} the only condition to be satisfied for the solution in~\eqref{eq:case_I_solution} to be feasible is to have
    \begin{equation}
        \label{eq:case_I_condition}
        s^0 \geq c \enspace.
    \end{equation}
    
    \paragraph{Case II) $\lambda_1^* = 0$ and $s^* = 0$ ($\iff C_2 (z^*) = 0$).}\mbox{}\\
    In this case, only the second constraint is activated, and the upper part of the system in~\eqref{eq:KKT_a_developped} implies that 
    \begin{equation}
        \label{eq:case_II_solution}
        z^* = 
        \begin{pmatrix}
            w^* \\
            s^*
        \end{pmatrix} =
        \begin{pmatrix}
            w^0 \\
            0
        \end{pmatrix} \enspace.
    \end{equation}
    Moreover, the last line of~\eqref{eq:KKT_a_developped} gives that
    \begin{equation}
        \label{eq:lambda_2_case_II}
        \lambda_2^* = - 2 \delta s^0 \enspace.
    \end{equation}
    We now need to check if all conditions in~\eqref{eq:KKT_b_developped} are satisfied.
    Inequality $(ii)$, jointly with~\eqref{eq:lambda_2_case_II} implies that we need to have
    \begin{equation*}
        s^0 \leq 0 \enspace.
    \end{equation*}
    Also, one can directly see that $(iii)$ of~\eqref{eq:KKT_b_developped} implies that $c \leq 0$. 
    Yet, we initially assumed that $c \geq 0$, which implies that we must have $c = 0$ for this solution to be feasible. 
    To summarize, the solution in~\eqref{eq:case_II_solution} is feasible if
    \begin{equation}
        \label{eq:case_II_condition}
        c = 0 \quad \mbox{and} \quad s^0 \leq 0 \enspace.
    \end{equation}
    
    \paragraph{Case III) $a^\top (w^* - w^0) + c - s^* = 0$ ($\iff C_1 (z^*) = 0$) and $\lambda_2^* = 0$.}\mbox{}\\
    
    Here, last line of equation~\eqref{eq:KKT_a_developped} implies that
    \begin{equation}
        \label{eq:case_III_lambda_1}
        \lambda_1^* = 2 \delta (s^* - s^0) \enspace,
    \end{equation}
    which, after being injected into the first equation in~\eqref{eq:KKT_a_developped} leads to
    \begin{equation}
        \label{eq:case_III_intermediate_eq}
        w^* - w^0 = - \delta (s^* - s^0) a \enspace.
    \end{equation}
    Now by using the assumption of Case III stating that the first inequality is activated, we have that 
    \begin{align*}
        & a^\top (w^* - w^0) + c - s^* = 0 \\
        \iff& s^* = a^\top (w^* - w^0) + c \enspace.
    \end{align*}
    By injecting~\eqref{eq:case_III_intermediate_eq} into this later equation, we finally get
    \begin{align}
        &s^* = - \delta \norm{a}^2 (s^* - s^0) + c \nonumber \\
        \iff& s^* = \frac{1}{1 + \delta \norm{a}^2} (c + \delta \norm{a}^2 s^0) \nonumber \\
        \iff& s^* = s^0 + \frac{c-s^0}{1 + \delta \norm{a}^2}  \label{eq:case_III_z_dplus1}
        \enspace.
    \end{align}
    Finally, by putting back this value into~\eqref{eq:case_III_intermediate_eq} we get that
    \begin{equation}
        \label{eq:case_III_z_d}
        w^* = w^0 - \frac{\delta (c - s^0)}{1 + \delta \norm{a}^2} a \enspace.
    \end{equation}
    So in this case, the solution is given by
    \begin{equation}
        \label{eq:case_III_solution}
        z^* = 
        \begin{pmatrix}
            w^* \\
            s^*
        \end{pmatrix} =
        \begin{pmatrix}
            w^0 - \dfrac{\delta (c - s^0)}{1 + \delta \norm{a}^2} a \\
            s^0 + \dfrac{c-s^0}{1 + \delta \norm{a}^2} 
        \end{pmatrix} \enspace.
    \end{equation}
    Let us now check when are all the conditions in~\eqref{eq:KKT_b_developped} are verified.
    We just need to check whether $(i)$ and $(iv)$ are true.
    From~\eqref{eq:case_III_lambda_1}, $(i)$ implies that we must have
    \begin{align*}
        s^* - s^0 \geq 0 & \overset{\eqref{eq:case_III_z_dplus1}}{\iff} \frac{1}{1 + \delta \norm{a}^2} (c + \delta \norm{a}^2 s^0) - s^0 \geq 0 \\
        & \iff c \geq s^0 \enspace.
    \end{align*}
    Moreover, $(iv)$ implies that we required the following to hold
    \begin{align*}
        s^* \geq 0 & \overset{\eqref{eq:case_III_z_dplus1}}{\iff} c + \delta \norm{a}^2 s^0 \geq 0 \enspace.
    \end{align*}
    To summarize, the solution in~\eqref{eq:case_III_solution} is feasible if
    \begin{equation}
        \label{eq:case_III_condition}
        c \geq s^0 \quad \mbox{and} \quad c + \delta \norm{a}^2 s^0 \geq 0 \enspace.
    \end{equation}

    \paragraph{Case IV) $a^\top (w^* - w^0) + c - s^* = 0$ ($\iff C_1 (z^*) = 0$) and $s^* = 0$ ($\iff C_2 (z^*) = 0$).}\mbox{}\\
    
    The first equation in~\eqref{eq:KKT_a_developped} implies that
    \begin{align}
        \lambda_1^* a &= - 2 (w^* - w^0) \label{eq:case_IV_intermediate_computations_w} \\
        \implies \lambda_1^* &= - \frac{2}{\norm{a}^2} a^\top (w^* - w^0) \nonumber \\
        \iff \lambda_1^* &\overset{C_1 (z^*) = 0}{=} \frac{2}{\norm{a}^2} c \enspace. \nonumber
    \end{align}
    Then, by injecting this in the last line of~\eqref{eq:KKT_a_developped} we get
    \begin{align*}
        \lambda_2^* &= - \lambda_1^* - 2 \delta s^0 \\
        &= - \frac{2}{\norm{a}^2} (c + \delta \norm{a}^2 s^0) \enspace.
    \end{align*}
Also re-arranging~\eqref{eq:case_IV_intermediate_computations_w} gives
    \begin{align*}
        w^* &= w^0 - \frac{\lambda_1^*}{2} a \\
        &= w^0 - \frac{c}{\norm{a}^2} a \enspace.
    \end{align*}
    The solution in this case is thus given by
    \begin{equation}
        \label{eq:case_IV_solution}
        z^* = 
        \begin{pmatrix}
            w^* \\
            s^*
        \end{pmatrix} =
        \begin{pmatrix}
            w^0 - \dfrac{c}{\norm{a}^2} a \\
            0
        \end{pmatrix} \enspace.
    \end{equation}
    Finally, the point $(i)$ of~\eqref{eq:KKT_b_developped} is satisfied as we assumed that $c$ is non-negative and the point $(ii)$ is satisfied, and so the solution in~\eqref{eq:case_IV_solution} is feasible, as long as 
    \begin{equation}
        \label{eq:case_IV_condition}
        c + \delta \norm{a}^2 s^0 \leq 0 \enspace.
    \end{equation}
    
    Putting all of the above together we have that
    \begin{align} \label{eq:KKTcasesolution}
   \mbox{Case I: If $s^0 \geq c$, then} \qquad   & \begin{cases}
           w'  = w^0  \\
            s'  = s^0  & 
       \end{cases}  \\
     \mbox{Case II: If $s^0 \leq c =0$ , then} \qquad   & \begin{cases}
           w'  = w^0  \\
            s'  = 0 & 
       \end{cases}  \\     
    \mbox{Case III:  If   $s^0 \leq c \quad \mbox{and} \quad c + \delta \norm{a}^2 s^0 \geq 0$ then} \qquad &
     \begin{cases}
           w'  =  \displaystyle w^0  -\delta \frac{(c-s^0)_+}{1+ \delta\norm{a}^2} a  \\
            s'  =  \displaystyle s^0  + \frac{(c-s^0)_+}{1+ \delta\norm{a}^2}  & 
       \end{cases} 
       \\
     \mbox{Case IV, If $c + \delta \norm{a}^2 s^0 \leq 0 $ } \qquad & \begin{cases}
          \displaystyle  w'  =  w^0  -\frac{c}{\norm{a}^2} a  \\
            s'  = 0 & 
       \end{cases} 
\end{align}
    All of the above cases can be written in a condensed form given by~\eqref{eq:ineqposslackw3} and~\eqref{eq:ineqposslacks3}, that is
        \begin{align} \label{eq:ineqposslackw3rr}
    w' &= w^0 -\min\left\{\delta \frac{(c-s^0)_+}{1+ \delta\norm{a}^2} , \; \frac{c}{\norm{a}^2} \right\} a \enspace, \\
    s' &= \left( s_0 +\frac{(c-s^0)_+}{1+ \delta\norm{a}^2} \right)_+ \enspace. \label{eq:ineqposslacks3rr}
\end{align}
    To see this equivalence, one need only check all of the condition of each case and the resulting solution. Most cases follow straightforwardly, perhaps the least obvious case is that of Case IV. To verify Case IV it helps to note that
    \[ c + \delta \norm{a}^2 s^0 \leq 0 \implies \left( s_0 +\frac{(c-s^0)_+}{1+ \delta\norm{a}^2} \right)_+=0. \]
\end{proof}

\section{L1 Projection Lemmas with Slack Variable}



\begin{lemma}[L1 Equality Constraints and Positivity]
\label{lem:slackL1eqconstpos}
Let $w^0,a\in \R^d$,   $\lambda >0$ and $c \in \R$. 
The closed-form solution to 
\begin{align}
    w', s' =& \argmin_{w\in\R^d, s\in \R}\tfrac{1}{2} \norm{w - w^0}^2 + \lambda |s| \nonumber \\
    &\, \mbox{s.t. } a^\top (w-w^0) +c = s \enspace, \nonumber \\
    &\,\phantom{ \mbox{s.t. }} s \geq 0   \label{eq:slackL1eqconstposproj}
\end{align}
is given by 
\begin{align}
    w' &= w^0-\lambda a  + \frac{a}{\norm{a}^2} \left( \lambda\norm{a}^2 -c\right)_+ \label{eq:slackL1eqconstposprojsol} \\
    &= w^0 - \min\left\{ \frac{c}{\norm{a}^2}, \lambda \right\} a \enspace,
\end{align}
and in $s$ is given by
\begin{align}
    s' = \left( c-\lambda\norm{a}^2 \right)_+ \enspace. \label{eq:slackL1eqconstposprojsols}
\end{align}
\end{lemma}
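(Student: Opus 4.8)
The plan is to eliminate the slack variable and reduce the problem to a Euclidean projection onto a halfspace, which is already handled by Lemma~\ref{lem:least-norm-sol-ineq}. First I would use the constraint $s \geq 0$ to replace $|s|$ by $s$ in the objective, and then substitute the equality constraint $s = a^\top(w - w^0) + c$, which turns the positivity constraint into $a^\top(w-w^0) + c \geq 0$ and leaves the reduced problem
\begin{align*}
    w' = \argmin_{w \in \R^d} \tfrac12 \norm{w - w^0}^2 + \lambda\bigl(a^\top(w - w^0) + c\bigr) \quad \mbox{s.t.}\quad a^\top(w-w^0) + c \geq 0 \enspace,
\end{align*}
with $s' = a^\top(w' - w^0) + c$ recovered at the end. (I assume $a \neq 0$, otherwise the problem is degenerate.)

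Next I would complete the square: dropping the $w$-independent constant $\lambda c$ and writing $\tfrac12\norm{w - w^0}^2 + \lambda a^\top(w - w^0) = \tfrac12\norm{w - (w^0 - \lambda a)}^2 - \tfrac{\lambda^2}{2}\norm{a}^2$, the reduced problem becomes exactly the Euclidean projection of the point $v \eqdef w^0 - \lambda a$ onto the halfspace $\{w : a^\top(w - w^0) + c \geq 0\}$. Applying Lemma~\ref{lem:least-norm-sol-ineq} with this $v$, and using $a^\top(w^0 - v) = \lambda\norm{a}^2$, gives
\begin{align*}
    w' = w^0 - \lambda a + \frac{(\lambda\norm{a}^2 - c)_+}{\norm{a}^2}\, a \enspace,
\end{align*}
which is~\eqref{eq:slackL1eqconstposprojsol}.

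Then I would rewrite the $(\cdot)_+$ expression in $\min$ form via the elementary identity $x - (x - c)_+ = \min\{x, c\}$ valid for all $x,c \in \R$: with $x = \lambda\norm{a}^2$, the coefficient of $a$ in $w' = w^0 - \bigl(\lambda - (\lambda\norm{a}^2 - c)_+/\norm{a}^2\bigr)a$ equals $\min\{\lambda\norm{a}^2, c\}/\norm{a}^2 = \min\{\lambda, c/\norm{a}^2\}$. Finally, $s' = a^\top(w' - w^0) + c = c - \min\{\lambda\norm{a}^2, c\} = (c - \lambda\norm{a}^2)_+$ by the same identity, which is~\eqref{eq:slackL1eqconstposprojsols}.

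I do not anticipate a genuine obstacle: the objective is convex and the constraints affine, so the reduced problem has a unique minimizer, and after completing the square it is a direct application of Lemma~\ref{lem:least-norm-sol-ineq}. The only mildly delicate points are bookkeeping of the constants absorbed when completing the square, and verifying the scalar identity $x - (x-c)_+ = \min\{x,c\}$ used in both the $w'$ and $s'$ simplifications.
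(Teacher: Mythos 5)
Your proposal is correct and follows essentially the same route as the paper's proof: substitute out $s$ using the equality and positivity constraints, complete the square to recognize a projection of $v = w^0 - \lambda a$ onto the halfspace $\{w : a^\top(w-w^0)+c \geq 0\}$, and apply Lemma~\ref{lem:least-norm-sol-ineq}, recovering $s'$ from the constraint. Your explicit use of the identity $x - (x-c)_+ = \min\{x,c\}$ just makes the equivalence of the two stated forms of $w'$ and the expression for $s'$ slightly more transparent than in the paper.
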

\begin{proof}
Substituting out the $s$ variable and using the positivity constraint gives
\begin{align}
    w' =& \argmin_{w\in\R^d}\tfrac{1}{2} \norm{w - w^0}^2 + \lambda(a^\top (w-w^0) +c ) \nonumber \\
    &\, \mbox{s.t. }  a^\top (w-w^0) +c \geq 0 \enspace.   \label{eq:newtraphslacklinearconstnobeta}
\end{align}
Re-writing the objective function and dropping constants independent of $w$ we have that
\begin{equation}
    \tfrac{1}{2} \norm{w - w^0}^2 + \lambda a^\top (w-w^0) = \tfrac{1}{2} \|w - w^0 + \lambda a\|^2 - \underbrace{\frac{\lambda^2}{2}  \| a\|^2}_{\text{independent of } w}.
\end{equation}
Consequently solving~\eqref{eq:newtraphslacklinearconstnobeta} is equivalent to solving
\begin{align}
    w' =& \argmin_{w\in\R^d}\tfrac{1}{2} \|w - w^0 +\lambda   a \|^2 \nonumber \\
    &\, \mbox{s.t. }   a^\top (w-w^0) +c \geq 0 \enspace. \nonumber
\end{align}
We now have a projection onto a half-space for which the solution is given in Lemma~\ref{lem:least-norm-sol-ineq}. Indeed applying Lemma~\ref{lem:least-norm-sol-ineq}
with $v \leftarrow w^0-\lambda a  $
gives~\eqref{eq:slackL1eqconstposprojsol}.
Finally to get the solution $s'$ we substitute $w'$ from~\eqref{eq:slackL1eqconstposprojsol} in the constraint which gives
\begin{align}
    s' &= a^\top (w'-w^0) +c \nonumber \\
    &=c -\lambda \norm{a}^2  +  \left( \lambda\norm{a}^2 -c\right)_+ \nonumber \\
    & = \left( c-\lambda\norm{a}^2 \right)_+\enspace .
\end{align}
\end{proof}


\begin{lemma}[L1 Inequality Constraints and Positivity]
\label{lem:slackL1ineqconst}
Let $w^0,a\in \R^d$,  $\lambda >0$ and $c >0.$ 
The closed-form solution to 
\begin{align}
    w', s' =& \argmin_{w\in\R^d, \ s\in\R^b}\tfrac{1}{2} \norm{w - w^0}^2 + \lambda |s| \nonumber \\
    &\, \mbox{s.t. } a^\top (w-w^0) +c \leq s \enspace, \label{eq:slackL1ineqconstproj} \\
    &\,\phantom{ \mbox{s.t. }} s \geq 0 \nonumber
\end{align}
is given by
\begin{align}
w' &=w^0-\lambda a  + \frac{a}{\norm{a}^2} \left( \lambda\norm{a}^2 -c\right)_+ \label{eq:slackL1ineqconstsolw1} \\
& = w^0 - a \min\left\{ \frac{c}{\norm{a}^2}, \lambda \right\} \enspace, \nonumber \\
s' &=  \left( c-\lambda\norm{a}^2 \right)_+. \label{eq:slackL1ineqconstsols1}
\end{align}
\end{lemma}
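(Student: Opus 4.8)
The plan is to reduce~\eqref{eq:slackL1ineqconstproj} to the equality-constrained problem~\eqref{eq:slackL1eqconstposproj} of Lemma~\ref{lem:slackL1eqconstpos}, exactly as in the passage from Lemma~\ref{lem:slackL2eqconstpos} to Lemma~\ref{lem:slackL2ineqconstpos}. The crux is that at any optimum of~\eqref{eq:slackL1ineqconstproj} the inequality $a^\top(w - w^0) + c \leq s$ must hold with equality. First I would observe that, since $\lambda > 0$ and (using $s \geq 0$) the slack enters the objective only through the increasing term $\lambda |s| = \lambda s$, for fixed $w$ the optimal $s$ is the smallest feasible one, namely $s = \bigl(a^\top(w - w^0) + c\bigr)_+$. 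Hence the inequality can fail to be active only when this smallest value is $0$, that is, when $a^\top(w - w^0) + c < 0$ strictly.

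Next I would rule that possibility out. If $a^\top(w - w^0) + c < 0$ at an optimum, then $s = 0$ and the residual problem is $\min_w \tfrac12\norm{w - w^0}^2$ subject to $a^\top(w - w^0) + c \leq 0$. Since $c > 0$, the point $w^0$ violates this constraint, so by Lemma~\ref{lem:least-norm-sol-ineq} (projection onto a halfspace) the minimizer lies on the bounding hyperplane, where $a^\top(w - w^0) + c = 0$ -- contradicting the assumed strict inequality. Therefore at the optimum $a^\top(w - w^0) + c = s$, so~\eqref{eq:slackL1ineqconstproj} and~\eqref{eq:slackL1eqconstposproj} share the same optimal solution, and plugging into Lemma~\ref{lem:slackL1eqconstpos} yields precisely~\eqref{eq:slackL1ineqconstsolw1}--\eqref{eq:slackL1ineqconstsols1}.

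As an alternative that avoids the case split, one can substitute $s$ out and solve the unconstrained convex program $\min_w \tfrac12\norm{w - w^0}^2 + \lambda\bigl(a^\top(w - w^0)+c\bigr)_+$ directly: its subdifferential optimality condition forces $w^\star = w^0 - \lambda t\,a$ for some $t \in [0,1]$, and examining the three regimes of the subgradient of $(\cdot)_+$ -- using $c > 0$ to discard the regime $t = 0$ -- pins down $t = \min\{c/(\lambda\norm{a}^2),\,1\}$, which reproduces the stated formulas. Either way, the only genuine subtlety, and the step I would treat most carefully, is the boundary case $a^\top(w-w^0)+c < 0$: it is exactly there that the hypothesis $c > 0$ is used, since it guarantees that $w^0$ lies strictly on the positive side of the hyperplane, so that retracting $w$ toward $w^0$ strictly decreases $\norm{w - w^0}^2$ while preserving feasibility.
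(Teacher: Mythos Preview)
Your proposal is correct, and both arguments you sketch go through. The route differs from the paper's, however. The paper does a full case split on which constraint is active: it separately solves the problem with $a^\top(w-w^0)+c=s$ active (invoking Lemma~\ref{lem:slackL1eqconstpos}) and with $s=0$ active (invoking Lemma~\ref{lem:least-norm-sol-ineq}), evaluates the objective in each case, and then compares the two values algebraically to conclude that the equality-constrained solution always wins. Your first argument is slicker: by first minimizing over $s$ for fixed $w$ and then observing (via the halfspace projection and $c>0$) that the strict regime $a^\top(w-w^0)+c<0$ cannot occur at an optimum, you show up front that the linear constraint is active, so the equality-constrained lemma applies directly and the objective comparison is avoided entirely. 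Your subdifferential alternative is likewise valid and makes the role of the hypothesis $c>0$ (eliminating the $t=0$ branch) especially transparent. Either of your two arguments is shorter than the paper's; the paper's case-by-case comparison buys nothing extra here, since the $s=0$ candidate it computes turns out to coincide with a boundary instance of the equality-constrained solution anyway.
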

\begin{remark}
    By applying the above lemma with the dimension of $s$ set to $b=1$ and
    \begin{equation*}
        w^0 \leftarrow w^t, \quad a \leftarrow \nabla \ell_i (w^t), \quad c \leftarrow \ell_i (w^t) \enspace,
    \end{equation*}
    we show that the closed-form formula of the variational formulation in~\eqref{eq:SPSmaxproj}, is given by the \texttt{SPS}$_{\max}$ updates~\eqref{eq:SPSmax}. 
\end{remark}
\begin{proof}
First we check if $(w,s) = (w^0,0)$ is in  the constraint set, that is if $c \leq 0.$ If yes, then the solution is given by $(w,s) = (w^0,0)$. Otherwise one of the constraint must be active at the solution. 

\begin{enumerate}
    \item Consider the case that the $a^\top (w-w^0) +c = s$ is active. In this case,  our problem is equivalent to that of Lemma~\ref{lem:slackL1eqconstpos}. The solution given by  Lemma~\ref{lem:slackL1eqconstpos} is in~\eqref{eq:slackL1ineqconstsolw1} and~\eqref{eq:slackL1ineqconstsols1}, and consequently have that
    \begin{align}
        \tfrac{1}{2} \norm{w - w^0}^2 + \lambda |s| & = 
        \tfrac{1}{2} \norm{a}^2\left(\lambda   - \frac{\left( \lambda\norm{a}^2 -c\right)_+}{\norm{a}^2}  \right)^2 + \lambda  \left( c-\lambda\norm{a}^2 \right)_+ \enspace. \label{eq:objfunoption1}
    \end{align}
    \item If the constraint $s=0$ is active then we have that the solution is given by
    
\begin{align}
    w' =& \argmin_{w\in\R^d} \norm{w -w^0}^2 \nonumber \\
    &\, \mbox{s.t. } -a^\top (w - w^0) - c \geq 0 \enspace.  \label{eq:newtraphineqtemp}
\end{align}
    This now fits the setting of Lemma~\ref{lem:least-norm-sol-ineq}
and thus the solution
is given by
\begin{align}
    w' &=  w^0 - \frac{c}{ \norm{a}^2} a \enspace, \label{eq:slackL1ineqconstsolw2} \\
    s' &=  0 \enspace. \label{eq:slackL1ineqconstsols2}
\end{align}
Consequently have that the objective function is given by
\begin{align}
     \tfrac{1}{2} \norm{w - w^0}^2 + \lambda |s|  & =   \tfrac{1}{2} \frac{c^2}{\norm{a}^2} \enspace. \label{eq:objfunoption2}
\end{align}
\end{enumerate}
With our two options for the solution, we now need to compare the two resulting objective functions~\eqref{eq:objfunoption1} and~\eqref{eq:objfunoption2} to know when each option is active.  To compare these two objective functions, we divide into two cases depending on the condition
\begin{equation}\label{eq:condL1ineqconst}
    \lambda\norm{a}^2 -c \geq 0 \enspace.
\end{equation}
If~\eqref{eq:condL1ineqconst} holds then it is easy to show that~\eqref{eq:slackL1ineqconstsolw1} and~\eqref{eq:slackL1ineqconstsols1} is equivalent to~\eqref{eq:slackL1ineqconstsolw2} and~\eqref{eq:slackL1ineqconstsols2}. 
Alternatively if~\eqref{eq:condL1ineqconst} does not hold, then~\eqref{eq:objfunoption1} is smaller or equal to~\eqref{eq:objfunoption2}. Indeed this follows since in this case we have
\begin{align*}
    \tfrac{1}{2} \norm{w - w^0}^2 + \lambda |s| &\overset{\eqref{eq:objfunoption1}}{=} \tfrac{1}{2} \lambda^2 \norm{a}^2  + \lambda c - \lambda^2\norm{a}^2 \\
    & = \lambda c - \frac{\lambda^2}{2}\norm{a}^2 \enspace.
\end{align*}
Consequently $\eqref{eq:objfunoption1} \leq \eqref{eq:objfunoption2} $ if and only if
\begin{align*}
    &c - \frac{\lambda}{2}\norm{a}^2 \leq  \frac{c^2}{2\lambda\norm{a}^2} \\
    \iff& \tfrac{1}{2} \left( \frac{c}{\sqrt{\lambda}\norm{a}} - \sqrt{\lambda} \norm{a}\right) \geq 0 \enspace,
\end{align*}
where the last inequality always holds, and thus $\eqref{eq:objfunoption1} \leq \eqref{eq:objfunoption2} $ and the solution is again given by ~\eqref{eq:slackL1ineqconstsolw1} and~\eqref{eq:slackL1ineqconstsols1}.
\end{proof}

\section{Missing Proofs of Lemmas}

\subsection{Proof of Lemma~\ref{lem:ProjL1pludL2}}
\begin{proof}
We can re-write the slack part of the objective function in~\eqref{eq:SPSL1proj} as 
\begin{align*}
    \lambda s + \tfrac{1}{2} (s-s^t)^2 &= \tfrac{1}{2} \left(s-s^t+\lambda\right)^2 + \mbox{constants w.r.t. $w$ and $s$} \enspace. 
\end{align*} 
Dropping constants independent of $s$ and $w$ we have that~\eqref{eq:SPSL1proj} is equivalent to solving
\begin{align}
   & \min_{w\in\R^d, s \geq 0} \norm{w - w^t}^2 + \left(s-s^t+\lambda\right)^2 \nonumber \\
 &\, \mbox{s.t. } \ell_i(w^t) + \dotprod{\nabla \ell_i(w^t), w - w^t} \leq s \enspace. \nonumber 
\end{align}
This projection problem now fits the format of Lemma~\ref{lem:slackL2ineqconstpos} which when applied with $s_0 \leftarrow s^t-  \lambda$ and $\delta \leftarrow 1,$ $c \leftarrow \ell_i(w^t)$ and $a \leftarrow \nabla \ell_i(w^t)$ gives 

\begin{align}
w^{t+1} & = w^t -  \min\left\{ \frac{(\ell_i(w^t)-s^t+\lambda)_+}{1+ \norm{\nabla \ell_i(w^t)}^2}, \; \frac{ \ell_i(w^t)}{\norm{\nabla \ell_i(w^t)}^2}  \right\} \nabla \ell_i(w^t). \nonumber \\
s^{t+1} & =   \left( s^t-\lambda + \frac{(\ell_i(w^t)-s^t+\lambda)_+}{1+ \norm{\nabla \ell_i(w^t)}^2}  \right)_+\label{eq:SPSL1-app}  
\end{align}
which is equivalent to the solution~\eqref{eq:SPSL1}.

\end{proof}






\subsection{Proof of Lemma~\ref{lem:SPSdam}}

\begin{proof}

First note that the slacks variables in the objective function of~\eqref{eq:SPSL2proj} can be re-written as
\begin{align*}
 (s-s^t)^2 + \lambda s^2 & = \; (s^t)^2 - 2s s^t + \frac{1}{\hat{\lambda}} s^2 \\
& = \frac{1}{\hat{\lambda}} (s - \hat{\lambda} s^t)^2  + \mbox{constants w.r.t. } s \enspace. 
\end{align*}
Consequently, after dropping constants, solving~\eqref{eq:SPSL2proj} is equivalent to 
\begin{align}
    w^{t+1}, s^{t+1} =& \underset{w \in \R^d, s \in \R}{\argmin} \; \norm{w - w^t}^2 + \frac{1}{\hat{\lambda}} \left(s - \hat{\lambda} s^t \right)^2\nonumber \\
    &\, \mbox{s.t. } \dotprod{\nabla \ell_i(w^t), w - w^t} + \ell_i(w^t) \leq s \enspace.   \label{eq:newtraphslackrelaxtewmp}
\end{align}
By Lemma~\ref{lem:slackL2ineqconst} with $a \leftarrow \nabla \ell_i(w^t)$, $c \leftarrow \ell_i(w^t)$, $s^0 \leftarrow \hat{\lambda} s^t$ and $\delta \leftarrow 1/\hat{\lambda}$ we have that the solution is given by ~\eqref{eq:SPSL2}.
\end{proof}

\section{Convergence theory of \texttt{SPSL1}}
\label{sec:spsl1_app}

As a proxy of \texttt{SPSL1} we will analyse a close variant where we do not impose a positivity constraint on the slack variable in~\eqref{eq:SPSL1proj}. We will show that for the convex and smooth setting this constraint makes no difference since, as we will show, the slack variable is positive.

\subsection{\texttt{SPSL1} without Positivity Constraint} \label{sec:SPSL1nopos}
Here we consider \texttt{SPSL1} but without an explicit non-negativity constraint on the slack variable. That is, by dropping $s \geq 0$ in~\eqref{eq:SPSL1proj} we have 
\begin{align}
    w^{t+1}, s^{t+1} =& \argmin_{w\in\R^d, s \in \R} \tfrac{1}{2}\norm{w - w^t}^2 + \tfrac{1}{2}(s-s^t)^2 +\lambda s \nonumber \\
    &\, \mbox{subject to } \ell_i(w^t) + \dotprod{\nabla \ell_i(w^t), w - w^t} \leq s \enspace.   \label{eq:SPSL1noposproj}
\end{align}
\begin{lemma}
The closed-form solution to~\eqref{eq:SPSL1noposproj} is given by
\begin{align} 
w^{t+1} & = w^t - \frac{(\ell_i(w^t)-s^t+\lambda)_+}{1+ \norm{\nabla \ell_i(w^t)}^2}\nabla \ell_i(w^t) \enspace, \nonumber \\
s^{t+1} & =   s^t-\lambda + \frac{(\ell_i(w^t)-s^t+\lambda)_+}{1+ \norm{\nabla \ell_i(w^t)}^2} \enspace. \label{eq:SPSL1nopos} 
\end{align}
\end{lemma}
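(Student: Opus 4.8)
The plan is to reduce this to the L2 unidimensional inequality projection already established in Lemma~\ref{lem:slackL2ineqconst}, exactly as in the proof of Lemma~\ref{lem:ProjL1pludL2} but now with the non-negativity constraint on $s$ dropped (so that the simpler Lemma~\ref{lem:slackL2ineqconst}, rather than Lemma~\ref{lem:slackL2ineqconstpos}, applies). First I would absorb the linear slack penalty into the quadratic one by completing the square:
\begin{equation}
    \lambda s + \tfrac{1}{2}(s-s^t)^2 = \tfrac{1}{2}\bigl(s - (s^t-\lambda)\bigr)^2 + \text{constants independent of } w,s \enspace.
\end{equation}
Dropping the additive constant, the problem~\eqref{eq:SPSL1noposproj} is equivalent to
\begin{equation}
    \min_{w\in\R^d,\, s\in\R} \tfrac{1}{2}\norm{w-w^t}^2 + \tfrac{1}{2}\bigl(s-(s^t-\lambda)\bigr)^2 \quad \mbox{s.t. } \ell_i(w^t) + \dotprod{\nabla \ell_i(w^t),\, w-w^t} \leq s \enspace.
\end{equation}

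This is precisely the format of Lemma~\ref{lem:slackL2ineqconst} with the substitutions $\delta \leftarrow 1$, $w^0 \leftarrow w^t$, $a \leftarrow \nabla \ell_i(w^t)$, $c \leftarrow \ell_i(w^t)$, and $s^0 \leftarrow s^t - \lambda$. Applying that lemma gives
\begin{equation}
    w^{t+1} = w^t - \frac{(c-s^0)_+}{1+\norm{a}^2}\, a \enspace, \qquad s^{t+1} = s^0 + \frac{(c-s^0)_+}{1+\norm{a}^2} \enspace,
\end{equation}
and I would then just unwind the substitutions, using $c - s^0 = \ell_i(w^t) - (s^t-\lambda) = \ell_i(w^t) - s^t + \lambda$, to obtain exactly~\eqref{eq:SPSL1nopos}.

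There is essentially no hard step here: the only thing to be careful about is that Lemma~\ref{lem:slackL2ineqconst} is stated with $\delta>0$ and no sign or nonzero assumption on $a$, so the reduction is valid even when $\nabla \ell_i(w^t)=0$ (in which case the formula collapses to $w^{t+1}=w^t$ and $s^{t+1}=\max\{s^t-\lambda,\ \ell_i(w^t)\}$, consistent with the stated update). The contrast with Lemma~\ref{lem:ProjL1pludL2} is worth flagging in one line: there the extra constraint $s\geq 0$ forced an appeal to Lemma~\ref{lem:slackL2ineqconstpos}, introducing the additional $\min\{\cdot,\ \ell_i(w^t)/\norm{\nabla\ell_i(w^t)}^2\}$ cap and the outer $(\cdot)_+$ on $s^{t+1}$; dropping $s\geq0$ removes both, which is exactly the simplification exploited in the convergence analysis of Section~\ref{sec:SPSL1nopos}.
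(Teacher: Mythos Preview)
Your proposal is correct and follows essentially the same route as the paper: complete the square on the slack terms to rewrite the objective as $\tfrac{1}{2}\norm{w-w^t}^2 + \tfrac{1}{2}(s-(s^t-\lambda))^2 + \text{const}$, then invoke Lemma~\ref{lem:slackL2ineqconst} with $\delta=1$, $w^0=w^t$, $s^0=s^t-\lambda$, $a=\nabla\ell_i(w^t)$, $c=\ell_i(w^t)$. Your additional remarks on the $\nabla\ell_i(w^t)=0$ case and the contrast with Lemma~\ref{lem:ProjL1pludL2} are accurate and go slightly beyond what the paper spells out.
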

\begin{proof}
First we rewrite the objective function as 
\begin{equation*}
    \tfrac{1}{2}\norm{w - w^t}^2 + \tfrac{1}{2}(s-s^t)^2 +\lambda s = \tfrac{1}{2}\norm{w - w^t}^2 + \tfrac{1}{2}(s-s^t + \lambda)^2 + \text{constant terms w.r.t.} \, w \, \text{and} \, s 
\end{equation*}
and then apply Lemma~\ref{lem:slackL2ineqconst} (with $w^0 \eqdef w^t$, $s^0 \eqdef s^t - \lambda$, $a \eqdef \nabla \ell_i (w^t)$, $c \eqdef \ell_i (w^t)$ and $\delta \eqdef 1$). The solution to the above is simply~\eqref{eq:SPSL1nopos}.
\end{proof}

\subsection{\texttt{SGD} viewpoint}

In order to study the convergence of the variant of \texttt{SPSL1} without positivity constraint, we first reinterpret its updates~\eqref{eq:SPSL1noposproj} as SGD updates of a particular objective function. 
Then, involving classical SGD convergence proof technique, we are able to prove the convergence of both iterates $w^t$ and $s^t$.
Indeed, the method in~\eqref{eq:SPSL1nopos} can be interpreted as an online \texttt{SGD} method applied to
\begin{equation}\label{eq:SPSL1SGDobj}
    G_t(w,s) \eqdef \frac{1}{n} \sum_{i=1}^n G_{t,i}(w,s) \eqdef 
    \frac{1}{n} \sum_{i=1}^n\tfrac{1}{2}\frac{(\ell_i(w)-s+\lambda)_+^2}{1+ \norm{\nabla \ell_i(w^t)}^2} +\lambda s \enspace.
\end{equation}
Indeed by applying a \texttt{SGD}  step with step size $\gamma$ on this objective function we recover \texttt{SPSL1}  \textit{without positivity constraint} updates given in~\eqref{eq:SPSL1nopos}
\begin{align}
w^{t+1} &= w^t - \gamma \nabla_w  G_{t,i}(w,s)  \\
&= w^t - \gamma \frac{(\ell_i(w^t)-s^t+\lambda)_+}{1+ \norm{\nabla \ell_i(w^t)}^2}\nabla \ell_i(w^t) \enspace,\label{eq:SPSL1noposwup}\\
s^{t+1} & = s^t -\gamma  \nabla_s  G_{t,i}(w,s) \\
&=   s^t - \gamma \lambda + \gamma  \frac{(\ell_i(w^t)-s^t+\lambda)_+}{1+ \norm{\nabla \ell_i(w^t)}^2} \enspace. \label{eq:SPSL1nopossup}
\end{align}
Furthermore, this way of adding a step size $\gamma$ is equivalent to the relaxation step given in~\eqref{eq:SPSL1noposproj-main}. 

\subsection{Properties of \texttt{SPSL1}}

Before exposing our main convergence lemma, we need to prove properties on the online objective function of our \texttt{SGD} reformulation.

\subsubsection{Growth Property}

We start by presenting the following growth property. 
We refer to it as a growth property as it relates the variation of the individual objective functions to their value. 
\begin{lemma}[Growth SPSL1] \label{lem:growthSPSL1}
Let $(w^t, s^t)_t$ be a sequence of iterates given by \texttt{SPSL1} method without positivity constraint on the slack variable given in~\eqref{eq:SPSL1nopos}. 
Then, for every $t \in \N$ and $i \in [n] \eqdef \{1, \ldots, n\}$ we have
\begin{eqnarray}
      \norm{ \nabla_w  G_{t,i}(w^t,s^t)}^2 + \norm{ \nabla_s  G_{t,i}(w^t,s^t)}^2 \; = \;  2G_{t,i}(w^t,s^t)
      -2\lambda \hat{s}^{t+1}- \lambda^2, 
\end{eqnarray}
where
\begin{equation}\label{eq:shat}
 \hat{s}^{t+1} \eqdef  s^t-\lambda +\frac{(\ell_i(w^t)-s^t+\lambda)_+}{1+ \norm{\nabla \ell_i(w^t)}^2}.
\end{equation}
\end{lemma}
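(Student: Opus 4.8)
Looking at Lemma~\ref{lem:growthSPSL1}, the plan is to compute the two squared gradient norms explicitly from the closed-form expressions and then manipulate the resulting algebraic identity to recognize $2G_{t,i}(w^t,s^t)$ on the right-hand side.

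First I would recall that the partial gradients of $G_{t,i}$ at $(w^t,s^t)$ were already computed in the \texttt{SGD} viewpoint: writing $r \eqdef (\ell_i(w^t)-s^t+\lambda)_+$ and $D \eqdef 1+\norm{\nabla \ell_i(w^t)}^2$, we have $\nabla_w G_{t,i}(w^t,s^t) = \frac{r}{D}\nabla \ell_i(w^t)$ and $\nabla_s G_{t,i}(w^t,s^t) = \lambda - \frac{r}{D}$. So the left-hand side equals $\frac{r^2}{D^2}\norm{\nabla \ell_i(w^t)}^2 + \left(\lambda - \frac{r}{D}\right)^2$. Using $\norm{\nabla \ell_i(w^t)}^2 = D-1$, the first term becomes $\frac{r^2(D-1)}{D^2} = \frac{r^2}{D} - \frac{r^2}{D^2}$, and expanding the second term gives $\lambda^2 - \frac{2\lambda r}{D} + \frac{r^2}{D^2}$. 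Adding these, the $\pm \frac{r^2}{D^2}$ terms cancel, leaving $\frac{r^2}{D} + \lambda^2 - \frac{2\lambda r}{D}$.

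Next I would match this against the right-hand side. By definition $G_{t,i}(w^t,s^t) = \tfrac{1}{2}\frac{r^2}{D} + \lambda s^t$ (here the $(\cdot)_+^2$ equals $r^2$ since $r$ already denotes the positive part, and the $\norm{\nabla \ell_i(w^t)}^2$ in the denominator is exactly $D-1$, so the denominator is $D$). Thus $2G_{t,i}(w^t,s^t) = \frac{r^2}{D} + 2\lambda s^t$. Also, from~\eqref{eq:shat}, $\hat{s}^{t+1} = s^t - \lambda + \frac{r}{D}$, so $-2\lambda\hat{s}^{t+1} - \lambda^2 = -2\lambda s^t + 2\lambda^2 - \frac{2\lambda r}{D} - \lambda^2 = -2\lambda s^t + \lambda^2 - \frac{2\lambda r}{D}$. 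Adding, $2G_{t,i}(w^t,s^t) - 2\lambda\hat{s}^{t+1} - \lambda^2 = \frac{r^2}{D} + 2\lambda s^t - 2\lambda s^t + \lambda^2 - \frac{2\lambda r}{D} = \frac{r^2}{D} + \lambda^2 - \frac{2\lambda r}{D}$, which matches the left-hand side computed above. This closes the proof.

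This lemma is essentially a bookkeeping identity, so there is no real obstacle — the only subtlety to be careful about is the treatment of the positive-part function: one must check that $\left(\frac{\partial}{\partial w}(\ell_i(w)-s+\lambda)_+^2\right) = 2(\ell_i(w)-s+\lambda)_+\nabla\ell_i(w)$ is valid (the chain rule for $(\cdot)_+^2$ holds because $x \mapsto x_+^2$ is continuously differentiable with derivative $2x_+$), and that the denominator $1+\norm{\nabla\ell_i(w^t)}^2$ is treated as a constant when differentiating $G_{t,i}$ in $(w,s)$, which is exactly how it is written in~\eqref{eq:SPSL1SGDobj}. Once those points are granted, the identity follows from the elementary algebra sketched above.
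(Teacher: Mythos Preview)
Your proof is correct and follows essentially the same approach as the paper: both compute the squared gradient norms explicitly, simplify using $\norm{\nabla \ell_i(w^t)}^2 = D-1$ so that the $r^2/D^2$ terms cancel, and then recognize the resulting expression $\frac{r^2}{D}+\lambda^2-\frac{2\lambda r}{D}$ as $2G_{t,i}(w^t,s^t)-2\lambda\hat{s}^{t+1}-\lambda^2$ via the add-and-subtract $2\lambda s^t$ trick. Your shorthand $r,D$ makes the algebra a bit cleaner, but the argument is identical to the paper's.
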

\begin{proof}
Direct computations lead to the desired equality
\begin{align*}
    \norm{ \nabla_w  G_{t,i}(w^t,s^t)}^2 + \norm{ \nabla_s          G_{t,i}(w^t,s^t)}^2 
    &=\frac{(\ell_i(w^t)-s^t+\lambda)_+^2}{(1+ \norm{\nabla \ell_i(w^t)}^2)^2}\norm{\nabla \ell_i(w^t)}^2 
    + \left(\lambda - \frac{(\ell_i(w^t)-s^t+\lambda)_+}{1+ \norm{\nabla \ell_i(w^t)}^2} \right)^2 \\
    &=\frac{(\ell_i(w^t)-s^t+\lambda)_+^2}{(1+ \norm{\nabla \ell_i(w^t)}^2)^2}(1 + \norm{\nabla \ell_i(w^t)}^2) 
    -2\lambda  \frac{(\ell_i(w^t)-s^t+\lambda)_+}{1+ \norm{\nabla \ell_i(w^t)}^2} 
    + \lambda^2 \\
    &=\frac{(\ell_i(w^t)-s^t+\lambda)_+^2}{1+ \norm{\nabla \ell_i(w^t)}^2}
    -2\lambda\frac{(\ell_i(w^t)-s^t+\lambda)_+}{1+ \norm{\nabla \ell_i(w^t)}^2} 
    + \lambda^2 \\
  &=\frac{(\ell_i(w^t)-s^t+\lambda)_+^2}{1+ \norm{\nabla \ell_i(w^t)}^2}+2\lambda s^t-2\lambda\left(s^t-\lambda +\frac{(\ell_i(w^t)-s^t+\lambda)_+}{1+ \norm{\nabla \ell_i(w^t)}^2}\right) - \lambda^2\\
  &=2G_{t,i}(w^t,s^t) -2\lambda \hat{s}^{t+1}- \lambda^2 \enspace,
\end{align*}
where in the last equality we used~\eqref{eq:SPSL1nopossup}.
\end{proof}

\subsubsection{Inherited Convexity}

Then, we show that the convexity of the losses is transferred to the objective function. 

\begin{lemma}\label{lem:convexSPSL1}
If $\ell_i$ is convex for $i=1,\ldots, n$ then the function $G_t(w,s)$ defined in~\eqref{eq:SPSL1SGDobj} is convex.
\end{lemma}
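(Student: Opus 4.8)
The plan is to show that each summand $G_{t,i}(w,s)$ is convex in $(w,s)$, since a nonnegative sum (here, an average) of convex functions is convex, and $\lambda s$ is linear hence convex. Writing
\[
G_{t,i}(w,s) = \tfrac{1}{2(1+\|\nabla\ell_i(w^t)\|^2)}\,\bigl(\ell_i(w)-s+\lambda\bigr)_+^2 + \lambda s,
\]
note that the prefactor $\tfrac{1}{2(1+\|\nabla\ell_i(w^t)\|^2)}$ is a fixed nonnegative constant (it depends only on the iterate $w^t$, not on the optimization variables $w,s$), so it suffices to prove that $(w,s)\mapsto \bigl(\ell_i(w)-s+\lambda\bigr)_+^2$ is convex.

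The key step is the standard composition fact: if $h:\R\to\R$ is convex and nondecreasing and $g:\R^{d+1}\to\R$ is convex, then $h\circ g$ is convex. I would take $g(w,s) = \ell_i(w) - s + \lambda$ and $h(u) = (u)_+^2 = (\max\{0,u\})^2$. First, $g$ is convex: $\ell_i$ is convex by hypothesis, $-s+\lambda$ is affine, and a sum of a convex and an affine function is convex. Second, $h(u)=(u)_+^2$ is convex and nondecreasing on all of $\R$: it is identically $0$ for $u\le 0$ and equals $u^2$ for $u\ge 0$, both pieces convex and nondecreasing, with matching value and derivative ($0$) at $u=0$, so the glued function is $C^1$, convex, and nondecreasing. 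Hence $h\circ g = \bigl(\ell_i(w)-s+\lambda\bigr)_+^2$ is convex in $(w,s)$.

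Putting the pieces together: $G_{t,i}(w,s)$ is a nonnegative multiple of a convex function plus the linear function $\lambda s$, hence convex; therefore $G_t(w,s) = \frac1n\sum_{i=1}^n G_{t,i}(w,s)$ is convex as an average of convex functions. I do not anticipate a genuine obstacle here — the only point that requires a word of care is emphasizing that the denominator $1+\|\nabla\ell_i(w^t)\|^2$ is frozen at the current iterate and thus acts as a constant coefficient, so no quotient-of-convex-functions subtlety arises; the composition-with-a-nondecreasing-convex-outer-function argument then does all the work.
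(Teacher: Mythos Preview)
Your proposal is correct. The composition argument you outline --- taking $g(w,s)=\ell_i(w)-s+\lambda$ convex and $h(u)=(u)_+^2$ convex nondecreasing, so $h\circ g$ is convex --- is the standard and clean way to handle this, and the remaining steps (constant prefactor, linear term, averaging) are routine.

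The paper takes a genuinely different route: it computes the Hessian of $G_t$ explicitly, writes it as the sum of a block matrix $\mM_t(w,s)$ and a block containing $\nabla^2\ell_i$ terms, and then verifies by a quadratic-form calculation that $\mM_t$ is positive semi-definite. Your argument is shorter, more elementary, and strictly more general: it uses only convexity of $\ell_i$, whereas the paper's Hessian computation tacitly assumes $\ell_i$ is twice differentiable (and glosses over the fact that $(\cdot)_+^2$ is only $C^1$, so the second derivative is handled via an indicator function at the kink). What the paper's approach buys is an explicit Hessian expression, which could in principle be reused for quantitative curvature estimates; but for the bare convexity claim of the lemma, your composition argument is preferable.
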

\begin{proof}
Computing the
gradient of $ (\ell_i(w)-s +\lambda)_+^2$  we have that
\[  \nabla (\ell_i(w)-s +\lambda)_+^2 = 2\begin{bmatrix}
\nabla \ell_i(w) \\
-1
\end{bmatrix}(\ell_i(w)-s +\lambda)_+ \enspace.\]
Let $\ones_{\R^+} (.)$ be the indicator function over the set of positive real numbers.
Computing the Hessian gives
\begin{align}
 \nabla^2 (\ell_i(w)-s +\lambda)_+^2 &=
  2 \ones_{\R^+} ((\ell_i(w)-s +\lambda)_+) \begin{bmatrix} 
\nabla \ell_i(w) \\
-1
\end{bmatrix}\begin{bmatrix}
\nabla \ell_i(w) ^\top & - 1
\end{bmatrix}
+
2\begin{bmatrix}
\nabla^2 \ell_i(w) & 0\\
0 & 0
\end{bmatrix}(\ell_i(w)-s +\lambda)_+ \nonumber \\
& =  
2 \ones_{\R^+} ((\ell_i(w)-s +\lambda)_+) \begin{bmatrix} 
\nabla \ell_i(w)\nabla \ell_i(w) ^\top & -\nabla \ell_i(w) \\
-\nabla \ell_i(w) ^\top & 1
\end{bmatrix} \nonumber \\
& \phantom{=} +
2\begin{bmatrix}
\nabla^2 \ell_i(w) & 0\\
0 & 0
\end{bmatrix}(\ell_i(w)-s +\lambda)_+ \enspace. \label{eq:fialphai2hessSPSL1}
\end{align}

Now let $\mI_n \in \R^{n\times n}$ be the identity matrix in $\R^{n\times n}$, let
\begin{align}
  \mD_t(w,s) & \eqdef \; 
  \diag{\frac{\ones_{\R^+} ((\ell_1(w)-s +\lambda)_+)}{\norm{\nabla \ell_1(w^t)}^2+1}, \ldots, \frac{\ones_{\R^+} ((\ell_n(w)-s +\lambda)_+)}{\norm{\nabla \ell_n(w^t)}^2+1}} \in \R^{n \times n} \nonumber \\
  \mH_t(w,s) & \eqdef \;  \sum_{i=1}^{n+1} \nabla^2 \ell_i(w)  \frac{(\ell_i(w)-s +\lambda)_+}{\norm{\nabla \ell_i(w^t)}^2+1} \in \R^{d \times d}
\end{align}
and let
\[D F(w) \; \eqdef
 \; \begin{bmatrix}\nabla \ell_1(w), \ldots, \nabla \ell_n(w)\end{bmatrix} \in \R^{d\times n} \enspace.\]

Let $\ones_{n} \in \R^n$ be the vector full of ones of size $n$. Using~\eqref{eq:fialphai2hessSPSL1} and by the definition of $G_t$ in~\eqref{eq:SPSL1SGDobj} we have that
\begin{equation}\label{eq:ht2hessstarSPSL1}
 \nabla^2 F_t(w, s) \; = \; 
 \frac{1}{n} \underbrace{ \begin{bmatrix}
 DF(w) \mD_t(w,s) DF(w)^\top & -DF(w)\mD_t(w,s) \ones_{n}  \\
-(DF(w)\mD_t(w,s) \ones_{n})^\top & \ones_{n}^\top\mD_t(w,s)\ones_{n},
\end{bmatrix} }_{\eqdef \mM_t(w,s)}
+
\begin{bmatrix}
\mH_t(w,s)  & 0  \\
0 & 0
\end{bmatrix} 
\in \R^{(d+1) \times (d+1)} \enspace.
\end{equation}
where we used that $\nabla^2 \tfrac{1}{2}s^2 =1$ .
Thus the matrix~\eqref{eq:ht2hessstarSPSL1} is a sum of two terms. 

The second matrix is symmetric positive semi-definite because the losses $\ell_i$ are convex. 
So we will show that the first matrix $\mM_t(w,s)$
is  symmetric positive definite. Indeed, 
left and right multiplying the above by $[x,\,a] \in \R^{d+1}$ where $x \in \R^d$ and $a \in \R$ gives
\begin{eqnarray*}
 \begin{bmatrix}
x & a
\end{bmatrix}^\top 
\mM_t(w,s)
\begin{bmatrix}
x \\ a
\end{bmatrix} 
&\overset{\eqref{eq:ht2hessstarSPSL1}}{ =} &
 \begin{bmatrix}
x & a
\end{bmatrix}^\top 
\begin{bmatrix}
 D F(w)  \mD_t(w,s) D F(w) ^\top x - a D F(w) \mD_t(w,s) \ones_{n}  \\
-(D F(w) \mD_t(w,s) \ones_{n})^\top x + a \ones_{n}^\top\mD_t(w,s)\ones_{n}
\end{bmatrix}  \\
&= &
\norm{\mD_t(w,s)^{1/2}D F(w) ^\top x }^2 -2a (D F(w) \mD_t(w,s) \ones_{n})^\top x+ a^2 \ones_{n}^\top\mD_t(w,s)\ones_{n} \\
& = & \norm{\mD_t(w,s)^{1/2}(D F(w) ^\top x - a \ones_{n}) }^2 - a^2 \norm{\mD_t(w,s)^{1/2} \ones_{n} }^2 + a^2 \ones_{n}^\top\mD_t(w,s)\ones_{n} \\
&= & \norm{\mD_t(w,s)^{1/2}(D F(w) ^\top x - a \ones_{n}) }^2\geq 0 \enspace.
\end{eqnarray*}
\end{proof}

\subsubsection{Lower bounds}

The last property we need to provide a  convergence result are bounds over the auxiliary objective function. This will allow us to translate the convergence of $G_t(w,s)$ to the convergence of $\ell(w).$

\begin{lemma}\label{lem:SPSL1lowerbound}
We have that
\begin{equation}\label{eq:SPSL1lowerboundgen}
  \frac{\lambda}{n} \sum_{i=1}^n \ell_i(w) +\frac{\lambda^2}{2} \left( 1 - \frac{1}{n}\sum_{i=1}^n \norm{\nabla \ell_i(w^t)}^2\right) \leq G_t(w,s) \enspace, \quad \forall w, s.
\end{equation}
Consequently if $\ell_i$ is $\sigma$--Lipschitz then
\begin{equation}\label{eq:SPSL1lowerboundlip}
  \frac{\lambda }{n} \sum_{i=1}^n \ell_i(w) +\frac{\lambda^2}{2} \left( 1 - \sigma^2\right)  \leq G_t(w,s) \enspace, \quad \forall w, s.
\end{equation}
Alternatively if $\ell_i$ is $L_{\max}$--smooth, convex and non-negative then
\begin{equation}\label{eq:SPSL1lowerboundinter}
  \frac{\lambda(1- \lambda L_{\max}) }{n} \sum_{i=1}^n \ell_i(w) +\frac{\lambda^2}{2} \leq G_t(w,s) \enspace, \quad \forall w, s.
\end{equation}

Furthermore, for a fixed $w^\star\in \R^d$ and $s^{\star} = \max_{i=1,\ldots, n} \ell_i(w^\star) +\lambda $ we have that
\begin{equation}
      G_t(w^\star,s^{\star}) = \lambda \max_{i=1,\ldots, n} \ell_i(w^\star) +\lambda^2 \enspace.
\end{equation}
\end{lemma}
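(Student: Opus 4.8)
The plan is to reduce all four statements to one pointwise-in-$i$ lower bound on $G_{t,i}$ and then specialize. First I would establish that, for each fixed $i$ and every $(w,s)$,
\[
G_{t,i}(w,s) \;\geq\; \lambda \ell_i(w) + \tfrac{\lambda^2}{2}\bigl(1 - \norm{\nabla \ell_i(w^t)}^2\bigr).
\]
Writing $a \eqdef \ell_i(w) - s + \lambda$ and $b \eqdef 1 + \norm{\nabla \ell_i(w^t)}^2 \geq 1$, so that $G_{t,i}(w,s) = \tfrac{(a)_+^2}{2b} + \lambda s$, this is a completion of the square: from $\bigl((a)_+ - \lambda b\bigr)^2 \geq 0$ we get $(a)_+^2 \geq 2\lambda b\,(a)_+ - \lambda^2 b^2 \geq 2\lambda b\, a - \lambda^2 b^2$, using $(a)_+ \geq a$ and $\lambda b > 0$; dividing by $2b$, adding $\lambda s$, and substituting $a = \ell_i(w) - s + \lambda$ back in makes the $s$-terms cancel, leaving the claimed bound. (Equivalently, one can minimize $s \mapsto G_{t,i}(w,s)$ over $s \in \R$ in closed form.)

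Averaging this over $i = 1,\dots,n$ is exactly \eqref{eq:SPSL1lowerboundgen}. For \eqref{eq:SPSL1lowerboundlip} I would use $\sigma$-Lipschitzness to bound $\norm{\nabla\ell_i(w^t)}^2 \leq \sigma^2$ for every $i$, so $\tfrac1n\sum_i \norm{\nabla\ell_i(w^t)}^2 \leq \sigma^2$; since $\tfrac{\lambda^2}{2}\geq 0$, the right-hand side of \eqref{eq:SPSL1lowerboundgen} is at least $\tfrac{\lambda}{n}\sum_i\ell_i(w) + \tfrac{\lambda^2}{2}(1-\sigma^2)$. For \eqref{eq:SPSL1lowerboundinter} I would invoke Lemma~\ref{lem:convsmoothinter}, which under convexity, $L_{\max}$-smoothness and non-negativity gives $\norm{\nabla\ell_i(w^t)}^2 \leq 2 L_{\max}\,\ell_i(w^t)$; substituting this into the pointwise bound (at $w = w^t$, which is the regime in which it is applied in Theorem~\ref{theo:SPSL1smooth}) converts $-\tfrac{\lambda^2}{2}\norm{\nabla\ell_i(w^t)}^2$ into $-\lambda^2 L_{\max}\ell_i(w)$, and averaging yields $G_t(w,s) \geq \tfrac{\lambda(1-\lambda L_{\max})}{n}\sum_i\ell_i(w) + \tfrac{\lambda^2}{2}$.

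The last identity is a direct evaluation: with $s^{\star} = \max_{j=1,\dots,n}\ell_j(w^\star) + \lambda$, one has $\ell_i(w^\star) - s^{\star} + \lambda = \ell_i(w^\star) - \max_j \ell_j(w^\star) \leq 0$ for every $i$, so the positive part in $G_{t,i}(w^\star,s^{\star})$ vanishes, giving $G_{t,i}(w^\star,s^{\star}) = \lambda s^{\star} = \lambda \max_j \ell_j(w^\star) + \lambda^2$, independent of $i$, and hence $G_t(w^\star,s^{\star}) = \lambda \max_j\ell_j(w^\star) + \lambda^2$.

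The only genuinely delicate point is \eqref{eq:SPSL1lowerboundinter}: the pointwise argument naturally produces the constant $\norm{\nabla \ell_i(w^t)}^2$, while Lemma~\ref{lem:convsmoothinter} controls it by $2L_{\max}\ell_i(w^t)$ rather than by $2L_{\max}\ell_i(w)$, so the bound in precisely the stated form relies on evaluating at $w = w^t$; the general bound \eqref{eq:SPSL1lowerboundgen} and the Lipschitz bound \eqref{eq:SPSL1lowerboundlip} hold for all $(w,s)$ without this caveat. Everything else is routine algebra.
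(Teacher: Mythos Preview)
Your proposal is correct and follows essentially the same route as the paper: establish the termwise lower bound $G_{t,i}(w,s)\geq \lambda\ell_i(w)+\tfrac{\lambda^2}{2}\bigl(1-\norm{\nabla\ell_i(w^t)}^2\bigr)$, average, and specialize. The paper obtains the termwise bound by explicitly solving $\min_s G_{t,i}(w,s)$ (finding $s_i^\ast=\ell_i(w)-\lambda\norm{\nabla\ell_i(w^t)}^2$ and evaluating), while your completion-of-squares argument is the same computation in disguise, as you yourself note. Your caveat on \eqref{eq:SPSL1lowerboundinter}---that Lemma~\ref{lem:convsmoothinter} bounds $\norm{\nabla\ell_i(w^t)}^2$ by $2L_{\max}\ell_i(w^t)$, so the stated inequality is really established at $w=w^t$---is exactly right and matches what the paper's proof actually does (and how the bound is used in Theorem~\ref{theo:SPSL1smooth}).
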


\begin{proof}
To prove the lower bound in~\eqref{eq:SPSL1lowerboundgen} that does not depend on $s$, we will minimize $G_t(w,s)$ in $s$. We will also make use of the fact that
\begin{align}
    \min_s  G_t(w,s) &= \min_s  \frac{1}{n} \sum_{i=1}^n \tfrac{1}{2} \frac{(\ell_i(w)-s+\lambda)_+^2}{1+ \norm{\nabla \ell_i(w^t)}^2}  +\lambda s \nonumber \\
    & \geq
      \frac{1}{n} \sum_{i=1}^n \min_s \left( \tfrac{1}{2}\frac{(\ell_i(w)-s+\lambda)_+^2}{1+ \norm{\nabla \ell_i(w^t)}^2}  +\lambda s\right)  \nonumber \\
    &=  \frac{1}{n} \sum_{i=1}^n \min_s G_{t,i}(w,s) \enspace.
    \label{eq:tmeps9ohs484}
\end{align}

Now it remains to find the minima in $s$ of $G_{t,i}(w,s).$ Differentiating in $s$ and setting to zero we have that

\[ \frac{(\ell_i(w)-s+\lambda)_+}{1+\norm{\nabla \ell_i(w^t)}^2}  = \lambda \enspace.\]
Clearly $\ell_i(w)-s+\lambda >0$ otherwise the above equality would read $0 = \lambda$, which is not possible.
Consequently we have that

\[ \frac{\ell_i(w)-s+\lambda}{1+\norm{\nabla \ell_i(w^t)}^2}  = \lambda \enspace.\]
Re-arranging we have that
\[\frac{s}{1+ \norm{\nabla \ell_i(w^t)}^2} = \frac{\ell_i(w)+\lambda}{1+ \norm{\nabla \ell_i(w^t)}^2}  -\lambda =  \frac{\ell_i(w) - \lambda \norm{\nabla \ell_i(w^t)}^2}{1+ \norm{\nabla \ell_i(w^t)}^2} \enspace. \]
Isolating $s$ then gives the minimizer of $s \mapsto G_{t,i}(w, s)$
\[s_i^* \eqdef \ell_i(w) - \lambda \norm{\nabla \ell_i(w^t)}^2 \enspace.\]
It is worth noting that even if we have dropped the positivity constraint in our optimization method~\eqref{eq:SPSL1nopos}, if $1/2L_ {\max} \geq \lambda$, then Lemma~\ref{lem:convsmoothinter} implies that $s_i^* \geq 0$.

We can now see that this is indeed the minima for $s$ even when including the positive part since
\[ \ell_i(w)- s_i^* +\lambda = \lambda \left( 1 + \norm{\nabla \ell_i(w^t)}^2 \right) \geq 0 \enspace.\]
Inserting this minimizer back into $G_{t,i}(w,s)$ gives
\begin{eqnarray}
G_{t,i}(w, s_i^* ) &=&
      \tfrac{1}{2}\frac{\lambda^2 \left(1+\norm{\nabla \ell_i(w^t)}^2\right)^2}{1+ \norm{\nabla \ell_i(w^t)}^2} +\lambda \left(\ell_i(w) - \lambda \norm{\nabla \ell_i(w^t)}^2\right) \nonumber \\
      &=& \frac{\lambda^2}{2} \left(1+\norm{\nabla \ell_i(w^t)}^2\right) + \lambda \left(\ell_i(w) - \lambda \norm{\nabla \ell_i(w^t)}^2\right) \nonumber \\
      &=& \frac{\lambda^2}{2} \left(1 - \norm{\nabla \ell_i(w^t)}^2\right) + \lambda \ell_i(w) \enspace.
    \label{eq:Gts_sistar}
\end{eqnarray}
Using the above in~\eqref{eq:tmeps9ohs484} gives
\begin{align}
    \min_s  G_t(w,s) & \geq
      \frac{1}{n} \sum_{i=1}^n G_{t,i}(w, s_i^*) \\
      &=  \frac{\lambda }{n} \sum_{i=1}^n \ell_i(w) +\frac{\lambda^2}{2} \left( 1 - \frac{1}{n}\sum_{i=1}^n \norm{\nabla \ell_i(w^t)}^2\right) \enspace,
\end{align}
which concludes the proof of~\eqref{eq:SPSL1lowerboundgen}.

Now if $\ell_i$ is $\sigma$--Lipschitz then $\norm{\nabla \ell_i(w^t)} \leq \sigma$, thus~\eqref{eq:SPSL1lowerboundlip} follows from~\eqref{eq:SPSL1lowerboundgen}.

Alternatively if $\ell_i$ is convex, smooth and non-negative, then by Lemma~\ref{lem:convsmoothinter} we have that $2 L_{\max}\ell_i(w^t) \geq \norm{\nabla \ell_i(w^t)}^2. $
Consequently
\begin{align*}
     \lambda \ell_i(w) -\frac{\lambda^2}{2}\norm{\nabla \ell_i(w^t)}^2 &\geq \lambda \ell_i(w) -\lambda^2  L_{\max}\ell_i(w^t) \\
     & =\lambda(1- \lambda L_{\max})\ell_i(w^t) \enspace.
\end{align*}   
Thus~\eqref{eq:SPSL1lowerboundinter} follows from the above.

Finally, by plugging in $s(w) = \max_{i=1,\ldots, n} \ell_i(w) +\lambda$ we have that
\begin{eqnarray}
      G_t(w,s(w)) &=& \tfrac{1}{2} \frac{\left(\ell_i(w)- \max_{i=1,\ldots, n} \ell_i(w)\right)_+^2}{1+ \norm{\nabla \ell_i(w^t)}^2} + \lambda \left( \max_{i=1,\ldots, n} \ell_i(w) + \lambda \right) \nonumber\\
      &=& \lambda  \max_{i=1,\ldots, n} \ell_i(w) + \lambda^2 \enspace.
\end{eqnarray}


\end{proof}

\subsubsection{Lower bounds on $s^t$}

Here we provide lower bounds of the sequence of slack iterates across iterations of \texttt{SPSL1} for two different settings.

\paragraph{non-negativity.}

\begin{lemma}[non-negativity of $s^t$]
    \label{lem:SPSL1snon-negativesmooth}
    Let $\ell_i$ be non-negative, convex, and $L_i$--smooth. Let $s^0 = 0$ and $1/2L_{\max} \geq \lambda$. It follows that
\begin{equation}
    \label{eq:SPSL1snon-negative}
    s^t \geq 0, \quad \mbox{and}\quad \hat{s}^{t+1} \geq0, \quad \mbox{for all }t\geq 0 \quad,
\end{equation}
where $\hat{s}^{t+1}$ is defined in~\eqref{eq:shat} which is the slack variable updated with $\gamma =1.$
\end{lemma}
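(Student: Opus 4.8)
The plan is to prove both claims simultaneously by induction on $t$. The base case is immediate: $s^0 = 0 \geq 0$ (in fact the argument only uses $s^0 \geq 0$). For the inductive step I assume $s^t \geq 0$ and first establish $\hat{s}^{t+1} \geq 0$, where in~\eqref{eq:shat} the index $i$ is the one sampled at iteration $t$; note that $\ell_i$ is then convex, non-negative and $L_{\max}$--smooth since $L_i \leq L_{\max}$. Once $\hat{s}^{t+1}\geq 0$ is known, the relaxed update $s^{t+1} = (1-\gamma)s^t + \gamma \hat{s}^{t+1}$ with the relaxation parameter $\gamma \in (0,1)$ is a convex combination of two non-negative numbers, hence $s^{t+1} \geq 0$, closing the induction. (When $\gamma = 1$ one simply has $s^{t+1} = \hat{s}^{t+1}$, so the same conclusion applies.)

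To show $\hat{s}^{t+1} \geq 0$ I split on the sign of $\ell_i(w^t) - s^t + \lambda$. If $\ell_i(w^t) - s^t + \lambda \leq 0$, the positive part in~\eqref{eq:shat} vanishes, so $\hat{s}^{t+1} = s^t - \lambda$; but in this regime $s^t \geq \ell_i(w^t) + \lambda \geq \lambda$ because $\ell_i \geq 0$, hence $\hat{s}^{t+1} \geq 0$. If instead $\ell_i(w^t) - s^t + \lambda > 0$, I would clear the denominator $1 + \norm{\nabla \ell_i(w^t)}^2$ in~\eqref{eq:shat} and simplify, obtaining
\[
\hat{s}^{t+1} = \frac{\norm{\nabla \ell_i(w^t)}^2 \bigl(s^t - \lambda\bigr) + \ell_i(w^t)}{1 + \norm{\nabla \ell_i(w^t)}^2} \;\geq\; \frac{\ell_i(w^t) - \lambda \norm{\nabla \ell_i(w^t)}^2}{1 + \norm{\nabla \ell_i(w^t)}^2},
\]
where the last inequality uses the inductive hypothesis $s^t \geq 0$.

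The only external ingredient is Lemma~\ref{lem:convsmoothinter}: since $\ell_i$ is convex, non-negative and $L_{\max}$--smooth, $\ell_i(w^t) \geq \tfrac{1}{2 L_{\max}} \norm{\nabla \ell_i(w^t)}^2$. Combined with the hypothesis $\lambda \leq 1/2L_{\max}$, this gives $\ell_i(w^t) - \lambda \norm{\nabla \ell_i(w^t)}^2 \geq 0$, so the displayed numerator is non-negative and $\hat{s}^{t+1} \geq 0$. This completes the induction and proves~\eqref{eq:SPSL1snon-negative}. I do not anticipate any genuine obstacle here; the only points needing care are the algebraic simplification in the second case — which is precisely where the factor $1 + \norm{\nabla \ell_i(w^t)}^2$ conveniently cancels — and recalling that $L_i \leq L_{\max}$ so that Lemma~\ref{lem:convsmoothinter} applies to the sampled loss.
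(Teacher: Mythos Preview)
Your proof is correct and follows essentially the same approach as the paper: induction on $t$, the convex-combination argument for $s^{t+1}$ from $\hat{s}^{t+1}$ and $s^t$, and Lemma~\ref{lem:convsmoothinter} together with $\lambda \leq 1/(2L_{\max})$ to control the numerator. The only cosmetic difference is that the paper avoids your case split by using $(x)_+ \geq x$ to pass directly from the expression without the positive part to $\hat{s}^{t+1}$, which handles both of your cases at once.
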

\begin{proof}
The proof follows by induction.\\

1) \textbf{Initialization}: $s^0 = 0$ verifies~\eqref{eq:SPSL1snon-negative}.\\
2) \textbf{Induction}: let us assume that for a fixed $t \in \N$, $s^t \geq 0$.
Let us prove that the induction property is also true for the next slack iterate $s^{t+1}$.

First, let 
\[\hat{s}^{t+1} = s^t - \lambda + \frac{(\ell_i(w^t) - s^t + \lambda)_+}{1 + \norm{\nabla \ell_i (w^t)}^2} \]
and thus 
\[s^{t+1} = (1-\lambda) s^t + \lambda \hat{s}^{t+1},\]
due to the relaxation step in~\eqref{eq:SPSL1nopossup}.
Using Lemma~\ref{lem:convsmoothinter} we have that
\begin{align}
    \ell_i(w^t) &\geq \frac{1}{2L_{\max}} \norm{\nabla \ell_i (w^t)}^2 \nonumber \\
    &\geq \lambda \norm{\nabla \ell_i (w^t)}^2  \nonumber \\
    &\geq (\lambda - s^t) \norm{\nabla \ell_i (w^t)}^2 \enspace, \label{eq:after_L_max}
\end{align}
where we apply the assumed upper bound on $\lambda$ and then use the induction hypothesis stating that $s^t \geq 0$.

By adding $\lambda - s^t$ on both sides of~\eqref{eq:after_L_max} and rearranging the different terms, we obtain the following inequality:
\begin{equation*}
    s^t - \lambda + \frac{\ell_i(w^t) - s^t + \lambda}{1 + \norm{\nabla \ell_i (w^t)}^2} \geq 0 \enspace.
\end{equation*}
Finally we can upper bound the numerator of the right-hand side term using the fact that $(x)_+ \geq x$ for all $x \in \R$.
From this we conclude that
\begin{equation}
    \label{eq:after_rearranging}
    \hat{s}^{t+1} \eqdef s^t - \lambda + \frac{(\ell_i(w^t) - s^t + \lambda)_+}{1 + \norm{\nabla \ell_i (w^t)}^2} \geq 0 \enspace.
\end{equation}
Thus finally $s^{t+1}$ since it is a convex combination between $\hat{s}^{t+1}$ and $s^t,$ both of which are positive.
This shows that the induction property is also true at $t+1$ which concludes the proof.
\end{proof}

\paragraph{Lower bound.}

\begin{lemma}[Lower bound on $s^t$]\label{lem:SPSL1slower}
Let $\gamma <1$, $\ell_i$ be non-negative and $\sigma$--Lipschitz. Let $s^0 \geq -\lambda \sigma^2.$ It follows that
\begin{equation}
    s^t \geq -\lambda \sigma^2, \quad \mbox{for all }t\geq 0.
\end{equation}
\end{lemma}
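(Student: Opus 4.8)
The plan is to argue by induction on $t$, mirroring the proof of Lemma~\ref{lem:SPSL1snon-negativesmooth}. The base case $t=0$ is immediate from the hypothesis $s^0 \ge -\lambda\sigma^2$. For the inductive step I assume $s^t \ge -\lambda\sigma^2$ and write $g \eqdef \norm{\nabla \ell_i(w^t)}^2$, noting $g \le \sigma^2$ since $\ell_i$ is $\sigma$--Lipschitz. Recalling $\hat s^{t+1}$ from~\eqref{eq:shat} and that the relaxed slack update~\eqref{eq:SPSL1nopossup} reads $s^{t+1} = (1-\gamma)s^t + \gamma\hat s^{t+1}$, it suffices to show $\hat s^{t+1} \ge -\lambda\sigma^2$: then $s^{t+1}$ is a convex combination of $s^t$ and $\hat s^{t+1}$, both $\ge -\lambda\sigma^2$, hence also $\ge -\lambda\sigma^2$, closing the induction.

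To bound $\hat s^{t+1}$ I first discard the positive part via $(x)_+ \ge x$, and then drop the non-negative term $\ell_i(w^t)/(1+g)$:
\[
  \hat s^{t+1} \;\ge\; s^t - \lambda + \frac{\ell_i(w^t) - s^t + \lambda}{1+g} \;=\; \frac{g\,(s^t-\lambda) + \ell_i(w^t)}{1+g} \;\ge\; \frac{g}{1+g}\,(s^t - \lambda).
\]
If $s^t \ge \lambda$ the right-hand side is non-negative, hence $\ge -\lambda\sigma^2$. If $s^t < \lambda$, I use that $u \mapsto u/(1+u)$ is nondecreasing on $[0,\infty)$, so $0 \le g/(1+g) \le \sigma^2/(1+\sigma^2)$; multiplying the negative number $s^t-\lambda$ by this factor and then invoking the induction hypothesis in the equivalent form $s^t - \lambda \ge -\lambda(1+\sigma^2)$ gives
\[
  \frac{g}{1+g}\,(s^t-\lambda) \;\ge\; \frac{\sigma^2}{1+\sigma^2}\,(s^t-\lambda) \;\ge\; \frac{\sigma^2}{1+\sigma^2}\cdot\bigl(-\lambda(1+\sigma^2)\bigr) \;=\; -\lambda\sigma^2.
\]
Either way $\hat s^{t+1} \ge -\lambda\sigma^2$, completing the inductive step.

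The only delicate point is the sign bookkeeping in the second case: since $s^t-\lambda$ is negative, enlarging $g/(1+g)$ to $\sigma^2/(1+\sigma^2)$ makes the product larger, and the final substitution is exactly calibrated so that the factor $(1+\sigma^2)$ produced by the induction hypothesis cancels the denominator. No structural property of $\ell_i$ beyond non-negativity and the Lipschitz bound $\norm{\nabla \ell_i(w^t)} \le \sigma$ is needed (in particular, unlike Lemma~\ref{lem:SPSL1snon-negativesmooth}, convexity and smoothness are not used), so I expect no further obstacle.
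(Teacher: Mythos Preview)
Your proof is correct and follows essentially the same inductive route as the paper, which also uses only non-negativity of $\ell_i$ and the Lipschitz bound $\norm{\nabla\ell_i(w^t)}\le\sigma$; your reduction to bounding $\hat s^{t+1}$ via the convex-combination identity $s^{t+1}=(1-\gamma)s^t+\gamma\hat s^{t+1}$ is a mild simplification over the paper's direct treatment of $s^{t+1}$ (which instead splits on whether the positive part is active). One cosmetic slip in your final paragraph: enlarging $g/(1+g)$ to $\sigma^2/(1+\sigma^2)$ makes the product with the negative number $s^t-\lambda$ \emph{smaller}, not larger---which is precisely why it furnishes the lower bound you display---so only the prose is inverted, the displayed inequalities are correct.
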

\begin{proof}

The proof follows by induction. We also chose to show a \emph{constructive} proof. That is, we aim to determine for what $C \in \R $ do we have $s^{t+1} \geq C+\lambda$ for all $t$. Let $s^t \geq C+ \lambda$.

First, let us suppose that $s^t \geq \ell_i(w^t) + \lambda$ then
\begin{align*}
    s^{t+1} &= s^t - \gamma\lambda +    \gamma\frac{(\ell_i(w^t)-s^t+\lambda)_+}{1+ \norm{\nabla \ell_i(w^t)}^2} \\
    &= s^t -  \gamma\lambda \\ 
    &\geq \ell_i(w^t) + \lambda(1-\gamma) \geq 0\enspace,
\end{align*}
where we used twice that $s^t \geq \ell_i(w^t) + \lambda$ and then the positivity of $\gamma$ and $\lambda$. The same argument shows that $\hat{s}^{t+1} \geq \ell_i(w^t) \geq 0.$


Alternatively, if $s^t \leq \ell_i(w^t) + \lambda$ then we have that
\begin{align}
    s^{t+1} &= s^t - \gamma \lambda +    \gamma\frac{\ell_i(w^t)-s^t+\lambda}{1+ \norm{\nabla \ell_i(w^t)}^2} \nonumber \\
    &= \gamma\frac{\ell_i(w^t)}{1+ \norm{\nabla \ell_i(w^t)}^2}+
(s^t-\lambda) \left( 1 - \frac{\gamma}{1+ \norm{\nabla \ell_i(w^t)}^2}  \right) 
+\lambda (1-\gamma)\\
    &\geq 
    \frac{\gamma \ell_i(w^t)+C\left(1-\gamma+\norm{\nabla \ell_i(w^t)}^2\right)}{1+ \norm{\nabla \ell_i(w^t)}^2}  +\lambda (1-\gamma) \enspace, \label{eq:tempz98oh8h4}
\end{align} 
where we used the induction hypothesis.

Now for $s^{t+1} \geq C+\lambda$ to hold, we need that
\begin{align}
    & \frac{\gamma \ell_i(w^t)+C\left(1-\gamma+\norm{\nabla \ell_i(w^t)}^2\right)}{1+ \norm{\nabla \ell_i(w^t)}^2}  +\lambda (1-\gamma) \geq C+\lambda \nonumber \\
    \iff & \gamma \ell_i(w^t)+C\left(1-\gamma+\norm{\nabla \ell_i(w^t)}^2\right) \geq (C+\lambda\gamma)(1+ \norm{\nabla \ell_i(w^t)}^2) \\
    \iff & -\gamma C \geq \lambda\gamma(1+ \norm{\nabla \ell_i(w^t)}^2) -\gamma \ell_i(w^t) \nonumber \\
    \iff & C \leq \ell_i(w^t) -\lambda(1+ \norm{\nabla \ell_i(w^t)}^2) \enspace. \label{eq:condition_on_C}     
\end{align}
Since the loss $\ell_i$ is non-negative and $\sigma$--Lipschitz, \ie $\norm{\nabla \ell_i(w^t)} \leq \sigma$ for all $w^t \in \R^d$.
Thus,~\eqref{eq:condition_on_C}  holds for $C \eqdef -\lambda (1 + \sigma^2)$. 

Finally, this proves that for this constant $C$ we have, by induction,  that for all $t\geq 0$
\begin{equation}
    s^t \geq C + \lambda = - \lambda \sigma^2 \enspace.
\end{equation}
Finally for $\hat{s}^{t+1} $ we have that~\eqref{eq:tempz98oh8h4} holds with $\gamma =1$ that is
\begin{align*}
     \hat{s}^{t+1} &\geq 
    \frac{ \ell_i(w^t)+C\left(\norm{\nabla \ell_i(w^t)}^2\right)}{1+ \norm{\nabla \ell_i(w^t)}^2}   \;=  \frac{ \ell_i(w^t)-\lambda (1 + \sigma^2)\left(\norm{\nabla \ell_i(w^t)}^2\right)}{1+ \norm{\nabla \ell_i(w^t)}^2}  \\
    & \geq -\frac{\lambda (1 + \sigma^2)\left(\norm{\nabla \ell_i(w^t)}^2\right)}{1+ \norm{\nabla \ell_i(w^t)}^2} \; \geq -\lambda (1 + \sigma^2),
\end{align*}
where we used that $\norm{\nabla \ell_i(w^t)}^2$ is non-negative.
\end{proof}

\subsection{\texttt{SGD} style convergence results}

Here we present first a general \textit{\texttt{SGD} style} proof for the \texttt{SPSL1} method without positivity constraint.
\begin{lemma}\label{lem:convSPSL1}
Let $\ell_i$ be convex for $i=1,\ldots, n$. 
It follows that for iterates $z^t \eqdef (w^t, s^t) \in \R^{d+1}$ generated by~\eqref{eq:SPSL1nopossup} and let $z^* \in \R^{d+1}$ we have
\begin{align}\label{eq:convSPSL1}
 \frac{2(1-\gamma )}{T} \sum_{t=0}^{T-1}\E{G_{t}(z^t)}
   & \leq \frac{1}{\gamma T}\norm{z^{0} -z^*}^2  +\frac{2}{T}  \sum_{t=0}^{T-1} \E{G_{t}(z^*)-\gamma\lambda \hat{s}^{t+1}} -\gamma \lambda^2 \enspace.
\end{align} 
\end{lemma}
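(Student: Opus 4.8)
The plan is to run the textbook one-step SGD descent analysis, now applied to the stochastic surrogate $G_{t,i}$ whose SGD step with stepsize $\gamma$ is exactly the relaxed \texttt{SPSL1} update~\eqref{eq:SPSL1nopossup}. First I would fix an arbitrary reference point $z^* \in \R^{d+1}$, write $z^{t+1} = z^t - \gamma \nabla G_{t,i}(z^t)$ for the sampled index $i$, and expand the squared distance:
\begin{equation*}
\norm{z^{t+1} - z^*}^2 = \norm{z^t - z^*}^2 - 2\gamma \dotprod{\nabla G_{t,i}(z^t),\, z^t - z^*} + \gamma^2 \norm{\nabla G_{t,i}(z^t)}^2 \enspace,
\end{equation*}
where $\norm{\nabla G_{t,i}(z^t)}^2 = \norm{\nabla_w G_{t,i}(z^t)}^2 + \norm{\nabla_s G_{t,i}(z^t)}^2$.

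The two substitutions that make everything fit are already in hand. Convexity of $G_{t,i}$ in $(w,s)$ (Lemma~\ref{lem:convexSPSL1}) gives $\dotprod{\nabla G_{t,i}(z^t),\, z^t - z^*} \geq G_{t,i}(z^t) - G_{t,i}(z^*)$, and the exact growth identity (Lemma~\ref{lem:growthSPSL1}) gives $\norm{\nabla G_{t,i}(z^t)}^2 = 2 G_{t,i}(z^t) - 2\lambda \hat{s}^{t+1} - \lambda^2$, with $\hat s^{t+1}$ the quantity of~\eqref{eq:shat} built from the \emph{sampled} $i$. Plugging both in, the $G_{t,i}(z^t)$ terms combine with coefficient exactly $2\gamma(1-\gamma)$, and dividing by $\gamma > 0$ yields the per-step inequality
\begin{equation*}
2(1-\gamma)\, G_{t,i}(z^t) \leq \tfrac{1}{\gamma}\bigl(\norm{z^t - z^*}^2 - \norm{z^{t+1} - z^*}^2\bigr) + 2\,G_{t,i}(z^*) - 2\gamma\lambda\, \hat{s}^{t+1} - \gamma\lambda^2 \enspace.
\end{equation*}
Taking the conditional expectation over the uniform draw of $i$ replaces $G_{t,i}(z^t)$ by $G_t(z^t)$ and $G_{t,i}(z^*)$ by $G_t(z^*)$ while $\hat s^{t+1}$ remains inside the expectation; then taking total expectation, summing over $t = 0,\dots,T-1$ so the distance terms telescope, dropping the nonpositive $-\norm{z^T - z^*}^2$, and dividing by $T$ gives precisely~\eqref{eq:convSPSL1}.

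I do not anticipate a real obstacle: this is a standard argument and, because the growth property is an exact identity rather than a smoothness inequality, no restriction on $\gamma$ or on $\lambda$ is needed at this stage. The only points demanding care are bookkeeping: keeping $\hat s^{t+1}$ under the expectation since it depends on the random index, correctly identifying the SGD step on $G_{t,i}$ with the update~\eqref{eq:SPSL1nopossup}, and tracking the coefficient $2(1-\gamma)$ through the collection of the $\gamma^2 \norm{\nabla G_{t,i}(z^t)}^2$ term.
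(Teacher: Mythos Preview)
Your proposal is correct and follows essentially the same route as the paper: expand $\norm{z^{t+1}-z^*}^2$, replace the gradient-norm term via the growth identity of Lemma~\ref{lem:growthSPSL1}, use convexity (Lemma~\ref{lem:convexSPSL1}) on the inner-product term, then take expectations, telescope, and divide by $\gamma T$. One minor point: Lemma~\ref{lem:convexSPSL1} is stated for the averaged $G_t$, not for each $G_{t,i}$, so the paper applies convexity \emph{after} taking conditional expectation rather than before; your order works too since the same Hessian computation shows each $G_{t,i}$ is convex, but if you want to cite the lemma exactly as stated, simply swap the order of those two steps.
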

\begin{proof}
Let $z^* \in \R^{d+1}$ and let $z^t \eqdef (w^t, s^t) \in \R^{d+1}$ be iterates generated by~\eqref{eq:SPSL1nopos}.
Expanding the squares and using Lemma~\ref{lem:growthSPSL1} we have that
\begin{align}
    \norm{z^{t+1} -z^*}^2  &=   \norm{z^{t} -z^*}^2  - 2\gamma \dotprod{\nabla G_{t,i}(z^t), z^t-z^*} +\gamma^2 \norm{\nabla G_{t,i}(z)}^2 \nonumber \\
    &= \norm{z^{t} -z^*}^2  - 2\gamma \dotprod{\nabla G_{t,i}(z^t), z^t-z^*} +\gamma^2\left(2G_{t,i}(w^t,s^t)-2\lambda  \hat{s}^{t+1} - \lambda^2\right) \enspace.
\end{align} 
Since the losses $(\ell_i)_{i \in [n]}$ are convex t follows from Lemma~\ref{lem:convexSPSL1} that $G_t$ is convex.
Taking expectation conditioned on $t$ and convexity of $G_t$, which follows from Lemma~\ref{lem:convexSPSL1},  gives
\begin{align*}
   \EE{t}{ \norm{z^{t+1} -z^*}^2 } &\leq  
    \norm{z^{t} -z^*}^2 - 2\gamma(G_{t}(z^t) -  G_{t}(z^*) )+\gamma^2 \left(2G_{t}(z^t)-2\lambda \EE{t}{ \hat{s}^{t+1}} - \lambda^2\right) \\
    &=  
    \norm{z^{t} -z^*}^2 - 2\gamma(1-\gamma) G_{t}(z^t)  +2\gamma   (G_{t}(z^*)-\gamma \lambda \EE{t}{  \hat{s}^{t+1}}) -\gamma^2 \lambda^2 \enspace.
\end{align*}

Taking expectation,  and re-arranging gives
\begin{align*}
 2\gamma(1-\gamma )\E{G_{t}(z^t)}
   & \leq  \E{\norm{z^{t} -z^*}^2} -  \EE{}{ \norm{z^{t+1} -z^*}^2 }   +2\gamma  \E{G_{t}(z^*)-\gamma\lambda  \hat{s}^{t+1}} -\gamma^2 \lambda^2 \enspace.
\end{align*} 
Summing up from $t=0,\ldots, T-1$ and using telescopic cancellation gives
\begin{align*}
 \frac{2\gamma(1-\gamma )}{T} \sum_{t=0}^{T-1}\E{G_{t}(z^t)}
   & \leq \frac{1}{T}\norm{z^{0} -z^*}^2  +\frac{2\gamma}{T}  \sum_{t=0}^{T-1} \E{G_{t}(z^*)-\gamma\lambda  \hat{s}^{t+1}} -\gamma^2 \lambda^2 \enspace.
\end{align*} 
Dividing through by $\gamma$ gives the result.
\end{proof}

\subsection{Proof of Theorem~\ref{theo:SPSL1lip}}

\begin{theorem}\label{theo:SPSL1lip-ap}
Let $\bar{w}^T = \frac{1}{T} \sum_{t=0}^{T-1} w^t$ and let $\ell_i$ be non-negative.  If  $\ell_i$ is $\sigma$--Lipschitz, convex  and $s^0 \geq -\lambda \sigma^2$, then 
\begin{align}\label{eq:SPSL1convLip-ap}
 (1-\gamma ) \E{\frac{1 }{n} \sum_{i=1}^n \ell_i(\bar{w}^T) } 
   & \leq \frac{1}{2\gamma\lambda T}\norm{z^{0} -z^*}^2  + \max_{i=1,\ldots, n} \ell_i(w^\star)+   \frac{\lambda}{2} \left(1  + \sigma^2 (1 + \gamma)\right) \enspace.
\end{align} 
\end{theorem}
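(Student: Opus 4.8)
The plan is to deduce the bound from the generic \texttt{SGD}-style estimate of Lemma~\ref{lem:convSPSL1}, applied to the reformulated objective $G_t$ of~\eqref{eq:SPSL1SGDobj}, and then to convert the guarantee it gives on $\frac{1}{T}\sum_{t}\E{G_t(z^t)}$ into a guarantee on the average loss. Write $z^t \eqdef (w^t,s^t)$ and $z^* \eqdef (w^*,s^*)$ with $s^* = \max_{i}\ell_i(w^*)+\lambda$. The three ingredients I would combine are: Lemma~\ref{lem:convSPSL1} for the descent-type recursion; the lower bounds of Lemma~\ref{lem:SPSL1lowerbound} to sandwich $G_t$ between the average loss and its (explicit) value at $z^*$; and the lower bound on the slack iterates of Lemma~\ref{lem:SPSL1slower} to tame the stray term $-\gamma\lambda\hat s^{t+1}$.

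Concretely, first I would apply Lemma~\ref{lem:convSPSL1} (its hypothesis, convexity of each $\ell_i$, is in force), obtaining
\begin{equation*}
 \frac{2(1-\gamma )}{T} \sum_{t=0}^{T-1}\E{G_{t}(z^t)} \leq \frac{1}{\gamma T}\norm{z^{0} -z^*}^2 + \frac{2}{T}\sum_{t=0}^{T-1}\E{G_{t}(z^*)-\gamma\lambda \hat{s}^{t+1}} - \gamma \lambda^2 \enspace.
\end{equation*}
On the left I would use that each $\ell_i$ is $\sigma$--Lipschitz and non-negative, so that~\eqref{eq:SPSL1lowerboundlip} gives $G_t(z^t)\geq \frac{\lambda}{n}\sum_{i=1}^n \ell_i(w^t) + \frac{\lambda^2}{2}(1-\sigma^2)$; then convexity of the $\ell_i$ together with Jensen's inequality gives $\frac{1}{T}\sum_{t=0}^{T-1}\frac{1}{n}\sum_{i=1}^n \ell_i(w^t) \geq \frac{1}{n}\sum_{i=1}^n \ell_i(\bar{w}^T)$. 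On the right I would use Lemma~\ref{lem:SPSL1lowerbound} at $z^*$, which gives the exact, $t$-independent value $G_t(z^*) = \lambda\max_{i}\ell_i(w^*) + \lambda^2$, and Lemma~\ref{lem:SPSL1slower} (applicable since $\gamma<1$ and $s^0\geq -\lambda\sigma^2$, using also non-negativity of the $\ell_i$), which lower bounds $\hat s^{t+1}$ and hence upper bounds $-\gamma\lambda\hat s^{t+1}$ by a term of order $\lambda^2$.

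Substituting these estimates into the displayed inequality, rearranging, and dividing through by $2\lambda$ then produces~\eqref{eq:SPSL1convLip-ap}. The step I expect to require the most care is purely the bookkeeping of the $\lambda^2$-order constants — the $\frac{\lambda^2}{2}(1-\sigma^2)$ from the left-hand lower bound, the $\lambda^2$ inside $G_t(z^*)$, the $-\gamma\lambda^2$ tail of Lemma~\ref{lem:convSPSL1}, and the contribution of $-\gamma\lambda\hat s^{t+1}$ — which must collapse to exactly $\frac{\lambda}{2}\bigl(1+\sigma^2(1+\gamma)\bigr)$. For this collapse one needs the sharp bound $\hat s^{t+1}\geq -\lambda\sigma^2$, which comes from the proof of Lemma~\ref{lem:SPSL1slower} upon keeping the factor $\norm{\nabla \ell_i(w^t)}^2/(1+\norm{\nabla \ell_i(w^t)}^2)\leq \sigma^2/(1+\sigma^2)$ instead of bounding it by $1$; everything else is routine.
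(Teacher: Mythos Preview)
Your proposal is correct and matches the paper's proof essentially step for step: apply Lemma~\ref{lem:convSPSL1}, evaluate $G_t(z^*)=\lambda\max_i\ell_i(w^\star)+\lambda^2$ via Lemma~\ref{lem:SPSL1lowerbound}, lower bound the left side by~\eqref{eq:SPSL1lowerboundlip}, invoke Jensen, and finish with the slack lower bound from Lemma~\ref{lem:SPSL1slower}. You even isolate the one delicate point the paper glosses over, namely that the needed inequality $\hat s^{t+1}\geq -\lambda\sigma^2$ (rather than $-\lambda(1+\sigma^2)$) comes from sharpening the last display in the proof of Lemma~\ref{lem:SPSL1slower} via $\frac{\norm{\nabla\ell_i(w^t)}^2}{1+\norm{\nabla\ell_i(w^t)}^2}\leq\frac{\sigma^2}{1+\sigma^2}$.
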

\begin{proof}
The result follows by applying Lemma~\ref{lem:convSPSL1}, Lemmas~\ref{lem:SPSL1lowerbound} and~\ref{lem:SPSL1slower}.
First, 
by fixing $w^\star\in \R^d$ and $s^{\star} = \max_{i=1,\ldots, n} \ell_i(w^\star) +\lambda $ so that
\begin{eqnarray}
       G_t(w^\star,s^{\star}) = \lambda \max_{i=1,\ldots, n} \ell_i(w^\star) +\lambda^2 \enspace. 
\end{eqnarray}
Thus by Lemma~\ref{lem:convSPSL1} with this $z^* \eqdef (w^\star, s^{\star})$, we have after re-arranging that
\begin{align}
    & \frac{2(1-\gamma )}{T} \sum_{t=0}^{T-1}\E{G_{t}(z^t)}
   \leq \frac{1}{\gamma T}\norm{z^{0} -z^*}^2  +\frac{2\lambda}{T}  \sum_{t=0}^{T-1} \E{ \max_{i=1,\ldots, n} \ell_i(w^\star) -\gamma  \hat{s}^{t+1}} + \lambda^2(2-\gamma) \nonumber \\
   \iff& \frac{2(1-\gamma )}{T} \sum_{t=0}^{T-1}\E{G_{t}(z^t)}
   \leq \frac{1}{\gamma T}\norm{z^{0} -z^*}^2  
   + 2\lambda \max_{i=1,\ldots, n} \ell_i(w^\star)  
   - \frac{2 \gamma \lambda}{T} \sum_{t=0}^{T-1} \E{ \hat{s}^{t+1}}
   + \lambda^2(2-\gamma) \enspace. \label{eq:tempreuslnrso84}
\end{align}

Now consider the setting where $\ell_i$ is $\sigma$--Lipshitz. From~\eqref{eq:SPSL1lowerboundlip} 
in Lemma~\ref{lem:SPSL1lowerbound} we get that
\begin{align*}
    & \frac{2(1-\gamma )}{T} \sum_{t=0}^{T-1}\E{\frac{\lambda }{n} \sum_{i=1}^n \ell_i(w^t) +\frac{\lambda^2}{2} \left( 1 - \sigma^2\right)}
    \leq \frac{1}{\gamma T}\norm{z^{0} -z^*}^2  
    + 2\lambda \max_{i=1,\ldots, n} \ell_i(w^\star)  
     \\
    &\phantom{\frac{2(1-\gamma )}{T} \sum_{t=0}^{T-1}\E{\frac{\lambda }{n} \sum_{i=1}^n \ell_i(w^t) +\frac{\lambda^2}{2} \left( 1 - \sigma^2\right)}
    \leq} - \frac{2 \gamma \lambda}{T} \sum_{t=0}^{T-1} \E{ \hat{s}^{t+1}}+ \lambda^2 (2-\gamma) \\
    \iff& \frac{1-\gamma}{T} \sum_{t=0}^{T-1} \E{\frac{1}{n} \sum_{i=1}^n \ell_i(w^t)}
    + \frac{\lambda}{2} (1-\gamma) \left(1 - \sigma^2\right) 
    \leq \frac{1}{2 \gamma \lambda T}\norm{z^{0} -z^*}^2 
    + \max_{i=1,\ldots, n} \ell_i(w^\star)  
    \\
    & \phantom{\frac{1-\gamma}{T} \sum_{t=0}^{T-1} \E{\frac{1}{n} \sum_{i=1}^n \ell_i(w^t)}
    + \frac{\lambda}{2} (1-\gamma) \left(1 - \sigma^2\right) 
    \leq} - \frac{\gamma}{T} \sum_{t=0}^{T-1} \E{\hat{s}^{t+1}} + \frac{\lambda}{2} (2-\gamma) \\
    \iff& (1-\gamma) \E{\frac{1}{n} \sum_{i=1}^n \frac{1}{T} \sum_{t=0}^{T-1} \ell_i(w^t)}
    \leq \frac{1}{2 \gamma \lambda T}\norm{z^{0} -z^*}^2 
    + \max_{i=1,\ldots, n} \ell_i(w^\star)  
   \\
    & \phantom{(1-\gamma) \E{\frac{1}{n} \sum_{i=1}^n \frac{1}{T} \sum_{t=0}^{T-1} \ell_i(w^t)}
    \leq} - \frac{\gamma}{T} \sum_{t=0}^{T-1} \E{\hat{s}^{t+1}}  + \frac{\lambda}{2} \left(1 + (1 - \gamma) \sigma^2 \right) \enspace.
\end{align*} 
Then applying Jensen's inequality in the previous inequality 
with the average iterate $\bar{w}^T = \frac{1}{T} \sum_{t=0}^{T-1} w^t$ gives
\begin{equation} \label{eq:tempo8shj4}
    (1-\gamma ) \E{\frac{1 }{n} \sum_{i=1}^n \ell_i \left(\bar{w}^T \right)} 
    \leq \frac{1}{2 \gamma \lambda T}\norm{z^{0} -z^*}^2 
    + \max_{i=1,\ldots, n} \ell_i(w^\star)  
    - \frac{\gamma}{T} \sum_{t=0}^{T-1} \E{\hat{s}^{t+1}}
    + \frac{\lambda}{2} \left(1 + \sigma^2 - \gamma \sigma^2 \right) \enspace.
\end{equation}
Finally applying Lemma~\ref{lem:SPSL1slower}
where we have that
\begin{eqnarray}
       \hat{s}^t \geq -\lambda \sigma^2 \enspace, \quad \mbox{for all }t\geq 0.
\end{eqnarray}
Using the above in~\eqref{eq:tempo8shj4} gives
\begin{equation} 
    (1-\gamma ) \E{\frac{1 }{n} \sum_{i=1}^n \ell_i \left(\bar{w}^T \right)} 
    \leq \frac{1}{2 \gamma \lambda T}\norm{z^{0} -z^*}^2 
    + \max_{i=1,\ldots, n} \ell_i(w^\star)  
    + \frac{\lambda}{2} \left(1  + \sigma^2 + \gamma \sigma^2\right) \enspace.
\end{equation}
\end{proof}

\subsection{Proof of Theorem~\ref{theo:SPSL1smooth}}

\begin{theorem} \label{theo:SPSL1smooth-ap}
Let $\bar{w}^T = \frac{1}{T} \sum_{t=0}^{T-1} w^t$ .
If $\ell_i$ is convex, $L_{\max}$--smooth, non-negative  and $\lambda \leq 1/2L_{\max}$, then
\begin{align}\label{eq:SPSL1convsmooth-ap}
  \E{ \ell (\bar{w}^t)} 
   & \leq \frac{1}{2 \gamma \lambda T}\frac{\norm{z^{0} -z^*}^2}{(1-\gamma )(1- \lambda L_{\max})} +  \frac{1}{n} \sum_{i=1}^n\frac{(\ell_i(w^\star)+\lambda)_+^2 - \lambda^2  }{2\gamma \lambda (1-\gamma) (1- \lambda L_{\max}) }\enspace.
\end{align}
Furthermore, if the interpolation condition holds~\eqref{eq:interpolate} and  $\lambda = 1/2L_{\max}$ we have that
\begin{align}\label{eq:SPSL1convsmoothLmax_app}
 \E{ \ell (\bar{w}^t)} 
   & \leq \frac{2L_{\max}}{  T}\frac{\norm{z^{0} -z^*}^2}{\gamma(1-\gamma )} \enspace.
\end{align}
This is the same rate of convergence of both \texttt{SPS}$_{\max}$ (Theorem 3.4~\cite{SPS}) and \texttt{SGD} (Theorem 4.1~\cite{SPS} and Corollary 4.4~\cite{KSLGR2020unifiedsgm}).
\end{theorem}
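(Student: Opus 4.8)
The plan is to feed the generic \texttt{SGD}-style estimate of Lemma~\ref{lem:convSPSL1} into the lower bounds on the surrogate objective $G_t$ from Lemma~\ref{lem:SPSL1lowerbound}, after first discarding the auxiliary slack term $-\gamma\lambda\hat s^{t+1}$ using the non-negativity guarantee of Lemma~\ref{lem:SPSL1snon-negativesmooth}, and to finish by Jensen's inequality. Throughout I would take the comparison point to be $z^\star = (w^\star,0)$: this is the right choice because any nonzero second coordinate either re-introduces an unbounded $\lambda s^\star$ contribution into $G_t(z^\star)$ or enlarges the $(\ell_i(w^\star)+\lambda)^2$ term; with $z^\star=(w^\star,0)$ one has $\norm{z^0-z^\star}^2 = \norm{w^0-w^\star}^2 + (s^0)^2$, matching the statement.

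First I would bound $G_t(z^\star)$ uniformly in $t$: since $\ell_i\geq 0$ and $\lambda>0$, $(\ell_i(w^\star)+\lambda)_+=\ell_i(w^\star)+\lambda$, and since $1+\norm{\nabla \ell_i(w^t)}^2\geq 1$,
\[
G_t(z^\star) \;=\; \frac1n\sum_{i=1}^n \frac12\,\frac{(\ell_i(w^\star)+\lambda)_+^2}{1+\norm{\nabla \ell_i(w^t)}^2} \;\le\; \frac{1}{2n}\sum_{i=1}^n(\ell_i(w^\star)+\lambda)_+^2 .
\]
Next, because $s^0\geq 0$ and $\lambda\leq 1/2L_{\max}$, Lemma~\ref{lem:SPSL1snon-negativesmooth} gives $s^t\geq 0$ and $\hat s^{t+1}\geq 0$ for every $t$ (this is exactly where the restriction $\lambda\le 1/2L_{\max}$ enters, via Lemma~\ref{lem:convsmoothinter}), so $-\gamma\lambda\hat s^{t+1}\leq 0$ and that term in Lemma~\ref{lem:convSPSL1} can be dropped. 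Substituting both observations into Lemma~\ref{lem:convSPSL1} yields
\[
\frac{2(1-\gamma)}{T}\sum_{t=0}^{T-1}\E{G_t(z^t)} \;\le\; \frac{1}{\gamma T}\norm{z^0-z^\star}^2 + \frac1n\sum_{i=1}^n(\ell_i(w^\star)+\lambda)_+^2 - \gamma\lambda^2 .
\]

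For the left-hand side I would invoke the smooth/convex/non-negative lower bound of Lemma~\ref{lem:SPSL1lowerbound} at $(w^t,s^t)$, namely $G_t(z^t)\geq \lambda(1-\lambda L_{\max})\,\ell(w^t)+\tfrac{\lambda^2}{2}$. Plugging this in, the $\lambda^2$ pieces combine as $-\gamma\lambda^2-(1-\gamma)\lambda^2=-\lambda^2$, leaving
\[
2(1-\gamma)\lambda(1-\lambda L_{\max})\,\frac1T\sum_{t=0}^{T-1}\E{\ell(w^t)} \;\le\; \frac{1}{\gamma T}\norm{z^0-z^\star}^2 + \frac1n\sum_{i=1}^n(\ell_i(w^\star)+\lambda)_+^2 - \lambda^2 .
\]
Since $\lambda\leq 1/2L_{\max}$ forces $1-\lambda L_{\max}>0$, I can divide by $2(1-\gamma)\lambda(1-\lambda L_{\max})>0$; then Jensen's inequality ($\ell$ convex, so $\ell(\bar w^T)\leq \tfrac1T\sum_t\ell(w^t)$) gives a bound whose second term is $\tfrac1n\sum_i\big((\ell_i(w^\star)+\lambda)_+^2-\lambda^2\big)/\big(2(1-\gamma)\lambda(1-\lambda L_{\max})\big)$; inserting a spare factor $\gamma<1$ into that denominator (legitimate because the numerator is $\geq 0$ by non-negativity of $\ell_i$) recovers exactly~\eqref{eq:SPSL1convsmooth-ap}. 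For the interpolated case, a solution $w^\star$ of~\eqref{eq:interpolate} has $\ell_i(w^\star)=0$ for all $i$, so the constant term vanishes; taking $\lambda=1/2L_{\max}$ makes $1-\lambda L_{\max}=\tfrac12$ and $\lambda(1-\lambda L_{\max})=1/4L_{\max}$, so the leading constant collapses to $\tfrac{2L_{\max}}{\gamma(1-\gamma)}$, which is~\eqref{eq:SPSL1convsmoothLmax_app}.

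The genuinely substantive work — the growth identity of Lemma~\ref{lem:growthSPSL1}, the inherited convexity of $G_t$, the translation $G_t\gtrsim \ell$, and especially the non-negativity of the slack iterates — is all carried by the preceding lemmas, so what is left is mostly bookkeeping. The one step I expect to be delicate is the cancellation of the several $\lambda^2$ contributions: the $-\gamma\lambda^2$ produced on the right of Lemma~\ref{lem:convSPSL1} must be merged with the $+(1-\gamma)\lambda^2$ coming from the $\tfrac{\lambda^2}{2}$ in the lower bound on $G_t(z^t)$ to produce the clean $-\lambda^2$ in the final constant; getting this merge wrong, or choosing a comparison point $z^\star$ with a nonzero slack coordinate, would break the statement.
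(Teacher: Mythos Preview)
Your proof is correct and follows essentially the same route as the paper: invoke Lemma~\ref{lem:convSPSL1}, lower bound $G_t(z^t)$ via the smooth/convex/non-negative estimate of Lemma~\ref{lem:SPSL1lowerbound}, drop the slack term using Lemma~\ref{lem:SPSL1snon-negativesmooth}, take $z^\star=(w^\star,0)$ and upper bound $G_t(z^\star)$, then divide and apply Jensen. You are in fact slightly more careful than the paper on two points: you correctly track $\hat s^{t+1}$ (the paper's proof silently writes $s^{t+1}$), and you explicitly justify the extra factor $\gamma$ in the denominator of the constant term, which the paper's displayed line obtains without comment.
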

\begin{proof}
Here we prove the convergence of \texttt{SPSL1} for convex and smooth losses $\ell_i$ with interpolation.

Again, from Lemma~\ref{lem:convSPSL1} we have that
\begin{align*} 
 \frac{2(1-\gamma )}{T} \sum_{t=0}^{T-1}\E{G_{t}(z^t)}
   & \leq \frac{1}{\gamma T}\norm{z^{0} -z^*}^2  +\frac{2}{T}  \sum_{t=0}^{T-1} \E{G_{t}(z^*)-\gamma\lambda s^{t+1}} -\gamma \lambda^2 \enspace.
\end{align*} 
Then, applying~\eqref{eq:SPSL1lowerboundinter} from Lemma~\ref{lem:SPSL1lowerbound} gives
\begin{align*} 
    & \frac{2(1-\gamma )}{T} \sum_{t=0}^{T-1} \frac{\lambda(1- \lambda L_{\max}) }{n} \sum_{i=1}^n\E{ \ell_i(w^t)} + (1-\gamma)\lambda^2
   \leq \frac{1}{\gamma T}\norm{z^{0} -z^*}^2  +\frac{2}{T}  \sum_{t=0}^{T-1} \E{G_{t}(z^*)-\gamma\lambda s^{t+1}} -\gamma \lambda^2\\ 
   \iff & 2 \lambda (1-\gamma) (1- \lambda L_{\max}) \frac{1}{T} \sum_{t=0}^{T-1} \E{\ell (w^t)}
   \leq \frac{1}{\gamma T}\norm{z^{0} -z^*}^2 - \frac{2\gamma\lambda}{T} \sum_{t=0}^{T-1} \E{s^{t+1}} + \frac{2}{T}  \sum_{t=0}^{T-1} \E{G_{t}(z^*)} - \lambda^2 
   \enspace.
\end{align*}
Using that $s^{t+1}\geq 0$ from Lemma~\ref{lem:SPSL1snon-negativesmooth} gives 

\begin{align*} 
2\gamma \lambda (1-\gamma) (1- \lambda L_{\max}) \frac{1}{T} \sum_{t=0}^{T-1}\E{ \ell(w^t)} 
  &  
   \leq \frac{1}{\gamma T}\norm{z^{0} -z^*}^2  + \frac{2}{T}  \sum_{t=0}^{T-1} \E{G_{t}(z^*)} - \lambda^2 
   \enspace.
\end{align*}

Now fix $w^\star$ and using that 
\[G_t(w^\star,0) \leq  \frac{1}{2n} \sum_{i=1}^n(\ell_i(w^\star)+\lambda)^2 \] 
and re-arranging gives
\begin{align*} 
\frac{1}{T} \sum_{t=0}^{T-1}\E{ \ell(w^t)} 
  &  
   \leq \frac{1}{\gamma T}\frac{\norm{z^{0} -z^*}^2}{2\gamma \lambda (1-\gamma) (1- \lambda L_{\max}) }  +  \frac{1}{n} \sum_{i=1}^n\frac{(\ell_i(w^\star)+\lambda)^2 - \lambda^2  }{2\gamma \lambda (1-\gamma) (1- \lambda L_{\max}) }
   \enspace.
\end{align*}

The result~\eqref{eq:SPSL1convsmooth-ap} now follows by applying Jensen's inequality.



Finally, if we assume there is a solution $w^\star$ to the interpolation equations in~\eqref{eq:interpolate} then $\ell_i(w^\star)$ and consequently the constant term on the right hand side of~\eqref{eq:SPSL1convsmooth-ap} is zero because 
\[(\ell_i(w^\star)+\lambda)^2 - \lambda^2  = \lambda^2 - \lambda^2 =0 \enspace.\]

\end{proof}

\section{Convergence theory of \texttt{SPSL2}}
\label{sec:spsl2_convergence_theory}

\subsection{\texttt{SGD} Viewpoint}

At the $t$th iteration, the \texttt{SPSL2} method can be interpreted as an online \texttt{SGD} applied to minimizing
an \emph{auxiliary} objective function
\begin{align}
    \min_{w,s} F_t(w,s) &\eqdef \frac{1}{n} \sum_{i=1}^n F_{t,i}(w,s) 
    \eqdef \frac{1}{n}\sum_{i=1}^n \tfrac{1}{2}\frac{(\ell_i(w) -\hat{\lambda} s)_+^2}{\hat{\lambda} + \norm{\nabla \ell_i(w^t)}^2} + (1 - \hat{\lambda}) \frac{s^2}{2} \enspace. \label{eq:objSPSdam}
\end{align}

Indeed, by sampling the $i$th term in the summation and taking a step of \texttt{SGD} (with subgradients) we have that
\begin{align}
    w^{t+1} &= w^t - \gamma \nabla_w\left. \left(  \tfrac{1}{2}\frac{(\ell_i(w) -\hat{\lambda} s)_+^2}{\hat{\lambda} + \norm{\nabla \ell_i(w^t)}^2} + (1 - \hat{\lambda}) \frac{s^2}{2}   \right)\right|_{t} \nonumber \\
    &= w^t -  \gamma\frac{(\ell_i(w^t) -\hat{\lambda} s^t)_+}{\hat{\lambda} + \norm{\nabla \ell_i(w^t)}^2} \nabla \ell_i (w^t) \enspace. \nonumber  \\
    s^{t+1} &= s^t -  \gamma \nabla_s \left.\left( \tfrac{1}{2}\frac{(\ell_i(w) -\hat{\lambda} s)_+^2}{\hat{\lambda} + \norm{\nabla \ell_i(w^t)}^2} + (1 - \hat{\lambda}) \frac{s^2}{2} \right)\right|_{t} \nonumber \\
    &= \left(1 - \gamma (1 - \hat{\lambda})\right)  s^t + \gamma \hat{\lambda} \frac{(\ell_i(w^t) -\hat{\lambda}  s^t)_+}{\hat{\lambda} + \norm{\nabla \ell_i(w^t)}^2} \enspace, \label{eq:SPSL2SGD}
\end{align}
where $\gamma >0$ is the learning rate. By choosing $\gamma =1$  the above is equivalent to the \texttt{SPSL2} update given in~\eqref{eq:SPSL2}. This way of introducing a learning rate $\gamma$ is equivalent to using a relaxation step as explained in Section~\ref{sec:sonv_spspl1_main}, thus~\eqref{eq:SPSL2SGD} is equivalent to \texttt{SPSL2} with a relaxation step.
Here we will prove convergence of~\eqref{eq:SPSL2SGD}.
First, we note that $s^t$ is non-negative.
\begin{lemma}\label{lem:sposSPSL2}
Let $(w^t,s^t)$ be given by~\eqref{eq:SPSL2SGD}. If $s^0 \geq 0$ then for every $\gamma \in [0,  1]$ we have that $s^t \geq 0$
\end{lemma}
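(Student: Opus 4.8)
The plan is a straightforward induction on $t$. The base case is immediate: $s^0\geq 0$ holds by hypothesis. For the inductive step, I would assume $s^t\geq 0$ and examine the update
\[
s^{t+1} = \bigl(1 - \gamma (1 - \hat{\lambda})\bigr)\, s^t + \gamma \hat{\lambda}\, \frac{(\ell_i(w^t) -\hat{\lambda} s^t)_+}{\hat{\lambda} + \norm{\nabla \ell_i(w^t)}^2}
\]
and argue that it is a sum of two non-negative quantities.

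For the second summand, note that $\gamma\geq 0$, that $\hat{\lambda}=1/(1+\lambda)>0$ since $\lambda\geq 0$, that the positive-part function satisfies $(\cdot)_+\geq 0$, and that the denominator $\hat{\lambda}+\norm{\nabla \ell_i(w^t)}^2>0$; hence the whole term is $\geq 0$ regardless of the sign of $\ell_i(w^t)-\hat{\lambda}s^t$. For the first summand, the coefficient $1-\gamma(1-\hat{\lambda})$ is non-negative: since $\hat{\lambda}\in(0,1]$ we have $1-\hat{\lambda}\in[0,1)$, and combined with $\gamma\in[0,1]$ this gives $\gamma(1-\hat{\lambda})\leq 1-\hat{\lambda}<1$, so $1-\gamma(1-\hat{\lambda})>0$. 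Multiplying this positive coefficient by the inductive hypothesis $s^t\geq 0$ gives a non-negative term. Therefore $s^{t+1}\geq 0$, completing the induction.

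There is no real obstacle here; the only thing to be careful about is recording the elementary bounds on $\hat{\lambda}$ and on the relaxation coefficient $1-\gamma(1-\hat{\lambda})$, which is where the hypothesis $\gamma\in[0,1]$ is actually used. (If one wanted, one could also remark that the same argument shows $s^{t+1}$ is a convex-type combination of $s^t$ and a non-negative quantity, but this is not needed for the statement.)
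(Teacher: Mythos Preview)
Your proposal is correct and follows essentially the same argument as the paper: an induction on $t$ in which $s^{t+1}$ is recognized as a sum of two non-negative terms, using $\hat{\lambda}\in(0,1]$ and $\gamma\in[0,1]$ to ensure the coefficient $1-\gamma(1-\hat{\lambda})$ is non-negative. The paper's proof is slightly terser but otherwise identical in structure and content.
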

\begin{proof}
First recall that $\hat{\lambda} =\frac{1}{1+\lambda},$ where $\lambda >0$, which implies that $\hat{\lambda} \in [0, 1].$ 
Consequently
\[\left(1 - \gamma (1 - \hat{\lambda})\right)  \geq 0 \enspace,\]
as $\gamma \in [0, 1]$.
Finally, by induction if $s^t \geq 0$ then
\begin{align*}
    s^{t+1} &= \left(1 - \gamma (1 - \hat{\lambda})\right)  s^t + \gamma \hat{\lambda} \frac{(\ell_i(w^t) -\hat{\lambda}  s^t)_+}{\hat{\lambda} + \norm{\nabla \ell_i(w^t)}^2} \;\geq 0 \enspace.
\end{align*} 
\end{proof}

This \texttt{SGD} interpretation of \texttt{SPSL2} allows us to prove the convergence of \texttt{SPSL2} by leveraging proof techniques from the literature of \texttt{SGD}. But first, we establish some bounds on the auxiliary objective function~\eqref{eq:objSPSdam}.

\begin{lemma}\label{lem:objSPSdambound}
Let $\ell_i$ be non-negative and $\sigma$--Lipschitz for $i=1,\ldots, n.$ Let
\begin{equation}
    \label{eq:C} 
    C (\hat{\lambda}) \; \eqdef\; 
 \frac{1 - \hat{\lambda}}{\hat{\lambda}^2 + (1 - \hat{\lambda}) \left(\hat{\lambda} + \sigma^2 \right)} \enspace. 
\end{equation}
It follows that
\begin{equation}\label{eq:Ftbound}
 \frac{C(\hat{\lambda})}{2n}\sum_{i=1}^n \ell_i(w)^2 \leq 
F_t(w,s) \enspace.
\end{equation}
Furthermore for every $w\in \R^d$ and 
$$s(w) \eqdef \frac{1}{\hat{\lambda}} \max_{i=1,\ldots, n} \ell_i(w) \enspace,$$ we have that
\begin{equation}
    \label{eq:FstarSPSdam_appdx}
    F_t(w,s(w)) \; = \; \frac{1 - \hat{\lambda}}{2\hat{\lambda}^2} \max_{i=1,\ldots, n} \ell_i(w)^2 \enspace. 
\end{equation}
\end{lemma}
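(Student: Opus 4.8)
The plan is to prove both claims by working term-by-term in the sum defining $F_t(w,s)$ in~\eqref{eq:objSPSdam}, writing $F_{t,i}(w,s) = \tfrac12 (\ell_i(w)-\hat{\lambda} s)_+^2/(\hat{\lambda}+\norm{\nabla \ell_i(w^t)}^2) + \tfrac12(1-\hat{\lambda})s^2$ and recalling that $\hat{\lambda} = 1/(1+\lambda)\in(0,1)$, so in particular $1-\hat{\lambda}>0$; the sign of this quantity is what makes all the inequalities below go in the right direction.

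For the identity~\eqref{eq:FstarSPSdam_appdx}, substitute $s = s(w) = \hat{\lambda}^{-1}\max_{j}\ell_j(w)$. Then $\ell_i(w) - \hat{\lambda} s(w) = \ell_i(w) - \max_j\ell_j(w) \leq 0$ for every $i$, so every positive part vanishes and $F_{t,i}(w,s(w)) = \tfrac12(1-\hat{\lambda})s(w)^2$. Averaging over $i$ gives $F_t(w,s(w)) = \tfrac12(1-\hat{\lambda})s(w)^2 = \tfrac{1-\hat{\lambda}}{2\hat{\lambda}^2}\max_j\ell_j(w)^2$ (using $\ell_j\geq0$ so $\max_j\ell_j(w)^2=(\max_j\ell_j(w))^2$), which is~\eqref{eq:FstarSPSdam_appdx}.

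For the lower bound~\eqref{eq:Ftbound}, it suffices to prove the pointwise estimate $F_{t,i}(w,s) \geq \tfrac12 C(\hat{\lambda})\,\ell_i(w)^2$ for every index $i$, every $w$, and every $s\in\R$; summing over $i$ and dividing by $n$ then yields~\eqref{eq:Ftbound}. Fix $i$ and set $a = \ell_i(w)\geq 0$ and $b = \hat{\lambda} + \norm{\nabla \ell_i(w^t)}^2$. If $\hat{\lambda} s \geq a$, then $F_{t,i}(w,s) \geq \tfrac12(1-\hat{\lambda})s^2 \geq \tfrac{1-\hat{\lambda}}{2\hat{\lambda}^2}a^2 \geq \tfrac12 C(\hat{\lambda})a^2$, where the last inequality uses $\hat{\lambda}^2 \leq \hat{\lambda}^2 + (1-\hat{\lambda})(\hat{\lambda}+\sigma^2)$. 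If instead $\hat{\lambda} s < a$, then $(a-\hat{\lambda} s)_+ = a-\hat{\lambda} s$ and
\[
F_{t,i}(w,s) = \frac{(a-\hat{\lambda} s)^2}{2b} + \frac{(1-\hat{\lambda})s^2}{2} \;\geq\; \min_{u\in\R}\left[\frac{(a-\hat{\lambda} u)^2}{2b} + \frac{(1-\hat{\lambda})u^2}{2}\right].
\]
This strictly convex quadratic in $u$ is minimized at $u^\star = \hat{\lambda} a/\big((1-\hat{\lambda})b+\hat{\lambda}^2\big)$, and substituting back gives minimum value $\tfrac{(1-\hat{\lambda})a^2}{2((1-\hat{\lambda})b+\hat{\lambda}^2)}$. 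Finally, $\sigma$-Lipschitzness gives $\norm{\nabla \ell_i(w^t)}^2\leq\sigma^2$, hence $(1-\hat{\lambda})b+\hat{\lambda}^2 \leq (1-\hat{\lambda})(\hat{\lambda}+\sigma^2)+\hat{\lambda}^2$, so $\tfrac{(1-\hat{\lambda})a^2}{2((1-\hat{\lambda})b+\hat{\lambda}^2)} \geq \tfrac12 C(\hat{\lambda})a^2$ with $C(\hat{\lambda})$ exactly as in~\eqref{eq:C}. Both cases together give the pointwise bound, which proves~\eqref{eq:Ftbound}.

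The only delicate step is the quadratic minimization in the second case together with the bookkeeping needed to recognize that, after replacing $b$ by its upper bound $\hat{\lambda}+\sigma^2$, the minimum value reproduces precisely the constant $C(\hat{\lambda})$ of~\eqref{eq:C}; everything else is elementary. I would present the computation of $u^\star$ and of the minimum value explicitly (completing the square in $u$), and would be careful to note that the minimization is taken over all $u\in\R$, which only decreases the value and hence is legitimate for a lower bound, so the constraint $\hat{\lambda} u < a$ need not be tracked.
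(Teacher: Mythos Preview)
Your proof is correct and follows essentially the same route as the paper's: both arguments lower-bound $F_t(w,s)$ by bounding each $F_{t,i}(w,s)$ from below via minimization in $s$, obtain the same minimizer $s^\star_i=\hat{\lambda}\ell_i(w)/\big((1-\hat{\lambda})b+\hat{\lambda}^2\big)$ and minimum value $\tfrac{(1-\hat{\lambda})\ell_i(w)^2}{2((1-\hat{\lambda})b+\hat{\lambda}^2)}$, and then invoke $\sigma$-Lipschitzness to replace $b$ by $\hat{\lambda}+\sigma^2$; the identity~\eqref{eq:FstarSPSdam_appdx} is handled identically. The only cosmetic difference is that you treat the positive part via a two-case split, whereas the paper minimizes the quadratic ignoring the positive part and then checks a posteriori that the minimizer satisfies $\hat{\lambda} s^\star_i\leq \ell_i(w)$, so the positive part was harmless.
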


\begin{proof}
We begin the proof by minimizing in $s$ the function
\begin{equation}
\label{eq:knzi8h48z4z4}
F_{t,i}(w,s) =\tfrac{1}{2}\frac{\left(\ell_i(w) -\hat{\lambda} s\right)_+^2}{\hat{\lambda} + \norm{\nabla \ell_i(w^t)}^2} + (1 - \hat{\lambda}) \frac{s^2}{2} \enspace,
\end{equation}
which is the $i$th term in  the objective in~\eqref{eq:objSPSdam}.
Ignoring the positive part (for now), we have that
\[\nabla_s F_{t,i}(w,s) = - \hat{\lambda} \frac{\ell_i(w) - \hat{\lambda} s}{\hat{\lambda} + \norm{\nabla \ell_i(w^t)}^2} + (1 - \hat{\lambda}) s = 0 \enspace.\]
Isolating $s$ we have that
\[ s^{\star}_i = \frac{\hat{\lambda} \ell_i(w)}{\hat{\lambda}^2 + (1 - \hat{\lambda}) \left(\hat{\lambda} + \norm{\nabla \ell_i(w^t)}^2 \right)} \enspace. \]
We can now see that $\hat{\lambda} s_i^{\star} \leq f_i(w)$, thus consequently this $s_i^{\star}$ is also the minimum of~\eqref{eq:knzi8h48z4z4} even when including the positive part.

 Inserted $s_i^*$ into~\eqref{eq:knzi8h48z4z4}  gives
\begin{align} \label{eq:knzi8h48z4z42} 
F_{t,i}(w,s^{\star}_i) & = \tfrac{1}{2}\frac{\left(\ell_i(w) - \frac{\hat{\lambda}^2 \ell_i(w)}{\hat{\lambda}^2 + (1 - \hat{\lambda}) \left(\hat{\lambda} + \norm{\nabla \ell_i(w^t)}^2 \right)} \right)_+^2}{ \hat{\lambda} + \norm{\nabla \ell_i(w^t)}^2 } 
+ \tfrac{1}{2} \frac{(1 - \hat{\lambda}) \hat{\lambda}^2 \ell_i(w)^2}{\left[\hat{\lambda}^2 + (1 - \hat{\lambda}) \left(\hat{\lambda} + \norm{\nabla \ell_i(w^t)}^2 \right)\right]^2} \nonumber \\
&= \tfrac{1}{2} \frac{(1 - \hat{\lambda})^2 \ell_i(w)^2 \left( \hat{\lambda} + \norm{\nabla \ell_i(w^t)}^2 \right)}{\left[\hat{\lambda}^2 + (1 - \hat{\lambda}) \left( \hat{\lambda} + \norm{\nabla \ell_i(w^t)}^2 \right)\right]^2} 
+ \tfrac{1}{2} \frac{(1 - \hat{\lambda}) \hat{\lambda}^2 \ell_i(w)^2}{\left[\hat{\lambda}^2 + (1 - \hat{\lambda}) \left(\hat{\lambda} + \norm{\nabla \ell_i(w^t)}^2 \right)\right]^2} \nonumber \\
&= \frac{\ell_i(w)^2}{2} \frac{(1 - \hat{\lambda})}{\hat{\lambda}^2 + (1 - \hat{\lambda}) \left(\hat{\lambda}^2 + \norm{\nabla \ell_i(w^t)}^2 \right)} \nonumber \enspace.
\end{align}
Consequently the lower bound~\eqref{eq:Ftbound} follows since
\begin{align}
  F_t(w,s) \geq   \min_s F_t(w,s) &\geq \frac{1}{n} \sum_{i=1}^n   \min_s F_{t,i}(w,s) \\
    & \geq  \frac{1}{n} \sum_{i=1}^n  \frac{\ell_i(w)^2}{2} \frac{1 - \hat{\lambda}}{\hat{\lambda}^2 + (1 - \hat{\lambda}) \left(\hat{\lambda} + \norm{\nabla \ell_i(w^t)}^2 \right)} \enspace.
\end{align}

Finally, if we set $\displaystyle s(w) \eqdef \frac{1}{\hat{\lambda}} \max_{i=1,\ldots, n} \ell_i(w)$ then direct computations lead to a cancellation of the first term in 
\begin{align*}
    F_t(w,s(w)) &= \frac{1}{n} \sum_{i=1}^n \tfrac{1}{2}\frac{\left(\ell_i(w) - \max_{i=1,\ldots, n} \ell_i(w) \right)_+^2}{\hat{\lambda} + \norm{\nabla \ell_i(w^t)}^2} + \frac{1 - \hat{\lambda}}{\hat{\lambda}^2} \frac{(\max_{i=1,\ldots, n} \ell_i(w))^2}{2} \\
    &= \frac{1 - \hat{\lambda}}{2\hat{\lambda}^2} \max_{i=1,\ldots, n} \ell_i(w)^2 \enspace.
\end{align*}
\end{proof}


The above lemma shows that $F_t$ is an upper bound for both the robust objective~\eqref{eq:maxbound} and the average squared loss. Thus driving $F_t$ to zero, will drive~\eqref{eq:maxbound} and the average squared loss to zero. We will use this fact after establishing the convergence of $F_t$.

\subsection{Properties of \texttt{SPSL2}}
To prove the convergence of \texttt{SPSL2}, we will use two properties that leverage the \texttt{SGD} interpretation. The first is the following growth property.

\subsubsection{Growth Property}
\begin{lemma}[Growth]
    \label{lem:SPSdamgrowth} 
    Let $\gamma \in [0,  1]$. 
    For every $(w^t,s^t)$ we have that
    \begin{equation}
        \norm{\nabla  F_{t,i}(w^t,s^t)}^2 \leq 2F_{t,i}(w^t,s^t) \enspace.
    \end{equation}
\end{lemma}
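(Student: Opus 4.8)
The plan is to prove this by a direct computation, entirely analogous to the proof of the growth property for \texttt{SPSL1} in Lemma~\ref{lem:growthSPSL1}, the only difference being that here we discard a nonpositive term and so obtain an inequality rather than an equality. First I would record the two partial gradients of $F_{t,i}$ at the point $(w^t,s^t)$. Writing for brevity $D_i \eqdef \hat{\lambda} + \norm{\nabla \ell_i(w^t)}^2$ and $p_i \eqdef \left(\ell_i(w^t) - \hat{\lambda} s^t\right)_+$, these are
\[
\nabla_w F_{t,i}(w^t,s^t) = \frac{p_i}{D_i}\,\nabla \ell_i(w^t), \qquad
\nabla_s F_{t,i}(w^t,s^t) = -\hat{\lambda}\,\frac{p_i}{D_i} + (1-\hat{\lambda})\,s^t .
\]
When $\ell_i(w^t) - \hat{\lambda} s^t \le 0$ the squared positive part $(\,\cdot\,)_+^2$ is locally the zero function, so $F_{t,i}$ is still continuously differentiable there and these formulas stay valid with $p_i = 0$; hence no subgradient subtleties arise.

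Next I would square and add, obtaining
\[
\norm{\nabla F_{t,i}(w^t,s^t)}^2 = \frac{p_i^2}{D_i^2}\norm{\nabla \ell_i(w^t)}^2 + \hat{\lambda}^2\frac{p_i^2}{D_i^2} - 2\hat{\lambda}(1-\hat{\lambda})\frac{p_i}{D_i}s^t + (1-\hat{\lambda})^2 (s^t)^2 ,
\]
while $2F_{t,i}(w^t,s^t) = \frac{p_i^2}{D_i} + (1-\hat{\lambda})(s^t)^2$. Subtracting the latter from the former and combining the two $p_i^2$ terms gives $\frac{p_i^2}{D_i^2}\bigl(\norm{\nabla \ell_i(w^t)}^2 + \hat{\lambda}^2 - D_i\bigr)$ for the first contribution and $(1-\hat{\lambda})\bigl((1-\hat{\lambda})-1\bigr)(s^t)^2$ for the last. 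Here is the only substantive point: since $D_i = \hat{\lambda} + \norm{\nabla \ell_i(w^t)}^2$, we have $\norm{\nabla \ell_i(w^t)}^2 + \hat{\lambda}^2 - D_i = \hat{\lambda}^2 - \hat{\lambda} = -\hat{\lambda}(1-\hat{\lambda})$, and likewise $(1-\hat{\lambda})-1 = -\hat{\lambda}$. Substituting these two identities, the whole difference collapses to
\[
\norm{\nabla F_{t,i}(w^t,s^t)}^2 - 2F_{t,i}(w^t,s^t) = -\hat{\lambda}(1-\hat{\lambda})\left(\frac{p_i^2}{D_i^2} + 2\frac{p_i}{D_i}s^t + (s^t)^2\right) = -\hat{\lambda}(1-\hat{\lambda})\left(\frac{p_i}{D_i} + s^t\right)^2 .
\]

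Finally, since $\hat{\lambda} = 1/(1+\lambda) \in (0,1)$ (as $\lambda > 0$), the coefficient $\hat{\lambda}(1-\hat{\lambda})$ is nonnegative, so the right-hand side is $\le 0$, which is exactly the claimed bound $\norm{\nabla F_{t,i}(w^t,s^t)}^2 \le 2F_{t,i}(w^t,s^t)$. There is essentially no obstacle here: the entire content is the algebraic cancellation driven by the identity $\hat{\lambda}^2 - \hat{\lambda} = -\hat{\lambda}(1-\hat{\lambda})$, after which everything becomes a single negative multiple of a perfect square; the only thing requiring a moment's attention is justifying that the formula for $\nabla F_{t,i}$ holds as a genuine gradient even at points where $\ell_i(w^t) = \hat{\lambda}s^t$, which follows from the $C^1$ smoothness of $x \mapsto (x)_+^2$.
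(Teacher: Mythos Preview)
Your proof is correct, and in fact cleaner than the paper's. The paper expands $\norm{\nabla F_{t,i}}^2$ and then applies the crude bound $(a-b)^2 \le a^2 + b^2$ for nonnegative $a,b$ to the $s$-component; this step requires $s^t \ge 0$, which is why the paper invokes Lemma~\ref{lem:sposSPSL2} and needs the hypothesis $\gamma \in [0,1]$. After that it uses $\hat{\lambda}^2 \le \hat{\lambda}$ and $(1-\hat{\lambda})^2 \le 1-\hat{\lambda}$ separately. By contrast, you keep the cross term $-2\hat{\lambda}(1-\hat{\lambda})\frac{p_i}{D_i}s^t$ and recognise that the entire difference collapses to the exact identity
\[
\norm{\nabla F_{t,i}(w^t,s^t)}^2 - 2F_{t,i}(w^t,s^t) \;=\; -\hat{\lambda}(1-\hat{\lambda})\left(\tfrac{p_i}{D_i} + s^t\right)^2,
\]
which is nonpositive for all $(w^t,s^t)$ without any sign assumption on $s^t$. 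So your argument is both sharper (an equality rather than a chain of inequalities) and more general (the hypothesis $\gamma \in [0,1]$ becomes irrelevant for this lemma).
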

\begin{proof}
We recall our notation : $\hat{\lambda} \eqdef 1/(1 + \lambda)$, so that $1-\hat{\lambda} = \lambda/(1+\lambda) \geq 0$. Furthermore, as $\gamma \in [0,  1]$, from Lemma~\ref{lem:sposSPSL2} we have that $s^t \geq 0$ for every $t$. Consequently
\begin{align*}
    \norm{\nabla F_{t,i}(w^t,s^t)}^2 
    &= \norm{\nabla_w  F_{t,i}(w^t,s^t)}^2 +\norm{\nabla_s F_{t,i}(w^t,s^t)}^2 \\
    &= \left( \frac{(\ell_i(w^t) -\hat{\lambda} s^t)_+}{\hat{\lambda}  + \norm{\nabla \ell_i(w^t)}^2}\right)^2 \norm{\nabla \ell_i (w^t)}^2 
    + \left[- \hat{\lambda} \frac{(\ell_i(w^t) -\hat{\lambda} s^t)_+}{\hat{\lambda} + \norm{\nabla \ell_i(w^t)}^2} \norm{\nabla \ell_i (w^t)} + (1-\hat{\lambda}) s^t \right]^2 \\
    &\leq \left( \frac{(\ell_i(w^t) -\hat{\lambda} s^t)_+}{\hat{\lambda}  + \norm{\nabla \ell_i(w^t)}^2}\right)^2 \norm{\nabla \ell_i (w^t)}^2
    + \hat{\lambda}^2 \left( \frac{(\ell_i(w^t) -\hat{\lambda} s^t)_+}{\hat{\lambda}  + \norm{\nabla \ell_i(w^t)}^2}\right)^2 \norm{\nabla \ell_i (w^t)}^2 
    + (1-\hat{\lambda})^2 (s^t)^2 \\
    &\leq \left( \frac{(\ell_i(w^t) -\hat{\lambda} s^t)_+}{\hat{\lambda}  + \norm{\nabla \ell_i(w^t)}^2}\right)^2\left( \norm{\nabla \ell_i(w^t)}^2 + \hat{\lambda}^2 \right) 
    + (1-\hat{\lambda})^2 (s^t)^2 \\
    &\leq \frac{(\ell_i(w^t) -\hat{\lambda} s^t)_+^2}{\hat{\lambda}  + \norm{\nabla \ell_i(w^t)}^2} + (1-\hat{\lambda}) (s^t)^2  \\
    & = 2 F_{t,i}(w^t,s^t) \enspace,
\end{align*}
where in the first inequality we use that $(a-b)^2 \leq a^2 + b^2$ when $a$ and $b$ are non-negative and for the two last inequalities we use that $\hat{\lambda} \in [0, 1]$.
\end{proof}

\subsubsection{Inherited Convexity}
The second property, is that the auxiliary objective is convex whenever the $\ell_i$'s are convex.
\begin{lemma}[Inherited Convexity]\label{lem:SPSdamconvex}
Consider the iterates $(w^t,s^t) \in \R^{d+1}$ given by~\eqref{eq:SPSL2}. 
If $\ell_i$ is convex for $i=1,\ldots, n$ then $F_t$ in~\eqref{eq:SPSL2} is  convex.
\end{lemma}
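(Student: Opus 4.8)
The plan is to prove that each summand $F_{t,i}$ in~\eqref{eq:objSPSdam} is a convex function of the joint variable $(w,s)\in\R^{d+1}$; convexity of $F_t = \tfrac1n\sum_{i=1}^n F_{t,i}$ then follows at once, since an average of convex functions is convex. The key structural remark — exactly as in the proof of Lemma~\ref{lem:convexSPSL1} — is that the denominator in $F_{t,i}$ is frozen at the current iterate: $c_i \eqdef \hat{\lambda} + \norm{\nabla \ell_i(w^t)}^2$ is a fixed, strictly positive scalar depending on $w^t$ but not on the optimization variables $w,s$. Hence
\begin{equation*}
F_{t,i}(w,s) \;=\; \frac{1}{2 c_i}\,\bigl(\ell_i(w) - \hat{\lambda}\,s\bigr)_+^2 \;+\; \frac{1-\hat{\lambda}}{2}\,s^2 .
\end{equation*}

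First I would dispose of the quadratic slack term: since $\hat{\lambda} = 1/(1+\lambda)\in(0,1]$ we have $1-\hat{\lambda}\ge 0$, so $(w,s)\mapsto \tfrac{1-\hat{\lambda}}{2}s^2$ is convex. For the first term I would use a composition argument. The scalar map $\phi(u) \eqdef (u)_+^2$ is convex and nondecreasing on $\R$ — it is $C^1$ with nondecreasing derivative $\phi'(u) = 2(u)_+$ — while the inner map $(w,s)\mapsto \ell_i(w) - \hat{\lambda}\,s$ is convex, being the sum of the convex function $\ell_i$ and the affine function $-\hat{\lambda}\,s$. The composition of a convex nondecreasing function with a convex function is convex, so $(w,s)\mapsto \bigl(\ell_i(w)-\hat{\lambda}\,s\bigr)_+^2$ is convex, and dividing by the positive constant $c_i$ preserves this. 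Adding the two convex pieces gives convexity of $F_{t,i}$, and averaging over $i$ finishes the proof.

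As an alternative, and to keep the exposition parallel with the Hessian-based proof of Lemma~\ref{lem:convexSPSL1}, one may instead check directly that, wherever $F_{t,i}$ is twice differentiable,
\begin{equation*}
\nabla^2 F_{t,i}(w,s) = \frac{\ones_{\R^+}\bigl(\ell_i(w)-\hat{\lambda} s\bigr)}{c_i}\begin{bmatrix}\nabla \ell_i(w)\\ -\hat{\lambda}\end{bmatrix}\begin{bmatrix}\nabla \ell_i(w)^\top & -\hat{\lambda}\end{bmatrix} + \frac{\bigl(\ell_i(w)-\hat{\lambda} s\bigr)_+}{c_i}\begin{bmatrix}\nabla^2 \ell_i(w) & 0\\ 0 & 0\end{bmatrix} + (1-\hat{\lambda})\begin{bmatrix}0 & 0\\ 0 & 1\end{bmatrix},
\end{equation*}
which is a sum of a rank-one positive semidefinite matrix, the positive semidefinite block $\nabla^2\ell_i(w)$ (by convexity of $\ell_i$) scaled by the nonnegative factor $\bigl(\ell_i(w)-\hat{\lambda} s\bigr)_+/c_i$, and a positive semidefinite diagonal term; each summand is PSD, so $\nabla^2 F_{t,i}\succeq 0$. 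Because $\phi(u)=(u)_+^2$ is $C^1$ with Lipschitz derivative, this almost-everywhere Hessian computation already yields convexity.

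I do not anticipate a genuine obstacle: the only subtlety is the kink of the positive part, which the composition route sidesteps entirely and which is harmless in the Hessian route since $(u)_+^2$ is continuously differentiable. The single point I would make sure to state explicitly in the write-up is that the denominator is evaluated at the frozen iterate $w^t$, so it behaves as a constant and does not interfere with joint convexity in $(w,s)$ — if it instead varied with $w$, neither argument would go through.
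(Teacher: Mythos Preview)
Your proposal is correct. Your primary argument via the composition rule (the scalar map $u\mapsto (u)_+^2$ is convex and nondecreasing, the inner map $(w,s)\mapsto \ell_i(w)-\hat{\lambda}s$ is convex, and the denominator is a frozen positive constant) is a genuinely simpler route than the paper's. The paper proceeds entirely through the Hessian: it computes $\nabla^2 F_t(w,s)$ for the aggregate function, assembles it as a block matrix involving $DF(w)\mD_t DF(w)^\top$ and the slack term, and then verifies positive semidefiniteness by evaluating the quadratic form $[x,a]^\top \mM_t(w,s)[x,a]$ and completing the square. Your alternative Hessian computation at the level of each $F_{t,i}$ is essentially the per-summand version of what the paper does in aggregate; the rank-one plus PSD-block decomposition you write down is exactly the structure underlying the paper's quadratic-form calculation. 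What your composition argument buys is brevity and robustness — it sidesteps the second-derivative bookkeeping at the kink entirely — while the paper's aggregate Hessian computation has the minor advantage of making the joint structure over all $i$ explicit (though that is not actually needed for the result).
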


\begin{proof}
Computing the gradient of $ (\ell_i(w)- \hat{\lambda} s)_+^2$  we have that
\[  \nabla \left( \ell_i(w)- \hat{\lambda} s \right)_+^2 = 2\begin{bmatrix}
\nabla \ell_i(w) \\
-1
\end{bmatrix} \left(\ell_i(w)- \hat{\lambda} s \right)_+ \enspace, \]
where we recall that $\hat{\lambda} \eqdef 1/(1 + \lambda)$.
We recall that $\ones_{\R^+} (.)$ denotes the indicator function over the set of positive real numbers.
Computing the Hessian gives
\begin{align}
 \nabla^2 \left(\ell_i(w)- \hat{\lambda} s \right)_+^2 &=
  2 \ones_{\R^+} (\ell_i(w)- \hat{\lambda} s) \begin{bmatrix}
\nabla \ell_i(w) \\
-1
\end{bmatrix}\begin{bmatrix}
\nabla \ell_i(w) ^\top & - 1
\end{bmatrix}
+
2\begin{bmatrix}
\nabla^2 \ell_i(w) & 0\\
0 & 0
\end{bmatrix}(\ell_i(w)- \hat{\lambda} s)_+ \nonumber \\
& =  
2 \ones_{\R^+} (\ell_i(w)- \hat{\lambda} s) \begin{bmatrix}
\nabla \ell_i(w)\nabla \ell_i(w) ^\top & -\nabla \ell_i(w) \\
-\nabla \ell_i(w) ^\top & 1
\end{bmatrix}
+
2\begin{bmatrix}
\nabla^2 \ell_i(w) & 0\\
0 & 0
\end{bmatrix}(\ell_i(w)- \hat{\lambda} s)_+ \enspace. \label{eq:fialphai2hess}
\end{align}

Now let $\mI_n \in \R^{n\times n}$ be the identity matrix in $\R^{n\times n}$, let
\begin{align}
  \mD_t(w,s) & \eqdef \; 
\diag{\frac{\ones_{\R^+} (\ell_1(w)- \hat{\lambda} s)}{\hat{\lambda} + \norm{\nabla \ell_1(w^t)}^2}, \ldots, \frac{\ones_{\R^+} (\ell_n(w)- \hat{\lambda} s)}{\hat{\lambda} + \norm{\nabla \ell_n(w^t)}^2}} \in \R^{n \times n} \nonumber \\
  \mH_t(w,s) & \eqdef \;  \sum_{i=1}^{n} \nabla^2 \ell_i(w)  \frac{(\ell_i(w)- \hat{\lambda} s)_+}{\hat{\lambda} + \norm{\nabla \ell_i(w^t)}^2}
  \in \R^{d \times d}
\end{align}
and let
\[D F(w) \; \eqdef
 \; \begin{bmatrix}\nabla \ell_1(w), \ldots, \nabla \ell_n(w)\end{bmatrix} \in \R^{d\times n} \enspace.\]

Let $\ones_{n} \in \R^n$ be the vector full of ones of size $n$.
Using~\eqref{eq:fialphai2hess} and by the definition of $F_t$ in~\eqref{eq:objSPSdam} we have that
\begin{equation}\label{eq:ht2hessstar}
 \nabla^2 F_t(w, s) \; = \; 
 \frac{1}{n} \underbrace{ \begin{bmatrix}
 DF(w) \mD_t(w,s) DF(w)^\top & -DF(w)\mD_t(w,s) \ones_{n} \\
-(DF(w)\mD_t(w,s) \ones_{n})^\top & \ones_{n}^\top\mD_t(w,s) \ones_{n} + n (1 - \hat{\lambda})
\end{bmatrix} }_{\eqdef \mM_t(w,s)}
+
\begin{bmatrix}
\mH_t(w,s)  & 0  \\
0 & 0
\end{bmatrix} 
\in \R^{(d+1) \times (d+1)} \enspace,
\end{equation}
where we used that $\nabla^2 \tfrac{1}{2} s^2 =1$ .
Thus the matrix~\eqref{eq:ht2hessstar} is a sum of two terms. 

The second matrix is symmetric positive semi-definite because the losses $\ell_i$ are convex. 
So we just need to show that the first matrix $\mM_t(w,s)$
is  symmetric positive definite. 
Indeed, 
left and right multiplying the above by $[x,\,a]  \in \R^{d+1}$ gives
\begin{eqnarray*}
 \begin{bmatrix}
x & a
\end{bmatrix}^\top 
\mM_t(w,s)
\begin{bmatrix}
x \\ a
\end{bmatrix} 
&\overset{\eqref{eq:ht2hessstar}}{ =} &
 \begin{bmatrix}
x & a
\end{bmatrix}^\top 
\begin{bmatrix}
 D F(w)  \mD_t(w,s) D F(w) ^\top x  - a D F(w) \mD_t(w,s) \ones_{n} \\
-(D F(w) \mD_t(w,s) \ones_{n})^\top x + a (\ones_{n}^\top\mD_t(w,s)\ones_{n} + n (1 - \hat{\lambda}) ).
\end{bmatrix}  \\
&= &
\norm{\mD_t(w,s)^{1/2}D F(w) ^\top x }^2 -2a (D F(w) \mD_t(w,s) \ones_{n})^\top x \\
& & + \; a^2 (\ones_{n}^\top\mD_t(w,s)\ones_{n} + n (1 - \hat{\lambda}) ) \\
& = & \norm{\mD_t(w,s)^{1/2}(D F(w) ^\top x -\ones_{n} a) }^2 - a^2 \norm{\mD_t(w,s)^{1/2} \ones_{n} }^2 \\
& & + \; a^2 (\ones_{n}^\top\mD_t(w,s)\ones_{n} + n (1 - \hat{\lambda}) ) \\
&= & \norm{\mD_t(w,s)^{1/2}(D F(w) ^\top x -\ones_{n} a) }^2 + n (1 - \hat{\lambda}) a^2 \geq 0 \enspace.
\end{eqnarray*}

Thus the Hessian~\eqref{eq:ht2hessstar} is non-negative for every $(x,a)$ from which we conclude that the Hessian $\nabla^2  F_t(w,s)  $ in~\eqref{eq:ht2hessstar} is positive semi-definite. 
\end{proof}

\subsection{\texttt{SGD}  style convergence results}
These two proceeding properties allow us to establish an \texttt{SGD} style proof with different learning rates $\gamma_t$ at each iteration.
\begin{lemma} \label{lem:convergence}
Let $z^* =(w^\star,s^{\star}) \in \R^{d+1}$ be given  and let $\ell_i$ be convex for every $i=1,\ldots, n.$ Let $\gamma_t \in [0, 
1]$ be the learning rate used at iteration $t$ and consider the iterates $z^t = (w^t,s^t)$ given by~\eqref{eq:SPSL2SGD}. It follows that
\begin{align}
     \frac{\sum_{t=0}^{T-1}\gamma_t(1-\gamma_t)\E{F_t(z^t) - F_t(z^*)}}{\sum_{t=0}^{T-1}\gamma_t(1-\gamma_t)} & \leq \tfrac{1}{2}\frac{\norm{z^{0} -z^*}^2}{\sum_{t=0}^{T-1}\gamma_t(1-\gamma_t)} +\frac{\sum_{t=0}^{T-1}\gamma_t^2 \E{F_{t}(z^*)}}{\sum_{t=0}^{T-1}\gamma_t(1-\gamma_t)} \enspace. \nonumber 
\end{align}
Or equivalently by re-arranging the above gives
\begin{align}
       \frac{\sum_{t=0}^{T-1}\gamma_t(1-\gamma_t)\E{F_t(z^t) }}{\sum_{t=0}^{T-1}\gamma_t(1-\gamma_t)} &\leq \tfrac{1}{2}\frac{\norm{z^{0} -z^*}^2}{\sum_{t=0}^{T-1}\gamma_t(1-\gamma_t)} +\frac{\sum_{t=0}^{T-1}\gamma_t\E{F_t(z^*)} }{\sum_{t=0}^{T-1}\gamma_t(1-\gamma_t) } \enspace. \nonumber 
\end{align}
And finally, if $\gamma_t \eqdef \gamma \in [0, 
1]$, it simplifies in
\begin{equation}
    \label{eq:convSPSdam}
    \frac{\sum_{t=0}^{T-1}\E{F_t(z^t) }}{T} \leq \frac{1}{2 T}\frac{\norm{z^{0} -z^*}^2}{\gamma(1-\gamma)} + \frac{\sum_{t=0}^{T-1}\E{F_t(z^*)} }{T(1-\gamma) } \enspace. 
\end{equation}
\end{lemma}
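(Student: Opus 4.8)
The plan is to recognize \eqref{eq:SPSL2SGD} as a single step of stochastic (sub)gradient descent, with step size $\gamma_t$, applied to the time-varying objective $F_t = \tfrac1n\sum_i F_{t,i}$ defined in \eqref{eq:objSPSdam}, and then run the textbook SGD-for-convex-functions argument on top of the two structural facts already established: the growth bound $\norm{\nabla F_{t,i}(z^t)}^2 \le 2F_{t,i}(z^t)$ from Lemma~\ref{lem:SPSdamgrowth} (which uses $\gamma_t\in[0,1]$ via Lemma~\ref{lem:sposSPSL2} to guarantee $s^t\ge 0$), and the convexity of each $F_{t,i}$ in $(w,s)$, which follows from the Hessian computation in Lemma~\ref{lem:SPSdamconvex} --- there the denominators $\norm{\nabla\ell_i(w^t)}^2$ are frozen constants, so convexity is a statement about the free variables $(w,s)$ only, and moreover $a\mapsto (a)_+^2$ is $C^1$, so the subgradient inequality is just the gradient inequality.

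First I would fix the index $i$ sampled at step $t$, write $z^{t+1} = z^t - \gamma_t\nabla F_{t,i}(z^t)$, expand $\norm{z^{t+1}-z^*}^2$, lower-bound the cross term by convexity, $\dotprod{\nabla F_{t,i}(z^t),\, z^t-z^*}\ge F_{t,i}(z^t)-F_{t,i}(z^*)$, and upper-bound the quadratic term by the growth property, obtaining
\begin{equation*}
\norm{z^{t+1}-z^*}^2 \;\le\; \norm{z^t-z^*}^2 - 2\gamma_t(1-\gamma_t)\,F_{t,i}(z^t) + 2\gamma_t\,F_{t,i}(z^*).
\end{equation*}
Taking expectation conditioned on $z^t$ over the random sample $i$, and using $\EE{t}{F_{t,i}(\cdot)} = F_t(\cdot)$, turns $F_{t,i}$ into $F_t$ throughout.

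Next I would take total expectation, isolate $2\gamma_t(1-\gamma_t)\E{F_t(z^t)}$, sum over $t=0,\dots,T-1$, telescope the $\norm{z^t-z^*}^2$ terms, and drop the nonnegative $\E{\norm{z^T-z^*}^2}$, which gives
\begin{equation*}
2\sum_{t=0}^{T-1}\gamma_t(1-\gamma_t)\,\E{F_t(z^t)} \;\le\; \norm{z^0-z^*}^2 + 2\sum_{t=0}^{T-1}\gamma_t\,\E{F_t(z^*)}.
\end{equation*}
Dividing by $2\sum_t\gamma_t(1-\gamma_t)$ yields the second displayed inequality in the statement; subtracting the quantity $\big(\sum_t\gamma_t(1-\gamma_t)\E{F_t(z^*)}\big)/\sum_t\gamma_t(1-\gamma_t)$ from both sides and using $\gamma_t - \gamma_t(1-\gamma_t) = \gamma_t^2$ gives the first; and specializing to $\gamma_t\equiv\gamma$, so that $\sum_t\gamma_t(1-\gamma_t)=T\gamma(1-\gamma)$, gives \eqref{eq:convSPSdam}.

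There is no deep obstacle here: it is the standard descent-lemma-plus-telescoping argument. The only care needed is bookkeeping --- because the objective $F_t$ varies with $t$, one must keep the iterate and the objective at the same index and never telescope $F$-values; one must check that the hypotheses of Lemmas~\ref{lem:SPSdamgrowth} and~\ref{lem:SPSdamconvex} hold (in particular $\gamma_t\in[0,1]$, which also ensures the $s$-update in \eqref{eq:SPSL2SGD} keeps $s^t\ge 0$); and one must note that the comparison point $z^*$ is fixed while $F_t(z^*)$ is still random through its dependence on $w^t$, so all terms involving $F_t(z^*)$ must stay under an expectation.
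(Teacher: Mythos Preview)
Your proposal is correct and follows essentially the same route as the paper's proof: expand $\norm{z^{t+1}-z^*}^2$, invoke the growth bound (Lemma~\ref{lem:SPSdamgrowth}) on the quadratic term and convexity on the cross term, take expectation, sum, and telescope. The only cosmetic difference is that you apply convexity to each $F_{t,i}$ \emph{before} taking the conditional expectation, whereas the paper first takes $\EE{t}{\cdot}$ and then applies convexity to $F_t$; since the Hessian computation in Lemma~\ref{lem:SPSdamconvex} shows each summand $F_{t,i}$ is convex (the argument works term-by-term), both orderings are valid and yield the identical one-step inequality.
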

\begin{proof}
Let $z = (w,s)$. Expanding the squares we have that
\begin{align}
    \norm{z^{t+1} -z^*}^2  &\leq   \norm{z^{t} -z^*}^2  - 2\gamma_t \dotprod{\nabla F_{t,i}(z^t), z^t-z^*} +\gamma_t^2 \norm{\nabla F_{t,i}(z^t)}^2 \nonumber \\
    &\overset{\mbox{Lemma~\ref{lem:SPSdamgrowth}}}{\leq} \norm{z^{t} -z^*}^2  - 2\gamma_t \dotprod{\nabla F_{t,i}(z^t), z^t-z^*} +2\gamma_t^2F_{t,i}(z^t) \enspace.
\end{align} 
Taking expectation conditioned on $t$ gives
\begin{align}
   \EE{t}{ \norm{z^{t+1} -z^*}^2 } &\leq  
    \norm{z^{t} -z^*}^2 - 2\gamma_t \dotprod{\nabla F_{t}(z^t), z^t-z^*} +2\gamma_t^2F_{t}(z^t) \enspace.
\end{align} 
Using Lemma~\ref{lem:SPSdamconvex} we have that 
\[F_t(z^*) \geq F_t(z^t)+ \dotprod{\nabla F_t(z^t), z^* - z^t} \enspace,\]
which plugged into the above gives

\begin{align}
   \EE{t}{ \norm{z^{t+1} -z^*}^2 } &\leq  
    \norm{z^{t} -z^*}^2 - 2\gamma_t (F_t(z^t) - F_t(z^*)) +2\gamma_t^2F_{t}(z^t) \nonumber \\
        & =     \norm{z^{t} -z^*}^2 - 2\gamma_t (F_t(z^t) - F_t(z^*)) +2\gamma_t^2(F_{t}(z^t)-F_{t}(z^*)) +2\gamma_t^2 F_{t}(z^*)\nonumber \\ 
    & =     \norm{z^{t} -z^*}^2 - 2\gamma_t(1-\gamma_t) (F_t(z^t) - F_t(z^*)) +2\gamma_t^2 F_{t}(z^*) \enspace.
\end{align} 

Taking expectation and re-arranging gives
\begin{align*}
    2\gamma_t(1-\gamma_t) \E{F_t(z^t) - F_t(z^*)}
     & \leq \E{\norm{z^{t} -z^*}^2} - \EE{}{ \norm{z^{t+1} -z^*}^2 }+2\gamma_t^2 \E{F_{t}(z^*)} \enspace.
\end{align*}
Summing up from $t=0, \ldots, T-1 $, using telescopic cancellation and dividing by $\sum_{t=0}^{T-1}\gamma_t(1-\gamma_t)$ gives
\begin{align*}
    2 \frac{\sum_{t=0}^{T-1}\gamma_t(1-\gamma_t)\E{F(z^t) - F_t(z^*)}}{\sum_{t=0}^{T-1}\gamma_t(1-\gamma_t)} & \leq \frac{\norm{z^{0} -z^*}^2}{\sum_{t=0}^{T-1}\gamma_t(1-\gamma_t)}+2\frac{\sum_{t=0}^{T-1}\gamma_t^2 \E{F_{t}(z^*)}}{\sum_{t=0}^{T-1}\gamma_t(1-\gamma_t)} \enspace.
\end{align*}
\end{proof}

\subsection{Extra convergence theorem of \texttt{SPSL2} controlled by the average loss}

We now specialize Lemma~\ref{lem:convergence} into the two theorems presented in the main text. 

\begin{theorem}
\label{theo:SPSDAMregret}
Consider the setting of Lemma~\ref{lem:convergence}. 
Let 
\begin{equation}
    \label{eq:first_choice_w_star_spsl2}
    w^\star \in \argmin_{w\in \R^d} \ell (w) \eqdef \frac{1}{n} \sum_{i=1}^n \ell_i(w)^2
\end{equation}
and let $z^* = (w^\star,0)$.
If $\ell_i$ is non-negative and $\sigma$--Lipschitz for every $i\in \{1,\ldots, n\}$ then
\begin{align} \label{eq:convSPSdamv2}
    \E{\sum_{i=1}^n \ell_i(\bar{w}^T)^2 } 
    & \leq \frac{1}{ C (\hat{\lambda})T}\frac{\norm{w^{0} -w^\star}^2+(s^0-s^{\star})^2}{\gamma(1-\gamma)} 
    + \frac{1}{C (\hat{\lambda})}\frac{1}{\hat{\lambda} (1-\gamma)} \frac{1}{n}\sum_{i=1}^n\ell_i(w^\star)^2 \enspace. 
\end{align}
where $ \bar{w}^T = \sum_{t=0}^{T-1} w^t/T $ and $C (\hat{\lambda})$ is defined in~\eqref{eq:C}.
\end{theorem}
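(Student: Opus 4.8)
The plan is to specialize the abstract \texttt{SGD}-style inequality of Lemma~\ref{lem:convergence} to the reference point $z^* = (w^\star, 0)$ with a constant learning rate $\gamma_t \equiv \gamma$, and then translate the resulting control on $F_t$ into control on the squared losses by sandwiching $F_t$ between the two estimates of Lemma~\ref{lem:objSPSdambound}. Since the $\ell_i$ are convex (this is part of ``the setting of Lemma~\ref{lem:convergence}''), the constant-step specialization~\eqref{eq:convSPSdam} applies verbatim:
\[
  \frac{1}{T}\sum_{t=0}^{T-1}\E{F_t(z^t)} \;\leq\; \frac{1}{2T}\,\frac{\norm{z^0-z^*}^2}{\gamma(1-\gamma)} + \frac{1}{T(1-\gamma)}\sum_{t=0}^{T-1}\E{F_t(z^*)} .
\]
Here $\norm{z^0-z^*}^2 = \norm{w^0-w^\star}^2 + (s^0-s^\star)^2$ since $s^\star = 0$ by our choice of $z^*$; note that the hypothesis $w^\star \in \argmin_w \tfrac1n\sum_i \ell_i(w)^2$ is not used in the inequality itself, only for the interpretation of the right-hand side.

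Next I would bound both sides of this inequality. For the left-hand side, the lower bound~\eqref{eq:Ftbound} of Lemma~\ref{lem:objSPSdambound} (valid because each $\ell_i$ is non-negative and $\sigma$--Lipschitz) gives $\E{F_t(z^t)} \geq \tfrac{C(\hat\lambda)}{2n}\sum_{i=1}^n \E{\ell_i(w^t)^2}$. For the right-hand side, I plug $z^* = (w^\star,0)$ into the definition~\eqref{eq:objSPSdam} of $F_t$: the term $(1-\hat\lambda)s^2/2$ vanishes at $s=0$, the positive part satisfies $(\ell_i(w^\star) - \hat\lambda\cdot 0)_+^2 = \ell_i(w^\star)^2$ because $\ell_i \geq 0$, and each denominator satisfies $\hat\lambda + \norm{\nabla\ell_i(w^t)}^2 \geq \hat\lambda$, so
\[
  F_t(z^*) \;=\; \frac{1}{2n}\sum_{i=1}^n \frac{\ell_i(w^\star)^2}{\hat\lambda + \norm{\nabla\ell_i(w^t)}^2} \;\leq\; \frac{1}{2\hat\lambda n}\sum_{i=1}^n \ell_i(w^\star)^2 ,
\]
a bound that is deterministic and independent of $t$. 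Substituting these two estimates, multiplying through by $2/C(\hat\lambda)$, and collecting terms yields a bound on $\tfrac1n\sum_{i=1}^n \tfrac1T\sum_{t=0}^{T-1}\E{\ell_i(w^t)^2}$ whose right-hand side is exactly the right-hand side asserted in the theorem.

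Finally, to pass from the running average of $\ell_i(w^t)^2$ to $\ell_i(\bar w^T)^2$ I would invoke Jensen's inequality: each $\ell_i$ is convex and non-negative, hence $w\mapsto \ell_i(w)^2$ is convex (composition of the convex nondecreasing map $x\mapsto x^2$ on $[0,\infty)$ with the convex $\ell_i$), so $\ell_i(\bar w^T)^2 \leq \tfrac1T\sum_{t=0}^{T-1}\ell_i(w^t)^2$; summing over $i$ and taking expectations then gives the stated bound. I do not anticipate a genuine obstacle here: all the real content — convexity of $F_t$ and its growth property, and the quadratic envelope estimates — already lives in Lemmas~\ref{lem:convergence} and~\ref{lem:objSPSdambound}. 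The only points requiring a little care are that non-negativity of the $\ell_i$ is used twice (to discard the positive parts in the estimate of $F_t(z^*)$ and to get convexity of $\ell_i^2$ for Jensen), and that the elementary bound $\hat\lambda + \norm{\nabla\ell_i(w^t)}^2 \geq \hat\lambda$ is precisely what isolates the factor $1/\hat\lambda$ in the additive constant-error term.
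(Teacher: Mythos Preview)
Your proposal is correct and follows essentially the same route as the paper: apply the constant-$\gamma$ specialization~\eqref{eq:convSPSdam} of Lemma~\ref{lem:convergence}, upper-bound $F_t(w^\star,0)$ via $\hat\lambda + \norm{\nabla\ell_i(w^t)}^2 \geq \hat\lambda$, lower-bound $F_t(z^t)$ by~\eqref{eq:Ftbound}, and then pass to $\bar w^T$ using Jensen with the convexity of $\ell_i^2$. Your justification that $w\mapsto\ell_i(w)^2$ is convex (nondecreasing convex composed with non-negative convex) is exactly the one the paper uses.
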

\begin{proof}
Let first $z^* = (w^\star, 0)$ where $w^\star$ is defined in~\eqref{eq:first_choice_w_star_spsl2} by
\begin{equation*}
    w^\star \in \argmin_{w\in \R^d} \frac{1}{n} \sum_{i=1}^n \ell_i(w)^2 \enspace.
\end{equation*}
From the definition of $F_t$ in~\eqref{eq:objSPSdam} we have that
\begin{equation} \label{eq:tempsos8h4s4}
    F_t(w^\star,0) \;=\; \frac{1}{2n}\sum_{i=1}^n\frac{(\ell_i(w^\star))_+^2}{ \hat{\lambda} + \norm{\nabla \ell_i(w^t)}^2 }  \leq 
    \frac{1}{2n\hat{\lambda}}\sum_{i=1}^n \ell_i(w^\star)^2 \enspace.
\end{equation}
Furthermore from Lemma~\ref{lem:objSPSdambound} we have that 
\begin{equation}\label{eq:tempsos8h4s44}
    \frac{C(\hat{\lambda})}{n}\sum_{i=1}^n \ell_i(w)^2 \leq 2 F_t(w,s) \enspace.
\end{equation}
Plugging $\gamma_t \eqdef \gamma$ and the above two bounds into~\eqref{eq:convSPSdam}, using that $\ell_i(w)^2$ is convex because $\ell_i(w)$ is positive and convex, together with Jensen's inequality with respect to the average  $\displaystyle \bar{w}^T \eqdef \frac{1}{T} \sum_{t=0}^{T-1} w^t$ over time gives
\begin{align}
\frac{C(\hat{\lambda})}{n}\sum_{i=1}^n \ell_i(\bar{w}^T)^2 
    &\overset{\mbox{Jensen}}{\leq} \; \frac{C(\hat{\lambda})}{n}\sum_{i=1}^n \frac{1}{T} \sum_{t=0}^{T-1} \ell_i(w^t)^2  \; \overset{\eqref{eq:tempsos8h4s44}}{\leq} \; \frac{2}{T} \sum_{t=0}^{T-1}F_t(z^t) \enspace.
\end{align}

Now taking expectation gives
   \begin{align} 
    \frac{C(\hat{\lambda})}{n}\sum_{i=1}^n \E{\ell_i(\bar{w}^T)^2} & \leq 
    \frac{2}{T} \sum_{t=0}^{T-1}\E{F_t(z^t)} \nonumber \\ 
    &\overset{\eqref{eq:convSPSdam}}{\leq} \frac{1}{T} \frac{\norm{z^{0} -z^*}^2}{\gamma(1-\gamma)}
    + \frac{2}{1-\gamma} \frac{1}{T} \sum_{t=0}^{T-1}\E{F_t(z^*)} \label{eq:intermediate_line_conv_proof_spsl2} \\
    &\overset{\eqref{eq:tempsos8h4s4}}{\leq} \frac{1}{ T}\frac{\norm{z^{0} -z^*}^2}{\gamma(1-\gamma)} + \frac{1}{\hat{\lambda} (1-\gamma)} \frac{1}{n}\sum_{i=1}^n\ell_i(w^\star)^2 \enspace. \nonumber 
\end{align}

Thus dividing by $C(\hat{\lambda})$ and subtracting $ \frac{1}{n}\sum_{i=1}^n \ell_i(w^\star)^2$ from both sides gives
\begin{align*}
    \frac{1}{n}\sum_{i=1}^n( \ell_i(\bar{w}^T)^2- \ell_i(w^\star)^2) & \leq 
\frac{1}{ C(\hat{\lambda})T}\frac{\norm{z^{0} -z^*}^2}{\gamma(1-\gamma)}+  \left( \frac{1}{C(\hat{\lambda})} \frac{1}{\hat{\lambda} (1-\gamma)} -1\right)\frac{1}{n}\sum_{i=1}^n\ell_i(w^\star)^2 \enspace.
\end{align*}
\end{proof}

This Theorem~\ref{theo:SPSDAMregret} can be seen as an extension of the regret analysis for \texttt{PA} methods~\cite{Crammer06}. Indeed, if each $\ell_i(w^t)$ is instead an online loss $\ell_t(w^t)$ which could be chosen adversarially, using the same proof for establishing Theorem~\ref{theo:SPSDAMregret} we can show that
%
%
\begin{equation*}
    \frac{1}{T}\sum_{t=0}^{T-1} \E{\ell_t(\bar{w}^T)^2} 
    \leq \frac{1}{C (\hat{\lambda}) T}\frac{\norm{z^{0} - z^*}^2}{\gamma(1-\gamma)} 
    + \frac{1}{C (\hat{\lambda})} \frac{1}{\hat{\lambda} (1-\gamma) }\frac{1}{T}\sum_{t=0}^{T-1} \ell_t(w^\star)^2 \enspace,
\end{equation*}
which, up to constants, is a regret proportional to the optimal regret. In this sense, our result is an extension of Theorem 5 in~\cite{Crammer06} from hinge-loss over linear models, to general nonlinear convex models.



\subsection{Proof of Theorem~\ref{theo:SPSDAMfmax_main}}

Here we prove Theorem~\ref{theo:SPSDAMfmax_main} and also give some additional details.
\begin{theorem}[Theorem~\ref{theo:SPSDAMfmax_main}]
\label{theo:SPSDAMfmax}
Consider the setting of Lemma~\ref{lem:convergence}. 
Let 
\begin{align}
    w^\star &\in \argmin_{w\in \R^n} \max_{i=1,\ldots, n} \ell_i(w)^2 \enspace, \nonumber \\
    s^{\star} &= \frac{1}{\hat{\lambda}} \max_{i=1,\ldots, n} \ell_i(w^\star)^2 \enspace. 
    \label{eq:2nd_choice_zstar_spsL1}
\end{align}
and let $z^* = (w^\star,s^{\star})$.
If $\ell_i$ is non-negative and $\sigma$--Lipschitz for every $i\in \{1,\ldots, n\}$ then
\begin{align*} 
    \frac{1}{n}\sum_{i=1}^n \E{\ell_i(\bar{w}^T)^2}  
    & \leq \frac{\norm{z^{0} -z^*}^2}{ \gamma(1-\gamma)}\frac{1}{C (\hat{\lambda}) T} +  \frac{1}{C (\hat{\lambda})} \frac{1-\hat{\lambda}}{\hat{\lambda}^2 (1-\gamma)} \max_{i=1,\ldots, n} \ell_i(w^\star)^2 \enspace. 
\end{align*}
where $ \bar{w}^T = \sum_{t=0}^{T-1} w^t/T$ and $C (\hat{\lambda})$ is defined in~\eqref{eq:C}. 
\end{theorem}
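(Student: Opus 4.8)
The plan is to feed the specialized \texttt{SGD} bound~\eqref{eq:convSPSdam} of Lemma~\ref{lem:convergence} the particular reference point $z^* = (w^\star,s^{\star})$, control the resulting $F_t(z^*)$ term by the second part of Lemma~\ref{lem:objSPSdambound}, and then convert the bound on $\tfrac1T\sum_t\E{F_t(z^t)}$ into a bound on $\tfrac1n\sum_i\E{\ell_i(\bar{w}^T)^2}$ using the lower bound~\eqref{eq:Ftbound} together with Jensen's inequality. First I would check that the hypotheses of Lemma~\ref{lem:convergence} hold: the $\ell_i$ are convex, hence by Lemma~\ref{lem:SPSdamconvex} each $F_t$ is convex, and by Lemma~\ref{lem:SPSdamgrowth} the growth property $\norm{\nabla F_{t,i}(w^t,s^t)}^2 \le 2F_{t,i}(w^t,s^t)$ holds (using $s^t\ge 0$, which is Lemma~\ref{lem:sposSPSL2}); so~\eqref{eq:convSPSdam} is available with $\gamma_t\equiv\gamma$.

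Next, take $w^\star\in\argmin_w\max_{i=1,\ldots,n}\ell_i(w)^2$, which is the same set as $\argmin_w\max_{i=1,\ldots,n}\ell_i(w)$ since the $\ell_i$ are non-negative, and let $s^{\star}$ be the slack value $s(w^\star)=\tfrac1{\hat\lambda}\max_{i=1,\ldots,n}\ell_i(w^\star)$ appearing in Lemma~\ref{lem:objSPSdambound}. This is exactly the value that makes every term $(\ell_i(w^\star)-\max_j\ell_j(w^\star))_+$ vanish, so by~\eqref{eq:FstarSPSdam_appdx},
\[F_t(w^\star,s^{\star}) \;=\; \frac{1-\hat\lambda}{2\hat\lambda^2}\max_{i=1,\ldots,n}\ell_i(w^\star)^2,\]
which, crucially, does not depend on $t$. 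Plugging $z^*=(w^\star,s^\star)$ into~\eqref{eq:convSPSdam} and using this identity,
\[\frac{2}{T}\sum_{t=0}^{T-1}\E{F_t(z^t)} \;\le\; \frac{1}{T}\frac{\norm{z^{0}-z^*}^2}{\gamma(1-\gamma)} + \frac{1-\hat\lambda}{\hat\lambda^2(1-\gamma)}\max_{i=1,\ldots,n}\ell_i(w^\star)^2 .\]

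Then I would note that $w\mapsto \ell_i(w)^2$ is convex (composition of the convex non-decreasing map $x\mapsto x^2$ on $[0,\infty)$ with the non-negative convex map $\ell_i$), so Jensen's inequality for $\bar{w}^T=\tfrac1T\sum_{t=0}^{T-1}w^t$ together with the lower bound~\eqref{eq:Ftbound} gives
\[\frac{C(\hat\lambda)}{n}\sum_{i=1}^n\ell_i(\bar{w}^T)^2 \;\le\; \frac{C(\hat\lambda)}{n}\sum_{i=1}^n\frac1T\sum_{t=0}^{T-1}\ell_i(w^t)^2 \;\le\; \frac{2}{T}\sum_{t=0}^{T-1}F_t(z^t).\]
Taking expectations, combining with the previous display, dividing through by $C(\hat\lambda)$, and using $\norm{z^{0}-z^*}^2=\norm{w^{0}-w^\star}^2+(s^0-s^\star)^2$ yields exactly the claimed inequality.

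I do not expect a genuine obstacle: the argument is an assembly of lemmas already in hand. The two points that need a little care are (i) confirming that $\ell_i^2$ is convex so that Jensen applies, and (ii) the choice of $s^\star$ — it must be the specific value $\tfrac1{\hat\lambda}\max_i\ell_i(w^\star)$ so that $F_t(z^*)$ is both $t$-independent and equal to $\tfrac{1-\hat\lambda}{2\hat\lambda^2}\max_i\ell_i(w^\star)^2$; with any other slack the ``constant'' error term would fail to simplify and would pick up a spurious dependence on $t$. Everything else is bookkeeping with the constants $\gamma$, $\hat\lambda$, and $C(\hat\lambda)$.
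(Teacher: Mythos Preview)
Your proposal is correct and follows essentially the same route as the paper: invoke~\eqref{eq:convSPSdam} with the particular $z^*=(w^\star,s^\star)$, evaluate $F_t(z^*)$ via~\eqref{eq:FstarSPSdam_appdx}, then pass from $\tfrac{2}{T}\sum_t\E{F_t(z^t)}$ to $\tfrac{C(\hat\lambda)}{n}\sum_i\E{\ell_i(\bar w^T)^2}$ using the lower bound~\eqref{eq:Ftbound} and Jensen for the convex map $w\mapsto\ell_i(w)^2$. Your remark in point~(ii) that $s^\star$ must equal $\tfrac{1}{\hat\lambda}\max_i\ell_i(w^\star)$ (the $s(w^\star)$ of Lemma~\ref{lem:objSPSdambound}, without the square that appears in the theorem statement) is exactly right and is what the paper's proof uses as well.
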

\begin{proof}
Using $z^* = (w^\star, s^{\star})$ defined in~\eqref{eq:2nd_choice_zstar_spsL1} 
, the result in~\eqref{eq:FstarSPSdam_appdx} gives
\begin{equation*}
    F_t (z^*) = \frac{1 - \hat{\lambda}}{2\hat{\lambda}^2} \max_{i=1,\ldots, n} \ell_i(w^\star)^2 \enspace.
\end{equation*}
Following the proof of Theorem~\ref{theo:SPSDAMregret}
up to equation~\eqref{eq:intermediate_line_conv_proof_spsl2} and using the above gives us
\begin{align*}
    \frac{C(\hat{\lambda})}{n}\sum_{i=1}^n 
    \E{\ell_i(\bar{w}^T)^2} &\leq \frac{1}{T} \frac{\norm{z^{0} -z^*}^2}{\gamma(1-\gamma)} 
    + \frac{2}{1-\gamma} \frac{1}{T} \sum_{t=0}^{T-1}\E{F_t(z^*)} \\
    &= \frac{1}{T} \frac{\norm{z^{0} -z^*}^2}{\gamma(1-\gamma)} 
    + \frac{1 - \hat{\lambda}}{\hat{\lambda}^2 (1-\gamma)} \max_{i=1,\ldots, n} \ell_i(w^\star)^2 \enspace, 
\end{align*}
which, after taking expectation, concludes the proof.
\end{proof}

\section{\texttt{ALI-G} and \texttt{SPS}$_{\max}$} \label{sec:Ap-ALIG-SPS-default}
Here we give further details on the \texttt{ALI-G} and \texttt{SPS}$_{\max}$ methods, and their respective parameter choices.
Both methods are very closely related and fit the format 
\begin{equation}
     w^{t+1} \;= \; w^t -\gamma \min\left\{\frac{\ell_i(w^t) - \ell_i^*}{\norm{\nabla \ell_i(w^t)}^2+\varepsilon}, \; \lambda \right\} \nabla \ell_i(w^t) \enspace,
\end{equation}
where $ \ell_i^* >0$, $\varepsilon \geq 0$ and $\lambda >0$ are the different parameters. 
The only significant distinction between \texttt{ALI-G} and \texttt{SPS}$_{\max}$ is that a nonzero $\ell_i^*$ was used and introduced in \texttt{SPS}$_{\max}$ ~\cite{SPS}. 

In~\cite{SPS} the authors provide default parameter settings. In~\cite{ALI-G} the parameter $\lambda$ is tuned for each set of experiments.

\paragraph{Default settings for \texttt{ALI-G}:} $\varepsilon =10^{-5}$,  $\lambda=0.1$, $\gamma =1.$
\paragraph{Default settings for \texttt{SPS}$_{\max}$:} For convex problems $\gamma =0.5$,  $\lambda=100$, $\varepsilon =10^{-8}$ and $\ell_i^* = \min_{w} \ell_i(w).$ For  DNNs  $\gamma =0.2$.

For most models we have that $\ell_i^* =0$ and thus the two methods are in practice the same method. Indeed, in all of our experiments we text problems where $\ell_i^* =0,$ thus the only difference between the methods are the parameter settings.

\section{Experiments Details and Additional Experiments}

\label{sec:ap-experiments}

Here we study the four slack methods  \texttt{SPS}$_{\max}$~\eqref{eq:SPSmax}, \texttt{SPS}$_{dam}$~\eqref{eq:SPSdam},  \texttt{SPSL2}~\eqref{eq:SPSL2} and \texttt{SPSL1}~\eqref{eq:SPSL1} in the simplified setting of logistic regression.
That is, we use binary classification using  logistic regression
\begin{equation} \label{eq:fgenlin}
\ell (w) = \frac{1}{n} \sum \limits_{i=1}^n \phi (x_i^\top w) +\frac{reg}{2}\norm{w}_2^2 \;
\end{equation}
where 
 $
 \phi_i(t) \, =\, \ln\left(1+ e^{-y_i t} \right),
$ 
$(x_i,y_i)\in \R^{d+1}$ are the features and labels for $i=1, \ldots, n$, and $reg >0$ is the regularization parameter.

The data sets we used were 
colon-cancer~\cite{coloncancer}, \texttt{mushrooms}~\cite{uci}, \texttt{phishing}~\cite{uci} and \texttt{cod-rna}~\cite{cod-rna}.
The dimensions and properties of these datasets a can be found in Table~\ref{tab:datasetsconv}. We chose five data sets with varying degrees of over-parametrization and interpolation. By varying degrees of interpolation, we refer to the size of $f(w^\star)$ which is in the last column in Table~\ref{tab:datasetsconv}.

Logistic regression interpolates the first two 
 datasets \texttt{colon-cancer} and \texttt{mushrooms},see Table~\ref{tab:datasetsconv}.  The main difference between these two data is that the resulting model is over-parametrized for \texttt{colon-cancer}  and under-parametrized  for \texttt{mushrooms}. 
 The remaining two data sets 

\begin{table}[!ht]
\centering
\begin{tabular}{c|ccc|c}
   \toprule
 dataset     & $d$  & $n$ & $L_{\max}$  &  $\ell^*$ \\
 \midrule
 \texttt{colon-cancer} &   $2001$ &  $62$ & $137.8$    &0.0  \\ 
 \texttt{mushrooms}   & $112$            &  $8124$ &    $5.5$    &  0.0  \\ 
  \texttt{phishing}    & $68$             & $11055$   &   $7.75$  &    0.142 \\
  \texttt{cod-rna} & $8$             & $59535$   &   $7.75$  &    0.186 \\
 \bottomrule
\end{tabular}
\caption{Binary datasets used in the logistic regression experiments where $\ell^* = \min_w  \frac{1}{n} \sum \limits_{i=1}^n \phi (x_i^\top w) .$}
\label{tab:datasetsconv}
\end{table}

\subsection{Logistic regression experiments}


To better understand the advantages of regularizing the slack, as we have done in the design of \texttt{SPSL1} and \texttt{SPSL2}, we compare these two methods in a series of experiments to their unregularized counterparts \texttt{SPS}$_{\max}$ and \texttt{SPS}$_{dam}$. Here, we use the simpler setting of logistic regression, the details of which can be found in the appendix in Section~\ref{sec:ap-experiments}. 

\newcommand\smallfigsize{3.2cm}

\begin{figure}
    \centering
\begin{minipage}{0.24\linewidth}
\centering
\includegraphics[width=\linewidth]{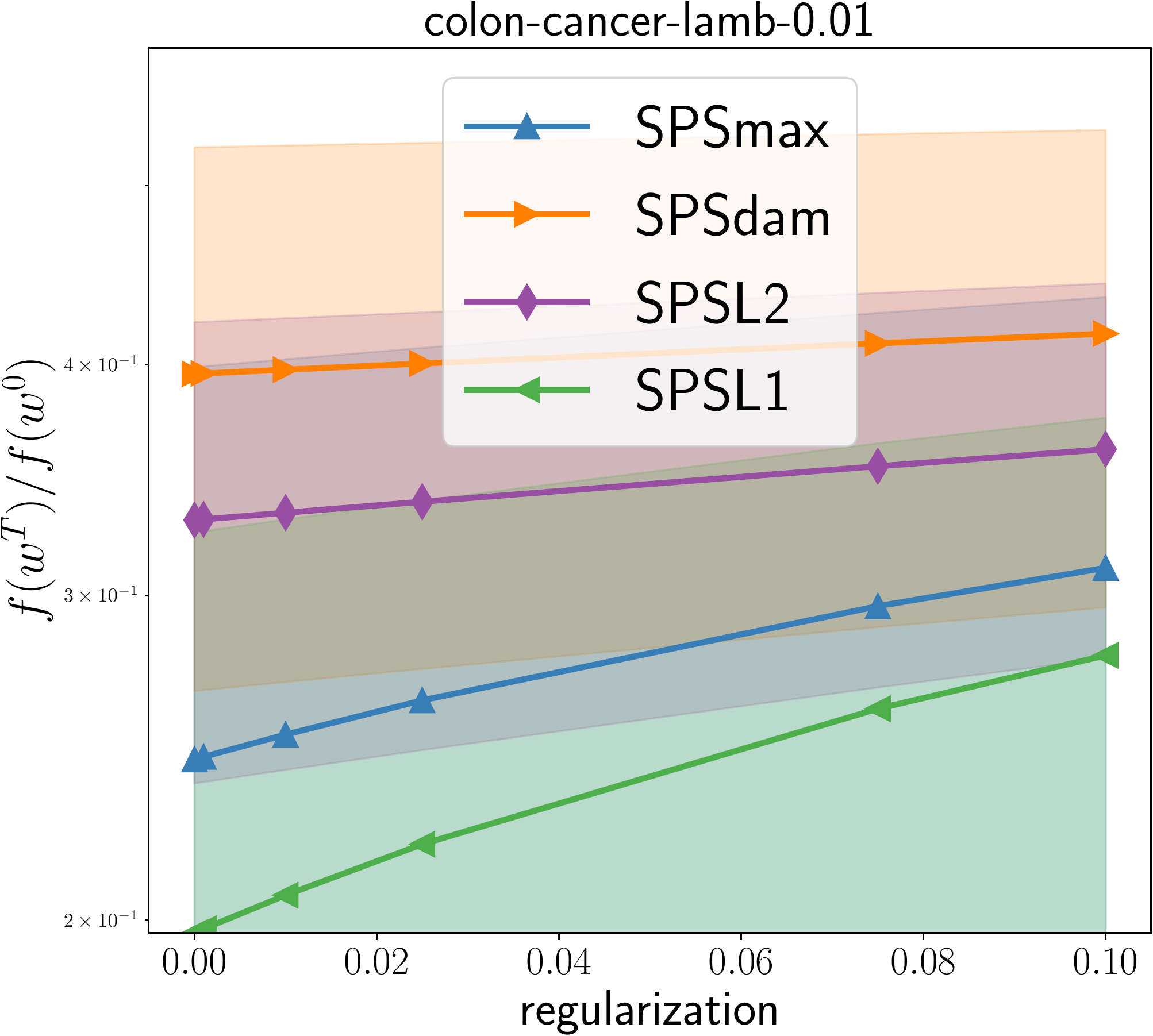}
\centerline{\footnotesize{(a) colon-cancer $\lambda = 0.01$}}
\end{minipage}
\begin{minipage}{0.24\linewidth}
\centering
\includegraphics[width=\linewidth]{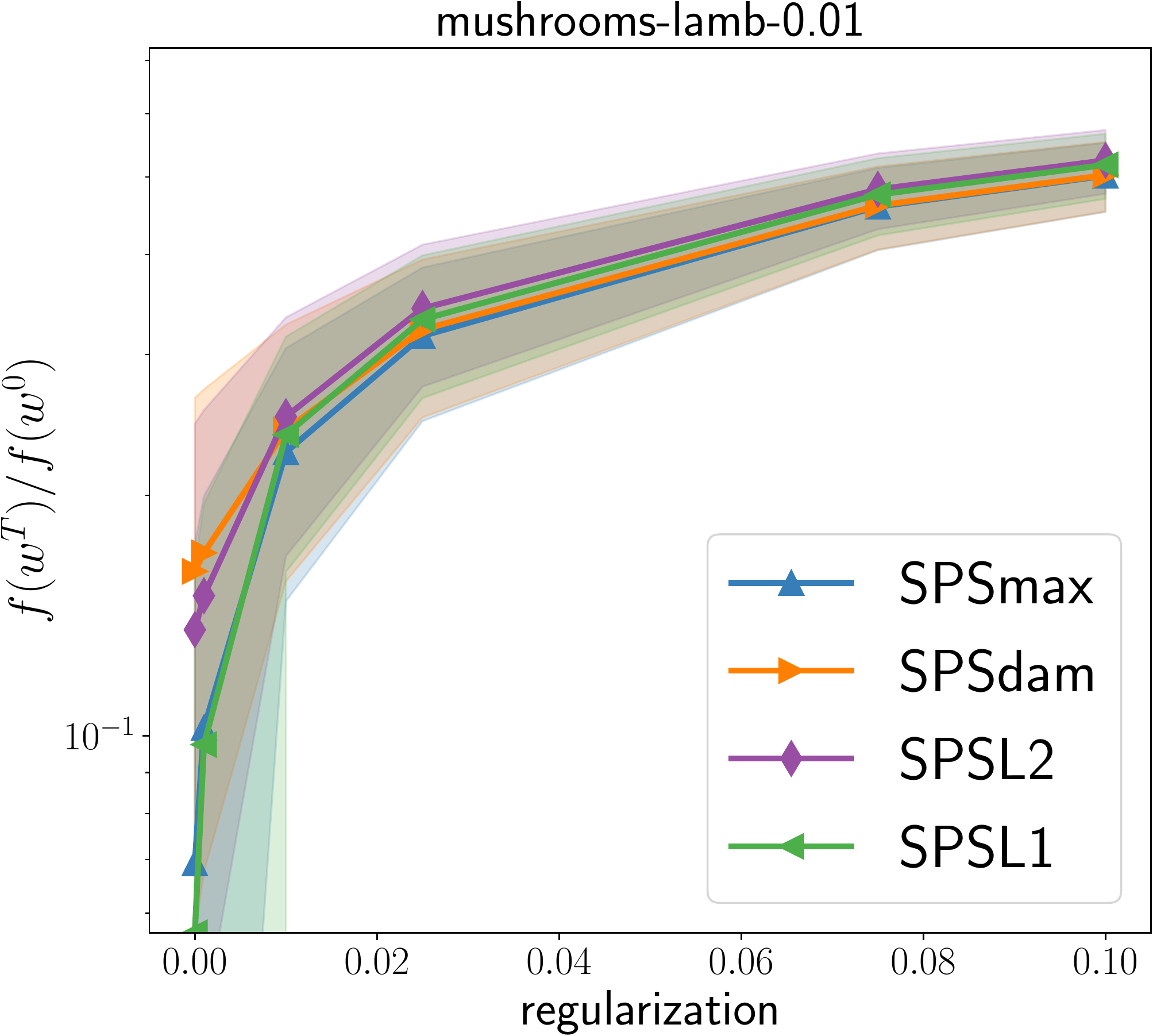}
\centerline{\footnotesize{(b) mushrooms $\lambda = 0.01$}}
\end{minipage}
   \begin{minipage}{0.24\linewidth}
\centering
\includegraphics[width=\linewidth]{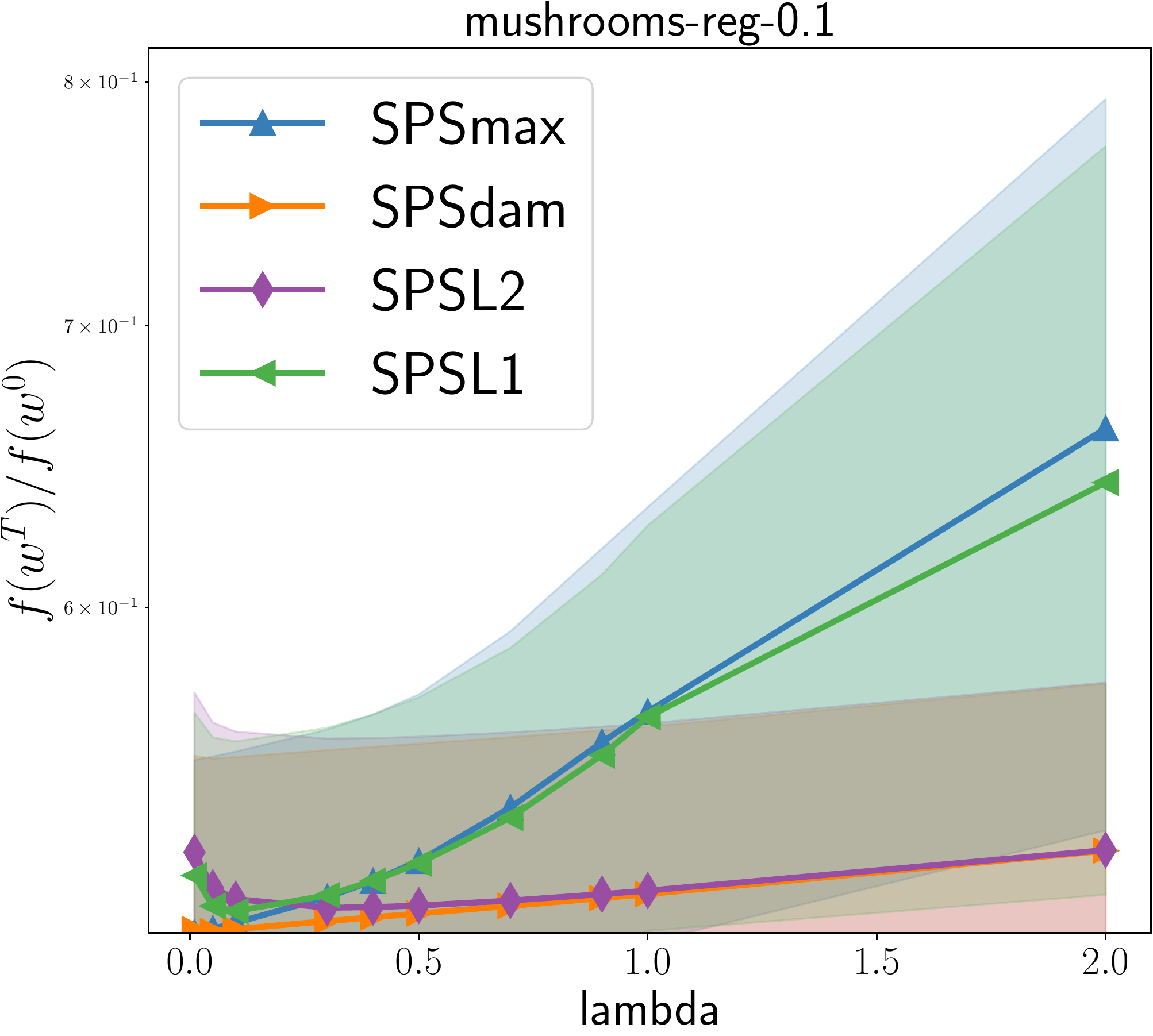}
\centerline{\footnotesize{(c) mushrooms $reg = 0.1$}}
\end{minipage} 
   \begin{minipage}{0.24\linewidth}
\centering
\includegraphics[width=\linewidth]{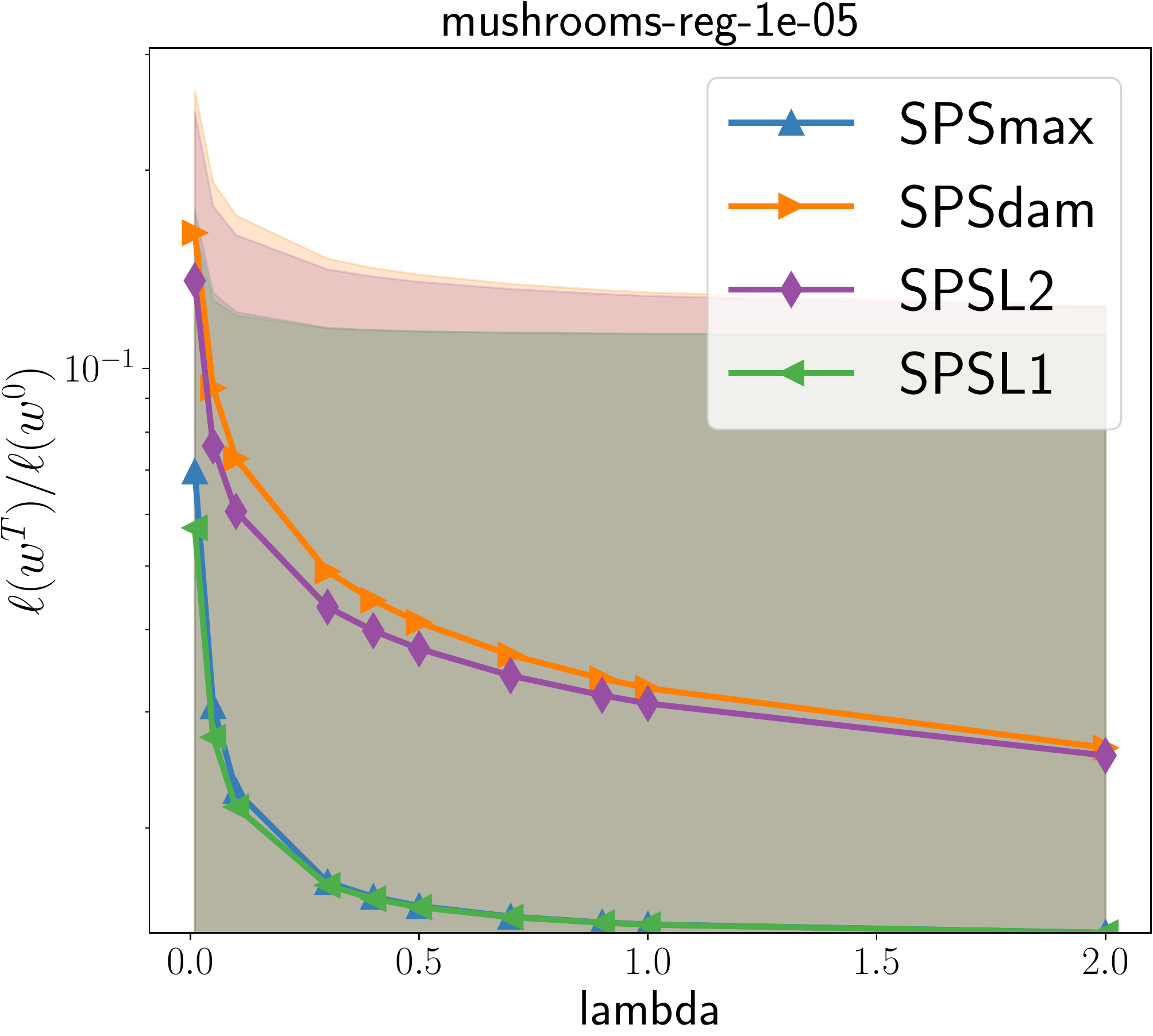}
\centerline{\footnotesize{(d) mushrooms $reg = 10^{-5}$}}
\end{minipage} 
    \caption{Relative suboptimality ($\ell(w^T)/\ell(w^0)$ y-axis)  for each method when using a fixed regularization $reg$ (x-axis) (Figures (a) and (b)) or a fixed $\lambda$ (Figures (c) and (d)). Each method was given a budget of 100 epochs on logistic regression.
    }
    \label{fig:regsens-main}
\end{figure}
 
First, we compare the methods in terms of sensitivity to the interpolation. We can generate problems that are further from interpolation by increasing the L2 regularization in logistic regression, as we have done in Figure~\ref{fig:regsens-main}.
For most data sets and settings of $\lambda$, the different methods showed to be approximately equally sensistive to increasing the regularization, excluding one case. In the case of \texttt{colon-cancer} data set with $\lambda =0.01$, we found that \texttt{SPSL1} and \texttt{SPSL2} were significantly faster across all regularization parameters than their unregularized counterparts \texttt{SPS}$_{\max}$ and \texttt{SPS}$_{dam}$, respectively, as we can see in the left of Figure~\ref{fig:regsens-main}.

We then tested the sensitivity of each method to $\lambda$ in Figure~\ref{fig:regsens-main}. Through these experiments we have two consistent findings. Our first finding is that when the regularization is large (far from interpolation), all methods benefit from a smaller $\lambda$ around $0.1$, see the left of Figure~\ref{fig:regsens-main}. Similarly, large $\lambda$ around $2$ was better when using a small regularization, see the right of Figure~\ref{fig:regsens-main}. Our second finding is that generally both \texttt{SPSL2} and \texttt{SPS}$_{dam}$ benefited from a larger $\lambda$, with $\lambda =1$ being a good default setting, one we will use later.

\subsection{Sensitivity to $\lambda$}
\label{sec:Ap-sens-lambda}

Here we investigate the sensitivity of the slack methods to their only parameter $\lambda$, which we call the slack parameter.

Our overall finding is that for problems that are close to interpolation, all methods work well with $\lambda$ close to $1$ or larger. For problems that are far from interpolation, we find that  they work much better for $\lambda$ close to zero. See Figure~\ref{fig:lambdaL1}  where we test each method for a fixed $\lambda$ (x-axis) and then run each  method for 100 epoches and register the final  training error (suboptimality $\ell(w^T)/\ell(w^0)$). The top row of figures have a larger regularization of $reg = 0.1$, and thus are all far from interpolation. For this row of problems we can see that $\lambda =0.01$ is best for both \texttt{SPSL1} and \texttt{SPS}$_{\max}$, whereas around $\lambda = 0.4$ is best for \texttt{SPSL2} and \texttt{SPS}$_{dam}.$ In the bottom row of Figure~\ref{fig:lambdaL1} we see that a large $\lambda =2$ is good for all methods.

\begin{figure}
    \centering
    \includegraphics[width=\figsizefour]{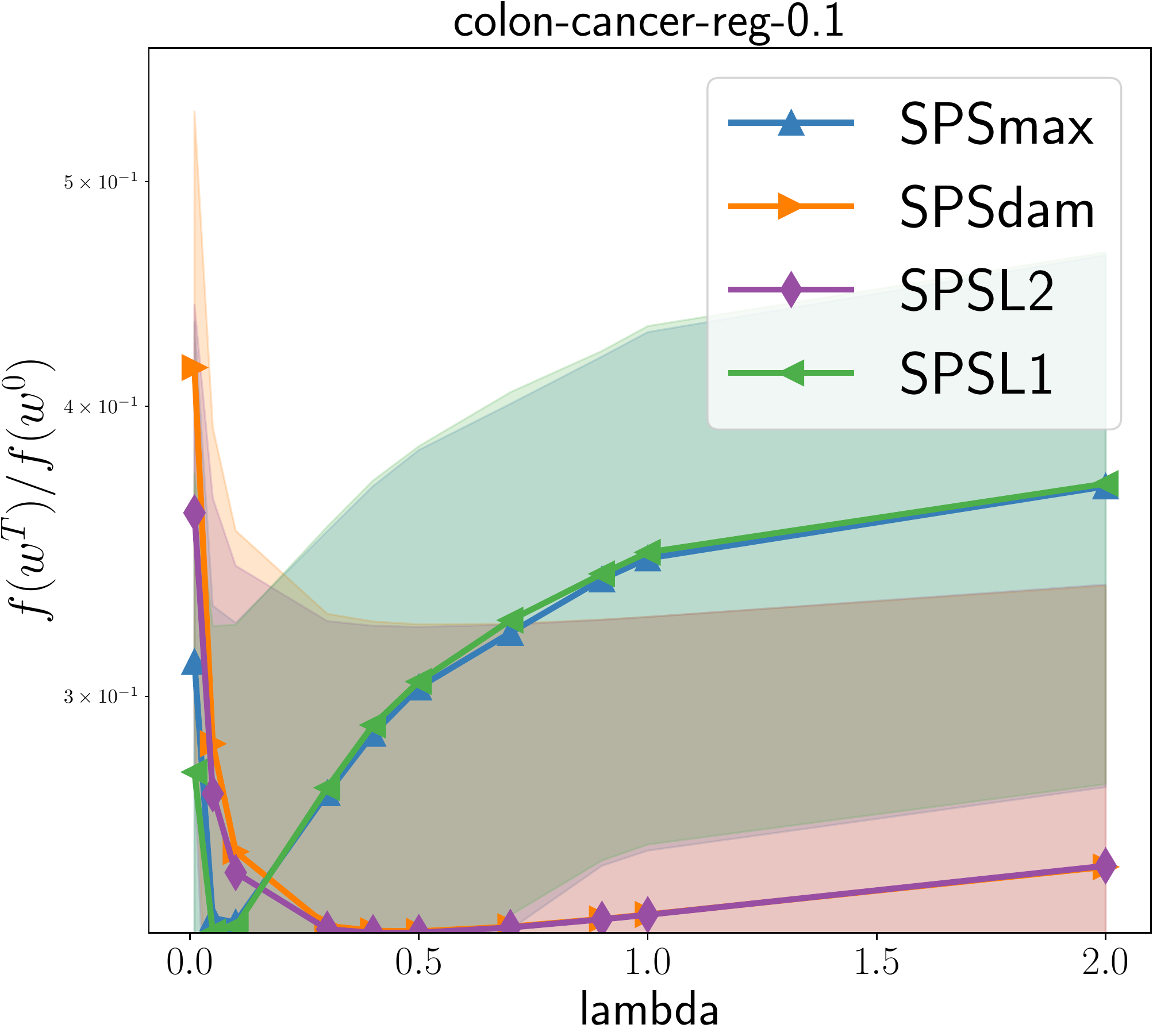} 
    \includegraphics[width=\figsizefour]{mushrooms-reg-0.1}
     \includegraphics[width=\figsizefour]{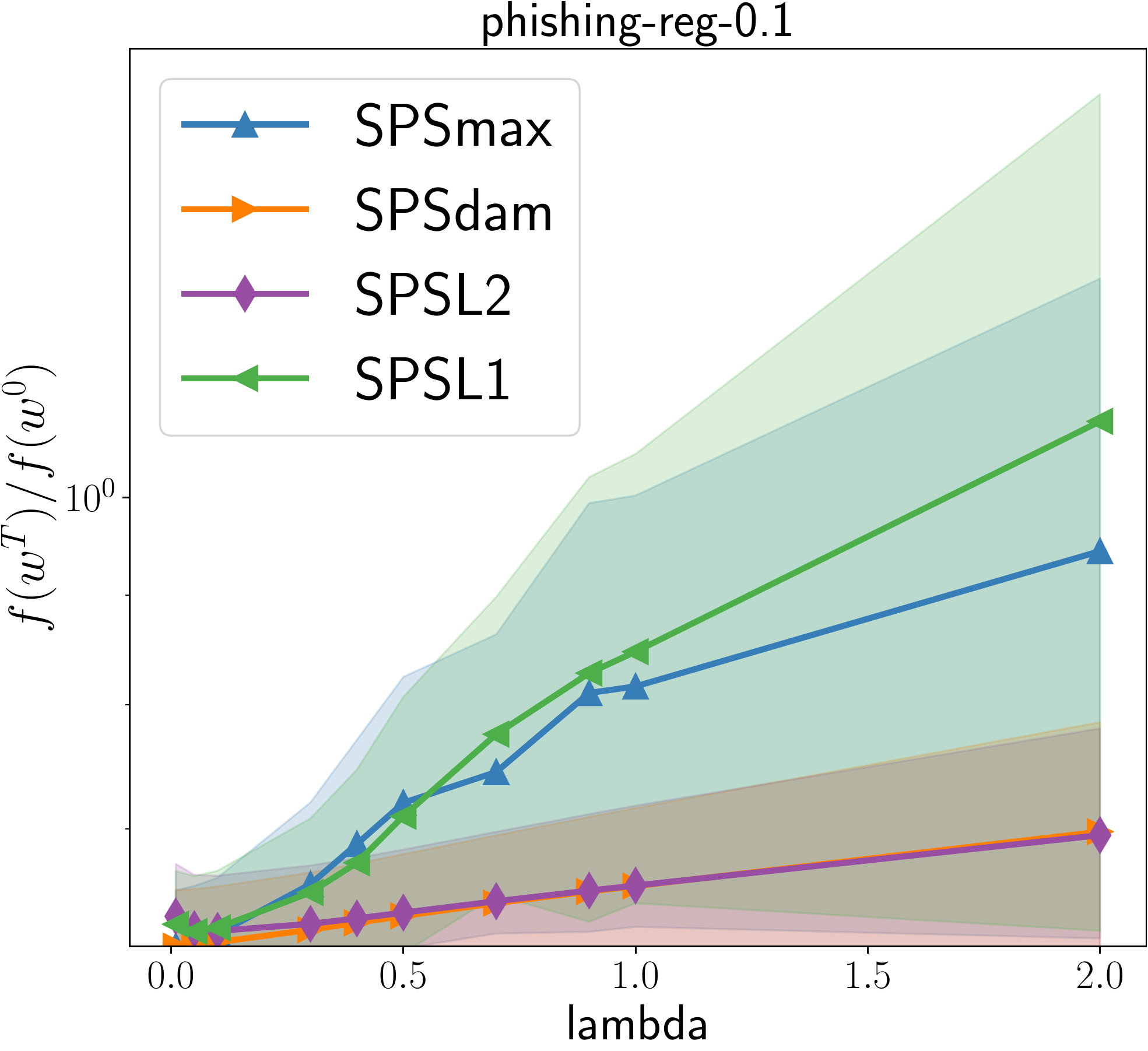} 
    \includegraphics[width=\figsizefour]{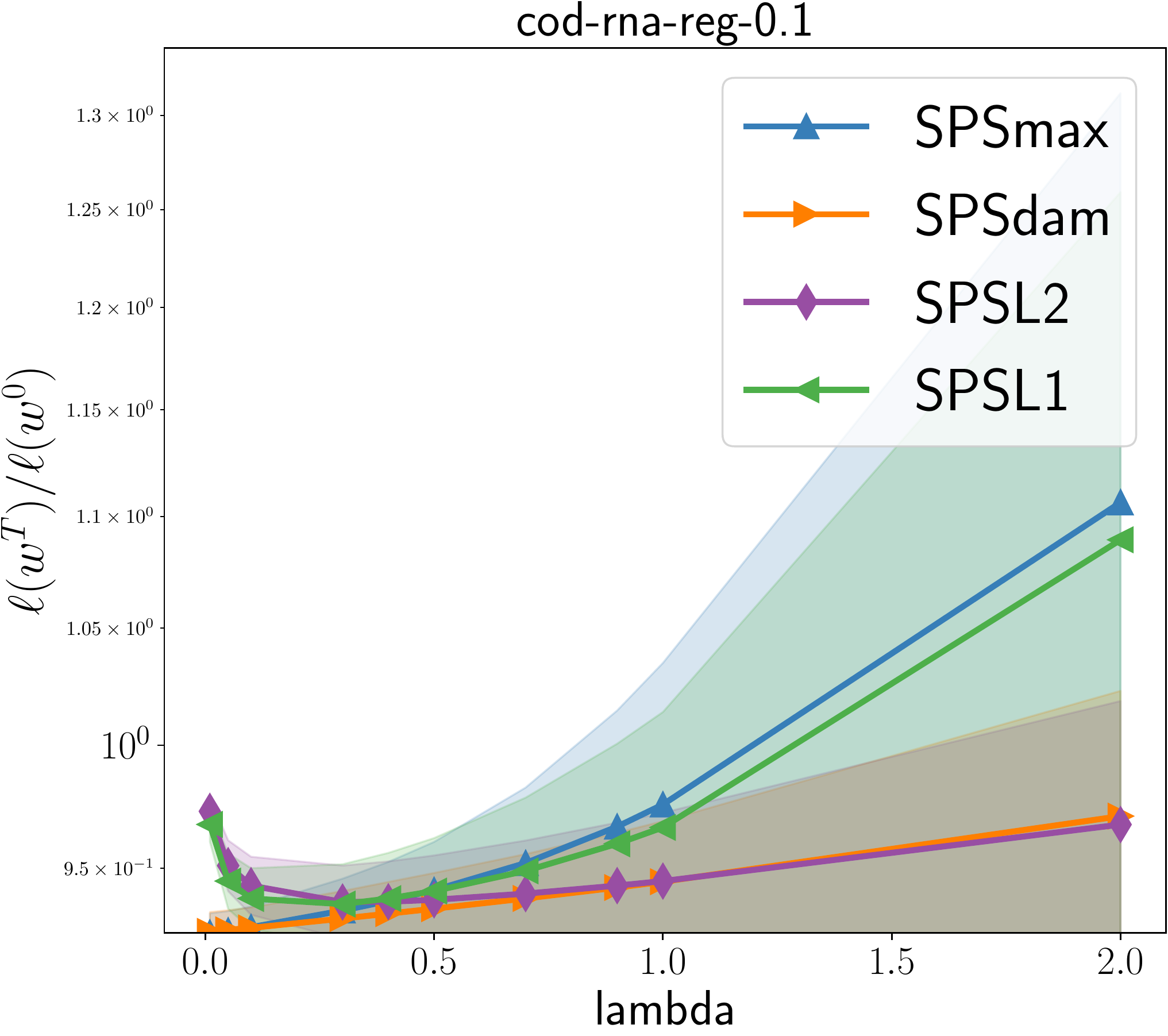}
    \includegraphics[width=\figsizefour]{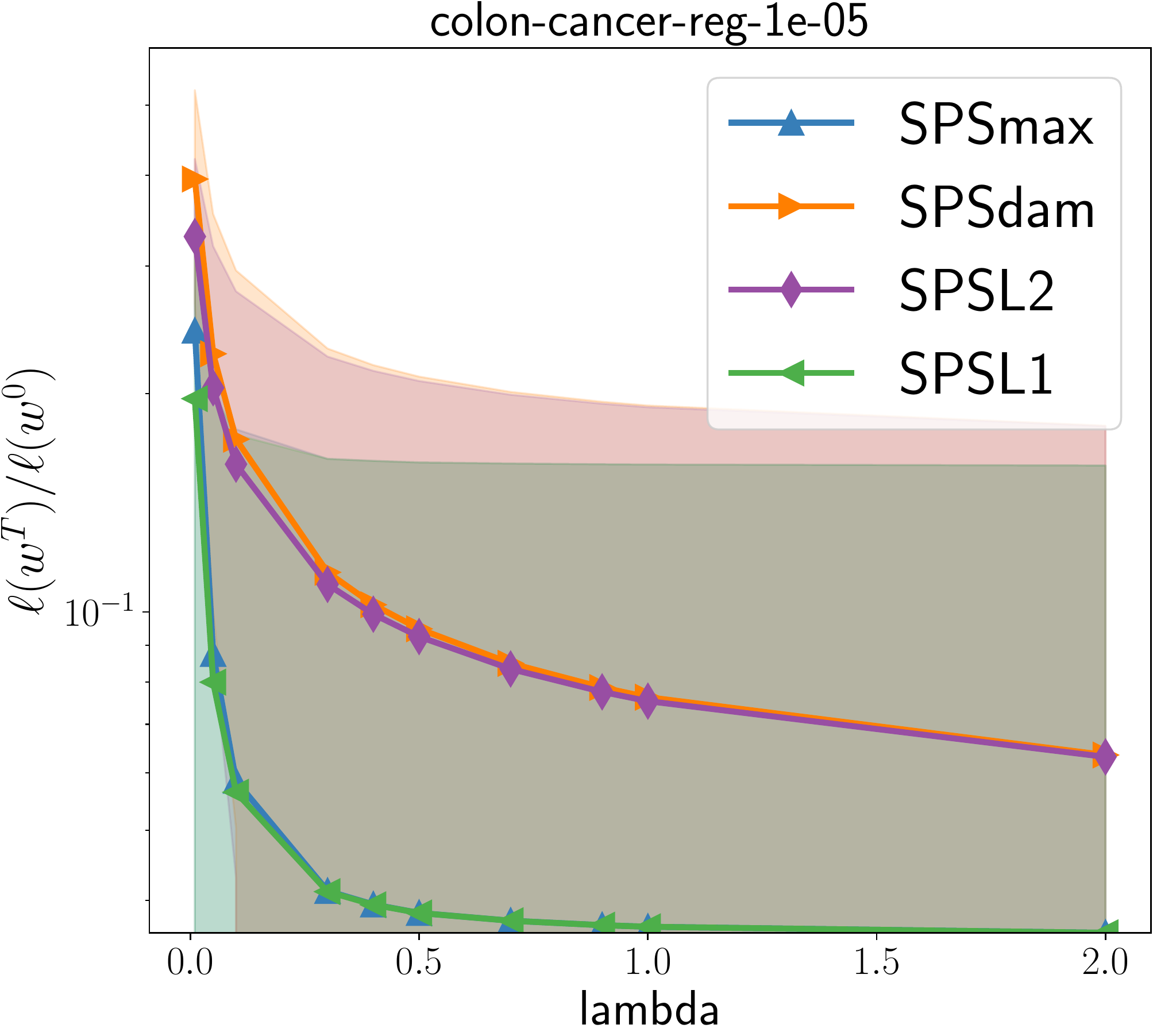} 
    \includegraphics[width=\figsizefour]{mushrooms-reg-1e-05}
     \includegraphics[width=\figsizefour]{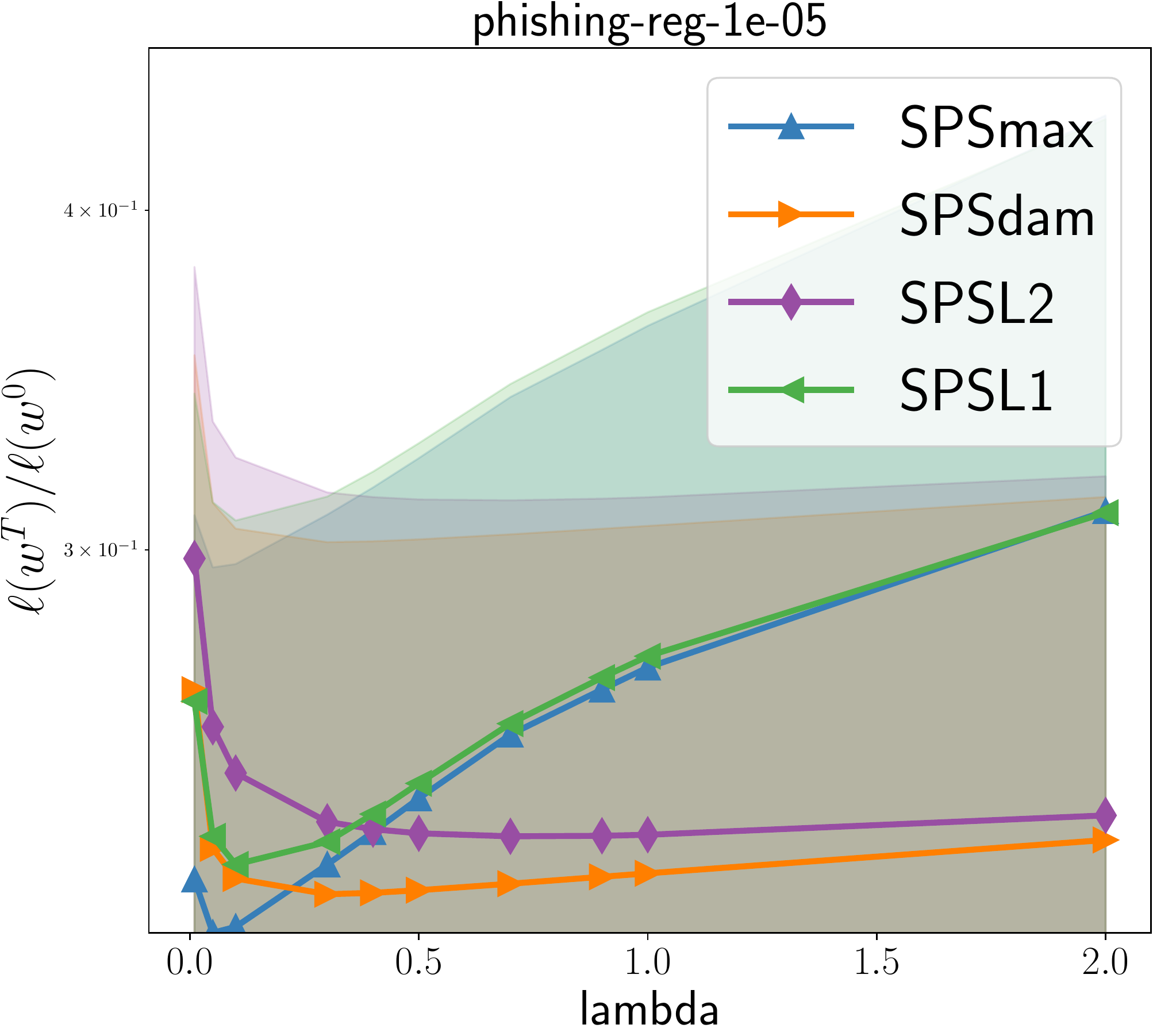} 
    \includegraphics[width=\figsizefour]{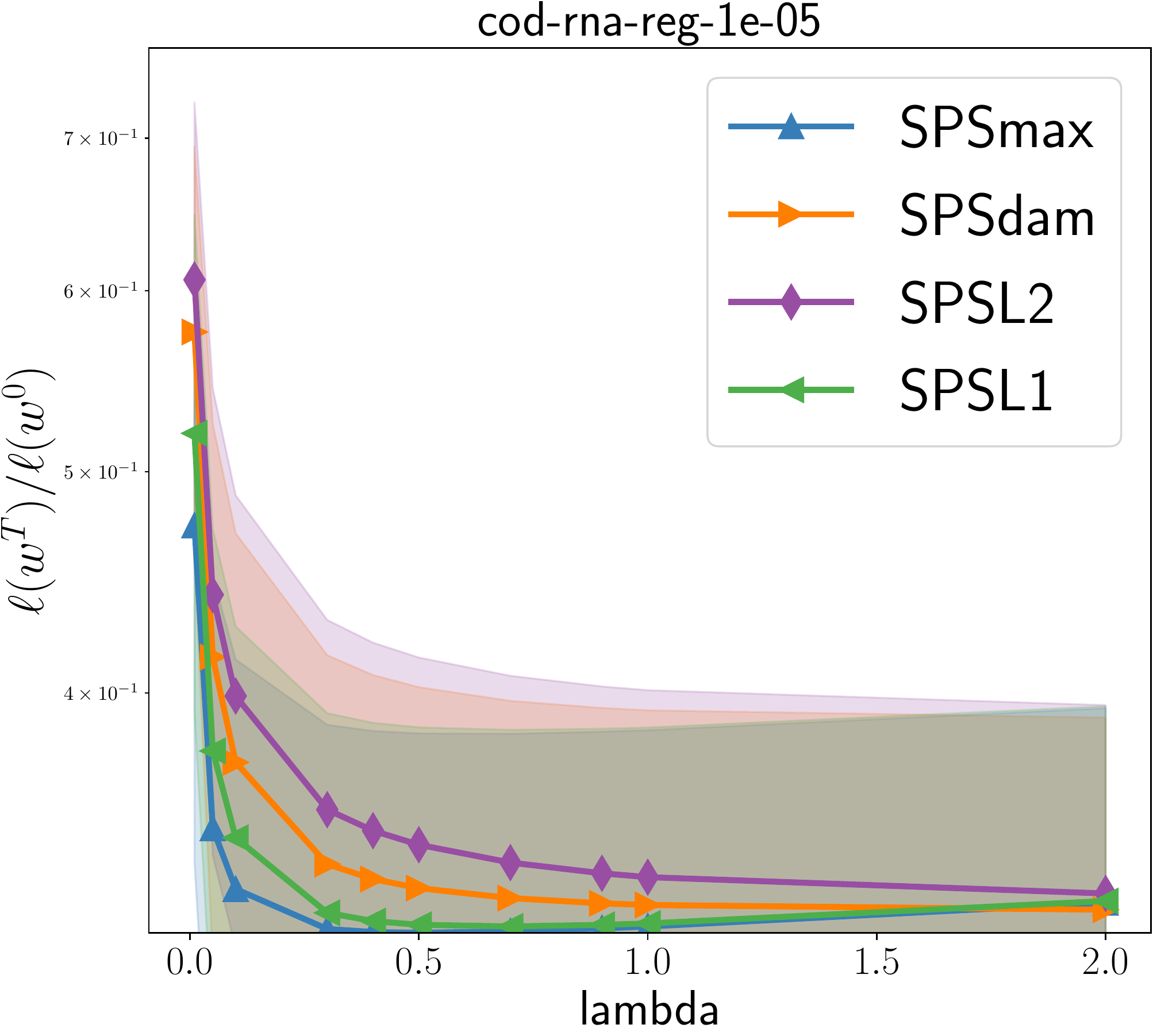}
    \caption{Relative suboptimality ($\ell(w^T)/\ell(w^0)$) for each method with slack parameter $\lambda$ (x-axis) after given a budget of 100 epochs on logistic regression. For the top row of figures, we used a regularization of $reg = 0.1$. For the bottom row, we used a regularization of $reg = 10^{-5}$. The data sets used from left to right are  \texttt{colon-cancer},  \texttt{mushrooms}, \texttt{phishing}, and \texttt{cod-rna}. We find consistently that for problems far from interpolation, $\lambda$ small works best, while for    problems close to interpolatiom $\lambda$ large works best. }
    \label{fig:lambdaL1}
\end{figure}

\subsection{Sensitivity to Interpolation}

Here we use regularization as a proxy for sensitivity to interpolation and compare the four slack methods with respect to their sensitivity to regularization, see Figure~\ref{fig:regsens}. In the top row of~\ref{fig:regsens} we have fixed $\lambda =1$ and in the bottom row we have fixed $
\lambda = 0.01.$ 
Our overall finding is that the methods are almost equally sensitive, with only \texttt{SPSL1} having an advantage in \texttt{colon-cancer} with $\lambda =0.01$. In all the other plots, \texttt{SPSL1} overlaps with \texttt{SPS}$_{\max}$ and \texttt{SPSL2} overlaps with \texttt{SPS}$_{dam}$, with only minor differences.
\begin{figure}
    \centering
    \includegraphics[width=\figsizefour]{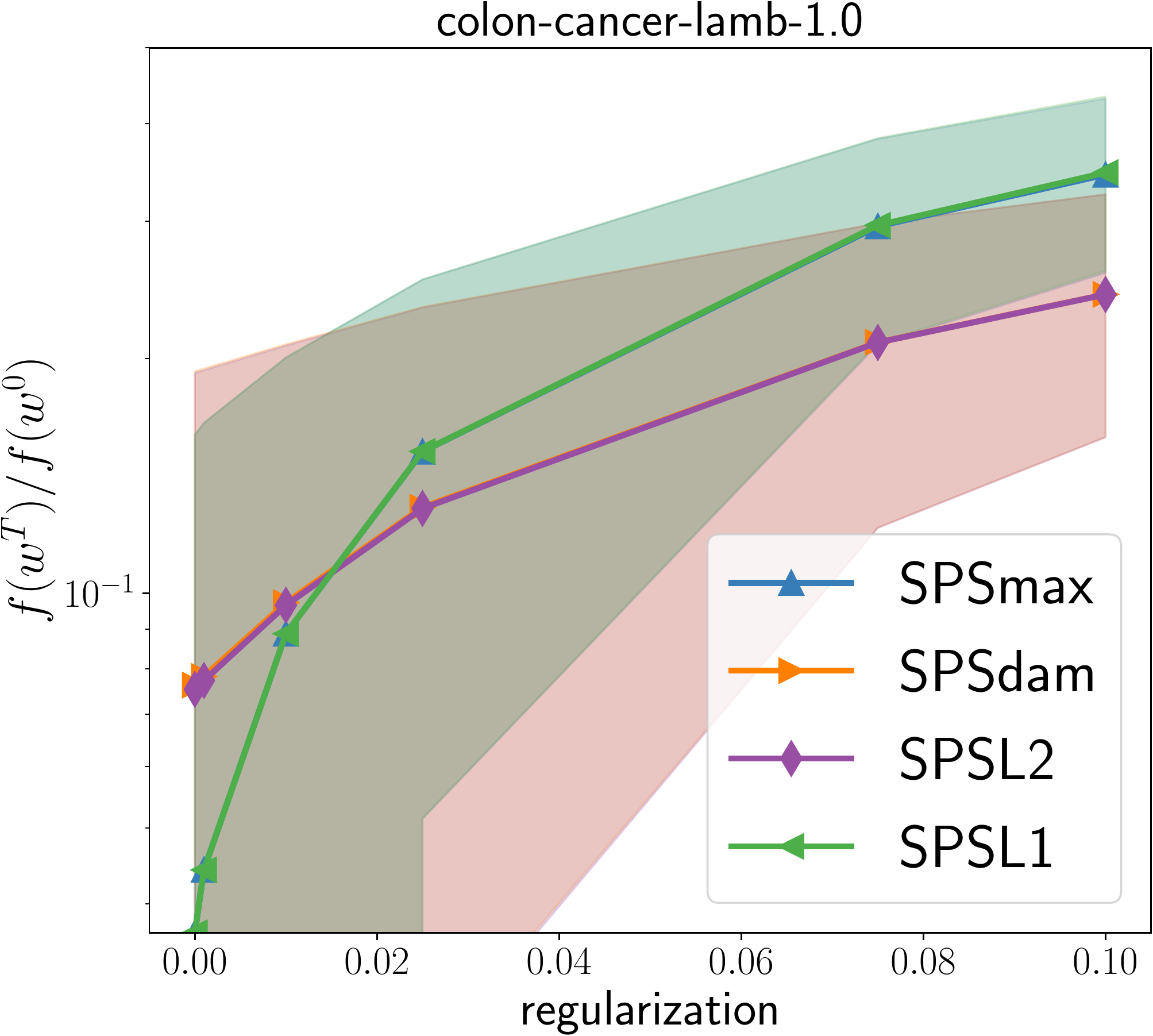} 
    \includegraphics[width=\figsizefour]{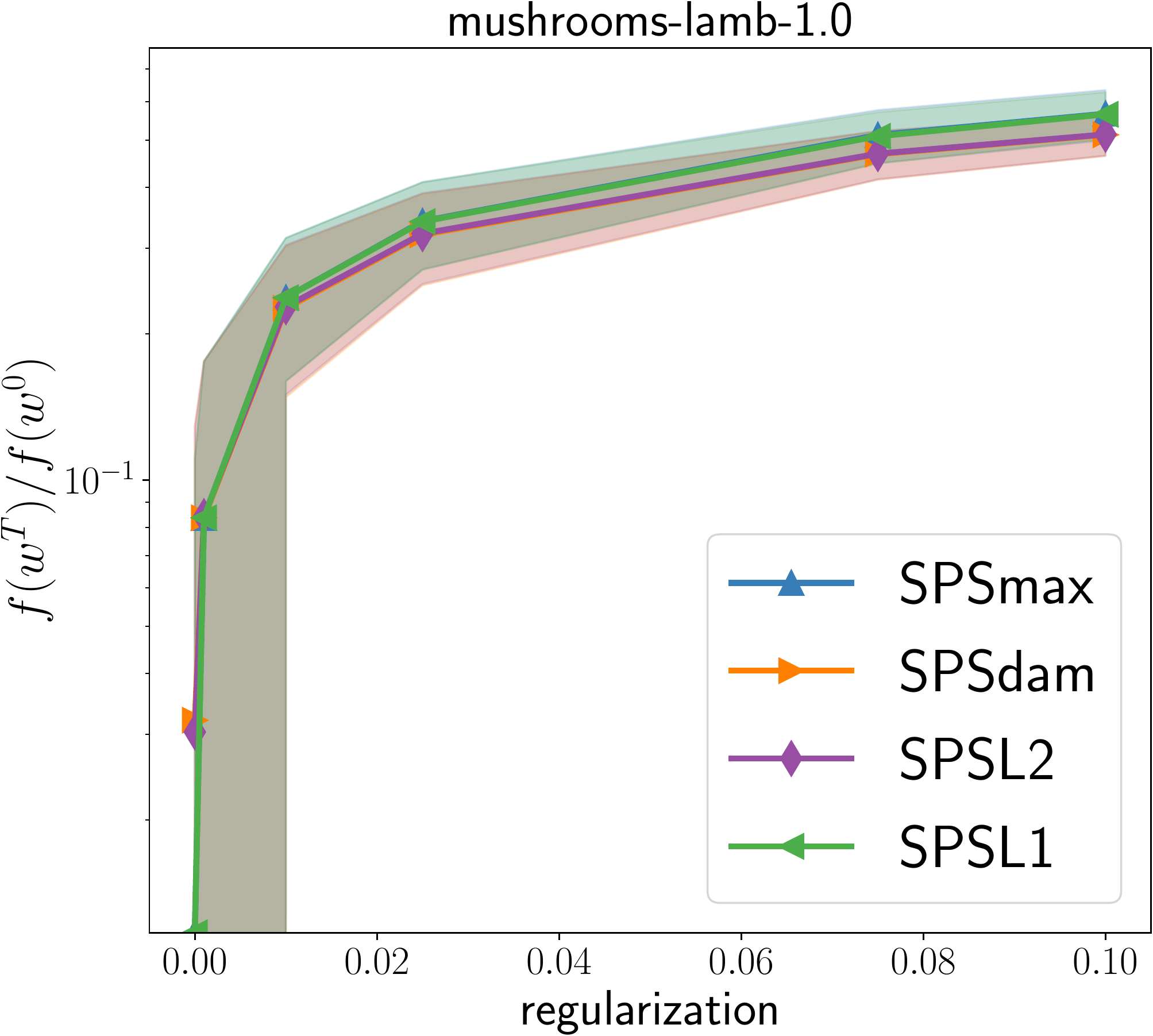}
    \includegraphics[width=\figsizefour]{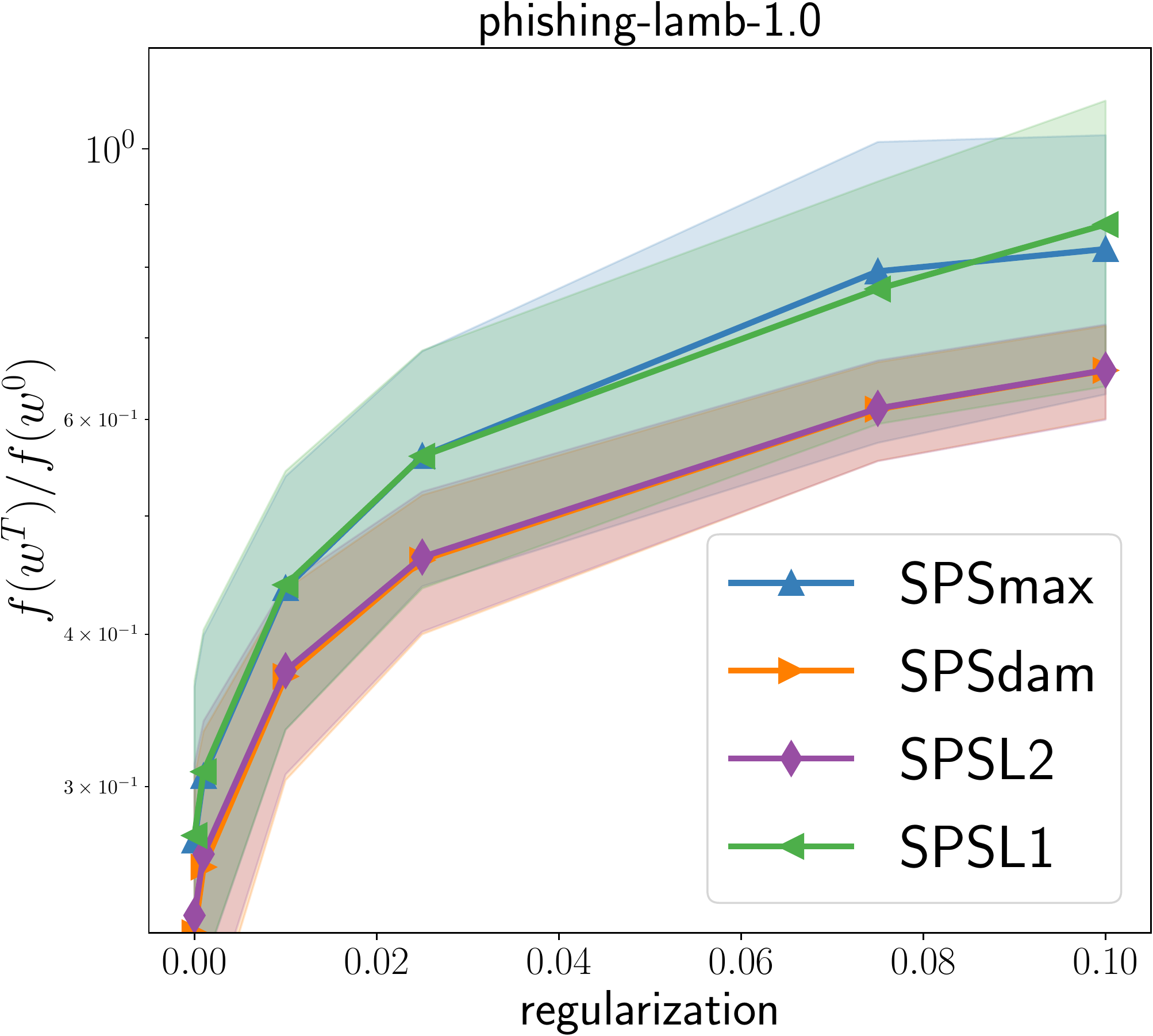} 
    \includegraphics[width=\figsizefour]{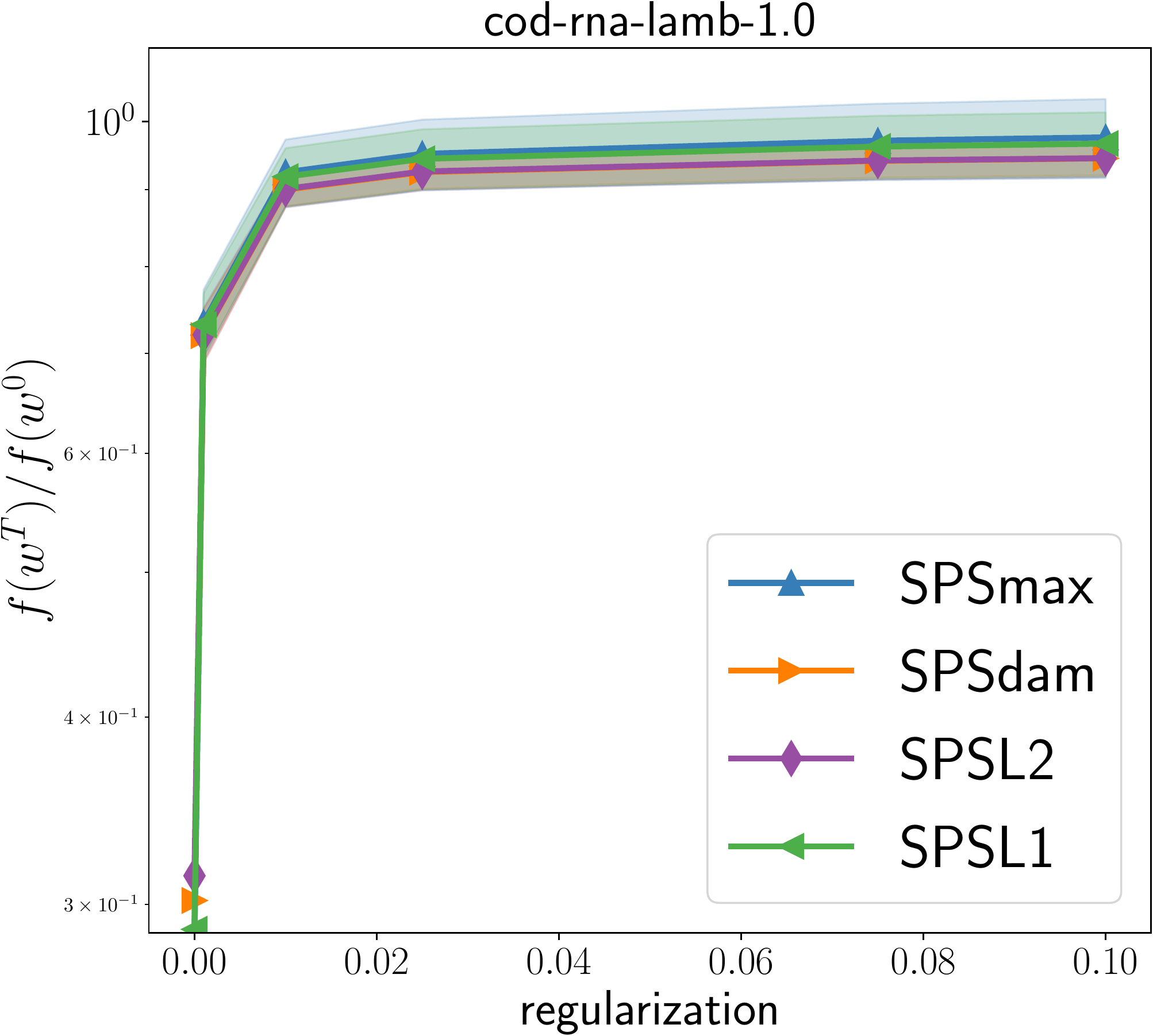}
        \includegraphics[width=\figsizefour]{colon-cancer-lamb-0.01} 
    \includegraphics[width=\figsizefour]{mushrooms-lamb-0.01}
    \includegraphics[width=\figsizefour]{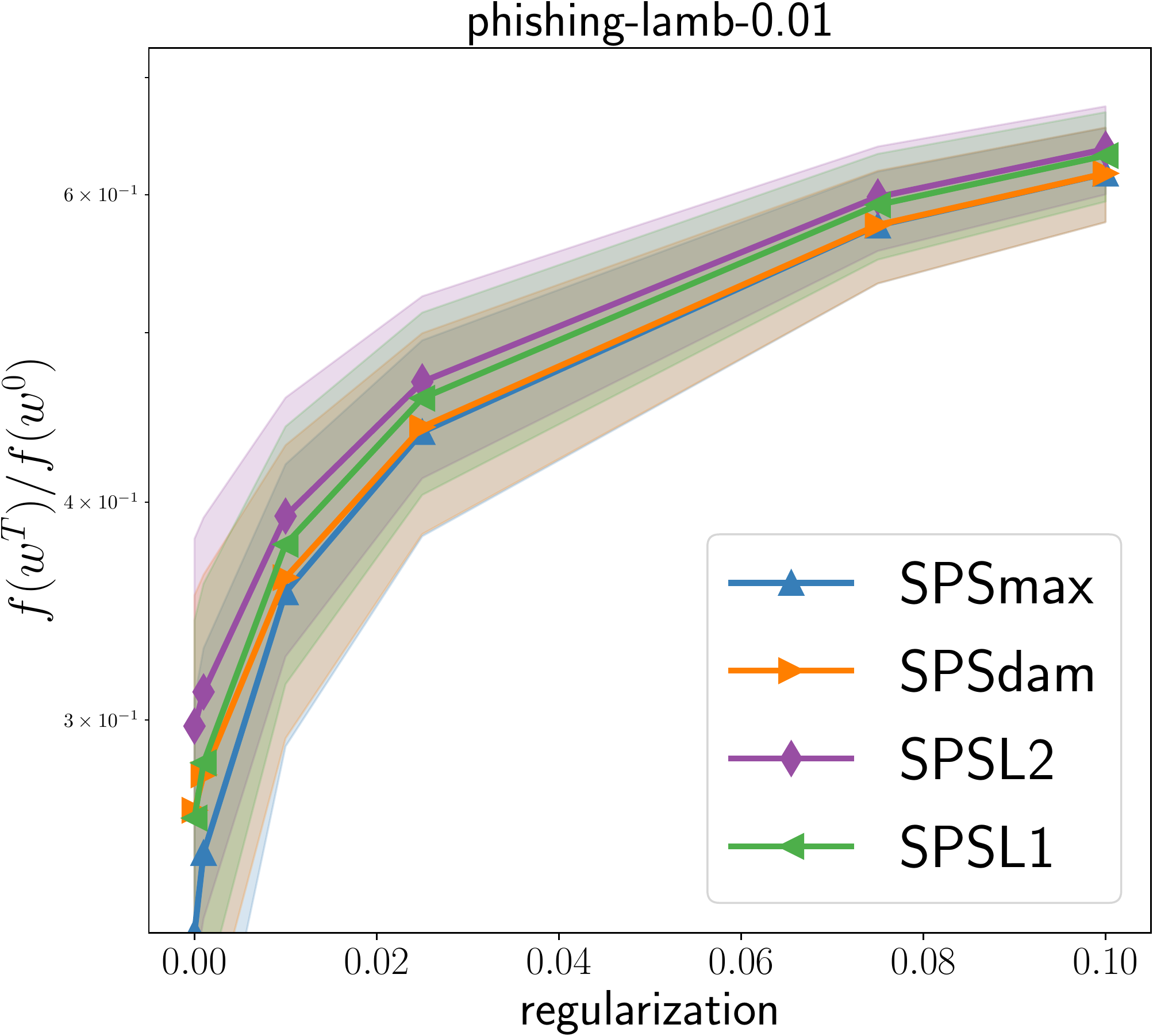} 
    \includegraphics[width=\figsizefour]{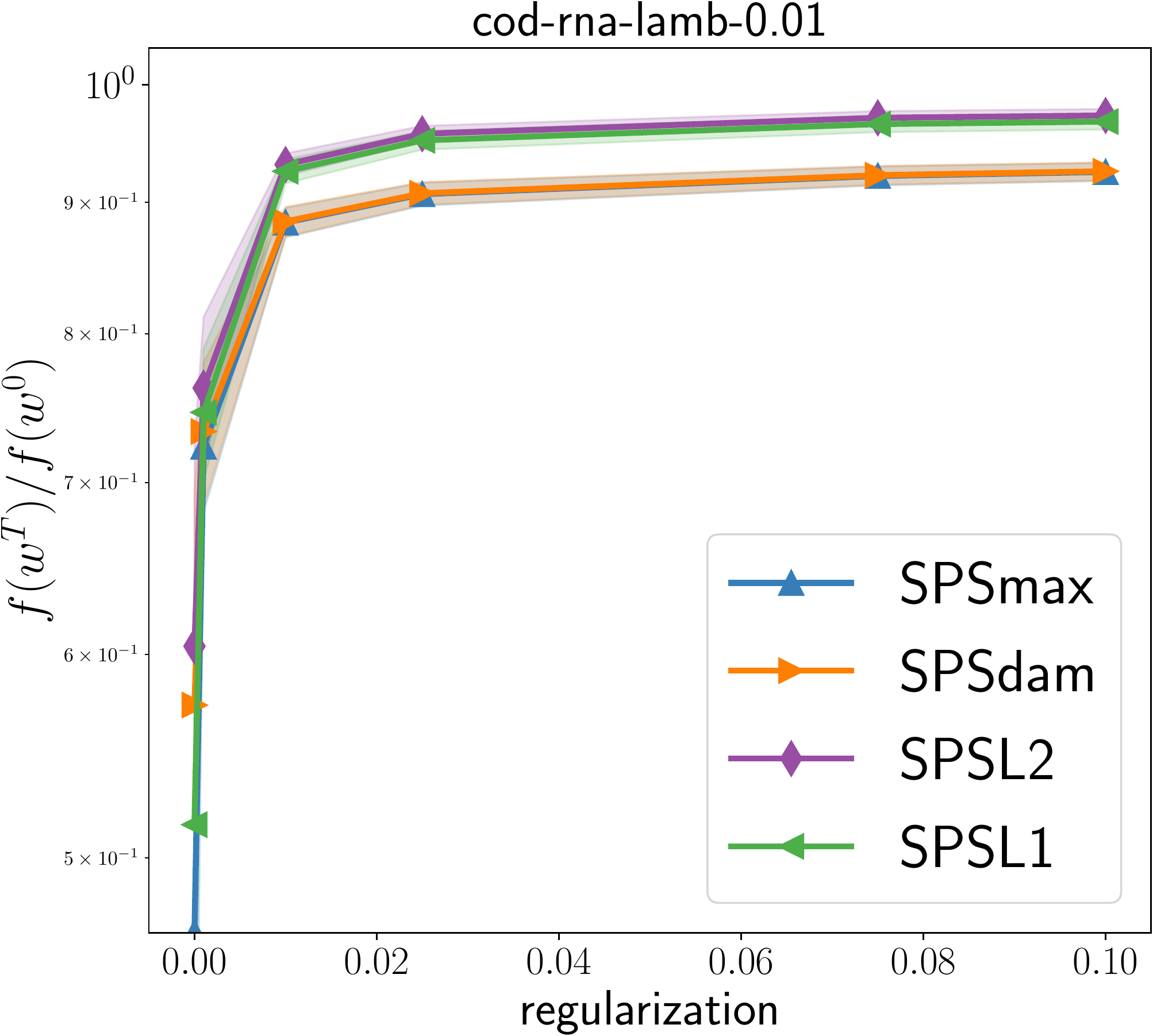}
    \caption{Relative suboptimality ($\ell(w^T)/\ell(w^0)$) for each method when using regularization $reg$ (x-axis) after given a budget of 100 epochs on logistic regression. On the top row, all methods used $\lambda =1.$  On the bottom row, all methods used $\lambda = 0.01.$ The data sets used from left to right are  \texttt{colon-cancer},  \texttt{mushrooms}, \texttt{phishing}, and \texttt{cod-rna}.  }
    \label{fig:regsens}
\end{figure}



\subsection{Comparative study of methods}
Here we compare the empirical convergence of the  four slack methods against \texttt{SGD}  on four different data sets in Figures~\ref{fig:comparsmall} and~\ref{fig:comparbig}. For \texttt{SGD}, we used the $\gamma = 1/2L_{\max}$ as justified~\cite{gower2019sgd}, where $L_{\max} =\frac{\max_{i=1,\ldots, n}}{4} \norm{x_i}^2 + reg. $

We first compare the methods when using a large regularization in Figure~\ref{fig:comparbig}. In accordance to our findings in Section~\ref{sec:Ap-sens-lambda}, we choose a small slack parameter $\lambda =0.01$ for the slack methods.
We then compare the methods when using a small regularization in Figure~\ref{fig:comparbig}, and thus use a large slack parameter $\lambda =1.0$ for the slack methods.

When using a large regularization, as we have in Figure~\ref{fig:comparbig}, the best all round method was \texttt{SGD}. The only exception was the \texttt{colon-cancer} data where \texttt{SPSL1} was clearly the fastest followed by \texttt{SPS}$_{\max}$. We can also see that \texttt{SPSL2} has an advantage over \texttt{SPS}$_{dam}$ on the \texttt{colon-cancer} data set, and matches if (or slightly improves) everywhere else.

When using a small regularization of $reg= 10^{-5}$ in  Figure~\ref{fig:comparsmall}, the \texttt{SPSL1} and  \texttt{SPS}$_{\max}$ perfectly overlapping, and the two fastest methods with exception of the \texttt{phishing} data set where they struggle to converge. They are faster on when using a small regularization because the resulting problem is now closer to interpolation. In Figure~\ref{fig:comparsmall}, \texttt{SPSL2} also closely matches \texttt{SPS}$_{dam}$. This matching is predictable, since for larger $\lambda$, the new regularized slack methods  \texttt{SPSL1}  and  \texttt{SPSL2} converge to  \texttt{SPS}$_{\max}$ and \texttt{SPS}$_{dam}$.

Overall, we can see that the two new regularized slack methods  \texttt{SPSL1}  and  \texttt{SPSL2} either offer small improvements over their counterparts  \texttt{SPS}$_{\max}$ and \texttt{SPS}$_{dam}$, respectively, or have the same performance. Thus supporting our hypothesis that by regularizing the slack results in a more stable method.

\begin{figure}
    \centering
    \centering
    \includegraphics[width=\figsizefour]{colon-cancer-compare-0.01-loss-reg-1.00e-01} 
    \includegraphics[width=\figsizefour]{mushrooms-compare-0.01-loss-reg-1.00e-01}
    \includegraphics[width=\figsizefour]{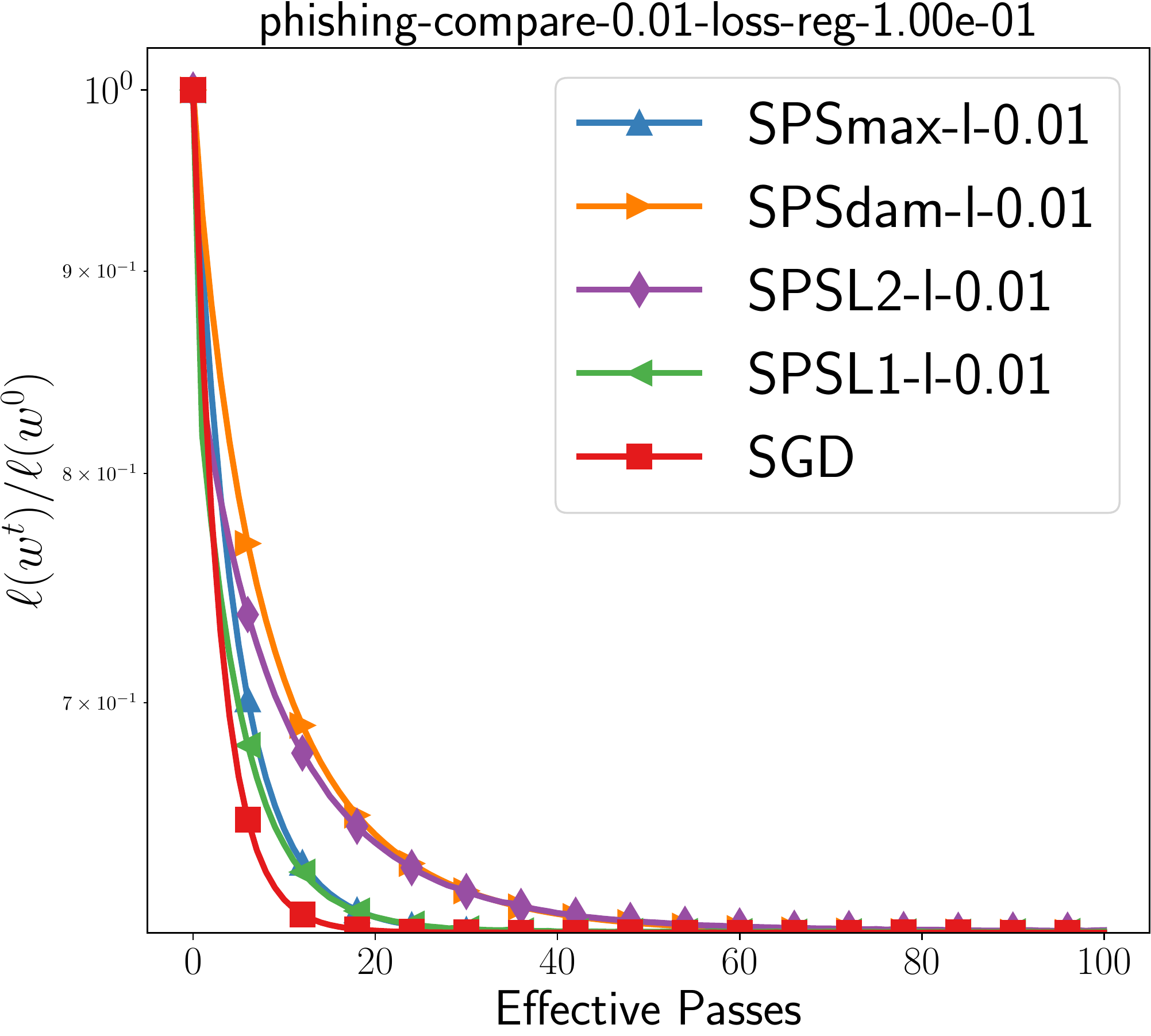} 
    \includegraphics[width=\figsizefour]{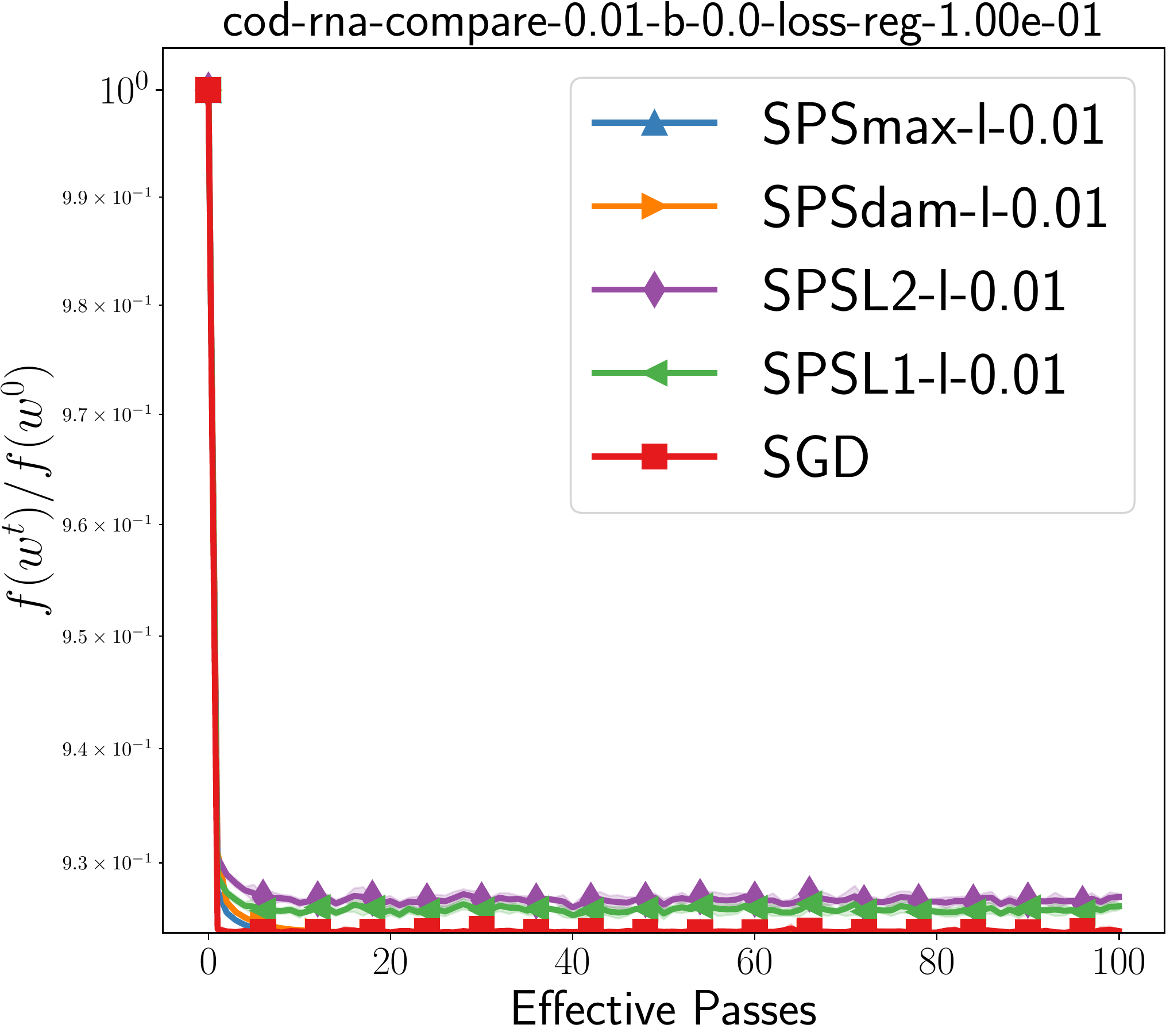} \\
    \includegraphics[width=\figsizefour]{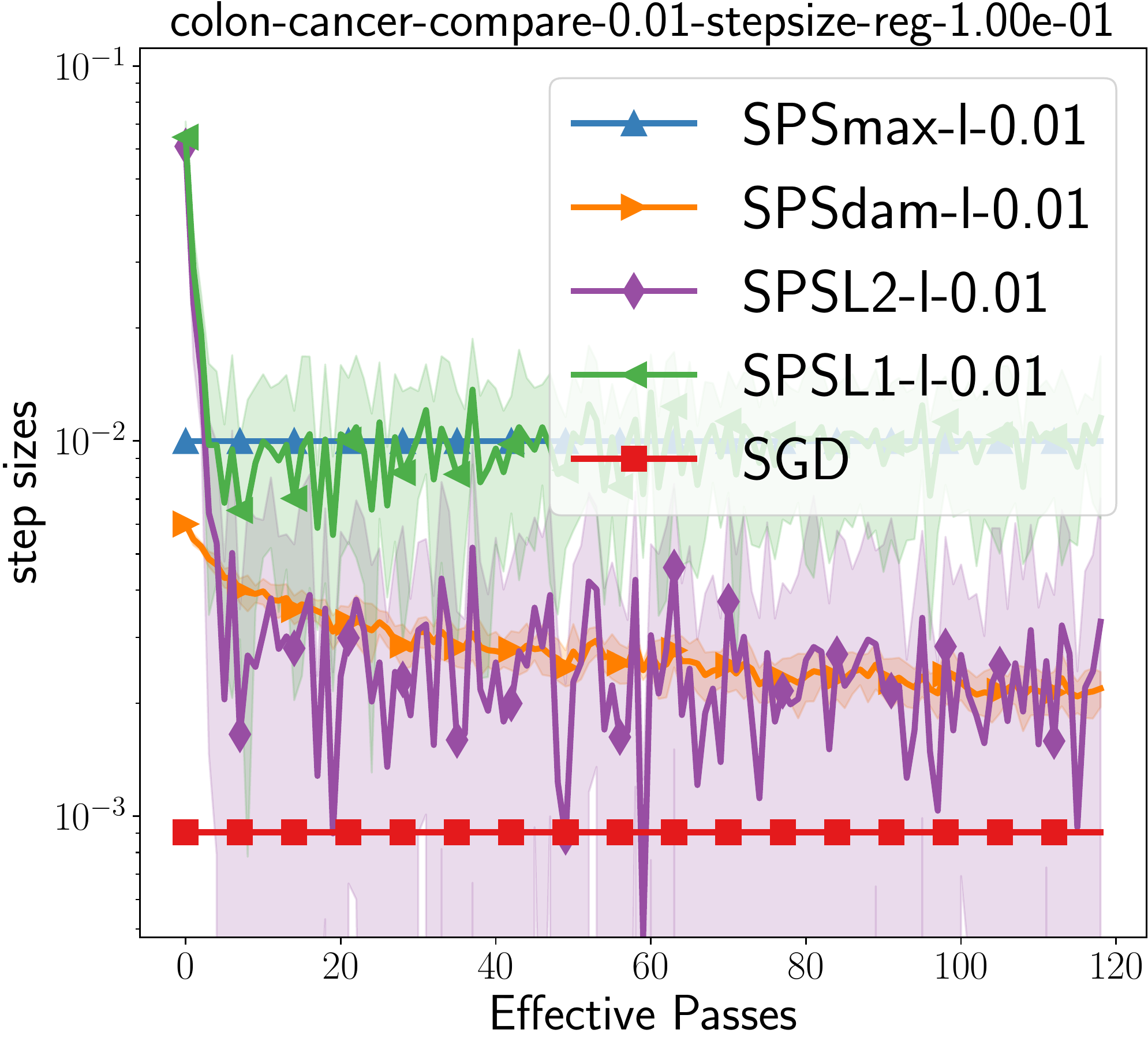} 
    \includegraphics[width=\figsizefour]{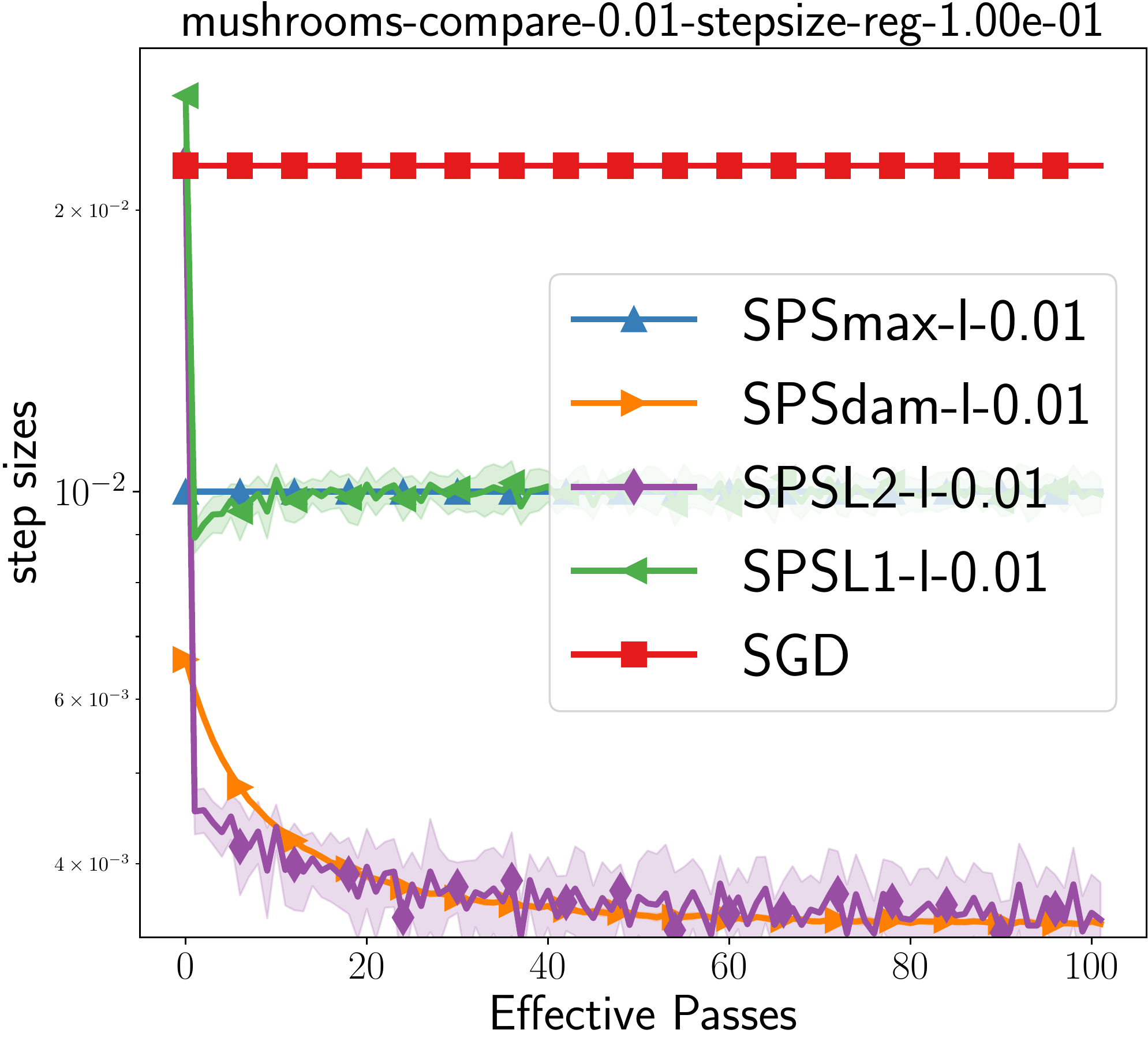}
    \includegraphics[width=\figsizefour]{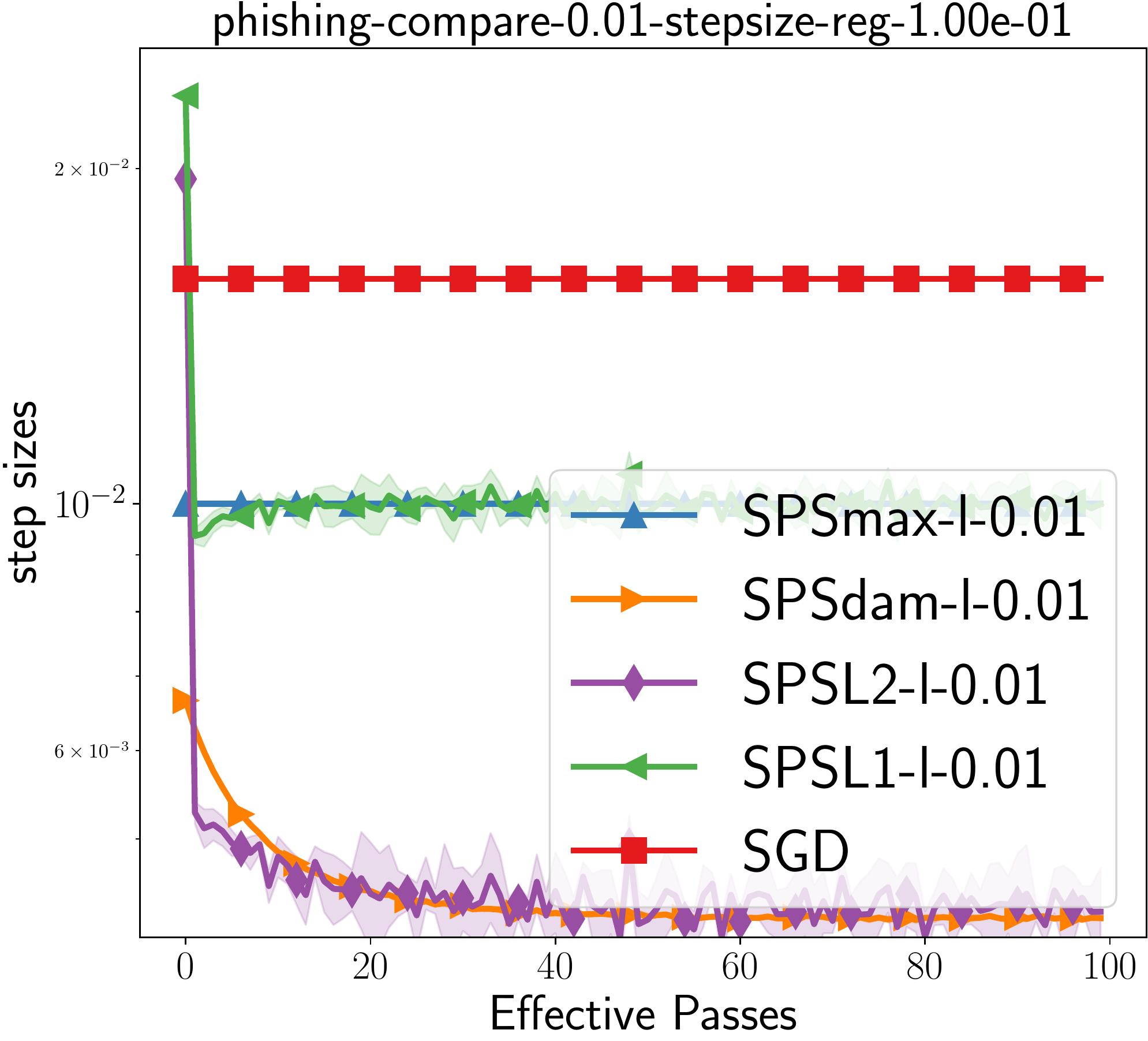}
    \includegraphics[width=\figsizefour]{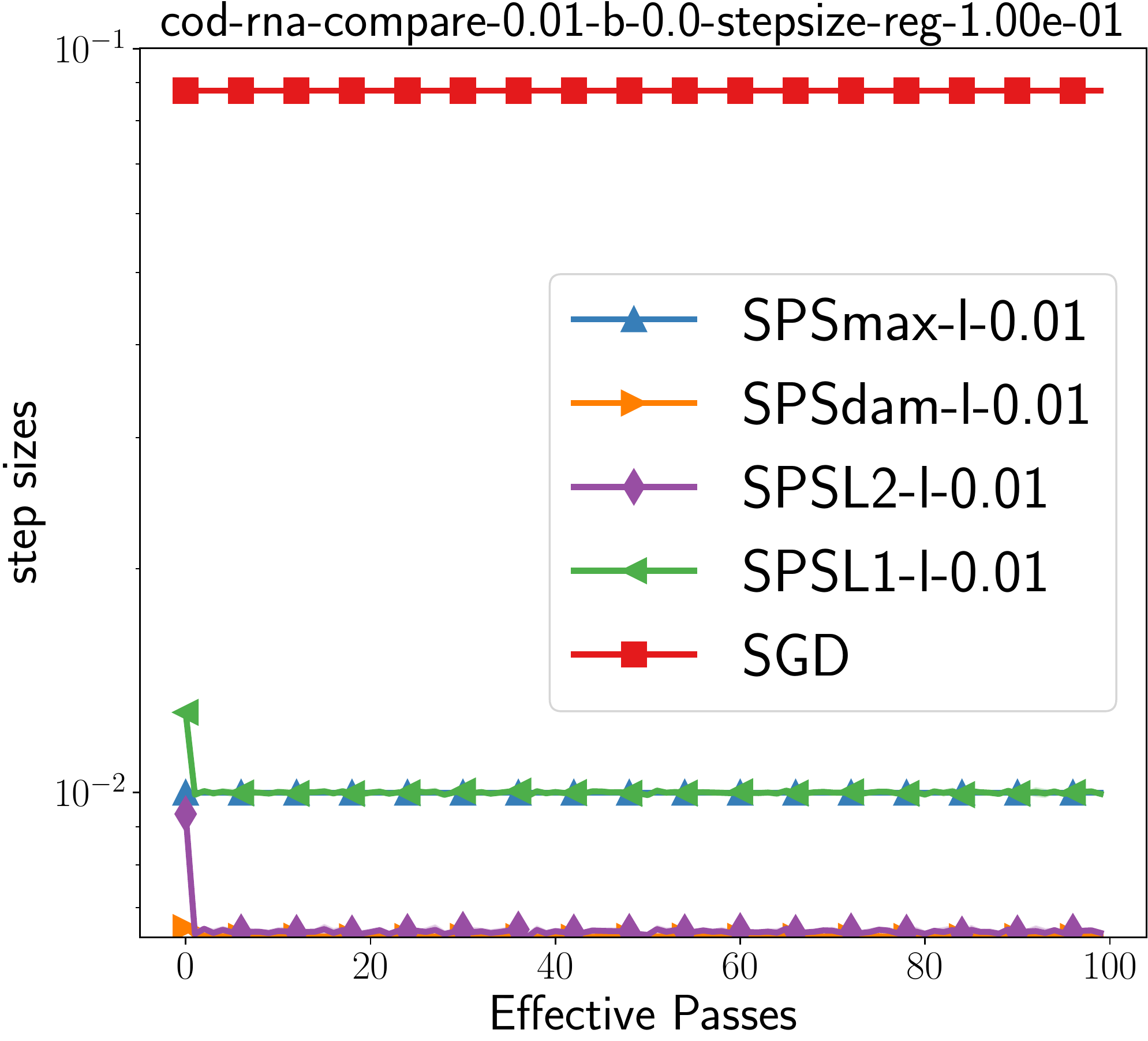}
    \caption{ Comparison of the variants of the \texttt{SPS} method. The data sets used from left to right are \texttt{colon-cancer}, \texttt{mushrooms}, \texttt{phishing}, and \texttt{cod-rna}. The regularization was set to $reg = 10^{-1}$ and $\lambda =0.01$ for all methods. In the top row we report the training error, in the bottom row we report the step size $\norm{w^{t+1}-w^t}^2$ used by each method. }
    \label{fig:comparbig}
\end{figure}

\begin{figure*} 
    \centering
    \includegraphics[width=\figsizefour]{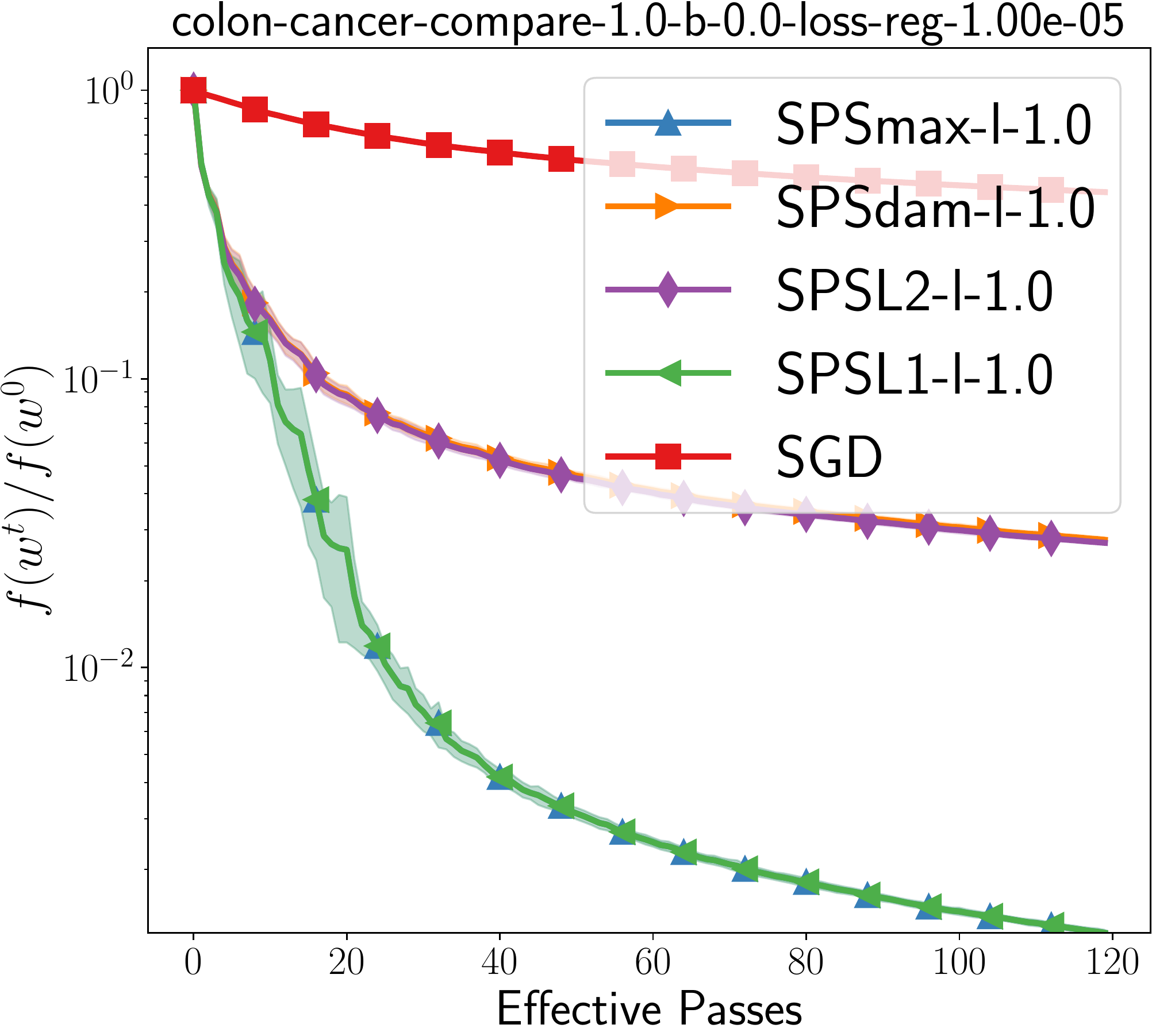} 
    \includegraphics[width=\figsizefour]{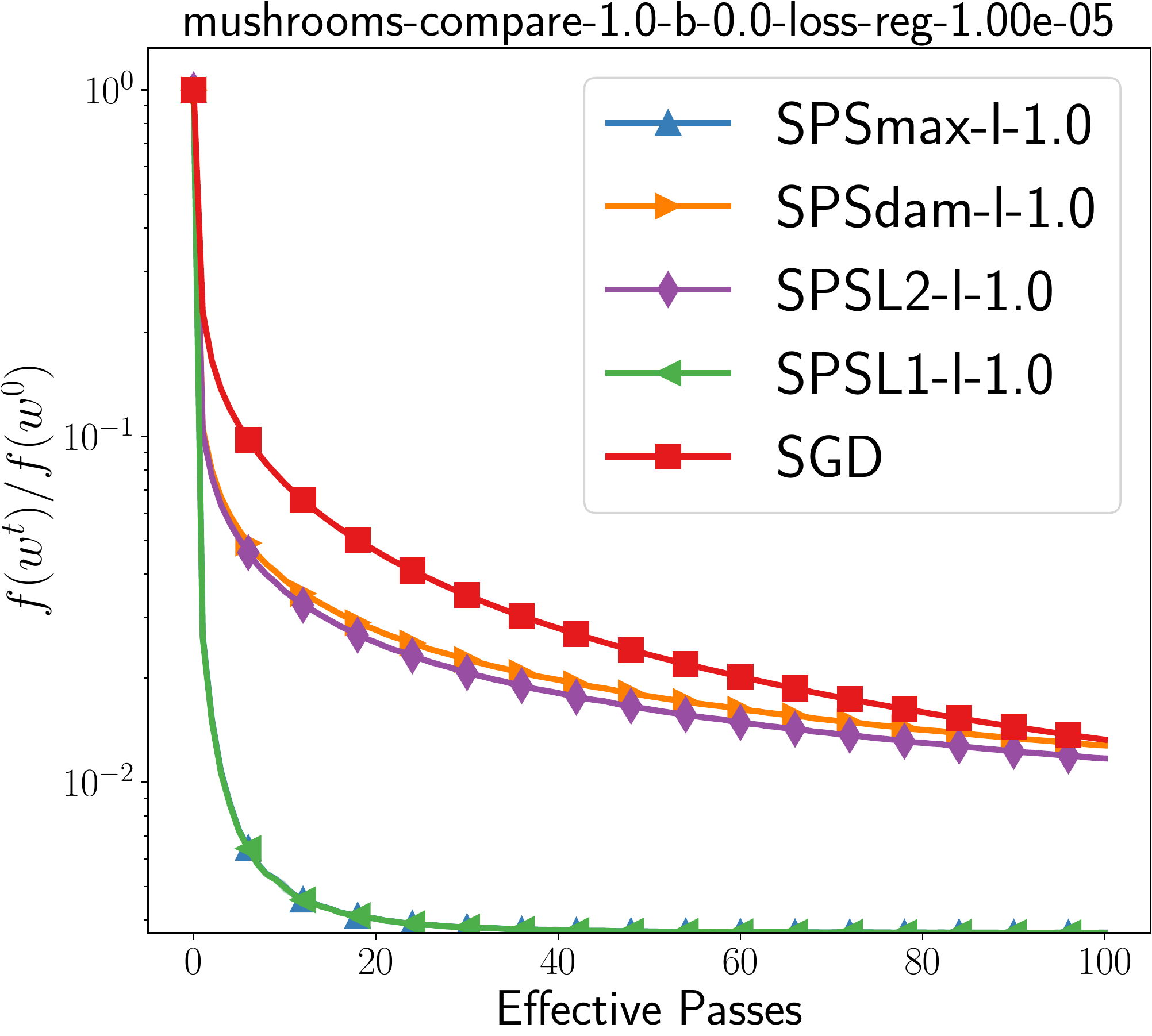}
    \includegraphics[width=\figsizefour]{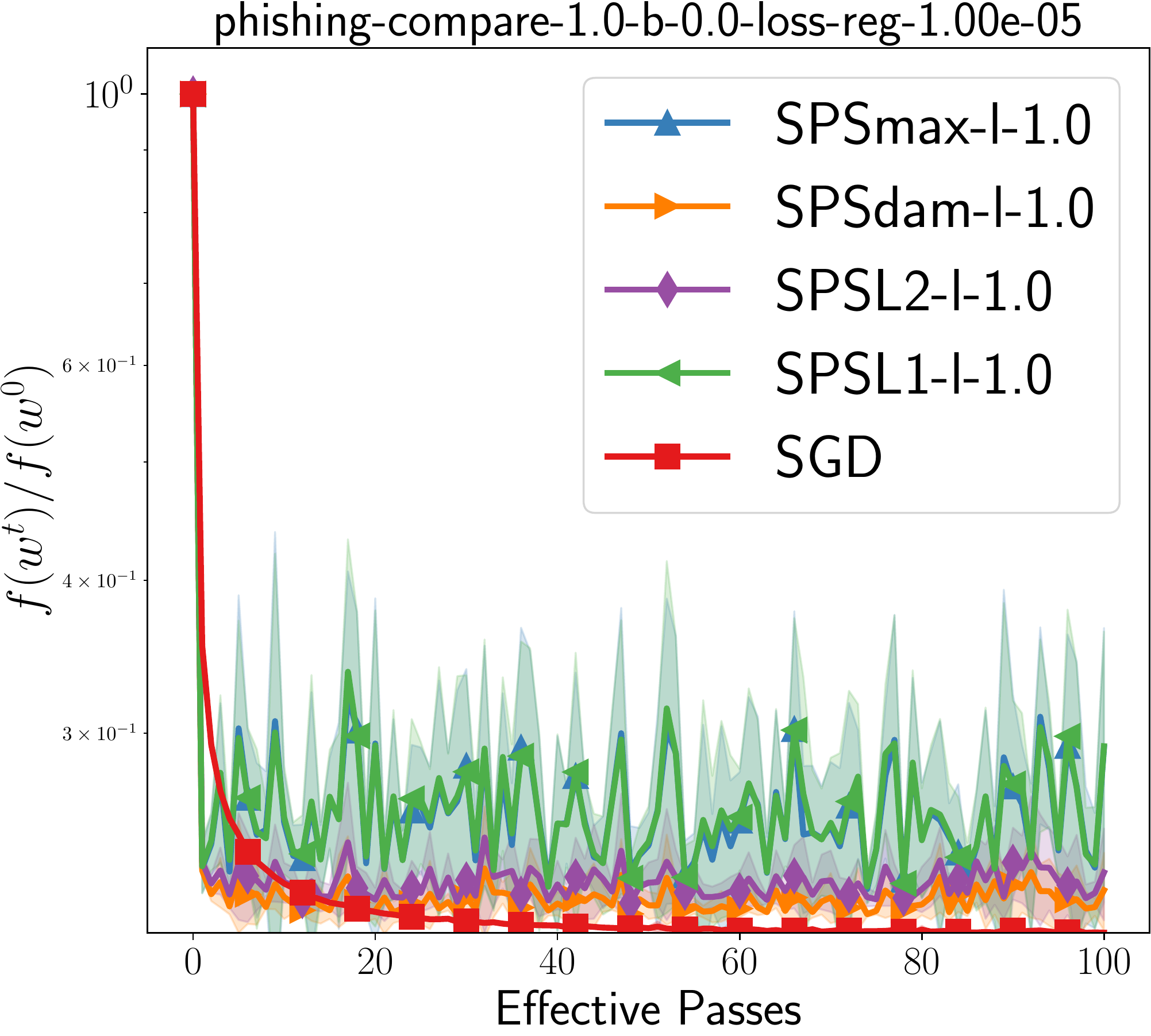} 
    \includegraphics[width=\figsizefour]{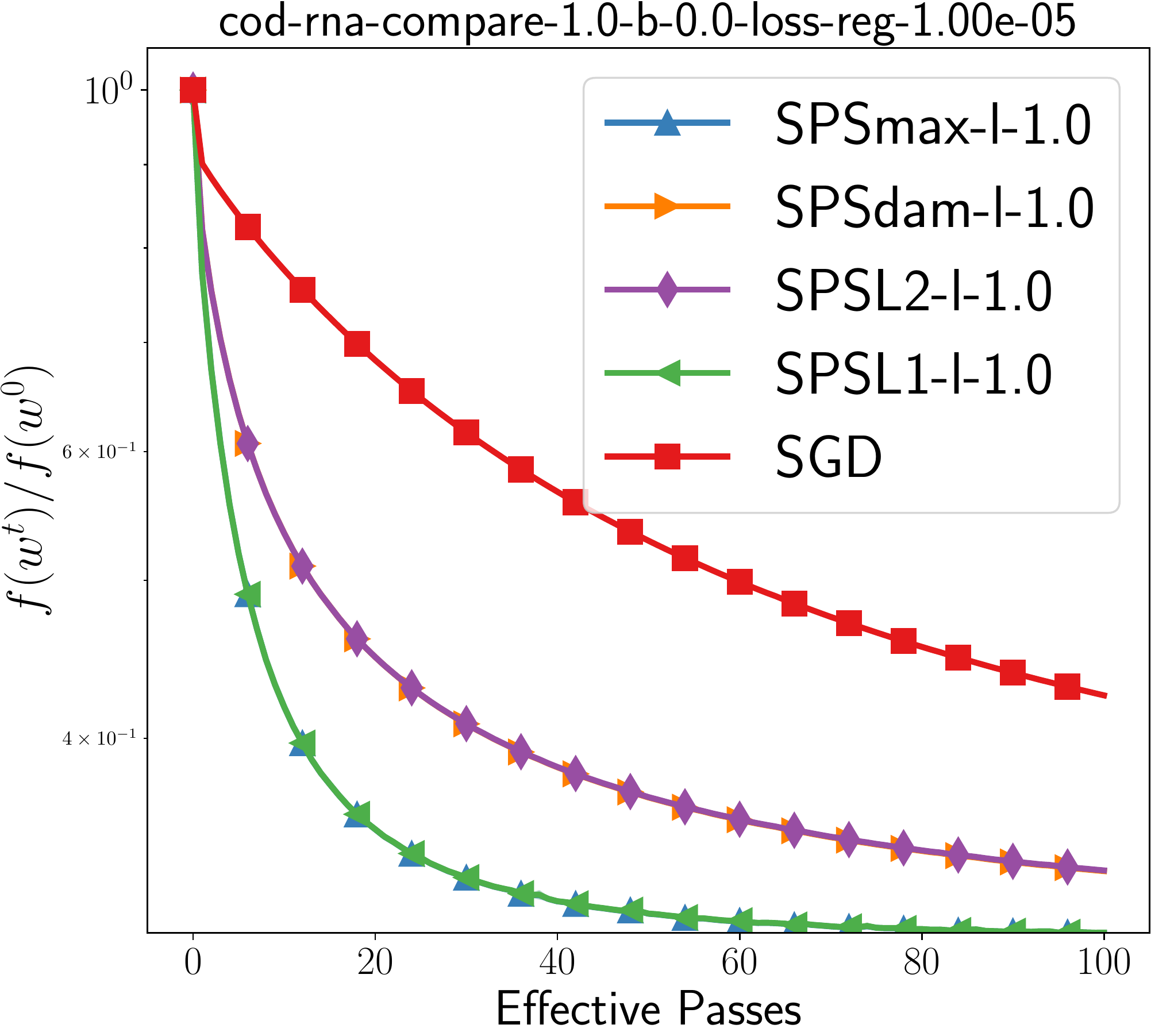} \\
    \includegraphics[width=\figsizefour]{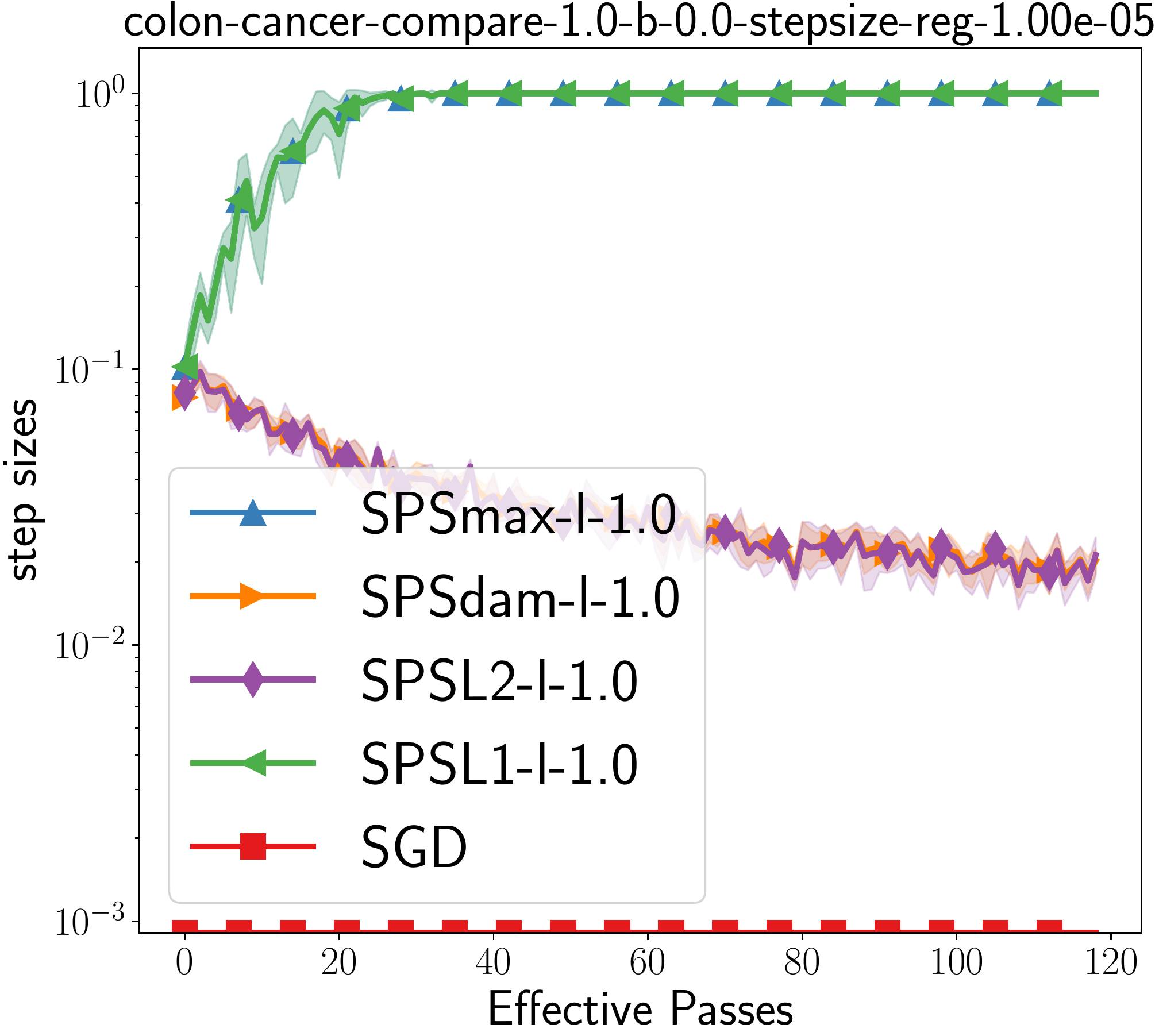} 
    \includegraphics[width=\figsizefour]{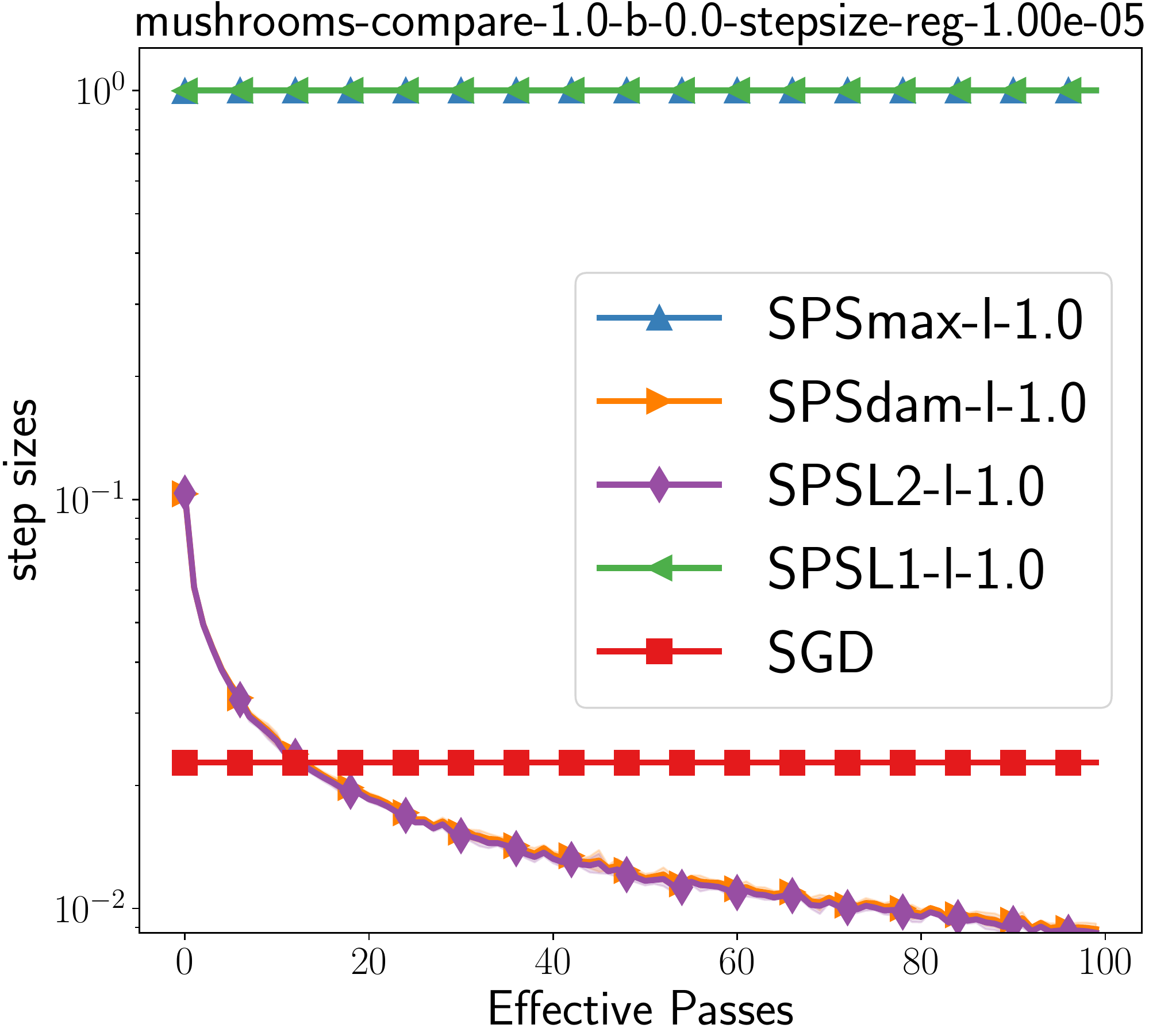}
    \includegraphics[width=\figsizefour]{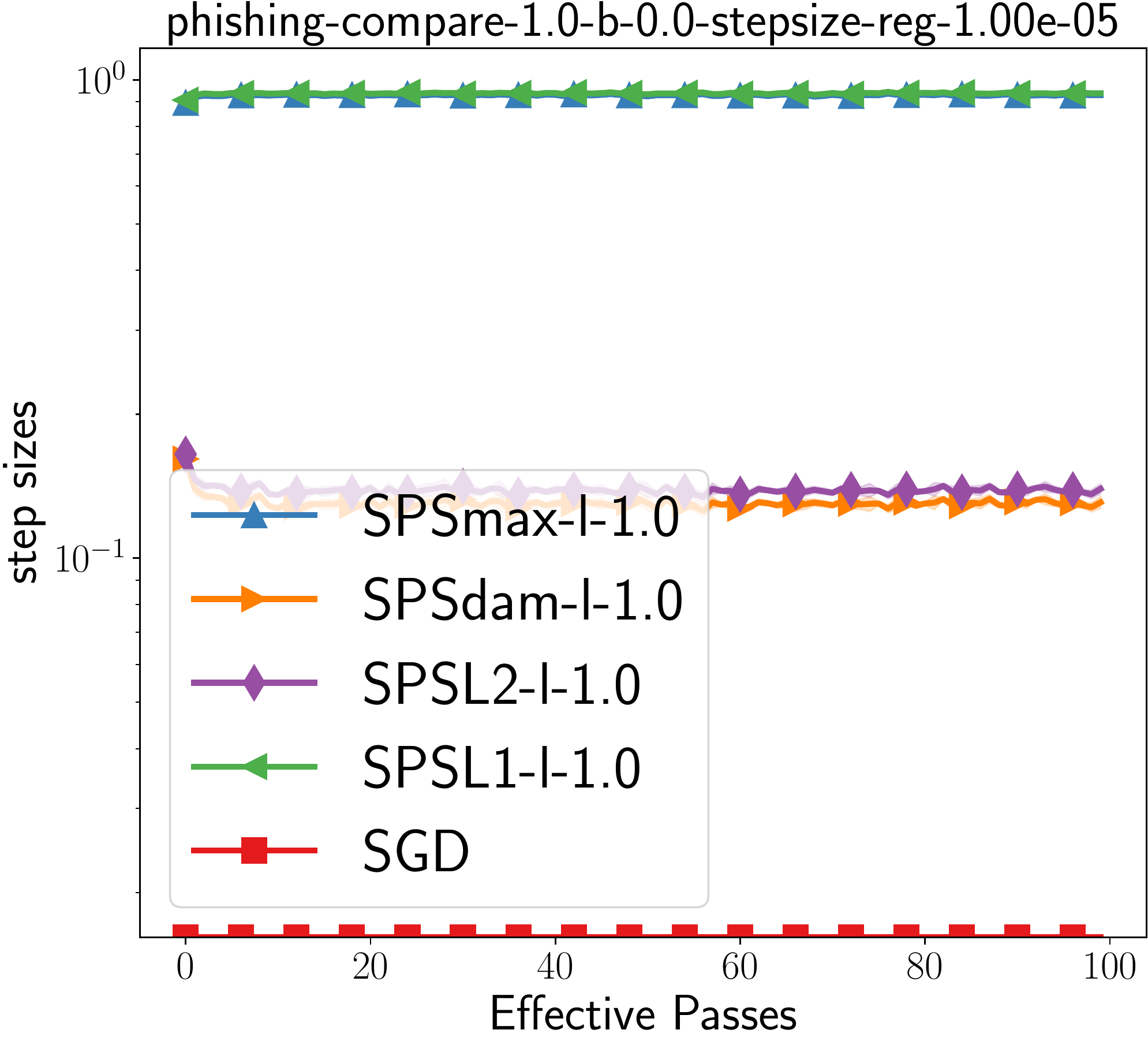} 
    \includegraphics[width=\figsizefour]{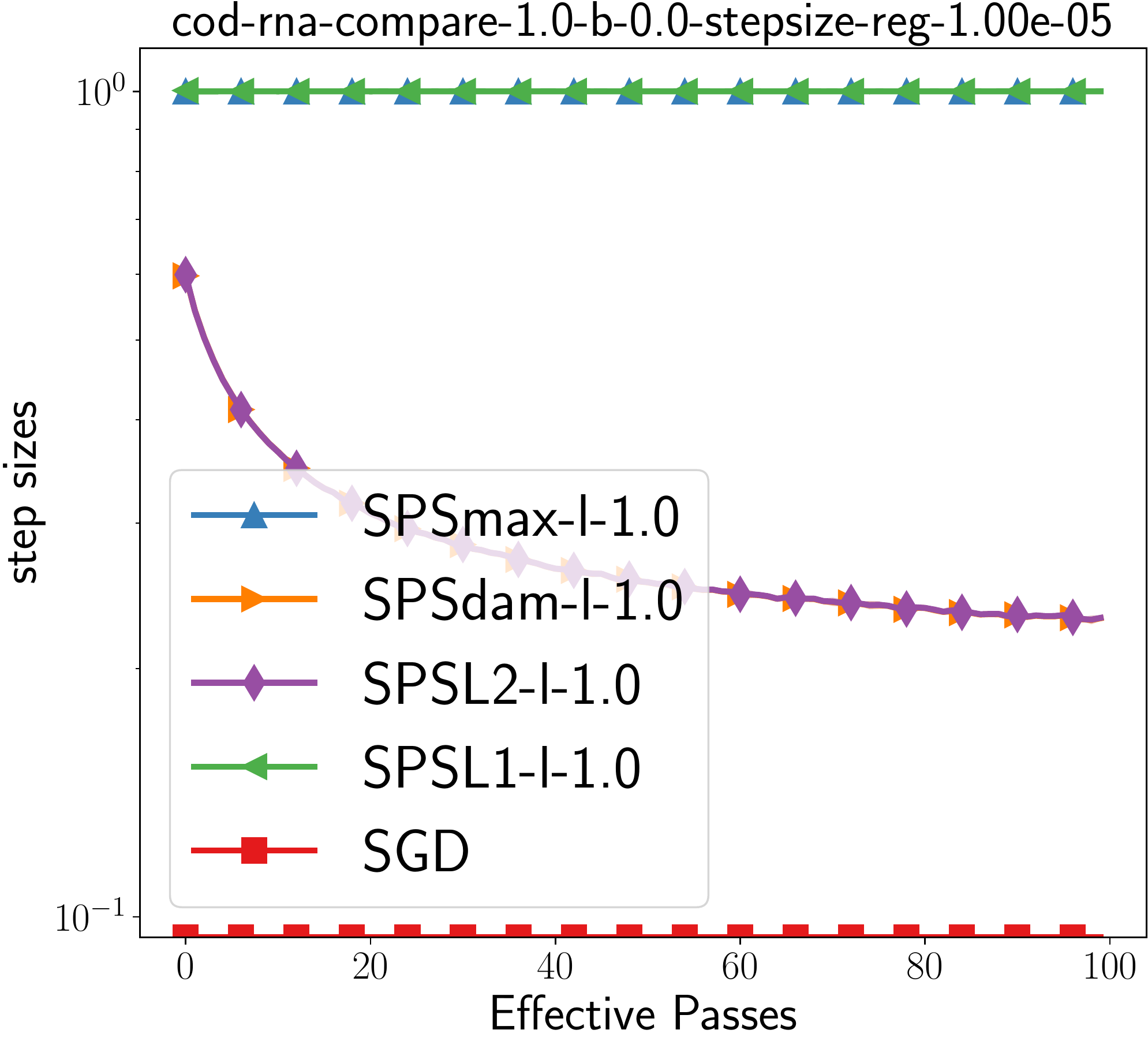}
    \caption{ Comparison of the variants of the \texttt{SPS} method. The data sets used from left to right are \texttt{colon-cancer}, \texttt{mushrooms}, \texttt{phishing}, and \texttt{cod-rna}. The regularization was set to $reg = 10^{-5}$ and $\lambda = 1.0$ for all methods. In the top row we report the training error, in the bottom row we report the step size $\norm{w^{t+1}-w^t}^2$ used by each method.}
    \label{fig:comparsmall}
\end{figure*}
\end{document}